\documentclass[twoside,10.5pt]{article}

\usepackage[preprint]{aistats2027}

\usepackage[colorlinks=true,urlcolor=blue,linkcolor=red,citecolor=teal,backref=page]{hyperref}
\usepackage{makecell}
\usepackage[textsize=tiny]{todonotes}

\usepackage{mathtools}
\usepackage{amsmath, amssymb, graphicx, url}
\usepackage{algorithm}
\usepackage{algpseudocode}
\usepackage{wrapfig}
\usepackage{booktabs}

\usepackage{tcolorbox}
\usepackage[shortlabels]{enumitem}
\usepackage{dsfont, bm}
\usepackage{footnote}
\usepackage{caption}
\usepackage{subcaption}

\newcommand{\bx}{\mathbf{x}}

\newcommand\pol{\ensuremath{\boldsymbol{\pi}}}

\newcommand \defn {\mathrel{\triangleq}}

\DeclareMathOperator*{\argmax}{arg\,max}
\DeclareMathOperator*{\argmin}{arg\,min}

\newcommand{\indicator}{\mathds{1}}

\newcommand{\reals}{\mathbb{R}}

\newcommand{\obo}{Option Bellman operator }

\usepackage{amsthm}
\newtheorem{theorem}{Theorem}[section]
\newtheorem{corollary}[theorem]{Corollary}
\newtheorem{lemma}[theorem]{Lemma}
\newtheorem{proposition}[theorem]{Proposition}
\newtheorem{definition}{Definition}

\newtheorem{remark}{Remark}

\newtheorem{property}{Property}

\newtheorem{example}{Example}
\newtheorem{assumption}{Assumption}

\makeatletter
\newtheorem*{rep@theorem}{\rep@title}
\newcommand{\newreptheorem}[2]{%
	\newenvironment{rep#1}[1]{%
		\def\rep@title{\textbf{#2} \ref{##1}}%
		\begin{rep@theorem}}%
		{\end{rep@theorem}}}
\makeatother
\newreptheorem{theorem}{Theorem}
\newreptheorem{lemma}{Lemma}
\newreptheorem{proposition}{Proposition}
\newreptheorem{assumption}{Assumption}
\newreptheorem{corollary}{Corollary}

\allowdisplaybreaks%
\newif\ifsinglecol
\singlecolfalse

\usepackage[nohints]{minitoc}

\usepackage{tikz}
\usetikzlibrary{arrows.meta}
\usetikzlibrary{automata}
\usetikzlibrary{bayesnet, arrows, calc, shapes, backgrounds,arrows,decorations.pathmorphing,fit,positioning}
\tikzset{
   container/.style = {rectangle, rounded corners, draw=yellow, dashed,
fit=#1, inner sep=6mm, node contents={}},
circle-label/.style = {circle, draw}
        }
\tikzset{box/.style={draw, diamond, thick, text centered, minimum height=0.5cm, minimum width=1cm, text width=0.9cm}}
\tikzset{line/.style={draw, thick, -latex'}}

\def\bx{\mathbf{x}}

\def\bbR{\mathbb{R}}

\def\cA{\mathcal{A}}

\def\cF{\mathcal{F}}

\def\cL{\mathcal{L}}
\def\cM{\mathcal{M}}

\def\cO{\mathcal{O}}

\def\cQ{\mathcal{Q}}

\def\cS{\mathcal{S}}

\def\cX{\mathcal{X}}

\def\cZ{\mathcal{Z}}
\def\E{\mathbb{E}}
\newcommand{\expect}[1]{\mathbb{E}\left[#1\right]}

\def\sp{\mathrm{sp}}

\newtheorem{observation}{Observation}
\newcommand{\tildeO}{\tilde{\mathcal{O}}}
\def\barr{\bar{r}}
\def\barl{\bar{l}}
\def\barp{\bar{p}}
\newtcolorbox{question}{%
colback=cyan!4, colframe=cyan!50!black, boxrule=0.6pt, arc=2pt, left=6pt, right=6pt, top=4pt, bottom=4pt, boxsep=2pt,
fontupper=\small}

\renewcommand\eqref[1]{\ensuremath{(\textrm{\ref{#1}})}}

\usepackage[sort&compress]{natbib}

\newif\ifdoublecol
\doublecolfalse

\begin{document}


\runningtitle{Non-asymptotic Convergence of Average-reward Q-learning with Options}
\runningauthor{Saha, Deb, Jonsson, Basu}

\ifdoublecol
\twocolumn[

\aistatstitle{
Non-asymptotic Convergence of\\
Average-reward Q-learning with Options
}


\aistatsauthor{ Kintan Saha\\
Indian Institute of Science\\
\texttt{kintansaha@iisc.ac.in}
\And
Ahana Deb\\
Universitat Pompeu Fabra\\
\texttt{ahana.deb@upf.edu}
\AND
Anders Jonsson\\
Universitat Pompeu Fabra\\
\texttt{anders.jonsson@upf.edu}
\And
Debabrota Basu\\
Univ. Lille, Inria, CNRS- CRIStAL, Centrale Lille\\
\texttt{debabrota.basu@inria.fr}
}
\aistatsaddress{}
]
\setcounter{parttocdepth}{3}
\parttoc 
\faketableofcontents 

\else 
\onecolumn 
\aistatstitle{
Non-asymptotic Convergence of\\
Average-reward Q-learning with Options
}


\aistatsauthor{ Kintan Saha\\
Indian Institute of Science\\
\texttt{kintansaha@iisc.ac.in}
\And
Ahana Deb\\
Universitat Pompeu Fabra\\
\texttt{ahana.deb@upf.edu}
\AND
Anders Jonsson\\
Universitat Pompeu Fabra\\
\texttt{anders.jonsson@upf.edu}
\And
Debabrota Basu\\
Univ. Lille, Inria, CNRS- CRIStAL, Centrale Lille\\
\texttt{debabrota.basu@inria.fr}
}
\aistatsaddress{}

\fi 

\begin{abstract}
We study the finite-sample convergence of Q-learning algorithms for the
average-reward formulation of Reinforcement Learning (RL) with options.
While options allow temporarily extended and hierarchical actions, the average-reward formulation incorporates continual tasks beyond discounted and episodic RL.
\textit{We establish the first $\tildeO(1/\textrm{iterations})$
convergence guarantees, in the mean-square sense, for average-reward option
Q-learning algorithms in both the inter and intra-option settings.}
First, we prove that both inter- and intra-option Bellman operators satisfy
multi-step span contractions under unichain assumptions.
Then, we propose synchronous estimators of these multi-step
option Bellman operators leading to two variants of option Q-learning.
Finally, we extend the stochastic approximation framework to prove convergence rates of these two algorithms under the minimal assumptions used in the literature.
As a result, we obtain the first non-asymptotic convergence rates for Q-learning algorithms in average-reward RL with hierarchies and reward machines.
Our experimental results across RL environments corroborate the theoretical
findings.
\end{abstract}

\ifdoublecol
\,
\else 
\setcounter{parttocdepth}{3}
\parttoc 
\tableofcontents 
\newpage\fi 

\section{\MakeUppercase{Introduction}}
In Reinforcement Learning (RL), an agent learns a behavioural strategy (aka \textit{policy}) while sequentially interacting with an environment by taking a sequence of \textit{actions}, collecting new observations regarding the environment, and goodness of the actions (also known as \textit{rewards}). The aim is to find a policy optimising the cumulative \textit{utility} of the agent's actions~\citep{sutton1998reinforcement}.
In practice, we encounter RL problems in which the agent has to solve complex tasks and predict the consequences of its actions over multiple levels of abstraction. 
For example, hierarchical RL (HRL) decomposes a complex high-level task into several low-level tasks, which are often simpler to solve~\citep{barto2003recent}.
In RL with reward machines, the agent has access to a reward machine, i.e.~an automaton whose states compactly represent the interaction history and in turn determines the reward signal~\citep{toro_reward_machines}.
In these settings, the agent has to take temporally extended actions and learn their consequences to optimise the long-term utilities.
\textbf{RL with \textit{options}} framework~\citep{10.1007/BFb0026709,SUTTON1999181} allows learning and planning with such temporally extended actions (aka options). \textit{An option is a set of initial states, an internal policy for  primitive actions, and a termination condition, yielding a stochastic holding time of the option.} 

\begin{example}[Four Rooms~\citep{SUTTON1999181}]
    An agent navigates through the four points A, B, C, and D situated at the corners of four rooms connected by hallways. The agent's primitive actions are going left, right, up, or down, while the options are going from one hallway to the other 
    and following a room-specific policy to take primitive actions inside it. 
    Here, an RL agent can take two approaches~\citep{SUTTON1999181}. (a) \textbf{Inter-option:} Learn and plan by treating options as actions, and update the estimates on goodness of an option only after its termination. 
    (b) \textbf{Intra-option:} Learn and plan by taking primitive actions, and update the estimates for the options after each action.
\end{example}\vspace*{-.5em}

Options allow hierarchical learning and planning \citep{mitchener2022detect, ravindran2002model}, skill sharing across tasks \citep{ramesh2019successor, klissarov2021flexible}, and faster exploration via multi-step actions \citep{klissarov2021flexible}.
RL with options was originally proposed for Markov Decision Processes (MDPs: a model of RL interaction) with discounted cumulative rewards as utilities~\citep{SUTTON1999181,bacon2017option,tiwari2019natural}. The framework was later extended to the average-reward setting~\citep{wan2021average}, in which the agent aims to maximise the rate of reward obtained per-step over an infinite number of possible interactions.
It allows control of the long-term behaviour of continuing RL algorithms outside the episodic setting~\citep{JMLR:v8:ghavamzadeh07a}.
Thus, we focus on the \textit{average-reward setting} of RL with options.

But convergence analysis of average-reward RL is often more challenging than its discounted counterpart, in part due to the absence of standard norm contractions. 
While the asymptotic analysis of average-reward RL with primitive actions is well-developed~\citep{doi:10.1137/S0363012999361974, doi:10.1137/S0363012998346621, yu2024asynchronous}, fewer theoretical guarantees are available for its finite-time analysis. 
Recently, finite-time convergence guarantees were established for average-reward Q-learning with primitive actions~\citep{chen2025nonasymptoticguaranteesaveragerewardqlearning, chen2025nonasymptotictheoryseminormlyapunov}. 
However, \textit{there is no finite-time convergence guarantee available for average-reward option-based Q-learning} as the existing results are only asymptotic~\citep{wan2024convergenceaveragerewardqlearningweakly,yu2024asynchronous}.
In this paper, \textit{we bridge this gap}. 

\textbf{Our Contributions.} With four axes, we theoretically and numerically answer the question:
\begin{question}
    \textit{How do the expected errors in inter- \& intra-option Q-learning, i.e. the discrepancies between an estimate $\hat{Q}_k$ yielded by algorithms and the optimal Q-function $Q^*$, decrease with the number of iterations $k$?}
\end{question}

\begin{enumerate}[leftmargin=*,nosep]
    \item The convergence of Q-learning algorithms depends on the contraction of the option Bellman operator in a compatible seminorm. We establish $J$-step contraction for inter- and intra-option Bellman operators with respect to span seminorm extending results for MDPs~\citep{mdp_book}. 
    \item We develop new $J$-step estimators for synchronous inter-
and intra-option Q-learning. An unbiased estimator would require intermediate
greedy option selections based on unavailable greedy finite-horizon Bellman values. We replace these oracle decisions by approximately
greedy choices constructed from empirical Bellman operators, and develop a
perturbation analysis to control the resulting bias.  The inter-option setting
introduces an additional source of bias because the Bellman updates depend on
quantities determined by the random option durations and rewards, which must
also be estimated online from past interactions. Our analysis jointly controls
both sources of bias and further rectifies the estimators and corresponding
rates for average-reward Q-learning with primitive actions~\citep{zhang2021finite, chen2025nonasymptotictheoryseminormlyapunov, naskar2026parameter}. 
    \item We prove the first finite-time convergence bounds for synchronous inter and intra-option Q-learning (Table~\ref{tab: ftb}). We establish $\widetilde \cO(1/k)$ convergence rate for both algorithms with a learning rate of $\Theta(1/k)$. Here, we assume only bounded second moment of the holding times of options~\citep{yu2024asynchronous}, instead of a uniform lower-upper bounds on the holding times~\citep{wan2024convergenceaveragerewardqlearningweakly}. Our analysis also extends existing seminorm stochastic-approximation theory~\citep{chen2025nonasymptotictheoryseminormlyapunov} by allowing decaying biased updates, rather than requiring  unbiased updates.
    \item As a consequence, we obtain finite-time convergence guarantees for RL with Hierarchical Reward Machines, whereas most prior convergence bounds are limited to the discounted and episodic settings~\citep{gurtler2021hierarchical, pmlr-v70-vezhnevets17a, barto2003recent}. 
    We numerically validate across multiple environments that using our $J$-step estimators lead to better convergence and contraction for both inter and intra-option Q-learning in comparison with their one-step variants.
    \vspace*{-.3em}
\end{enumerate}

Further discussion on related works is in Appendix~\ref{sec:related_work}. 

\setlength{\textfloatsep}{6pt}
\begin{table}[t!]
\centering
\caption{Finite-time bounds on $\Delta_k^2 \defn \mathbb{E}\!\left[\sp(Q_k-Q^*)^2\right]$. We present the bounds in $\tildeO(\cdot)$. $\sp(\bx)$ denotes span: $(\max_{i} \bx(i) - \min_{i'} \bx(i'))/2$. 
}\label{tab: ftb}
\ifdoublecol
\resizebox{0.9\columnwidth}{!}{
\begin{tabular}{l|c}
\toprule
\textbf{Stepsize} & $\alpha_k = \frac{4}{(1-\gamma)(k+h)}$\\
\hline
\textbf{Inter-option} (Theorem \ref{thm:sync-inter-ftb})
&
$\frac{1}{k}
\left(\Delta_0^2 + \frac{\sp(Q^*)^2+|\cS||\cO|J^4}{(1-\gamma)^3}\right)$
\\[2ex]
\textbf{Intra-option} (Theorem \ref{thm: sync-intra-ftb})
&
$\frac{1}{k} \left(\Delta_0^2 +\frac{\sp(Q^*)^2+|\cS|J^4}{(1-\gamma)^3}\right)$
\\[.2ex]
\hline
\end{tabular}}
\else 
\begin{tabular}{l|c}
\toprule
\textbf{Stepsize} & $\alpha_k = \frac{4}{(1-\gamma)(k+h)}$\\
\hline
\textbf{Inter-option} (Theorem \ref{thm:sync-inter-ftb})
&
$\frac{1}{k}
\left(\Delta_0^2 + \frac{\sp(Q^*)^2+|\cS||\cO|J^4}{(1-\gamma)^3}\right)$
\\[2ex]
\textbf{Intra-option} (Theorem \ref{thm: sync-intra-ftb})
&
$\frac{1}{k} \left(\Delta_0^2 +\frac{\sp(Q^*)^2+|\cS|J^4}{(1-\gamma)^3}\right)$
\\[.2ex]
\hline
\end{tabular}
\fi 
\end{table}

\section{\MakeUppercase{A Primer on Average- reward Option Q-learning}}\label{sec:formalism}
We begin by formalising the average-reward RL with options as a Semi MDP (SMDP), and provide the details of the inter and intra-option Q-learning. 

\textbf{Average-reward Semi MDPs.} 
A finite state-action SMDP (or tabular SMDP) is a tuple $\mathcal{M} \triangleq (\cS, \cA,  r, p, \cL)$~\citep{SCHWEITZER1971495,bradtke1994reinforcement,parr1998hierarchical}. 
Here, $\cS$ is a finite state space. $\cA$ is a finite action space. $\ell(s,a) \sim \cL$ denotes a stochastic holding time of action $a$ at state $s$ after which the environment moves to next state $s' \sim p(\cdot \mid s,a, \ell(s,a)) \in \Delta_{\cS}$ and yields the corresponding reward $r(s,a)$. $p$ is called the transition kernel. Note that SMDPs include the classical MDPs~\citep{mdp_book} as a special case with unit holding time.

At each step $t \in \mathbb{N}$, a learning agent observes a state $S_t$ and takes an action $A_t$ according to a policy $\pi \in \Delta_{\cA}$. The policy $\pi$ can use either the information of the present state (Markovian) or the history of interactions to take an action at each step $t$. 
In return, after a stochastic holding time $\ell(S_t, A_t)$, the agent observes corresponding reward $r(S_t, A_t)$, and the environment transits to a next state $S \sim p(\cdot|S_t, A_t, \ell(S_t, A_t))$.
Given an initial state $s_0 \in \cS$, the \textit{average reward} (or reward-rate) of playing policy $\pi$ is defined as
\begin{align}
\bar r^\pi(s_0) \triangleq \lim_{T \to \infty} \frac{1}{T} \mathbb{E}_{\pi, \cM} \left[\sum_{t=0}^{T-1} r(S_t, A_t)\mid S_0 =s_0\right]\,.    \tag{average-reward}
\end{align}
We state a minimal requirement for well-definedness and boundedness of $\bar r^\pi(s_0)$ across policies $\pi$ and initial states $s_0$~\citep{SCHWEITZER1971495,wan2024convergenceaveragerewardqlearningweakly}.
\begin{assumption}[Well-behaved SMDP]\label{assm:smdp}
For all state-action pairs, the second moment of holding time $\E[\ell^2(s,a)] \le c^2 < \infty$, and the underlying primitive one-step rewards are bounded, i.e. $r(s,a) \in [0,1]$.  
\end{assumption}
A policy attaining the \textit{optimal reward rate} $\bar r^* \triangleq \sup_{\pi} r^\pi(s_0)$ for all initial states is referred to be an \textit{optimal policy}. 
If the \textit{optimal reward rate} is independent of the initial state, $(\bar r^*, Q^*)$ is a solution of the average-reward optimality equation \eqref{eq:aoe} over $(r^*,Q)$~\citep{SCHWEITZER1971495,mdp_book}:
\begin{align}\label{eq:aoe}
Q(s,a)
&=\barr(s,a) - r^* \barl(s,a)+\sum_{s'} \barp(s'|s,a)\max_{a'} Q(s',a')\notag\\
&\triangleq (HQ)(s,a)\,, \tag{AOE}
\end{align}
where $\barr(s,a) = \mathbb{E}[r \mid s,a]$, $\barl(s,a) = \mathbb{E}[\ell \mid s,a]$, and $\barp(s'|s,a) = \sum_{r,\ell} p(s',r,\ell|s,a)$. 
Under Assumption~\ref{assm:smdp}, $\bar r^*$ is achieved by a deterministic stationary optimal policy~\citep{SCHWEITZER1971495}. Further, $H$ is referred to as a \textit{Bellman operator}, while $Q^*$ is one of its fixed points.

In \eqref{eq:aoe}, $Q: \cS \times \cA \rightarrow \reals$ is a \textit{Q-function}, and $Q^*$ is referred to as an \textit{optimal Q-function}.
Further, for a unichain SMDP~\citep{mdp_book}, a Q-function solving~\eqref{eq:aoe} is defined up to an additive constant leading to the equivalence class $\mathcal{Q}^\ast \triangleq \{ Q^\ast + c\mathbf{1} : c \in \mathbb{R} \}$.
\textit{A \textbf{Q-learning} algorithm tries to iteratively solve \eqref{eq:aoe} by plugging in the estimates of Q-functions, rewards, and holding times updated with collected samples}~\citep{sutton1998reinforcement,wan2021average}.



\textbf{Average-reward MDPs with Options.}
\cite{SUTTON1999181} propose the options framework to model RL problems with temporarily extended actions. Specifically, a tabular MDP with option is defined as $\cM \triangleq (\cS, \cA, \cO, r, p)$. While $\cS, \cA, r, p$ are as in the MDPs, $\cO$ denotes the option set, where each option $o$ is defined by a tuple $(\pi(\cdot|\cdot, o), \beta(\cdot,o))$. 
Here, $\pi(a|s,o)$ is the option's policy dictating the probability of taking an action $a$ while executing an option $o$ at state $s$, and $\beta(s,o)$ is a probability dictating the option’s termination while executing at state $s$.
For simplicity, we consider that any option can start from any state~\citep{wan2021average}.
Executing an option at state $s$ generates: a trajectory of primitive actions, a cumulative reward, a random duration $\ell$, and a terminal state $s'$. 
Thus, options induce an SMDP $(\cS, \cO, \hat{r}, \hat{p}, \hat{L})$ over the state-option space $(\cS, \cO)$. Here, $\hat{r}$ is the set of all possible options’ cumulative rewards, $\hat{L}$ is the set of all possible lengths of options dictated by the termination probabilities $\{\beta(\cdot|\cdot,o): o \in \cO\}$, and $\hat{p}$ is the options' transition kernel, such that $\hat{p}(s', \hat{r}, {\ell} \mid s,o) \triangleq \Pr(S_{t+1}=s', \hat{r}_t = \hat{r}, \hat{L}_t = {\ell}\mid S_{t}=s, O_t = o)$. Given a stationary hierarchical (option-level) policy $\pol: \cS \rightarrow \Delta_{\cO}$, and initial $(s,o)$, the option-level average-reward is~\citep{Ross_1970}, 
\begin{align}\label{eq:OAR}
\bar r^{\pol}(s, o) &\triangleq \lim_{T \to \infty} \mathbb{E}_{\pol,\cM} \left[\frac{1}{T} \sum_{t=0}^{T-1} r_t \mid S_0 =s, O_0 = o\right]\notag\\
&= \lim_{n \to \infty} \frac{\mathbb{E}_{\pol,\cM} \left[\sum_{i=0}^{n-1} \hat{r}_i \mid S_0 =s, O_0 = o\right]}{\mathbb{E}_{\pol,\cM} \left[\sum_{i=0}^{n-1} {\ell}_i \mid S_0 =s, O_0 = o\right]}.\tag{OAR}
\end{align}
Here, $\hat{r}(s,o)$, $\hat{\ell}(s,o)$, and $\hat{p}(\cdot\mid s,o)$ denote the expected reward, the expected holding time, and the state-option transition probability, respectively. 
Q-learning algorithms~\citep{wan2021average,wan2024convergenceaveragerewardqlearningweakly} aim to solve \eqref{eq:aoe} corresponding to $(\cS, \cO, \hat{r}, \hat{p}, \hat{L})$ and option-level average-reward \eqref{eq:OAR}, and to obtain the optimal reward rate $\bar{r}^*$ and an optimal option Q-function $Q^*: \cS \times \cO \rightarrow \reals$.
\cite{SUTTON1999181} propose two approaches to solve this: inter-option and intra-option Q-learning.

\textbf{I. Inter-option learning.} Inter-option learning treats options as primitive actions in an SMDP.
Hence, the \textit{option Q-function} $Q \in \mathbb{R}^{S \times O}$ leads to the \textit{inter-option Bellman operator}
\begin{equation}
\label{eq:inter-operator}
(H_{\mathrm{inter}} Q)(s,o)
\triangleq \hat r(s,o)
+
\sum\nolimits_{s'} \hat p(s'|s,o)\max_{o'} Q(s',o').
\end{equation}
This formulation captures the option-level reward $\hat{r}$ evaluated at the option termination times.
This yields the iterates of the \textit{inter-option differential Q-learning} algorithm~\citep{wan2021average}
\ifdoublecol
\begin{align}\label{eq:inter_option_async_q}
\hspace{-0.5em}    Q_{k+1}(s,&o) ~\gets Q_k(s,o) + \alpha_k(s,o)  \times\notag\\
&\left((\hat{H}_{\rm inter}Q_k)(s,o) - Q_k(s,o) -
\bar r_k \,\hat \ell(s,o)\right)
\end{align}
\else
\begin{align}\label{eq:inter_option_async_q}
\hspace{-0.5em}    Q_{k+1}(s,o) ~\gets Q_k(s,o) + \alpha_k(s,o)  \times\left((\hat{H}_{\rm inter}Q_k)(s,o) - Q_k(s,o) -
\bar r_k \,\hat \ell(s,o)\right)
\end{align}
for all $(s,o)$. Here, $\alpha_k(s,o) \in \reals_+$ is the stepsize for state-option pair $(s,o)$,$(\hat{H}_{\rm inter}Q_k)$ is an estimate of the inter-option Bellman operator and $\bar r_k$ is an estimate of $\bar r^*$.
\eqref{eq:inter_option_async_q} defines a \textit{synchronous inter-option Q-learning} algorithm, when the iterates are updated for all state-options simultaneously.

\textbf{II. Intra-option learning.} Intra-option Q-learning updates the Q-function after every primitive action. This further induces the option-averaged dynamics,
$r(s,o) = \sum_a \pi(a|s, o) r(s,a)$ and
$p(s'|s,o) = \sum_a \pi(a|s, o) p(s'|s,a)$. The corresponding intra-option Bellman operator is
\ifdoublecol
\begin{align}\label{eq:intra-operator}
\hspace{-2em}&(H_{\mathrm{intra}} Q)(s,o)
\triangleq
r(s,o)
+
\sum_{s'\in\cS} p(s'|s,o) \times \notag\\
\hspace{-1em}&\Big[
\underbrace{(1-\beta(s', o))}_{\text{non-termination}}Q(s',o)
+
\underbrace{\beta(s', o)}_{\text{termination}}\max_{o'} Q(s',o')
\Big]
\end{align}
\else 
\begin{align}\label{eq:intra-operator}
\hspace{-2em}(H_{\mathrm{intra}} Q)(s,o)
\triangleq
r(s,o)
+
\sum_{s'\in\cS} p(s'|s,o) \times \Big[
\underbrace{(1-\beta(s', o))}_{\text{non-termination}}Q(s',o)
+
\underbrace{\beta(s', o)}_{\text{termination}}\max_{o'} Q(s',o')
\Big]
\end{align}
This operator reflects the fact that with probability $1-\beta$, the option continues, and with probability $\beta$, control switches to a new option. 
The \textit{synchronous intra-option differential Q-learning}~\citep{wan2021average} iterates are 
\ifdoublecol \begin{align}
Q_{k+1}(s,o) \gets
&~Q_{k}(s,o) + \alpha_k(s,o) \nonumber\\
&\left((\widehat{H}_{\rm intra}Q_k)(s,o) - \bar r_k -Q_k(s,o)\right).\label{eq:intra_option_async_q}
\end{align}
\else
\begin{align}
Q_{k+1}(s,o) \gets
~Q_{k}(s,o) + \alpha_k(s,o)\left((\widehat{H}_{\rm intra}Q_k)(s,o) - \bar r_k -Q_k(s,o)\right).\label{eq:intra_option_async_q}
\end{align}
for all $(s,o)$. Here, $(\hat{H}_{\rm intra}Q_k)$ is an estimate of the intra-option Bellman operator constructed by sampling $a \sim \pi(\cdot |s,o)$ and using the corresponding reward for each $(s,o)$, and $\bar r_k$ is an estimate of $\bar r^*$. 

\textbf{Goal:} We measure the convergence rate of average-reward inter- and intra-option Q-learning algorithms by bounding the decay of the mean square error of the iterates from the optimal Q-function, i.e. $\E[\sp(Q_k - Q^*)^2]$, with the number of iterations $k$. 
We use the \textit{span seminorm}, i.e. $\mathrm{sp}(Q) \triangleq (\max_{s,a} Q(s,a) - \min_{s,a} Q(s,a))/2$, to measure errors of Q-value iterates as it is invariant up to additive constants.
In this work, \textit{we prove multi-step Q-learning with options as contractions, design multi-step inter-/intra-option Q-learnings, and finally, prove their convergence rates to be $\tildeO(1/k)$ for properly tuned stepsizes.}

\section{\MakeUppercase{$J$-step Contraction of Option Bellman Operator}}\label{sec:J-contraction}\vspace*{-.5em}
Contraction is a key property of Bellman operators that ensure convergence of Q-learning algorithms.
While contraction is immediate in the discounted setting, it is often unavailable in the average-reward setting~\citep{mdp_book, zhang2021finite}. Thus, even for classical MDPs, convergence analysis opens with proving the contraction of the $J$-step Bellman operator, \textit{i.e. the $J$-fold composition $H^J$}, in the span seminorm~\citep{chen2025nonasymptotictheoryseminormlyapunov}.  
\textit{We prove that both $J$-step inter- and intra-option Bellman operators are contractions}.

\textbf{Normalized inter-option operator.}
The standard inter-option Bellman operator in Section \ref{sec:formalism} is the one used in the
average-reward options literature. But a direct stochastic approximation of \eqref{eq:aoe} requires estimating both
the differential value function and the unknown optimal reward rate $\bar r^\star$, naturally leading to a coupled or two-timescale recursion.
To obtain a single-timescale recursion, we instead develop the equivalent
Schweitzer-transformed operator \citep{wan2024convergenceaveragerewardqlearningweakly}. 
Specifically, for
$\lambda(s,o)\triangleq(\hat\ell(s,o))^{-1}$, $\rho(s,o)\triangleq{\hat r(s,o)}\lambda(s,o)$,
we define
\ifdoublecol
\begin{align}\label{eq:normalised_inter_operator}
\hspace{-1em}(\bar H_{\rm inter}Q)(s,o)
&\triangleq
\rho(s,o)
+
(1-\lambda(s,o))Q(s,o)\notag\\
&+
\lambda(s,o)
\sum_{s'}\hat p(s'|s,o)\max_{o'}Q(s',o').    
\end{align}
\else
\begin{align}\label{eq:normalised_inter_operator}
\hspace{-1em}(\bar H_{\rm inter}Q)(s,o)
\triangleq
\rho(s,o)
+
(1-\lambda(s,o))Q(s,o)+
\lambda(s,o)
\sum_{s'}\hat p(s'|s,o)\max_{o'}Q(s',o').    
\end{align}

Multiplying \eqref{eq:aoe} by
$\lambda(s,o)$ gives
$
\bar H_{\rm inter}Q^\star-Q^\star
=
\bar r^\star \mathbf 1.
$
Hence, $Q^\star$ is a fixed point of $\bar H_{\rm inter}$ in the span seminorm. Now, we state the structures required to prove $\bar H_{\rm inter}^J$ to be a contraction.
Define $\cX \triangleq \cS \times \cO$.
\setlength{\textfloatsep}{6pt}
\begin{figure}[t!]
        \centering
        \ifdoublecol
        \includegraphics[width=0.8\linewidth]{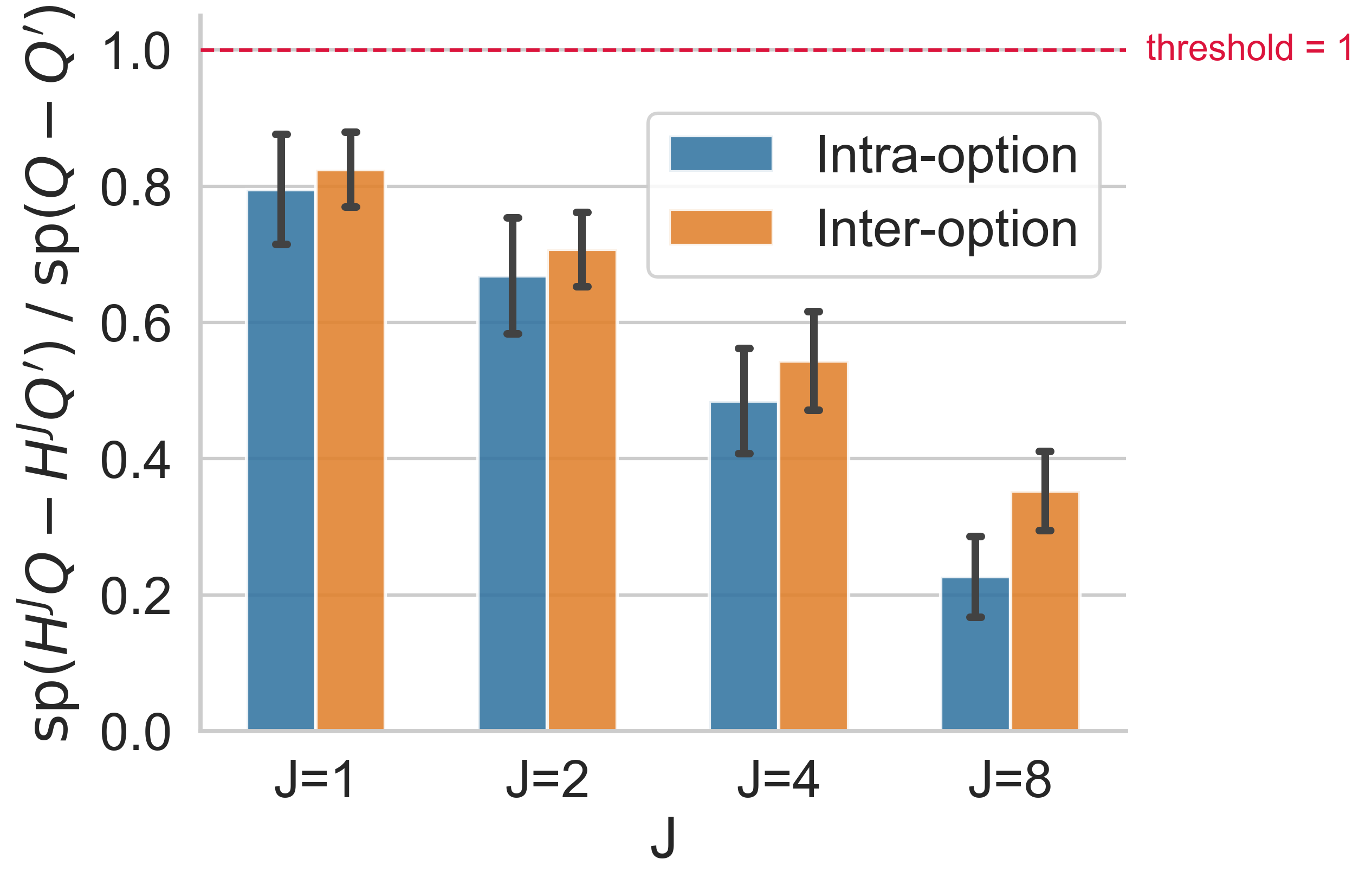}\vspace*{-1em}
        \else 
        \includegraphics[width=0.5\linewidth]{images/contraction.png}
        \fi
        \caption{Contraction factors ($\gamma$) for intra-option and inter-option in Four Rooms.}\label{fig:contraction}
\end{figure}

\begin{assumption}[Inter-option unichain SMDP]
\label{ass:inter-assumption}
For every deterministic option-selection rule $d:\cX\to\cO$, consider the
Markov chain on $\cX$ with transition kernel $
\widetilde P_d((s',o')\mid(s,o))
=
(1-\lambda(s,o))
\indicator \{(s',o')=(s,o)\}+
\lambda(s,o)\widehat p(s'\mid s,o)
\indicator\{o'=d(s', o)\}$.
We assume that $\widetilde P_d$ is unichain for every $d$. Moreover,
$\widehat p(s\mid s,o)>0,\lambda(s,o)<1$ for every $(s,o)\in\cX$.
\end{assumption}

Unichain assumption is widely adopted for analysing average-reward RL~\citep{wan2021average,chen2025nonasymptoticguaranteesaveragerewardqlearning,chen2025nonasymptotictheoryseminormlyapunov}. A deterministic stationary policy in a unichain MDP induces Markov chains with a single recurrent class ensuring well-definedness of average-reward and its invariance with respect to the initial state.


\begin{theorem}[Inter-option contraction]\label{thm: J-inter}
Let $\bar H_{\mathrm{inter}}^J$ denote $J$ compositions of $\bar H_{\mathrm{inter}}$. Under Assumption \ref{ass:inter-assumption}, there exists $1 \le J \le |\cS||\cO|$ and $\gamma_{\rm inter} \in (0,1)$ s.t.
\[
\mathrm{sp}(\bar H_{\mathrm{inter}}^J Q - \bar H_{\mathrm{inter}}^J Q')
\le
\gamma_{\rm inter} \, \mathrm{sp}(Q - Q')\,.
\]
\end{theorem}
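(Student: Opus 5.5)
The plan follows the classical Puterman argument for $J$-step span contraction of relative value iteration (\citet{mdp_book}, Thm.~8.5.2 / Prop.~6.6.1), transported to the state–option space $\cX$ via the kernels $\widetilde P_d$ of Assumption~\ref{ass:inter-assumption}.

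\textbf{Step 1: sandwich differences by linear kernels.} Observe that $\bar H_{\rm inter}Q = \rho + \widetilde P_{d_Q} Q$, where $d_Q(s',o)\in\argmax_{o'}Q(s',o')$ is a greedy rule. Hence for any $Q,Q'$,
\[
\widetilde P_{d_{Q'}}(Q-Q') \le \bar H_{\rm inter}Q-\bar H_{\rm inter}Q' \le \widetilde P_{d_Q}(Q-Q')
\]
componentwise. Iterating $J$ times, with $d_j,d'_j$ the greedy rules for $\bar H^{j}_{\rm inter}Q$ and $\bar H^{j}_{\rm inter}Q'$, gives
\[
P'(Q-Q')\le \bar H^J_{\rm inter}Q-\bar H^J_{\rm inter}Q'\le P(Q-Q'),
\]
where $P=\widetilde P_{d_{J-1}}\cdots\widetilde P_{d_0}$ and $P'$ is the analogous product of the $\widetilde P_{d'_j}$. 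Consequently, for $x,y\in\cX$,
\[
(\bar H^J Q-\bar H^J Q')(x)-(\bar H^J Q-\bar H^J Q')(y)\le \sum_z\big(P(x,z)-P'(y,z)\big)(Q-Q')(z)\le \big(1-\textstyle\sum_z\min\{P(x,z),P'(y,z)\}\big)\cdot 2\,\sp(Q-Q')
\]
by the standard total-variation inequality. With the paper's normalization $\sp(\cdot)=(\max-\min)/2$, this yields
\[
\sp(\bar H^J Q-\bar H^J Q')\le \gamma\,\sp(Q-Q'),\qquad \gamma=1-\min_{x,y}\min_{\text{products}}\sum_z \min\{P(x,z),P'(y,z)\}.
\]

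\textbf{Step 2: show $\gamma<1$ via an ergodicity coefficient.} Since there are finitely many deterministic rules $d$, the set of all $J$-fold products is finite. It therefore suffices to show that every pair of such products, started from any $x,y$, puts positive mass on a common state. This is where aperiodicity enters. Each $\widetilde P_d$ has a self-loop of weight at least $1-\lambda(x)>0$ at every $x$, since $\lambda<1$ (and $\hat p(s|s,o)>0$ adds mass as well). Because of these self-loops, the support of $\widetilde P_{d_{j}}\cdots\widetilde P_{d_0}(x,\cdot)$ is nondecreasing in $j$: any state reachable at step $j$ stays reachable at step $j+1$.

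I would then argue, following \citet{mdp_book}, that under the unichain hypothesis the reachable sets from $x$ under the time-inhomogeneous sequence, and from $y$ under the other sequence, must intersect within $|\cX|=|\cS||\cO|$ steps. The reasoning is as follows. If the two reachable sets stayed disjoint and both stopped growing, they would be closed under the relevant kernels. One then builds a single deterministic rule $d$ agreeing with $d_j$ on the closed set reached from $x$ and with $d'_j$ on the one reached from $y$. This $d$ would have two disjoint closed classes, contradicting unichain for $\widetilde P_d$. As long as the sets are still growing, each step adds a state, so growth stops within $|\cX|$ steps, giving $J\le|\cS||\cO|$.

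\textbf{Main obstacle.} The delicate step is constructing the combined rule $d$. The rules $d_j$ change with $j$, and the stabilized set $R_x$ is closed only under the last few kernels, not necessarily under a single one. I would handle this by taking $J$ large enough that both support sequences have stabilized; they are monotone and bounded by $|\cX|$. Once $R_x$ is stable under some $\widetilde P_{d_j}$, it is closed for that $d_j$. I would then define $d=d_j$ on $R_x$, $d=d'_j$ on $R_y$, and arbitrary elsewhere. Closure only constrains rows indexed by states inside each set, and the option component $o'=d(s',o)$ depends only on the row $(s',o)$ being chosen within the set, so $d$ is well defined, provided $R_x\cap R_y=\emptyset$, which is exactly the case being contradicted. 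Finally, taking the maximum of $\sum_z\min\{P(x,z),P'(y,z)\}$-deficits over the finitely many products gives a uniform $\gamma_{\rm inter}\in(0,1)$. Positivity of $\gamma_{\rm inter}$ can be ensured trivially by taking it to be at least any small positive number.
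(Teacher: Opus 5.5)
Your route is the paper's. Step~1 is its sandwich-plus-overlap argument (Lemma~\ref{le:overlap} and Theorem~\ref{thm:J-inter-proxy}, phrased there with $\sup_\pi/\inf_\pi$ instead of your two greedy products), and Step~2 is its Lemma~\ref{lem:inter-mix}. The gap is the splicing step you single out, and your justification of it does not hold. In $\widetilde P_d$ the row of $(s,o)$ puts mass $\lambda(s,o)\hat p(s'\mid s,o)$ on $(s',d(s',o))$. So the row is determined by $d$ at the pairs $(s',o)$ (successor state, \emph{old} option), not at $(s,o)$, and these pairs need not lie in $R_x$; closure only tells you that $(s',d_j(s',o))\in R_x$. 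Hence fixing $d$ on $R_x$ and on $R_y$ does not reproduce the rows of $\widetilde P_{d_j}$ on $R_x$ and of $\widetilde P_{d'_j}$ on $R_y$, and the two demands can clash. Suppose $(s_1,o)\in R_x$ and $(s_2,o)\in R_y$ share a successor $s'$ with $d_j(s',o)=a\neq b=d'_j(s',o)$. Then $(s',a)\in R_x$ and $(s',b)\in R_y$ is consistent with disjointness, yet whatever value $e=\delta(s',o)$ you pick, $(s',e)$ lies outside $R_x$ or outside $R_y$, so one of the two sets is not closed under $\widetilde P_\delta$. Puterman's splicing works for MDPs because the row of $s$ in $P_d$ depends only on $d(s)$; that locality is lost for option kernels. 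The paper's Lemma~\ref{lem:inter-mix} makes the same move: its claimed equality $\widetilde P_\delta(\cdot\mid x)=\widetilde P_{d_1^{(m_1)}}(\cdot\mid x)$ on $X_1(m_1)$ fails for the same reason. So following the paper does not supply the missing step.

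The gap is not cosmetic, because the property you are trying to prove (positive overlap on $\cX$ for every pair of selector sequences) is false in simple examples. Take $\hat p(s'\mid s,o)>0$ for all $s,s',o$, $\lambda<1$, $|\cO|\ge 3$, the constant selectors $d\equiv a$ and $d'\equiv b$, and starts $x=(s_1,c)$, $y=(s_2,c)$ with $s_1\neq s_2$. For every $J$ the two laws are supported in $\{x\}\cup\cS\times\{a\}$ and $\{y\}\cup\cS\times\{b\}$, so your $\gamma$ equals $1$. These are also the greedy rules of $Q,Q'$ that are large enough on option $a$, resp.\ $b$. Yet every state-only selector is unichain with self-loops here, and $\bar H_{\mathrm{inter}}$ is already a one-step span contraction: both $x$ and $y$ see the same state-value difference $\max_{o'}Q(s',o')-\max_{o'}Q'(s',o')$ on a common support. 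On the other hand, read literally over all $d:\cX\to\cO$, Assumption~\ref{ass:inter-assumption} cannot hold when $|\cO|\ge 2$, since $d(s',o)=o$ makes every layer $\cS\times\{o\}$ closed. The contradiction with unichain is then available only trivially, not through your construction. A repair would need to use that, off the self-loop, $\bar H_{\mathrm{inter}}$ sees $Q$ only through the state values $\max_{o'}Q(s',o')$, and to measure overlap on successor states, where the row at $s'$ depends only on the option chosen at $s'$, as in the MDP case.
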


While Assumption \ref{ass:inter-assumption} concerns the option-level SMDP where decisions are made only at option termination times, the intra-option operator concerns the primitive-timescale intra-option dynamics on $\cS\times\cO$, where the active option evolves at every primitive step through the continuation/termination mechanism. In particular, unichain properties need not transfer between these two constructions.
\begin{assumption}[Intra-Option unichain MDP]\label{ass:intra-assump}
For every deterministic option-selection rule $d:\cX\to\cO$, consider the
Markov chain on $\cX$ with transition kernel $
P_d((s',o')\mid(s,o))
=
p(s'\mid s,o)
\Big[
(1-\beta(s',o))\indicator\{o'=o\}+
\beta(s',o)\indicator\{o'=d(s',o)\}
\Big]$.
We assume that $P_d$ is unichain for every $d$. Moreover,
$p(s\mid s,o)>0, \beta(s,o)<1$ for every $(s,o)\in\cX$.
\end{assumption}

\begin{theorem}[Intra-option contraction]\label{thm: J-intra}
Under Assumption~\ref{ass:intra-assump}, there exists $1 \le J \le |\cS||\cO|$ and $\gamma_{\rm intra} \in (0,1)$ such that
\[
\mathrm{sp}(H_{\mathrm{intra}}^J Q - H_{\mathrm{intra}}^J Q')
\le
\gamma_{\rm intra} \, \mathrm{sp}(Q - Q').
\]
\end{theorem}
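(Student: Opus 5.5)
The plan is the standard span-contraction argument for unichain aperiodic MDPs (Puterman, Thm.~6.6.2 / 8.5.2), adapted to the $\cX=\cS\times\cO$ chain $P_d$ of Assumption~\ref{ass:intra-assump}.

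\textbf{Step 1: linearization.} I write $H_{\mathrm{intra}}Q = r + \max_d P_d Q$, where the max is taken pointwise. The relevant selector is the continuation value at $(s',o)$, namely $(1-\beta(s',o))Q(s',o)+\beta(s',o)\max_{o'}Q(s',o')$. The termination branch is a maximum over $o'=d(s',o)$, so $(H_{\rm intra}Q)(x)=\max_d [r(x)+(P_dQ)(x)]$. For $Q,Q'$, choose $d$ greedy for $Q$ and $d'$ greedy for $Q'$. Then
\[
P_{d'}(Q-Q') \le H_{\rm intra}Q-H_{\rm intra}Q' \le P_d(Q-Q').
\]
Iterating $J$ times with greedy rules $d_1,\dots,d_J$ for the iterates $H^{j}Q$ and $d_1',\dots,d_J'$ for the iterates $H^jQ'$ gives
\[
A'(Q-Q')\le H^JQ-H^JQ'\le A(Q-Q'),
\]
where $A=P_{d_J}\cdots P_{d_1}$ and $A'=P_{d_J'}\cdots P_{d_1'}$ are stochastic matrices. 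Let $u=Q-Q'$. Then
\[
\max_x (H^JQ-H^JQ')(x)-\min_y (H^JQ-H^JQ')(y)\le \max_{x,y}\sum_z\big(A(x,z)-A'(y,z)\big)u(z).
\]
The standard coupling bound then gives
\[
\mathrm{sp}(H^JQ-H^JQ')\le\Big(1-\min_{x,y}\sum_z\min\{A(x,z),A'(y,z)\}\Big)\,\mathrm{sp}(u).
\]
So it suffices to show that every pair of rows of any two products of $J$ matrices drawn from $\{P_d\}$ shares positive mass. Since there are finitely many deterministic $d$, there are finitely many such products. The contraction factor $\gamma_{\rm intra}<1$ is then one minus the minimum of these finitely many positive overlaps.

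\textbf{Step 2: aperiodicity.} The assumptions $p(s\mid s,o)>0$ and $\beta(s,o)<1$ give $P_d(x,x)\ge p(s\mid s,o)(1-\beta(s,o))>0$ for every $x=(s,o)$ and every $d$. Hence every product has a positive diagonal, and in particular every product has positive-probability self loops. This makes supports monotone: once a state is reachable under $P_{d_j}\cdots P_{d_1}$, it stays reachable in all longer products. This is what removes periodicity and lets us pass to a uniform $J$.

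\textbf{Step 3 (main obstacle): a common state reached from any two rows under time-inhomogeneous products.} The unichain condition is stated for each fixed $d$, but $A$ mixes different $d_j$. I would follow Puterman's Theorem~8.5.3 argument. Suppose for contradiction that for some sequences the rows $x$ and $y$ of $A$ and $A'$ have disjoint supports for every $J$ up to $|\cX|$. Because of the self loops, the supports $R_j(x)$ are nondecreasing in $j$. With $|\cX|$ states, there is some $j\le|\cX|$ at which both support chains stop growing. A non-growing support set must be closed under the next matrix $P_{d_{j+1}}$ (respectively $P_{d'_{j+1}}$). One then builds a single stationary rule $d$ that agrees with $d_{j+1}$ on one closed set and with $d'_{j+1}$ on the other. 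This is possible because $P_d$ depends on $d$ only through the rows of the states where it is used, namely $d(s',o)$ for the source pair $(s,o)$. Under this $d$ both disjoint sets are closed, which contradicts the unichain assumption on $P_d$. Making this stitching rigorous is the delicate point: the continuation branch is independent of $d$, so closure is determined rowwise. This yields $J\le|\cS||\cO|$.

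Taking the minimum over the finitely many products completes the argument.
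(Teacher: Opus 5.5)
Your proof is correct. Step~3 is essentially the paper's Lemma~\ref{lem:intra-overlap}: monotone reachable sets from the self-loops, stabilization, and a stitched stationary rule that contradicts unichain. Steps~1--2, however, take a genuinely different route from the paper.

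The paper does not linearize on $\cX$. It passes to the state envelope $V_\Delta(s)=\max_o\Delta(s,o)$ and sandwiches $H^J_{\rm intra}Q-H^J_{\rm intra}Q'$ between $\inf_\pi$ and $\sup_\pi$ of expectations of $V_\Delta(S_J)$. It then asks only for overlap of the laws of $S_J$ (Property~\ref{property:intra-mix}).

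Your greedy-selector sandwich $A'u\le H^J_{\rm intra}Q-H^J_{\rm intra}Q'\le Au$ instead keeps the option coordinate and puts the same vector $u$ on both sides. That is exactly what the overlap lemma needs. The state-marginal route cannot deliver this, for two reasons:
\begin{itemize}
\item The correct lower bound involves $\min_o\Delta(s,o)$, not $V_\Delta$, so the two sides no longer involve a single function.
\item Overlap of state laws alone does not force contraction. Take $\beta\equiv0$ and $\Delta$ depending only on $o$. The laws of $S_J$ can overlap while $H^J_{\rm intra}Q-H^J_{\rm intra}Q'=\Delta$.
\end{itemize}
Your formulation parallels the paper's inter-option proof, which works on $\cX$. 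It is the one that actually closes the argument. The reachability lemma is itself stated on $\cX$, so it supplies precisely the state--option overlap your Step~1 requires.

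The stitching point you flag is settled by $\beta<1$. Suppose $(s,o)$ lies in a closed set $X$ and $p(s'\mid s,o)>0$. Then $(s',o)\in X$, because the continuation branch gives it mass $p(s'\mid s,o)(1-\beta(s',o))>0$ under every rule. So the row of $(s,o)$ queries the rule only at points of $X$. Defining $\delta$ piecewise on the two disjoint closed sets therefore reproduces the rows of $P_{d_{j+1}}$ and $P_{d'_{j+1}}$ on them.
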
\vspace*{-.5em}

These results extend the span contraction of average-reward MDPs~\citep{mdp_book} to options. 
The proofs are deferred to Appendix~\ref{sec:J-step-contraction}. The strict positivity/self-loop conditions in Assumptions~\ref{ass:inter-assumption}, \ref{ass:intra-assump} can be weakened via aperiodicity transform (Appendix~\ref{sec:J-step-contraction}). We empirically demonstrate the contractions in Figure~\ref{fig:contraction}, and write $\gamma$ for either
$\gamma_{\rm inter}$ or $\gamma_{\rm intra}$ whenever the setting is clear.

\section{\MakeUppercase{Algorithm Design: $J$-step Inter- and Intra-option Q-learning}}\label{sec:algo}

Enabled with the J-step span contractions, we design estimators of $J$-step inter- and intra-option Bellman operators that further yield two variants of option Q-Learning. We then prove the sublinear convergence of these option Q-learning algorithms (Table~\ref{tab: ftb}).

\textbf{The Premise of Synchronous Q-learning.}
For the $J$-step option Q-functions, the optimal option Q-function is given by the fixed points of $\sp({H}^J(Q) - Q) = 0$. Leveraging the stochastic approximation framework to find the fixed points of an operator~\citep{borkar2008stochastic}, 
we perform the synchronous updates for all the state-options $(s,o)$ simultaneously as
\ifdoublecol \begin{align}\label{eq:sync_update}
&Q_{k+1}(s,o)\notag\\
&= Q_k(s,o) + \alpha_k \Big( \widehat{T}^J(Q_k)(s,o) - Q_k(s,o) \Big)\,,   
\end{align}
\else 
\begin{align}\label{eq:sync_update}
Q_{k+1}(s,o)= Q_k(s,o) + \alpha_k \Big( \widehat{T}^J(Q_k)(s,o) - Q_k(s,o) \Big)\,,   
\end{align}
where $\widehat{T}^J(Q_k)$ is a stochastic estimator of (inter-/intra-option) Bellman operator $H^J(Q_k)$, and $\alpha_k >0$ is the learning rate. Note that, the proposed synchronous algorithms do not require an explicit centering step, unlike Relative Value Iteration (RVI)~\citep{SCHWEITZER1971495,mdp_book}. Throughout, by the \emph{empirical Bellman operator} we mean the
corresponding Bellman operator with the unknown transition kernel replaced
by its empirical estimate constructed from past interactions.

\subsection{{Inter-option Q-learning}}


At every iteration $k$, for each initial state-option pair
$x \triangleq(s,o)$, we generate a fresh $J$-step trajectory $\{R_k^{(i)}, L_k^{(i)}\}_{i=0}^{J-1}$ via a procedure we will outline below.
Using all trajectories observed before iteration $k$, we maintain the following
rolling estimates of $\rho, \lambda:$
\ifdoublecol\begin{align*}
\widehat\lambda_k(s,o)
&\triangleq
\frac{1+k}
{1+\sum_{t=0}^{k-1}L_t^{(0)}(s,o)},\\
\text{and}\quad \widehat\rho_k(s,o)
&\triangleq
\frac{\sum_{t=0}^{k-1}R_t^{(0)}(s,o)}
{1+\sum_{t=0}^{k-1}L_t^{(0)}(s,o)}.
\end{align*}
\else 
\begin{align*}
\widehat\lambda_k(s,o)
\triangleq
\frac{1+k}
{1+\sum_{t=0}^{k-1}L_t^{(0)}(s,o)},\quad \widehat\rho_k(s,o)
\triangleq
\frac{\sum_{t=0}^{k-1}R_t^{(0)}(s,o)}
{1+\sum_{t=0}^{k-1}L_t^{(0)}(s,o)}.
\end{align*}
These are used to estimate the normalized inter-option Bellman operator $H_k$, which is obtained by replacing $\rho, \lambda$ with $\hat \rho_k, \hat \lambda_k$ in Eq.~\eqref{eq:normalised_inter_operator}. 

Now, we outline how the J-step trajectory is obtained. For each $X_k^{(0)} = (s,o)$, at step $i$ of the $J$-step rollout, given
$X_k^{(i)}=(S_k^{(i)},O_k^{(i)})$, we independently sample an option execution
$(\widetilde R_k^{(i)},L_k^{(i)},\bar S_k^{(i+1)}) \sim \hat p(\cdot, \cdot, \cdot | X_k^{(i)})$, and draw
$D_k^{(i+1)}\sim\operatorname{Bernoulli}(\widehat\lambda_k(X_k^{(i)}))$. If $D_k^{(i+1)}=0$, we set
$X_k^{(i+1)}=X_k^{(i)}$.
If $D_k^{(i+1)}=1$, we set $X_k^{(i+1)} = \bigl(\bar S_k^{(i+1)}, \arg\max_{o'\in\cO}
\bigl(
\widehat H_{\rm k, inter}^{\,J-i-1}Q_k
\bigr)(\bar S_k^{(i+1)},o') \bigr)$, where $\widehat H_{\rm k, inter}$ is the empirical Bellman operator corresponding to $H_k$. Thus, $D_k^{(i+1)}$ stochastically realizes the mixture induced by
$\widehat\lambda_k$ in the normalized inter-option Bellman
operator~\eqref{eq:normalised_inter_operator}: $D_k^{(i+1)}=0$ produces the self-loop
term and $D_k^{(i+1)}=1$ produces the greedy-transition term. Using the resulting option-level trajectory
$\{X_k^{(i)},R_k^{(i)},L_k^{(i)}\}_{i=0}^{J-1}$, we construct the estimator of the $J$-step
normalized inter-option Bellman operator 
\ifdoublecol
\begin{align}
(\widehat T_{\mathrm{k, inter}}^{J}Q)(x)
\triangleq\;&
\sum_{i=0}^{J-1}
\widehat\rho_k(X_k^{(i)})+\bigl(1-D_k^{(J)}\bigr)
Q(X_k^{(J-1)})
\notag\\
&+
D_k^{(J)}
\max_{o'\in\cO}Q(S_k^{(J)},o').
\label{eq:sync_inter}
\end{align}
\else
\begin{align}
(\widehat T_{\mathrm{k, inter}}^{J}Q)(x)
\triangleq\;
\sum_{i=0}^{J-1}
\widehat\rho_k(X_k^{(i)})+\bigl(1-D_k^{(J)}\bigr)
Q(X_k^{(J-1)})+
D_k^{(J)}
\max_{o'\in\cO}Q(S_k^{(J)},o').
\label{eq:sync_inter}
\end{align}

for all $x$. Then, we use a learning rate $\alpha_k(x)=\frac{\alpha}{k+h}$ for all state-options to run the Q-learning updates as per Eq.~\eqref{eq:sync_update}. We present the pseudocode in Algorithm~\ref{alg:sync-inter}.

\subsection{{Intra-option Q-learning}}


In the intra-option setting, we instead take primitive actions using the action-level policy $A \sim \pi(\cdot | X)$ from each state-option $X$, and construct the estimator of the intra-option Bellman operator.

At every iteration $k$, for every initial state-option pair $X_k^{(0)} = (s,o)$, we generate $J$-step trajectories of primitive actions $A_k^{(i)} \sim \pi(\cdot\mid X_k^{(i)})$, reward-transitions $
(R_k^{(i)},S_k^{(i+1)})\sim p(\cdot,\cdot\mid S_k^{(i)},A_k^{(i)})$, and option terminations $B_k^{(i+1)}\sim \mathrm{Bernoulli}\bigl(\beta(S_k^{(i+1)},O_k^{(i)})\bigr)$. $B_k^{(i+1)} = 0$ implies $O_k^{(i+1)} = O_k^{(i)}$. $B_k^{(i+1)}=1$ indicates termination of the $i$-th option and the next option is chosen according to $ O_k^{(i+1)}\in \arg\max_{o'\in O}\bigl(\widehat H_{\rm k, intra}^{\,J-i-1}Q_k\bigr)(S_k^{(i+1)},o')$, where $\widehat H_{\rm k, intra}$ denotes the empirical Bellman operator associated with  $H_{\rm intra}$ at $k$-th iteration. Thus, $B_k^{(i+1)}$ stochastically realizes the mixture induced by
$\beta(S_k^{(i+1)}, \cdot)$ in the intra-option Bellman
operator~\eqref{eq:intra-operator}: $B_k^{(i+1)}=0$ produces the self-loop
term and $B_k^{(i+1)}=1$ produces the greedy-transition term for next option selection. Using this data, we construct an estimator of $J$-step intra-option Bellman operator
\ifdoublecol
\begin{align}\label{eq:sync_intra}
(\widehat T^{\,J}_{\mathrm{k, intra}}~Q)(x)
&\triangleq \sum_{i=0}^{J-1} R_k^{(i+1)} + (1-B_k^{(J)})Q(S_k^{(J)},O_k^{(J-1)})\notag\\
&\quad+ B_k^{(J)} \max_{o'\in O} Q(S_k^{(J)},o').
\end{align}
\else 
\begin{align}\label{eq:sync_intra}
(\widehat T^{\,J}_{\mathrm{k, intra}}~Q)(x)
\triangleq \sum_{i=0}^{J-1} R_k^{(i+1)} + (1-B_k^{(J)})Q(S_k^{(J)},O_k^{(J-1)})+ B_k^{(J)} \max_{o'\in O} Q(S_k^{(J)},o').
\end{align}

for all $x=(s,o)$. Then, we use a learning rate $\alpha_k(x)=\frac{\alpha}{k+h}$ for all state-options to run the Q-learning updates as per Eq.~\eqref{eq:sync_update}. The pseudocode is described in Algo.~\ref{alg:sync-intra}.

\section{\MakeUppercase{Convergence Analysis}}

Now, we derive suitable learning rates and finite-time convergence guarantees for the two proposed option Q-learning algorithms. 
We guarantee convergence of the Q-values only up to an additive constant, i.e., set-wise convergence, to the optimality class $\cQ^\ast$ of the Average Optimality Equation \eqref{eq:aoe}.



\begin{theorem}[\textbf{Inter-option $Q$-learning}]
\label{thm:sync-inter-ftb}
If Assumption~\ref{assm:smdp}-\ref{ass:inter-assumption} hold true, and we set
$
\alpha_k=\frac{4}{(1-\gamma)(k+h)}
$ for some $h>0$. Then, $\Delta_k^2
\defn \E\!\left[\sp(Q_k-Q^*)^2\right] =$
\begin{align*}
\hspace{-1em}\widetilde \cO\!\left(
\frac{1}{k}
\left(
\Delta_0^2 + \left( \sp(Q^*)^2 + |\cS||\cO| c^2 J^4 + |\cS|J^4\right)
\frac{
\log(|\cS||\cO|)
}{
(1-\gamma)^3
}\right)
\right)
\end{align*}
\end{theorem}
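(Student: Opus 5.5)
We would cast the synchronous recursion \eqref{eq:sync_update} with the estimator \eqref{eq:sync_inter} as a seminorm stochastic approximation
\[
Q_{k+1} = Q_k + \alpha_k\bigl(\bar H_{\rm inter}^J Q_k - Q_k + w_k + b_k\bigr),
\]
where $w_k \defn \widehat T^J_{k,\rm inter}Q_k - \E[\widehat T^J_{k,\rm inter}Q_k\mid\cF_k]$ is a martingale-difference noise with respect to the history $\cF_k$, and $b_k \defn \E[\widehat T^J_{k,\rm inter}Q_k\mid\cF_k] - \bar H_{\rm inter}^J Q_k$ is the bias. The proof then runs a Lyapunov argument in the span seminorm, with $\sp(\cdot)^2$ replaced by a smooth surrogate: a generalized Moreau envelope of $\sp(\cdot)^2$, as in \citet{chen2025nonasymptotictheoryseminormlyapunov}. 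This surrogate is $L$-smooth with $L = O(\log(|\cS||\cO|))$, via an $\ell_p$ smoothing with $p \approx \log(|\cS||\cO|)$, and it is equivalent to $\sp^2$ up to constants. This is the source of the $\log(|\cS||\cO|)$ factor in the bound.

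By Theorem~\ref{thm: J-inter}, together with the fact that $\bar H_{\rm inter}^J Q^* - Q^* \in \mathrm{span}(\mathbf 1)$, the drift term contributes $-(1-\gamma)\,\alpha_k\, M(Q_k-Q^*)$ up to constants. This yields a recursion of the form
\[
\E[M_{k+1}] \le \bigl(1-\tfrac{(1-\gamma)\alpha_k}{2}\bigr)\E[M_k] + C\alpha_k^2\bigl(\E\,\sp(w_k)^2 + \E\,\sp(b_k)^2\bigr) + C\alpha_k\,\E\bigl[\sp(b_k)\sqrt{M_k}\bigr].
\]

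\textbf{Noise term.} Each summand $\widehat\rho_k(X_k^{(i)})$ is bounded by $\widehat\rho_k \le 1+\sum R/(1+\sum L)$, which is $O(1)$ in expectation since one-step rewards are at most $1$ per unit time. The $Q$-terms have span at most $\sp(Q_k)\le \sp(Q_k-Q^*)+\sp(Q^*)$. Hence
\[
\E\,\sp(w_k)^2 \lesssim J^2 + \sp(Q_k-Q^*)^2 + \sp(Q^*)^2 .
\]
The first piece is absorbed into the contraction for $h$ large enough.

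\textbf{Bias term (the main obstacle).} The bias has two sources.
\begin{itemize}[nosep]
\item[(a)] \emph{Plug-in error.} The operator $H_k$ uses $(\widehat\rho_k,\widehat\lambda_k)$ in place of $(\rho,\lambda)$. We would bound $\E|\widehat\lambda_k-\lambda|^2 + \E|\widehat\rho_k-\rho|^2 \lesssim c^2/k$ by a delta-method/ratio-estimator argument. This uses Assumption~\ref{assm:smdp}, namely the bounded second moment of $\ell$, and the denominator $1+\sum L_t \ge 1+k$. The error then propagates through $J$ compositions with Lipschitz constant $O(J)$ in each of $\rho$ and $\lambda$, weighted by $\sp(Q_k)$ and $J$.
\item[(b)] \emph{Approximate-greedy error.} The intermediate option choices use $\arg\max$ of $\widehat H_{k,\rm inter}^{J-i-1}Q_k$ rather than of $H^{J-i-1}Q_k$. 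By a telescoping perturbation argument over the $J$ steps, each suboptimal greedy choice costs at most $2\|\widehat H^{m}_{k}Q_k - H^m Q_k\|$ in span. Each such difference is $O(J)$ times the empirical kernel error, which is $O(\sqrt{|\cS|/k})$ in $\ell_1$ per pair. This yields the $|\cS|J^4$ term.
\end{itemize}
Combining (a) and (b) gives
\[
\E\,\sp(b_k)^2 \lesssim \frac{(|\cS||\cO|c^2 + |\cS|)\,J^4\,\bigl(1+\E\,\sp(Q_k-Q^*)^2 + \sp(Q^*)^2\bigr)}{k}.
\]
The cross term $\alpha_k\,\sp(b_k)\sqrt{M_k}$ is then handled by Young's inequality, $\alpha_k\sp(b_k)\sqrt{M_k} \le \tfrac{(1-\gamma)\alpha_k}{8}M_k + \tfrac{C\alpha_k}{1-\gamma}\sp(b_k)^2$. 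This makes it $O(\alpha_k/(k(1-\gamma)))$, which is of the same order as $\alpha_k^2$. This decaying-bias extension of the seminorm SA theorem is the key new step.

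\textbf{Unrolling.} With $\alpha_k = 4/((1-\gamma)(k+h))$, the contraction factor becomes $1-2/(k+h)$. Unrolling the recursion gives $\E M_k \lesssim \frac{h^2}{(k+h)^2}M_0 + \frac{1}{k}\cdot\frac{(\cdot)}{(1-\gamma)^3}$, where the $(1-\gamma)^{-3}$ comes from $\alpha_k^2/(1-\gamma)$ with $\alpha_k\propto(1-\gamma)^{-1}$. Converting $M$ back to $\sp^2$ yields the stated $\widetilde\cO(1/k)$ bound.
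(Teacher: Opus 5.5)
\textbf{Gap: the bias bound decouples correlated quantities.} Your architecture matches the paper's: a Moreau-envelope Lyapunov function, drift from the $J$-step span contraction, a vanishing martingale term, a bias split into the $(\widehat\rho_k,\widehat\lambda_k)$ plug-in error and the approximately-greedy error, Young's inequality, and unrolling with $\alpha\phi_1\ge 2$. The problem is the bias bound. Every bias contribution is a product of an estimation error and a function of the iterate, for example $e_{\lambda,k}\,\sp(Q_k)$ with $e_{\lambda,k}=\|\widehat\lambda_k-\lambda\|_\infty$, or $e_{p,k}(\|Q_k\|_\infty+J)$ with $e_{p,k}=\max_{s,o}\|\widehat p_k(\cdot|s,o)-\hat p(\cdot|s,o)\|_1$.

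When you write $\E\,\sp(b_k)^2\lesssim \frac{C}{k}\bigl(1+\E\,\sp(Q_k-Q^*)^2+\sp(Q^*)^2\bigr)$, you are implicitly using $\E[e_{\cdot,k}^2\,\sp(Q_k)^2]\lesssim\E[e_{\cdot,k}^2]\,\E[\sp(Q_k)^2]$. That is unjustified. $Q_k$ and $\widehat\rho_k,\widehat\lambda_k,\widehat p_k$ are all $\cF_k$-measurable functions of the same past rollouts. The first-step samples feeding the estimates also drive the updates, and earlier greedy selections and Bernoulli branches depend on earlier empirical quantities built from overlapping samples. Conditioning on $\cF_k$ therefore does not separate them. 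Cauchy--Schwarz would require $\E[\sp(Q_k-Q^*)^4]$, which your second-moment recursion does not control. A warning sign: if your bound held, you would obtain an $\cO(1/k)$ rate with no logarithms, sharper than what the theorem claims.

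\textbf{Missing ingredient: a pathwise growth bound on the iterates} (Lemma~\ref{lem:growth-inter-sync}). Because $0\le\widetilde R\le L$, you have $0\le\widehat\rho_k\le 1$ almost surely, not merely ``$O(1)$ in expectation'' as you state. Hence every realized target satisfies $\min_x Q_k(x)\le \widehat T^J_{k,\rm inter}(Q_k)(x)\le J+\max_x Q_k(x)$. With $\alpha_k\le 1$ the update is a convex combination, so $\|Q_{k+1}\|_\infty\le\|Q_k\|_\infty+J\alpha_k$. This gives, deterministically,
\[
\|Q_k\|_\infty\le b_k=\|Q_0\|_\infty+\alpha J\bigl(1/h+\log((k+h)/h)\bigr)=\cO(\log k).
\]
The paper then bounds $\E[e_{\cdot,k}^2(b_k+J)^2]=(b_k+J)^2\,\E[e_{\cdot,k}^2]$, and this is exactly where the $\widetilde\cO$ rather than $\cO$ comes from. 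With this substitution, your Young's-inequality step and the unrolling go through as in the paper.

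\textbf{Smaller issues.} In (b), bounding each greedy choice as $2\|\widehat H_k^mQ_k-H_k^mQ_k\|_\infty$-suboptimal and telescoping gives $\cO(J^2)\,e_{p,k}(\|Q_k\|_\infty+J)$. That is one factor of $J$ worse than the paper's hybrid comparison $\widetilde H^J_{k,\rm inter}Q_k\to\widehat H_k^JQ_k\to H_k^JQ_k$. There, the first pair uses identical greedy rules and differs only in the kernel, so the total is $2J\,e_{p,k}(b_k+J)$. In either route the growth of intermediate iterates contributes a factor $(\|Q_k\|_\infty+J)^2$, so the constant part is not just $J^4$ as you wrote. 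Finally, $L$ is not $O(\log(|\cS||\cO|))$. Keeping $\gamma u_m/\ell_m<1$ forces $\theta\asymp 1-\gamma$, so $L\asymp\log(|\cS||\cO|)/(1-\gamma)$, and this is where one of the three $(1-\gamma)^{-1}$ factors actually originates.
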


\begin{theorem}[\textbf{Intra-option Q-learning}]\label{thm: sync-intra-ftb}
If Assumption~\ref{assm:smdp} and~\ref{ass:intra-assump} hold true, then setting the stepsizes as $\alpha_k = \frac{4}{(1-\gamma)(k+h)}$ for some $h>0$, yields
  \begin{align*}
      \Delta_k^2 = \widetilde\cO\left(\frac1k \left( \Delta_0^2 + \bigl(\sp(Q^*)^2+|\cS| J^4\bigr) \frac{\log |\cS||\cO|}{(1-\gamma)^3}\right) \right)\,.
  \end{align*}
\end{theorem}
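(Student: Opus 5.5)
We follow the seminorm-Lyapunov stochastic-approximation route of \citet{chen2025nonasymptotictheoryseminormlyapunov}, adapted to decaying biased updates. The intra-option recursion \eqref{eq:sync_update} with estimator \eqref{eq:sync_intra} is written as
\begin{align*}
Q_{k+1}=Q_k+\alpha_k\bigl(H_{\rm intra}^J Q_k - Q_k + w_k + b_k\bigr),
\end{align*}
where $w_k \defn \widehat T^{\,J}_{\mathrm{k,intra}}Q_k - \E_k[\widehat T^{\,J}_{\mathrm{k,intra}}Q_k]$ is a martingale difference with respect to the filtration generated by past trajectories. The bias is $b_k\defn \E_k[\widehat T^{\,J}_{\mathrm{k,intra}}Q_k]-H_{\rm intra}^JQ_k$. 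It arises only because the intermediate option choices are greedy with respect to the empirical $\widehat H_{\rm k,intra}^{J-i-1}Q_k$ rather than the true $H_{\rm intra}^{J-i-1}Q_k$.

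By Theorem~\ref{thm: J-intra}, $\sp(H_{\rm intra}^JQ-H_{\rm intra}^JQ')\le\gamma\,\sp(Q-Q')$. Since $Q^*$ is a span fixed point of $H^J_{\rm intra}$ (indeed $H^J_{\rm intra}Q^*=Q^*+J\bar r^*\mathbf 1$), the drift is a span contraction toward $\cQ^*$.

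\textbf{Lyapunov function.} We use the generalized Moreau envelope of $\tfrac12\sp(\cdot)^2$, smoothed by $\tfrac12\|\cdot\|_p^2$ with $p\approx\log(|\cS||\cO|)$. Call it $M(Q)$. It is $L/\mu$-smooth with respect to $\sp$, with $L/\mu=O(\log|\cS||\cO|)$; it is equivalent to $\sp^2$ up to constants; and it is invariant to adding constants. This envelope is the source of the $\log(|\cS||\cO|)$ factor in the bound. Expanding $M(Q_{k+1}-Q^*)$ by smoothness and applying the contraction gives the one-step inequality
\begin{align*}
\E[M(Q_{k+1}-Q^*)]\le\bigl(1-\alpha_k\tfrac{1-\gamma}{2}\bigr)\E[M(Q_k-Q^*)]+C\alpha_k^2\,\E[\sp(w_k)^2]+C\alpha_k\,\E\bigl[\sp(b_k)\,\sp(Q_k-Q^*)\bigr].
\end{align*}
The cross term is handled by Young's inequality: one part is absorbed into the contraction term at a cost of $(1-\gamma)^{-1}\alpha_k\,\E[\sp(b_k)^2]$.

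\textbf{Noise.} Rewards lie in $[0,1]$ by Assumption~\ref{assm:smdp}, so the cumulative reward along a $J$-step trajectory is at most $J$. The terminal term differs from its mean by at most $2\,\sp(Q_k)\le 2\sp(Q_k-Q^*)+2\sp(Q^*)$. This gives
\begin{align*}
\E_k[\sp(w_k)^2]\lesssim J^2+\sp(Q_k-Q^*)^2+\sp(Q^*)^2 .
\end{align*}
The $\sp(Q_k-Q^*)^2$ part is absorbed once $h$ is large enough.

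\textbf{Bias (main obstacle).} The greedy choice at step $i$ can be wrong only at states where the empirical and true $(J-i-1)$-step values differ by more than the action gap. Rather than invoke a gap, we use a perturbation argument. Whenever an option maximizes the empirical function, its true value is within $2\|\widehat H^{m}Q_k-H^mQ_k\|$ (modulo constants) of the true maximum. Both operators are nonexpansive in span, so the deviation accumulates at most linearly in the horizon: $\|\widehat H^{m}Q-H^{m}Q\|\le m\,\|\widehat H Q'-HQ'\|$ over the iterates. Summing over the $J$ steps produces a factor $J^2$.

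The empirical kernel $\hat p_k$ is built from $k$ independent samples per state-action, since estimating $p(\cdot\mid s,a)$ needs only $|\cS|$-dimensional counts. By concentration (a Bretagnolle--Huber--Carol or $\ell_1$ bound in $L^2$), $\E\|\hat p_k-p\|_1^2\lesssim|\cS|/k$. This yields
\begin{align*}
\E[\sp(b_k)^2]\lesssim J^4\,|\cS|\,\bigl(\sp(Q_k-Q^*)^2+\sp(Q^*)^2+J^2\bigr)/k .
\end{align*}
The dependence of this bias on $Q_k$ is the delicate part. To keep the exponent at $J^4$, I would first try using that the contraction keeps $\E[\sp(Q_k)^2]$ uniformly bounded, established by a preliminary crude induction.

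\textbf{Solving the recursion.} Take $\alpha_k=4/((1-\gamma)(k+h))$ with $h$ large. Then the contraction factor is $1-2/(k+h)$, the noise contributes $\alpha_k^2$ per step, and the bias contributes $(1-\gamma)^{-1}\alpha_k\cdot|\cS|J^4/k\asymp(1-\gamma)^{-2}k^{-2}$ per step. Unrolling the product $\prod_j(1-2/(j+h))\approx((k_0+h)/(k+h))^2$ and summing gives an $O(1/k)$ rate, times $\log$ factors. Collecting constants gives $\Delta_0^2/k$ plus $(\sp(Q^*)^2+|\cS|J^4)\log(|\cS||\cO|)/((1-\gamma)^3k)$, as stated in Theorem~\ref{thm: sync-intra-ftb}.
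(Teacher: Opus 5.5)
Your overall architecture matches the paper's: the Moreau envelope with $q\approx\log(|\cS||\cO|)$, negative drift from the $J$-step span contraction, Young absorption of the bias cross term, and unrolling with $\alpha\phi_1\ge 2$. The bias step, however, has a genuine gap. Write $\mathrm{bias}_k$ for your $b_k$, since the paper reserves $b_k$ for the growth bound below. Pathwise you get $\sp(\mathrm{bias}_k)\lesssim J^2\bigl(e_{r,k}+(\sp(Q_k)+J)\,e_{p,k}\bigr)$, where $e_{r,k},e_{p,k}$ are the sup-errors of $\widehat r_k,\widehat p_k$. You then write $\E[\sp(\mathrm{bias}_k)^2]\lesssim J^4|\cS|(\sp(Q_k-Q^*)^2+\cdots)/k$, which silently factors $\E[e_{p,k}^2\,\sp(Q_k)^2]$. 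That factorization is not allowed. $\widehat r_k$, $\widehat p_k$ and $Q_k$ are all $\cF_k$-measurable functions of the same past rollouts: the first-step samples that build the empirical model also produced the targets defining $Q_k$, and earlier greedy choices used $\widehat p_t$. So a uniform bound on $\E[\sp(Q_k)^2]$ does not control this product, and your proposed crude induction for it would run into the same product. Nor can you absorb $\E[e_{p,k}\,\sp(Q_k-Q^*)^2]$ into the drift, since $e_{p,k}$ is small only in mean, not almost surely.

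The missing ingredient is the paper's almost-sure growth bound. Because rewards lie in $[0,1]$, every realized target satisfies $\min Q_k\le\widehat T^{J}_{k,\rm intra}(Q_k)\le J+\max Q_k$. With $\alpha_k\le 1$ this gives $\|Q_{k+1}\|_\infty\le\|Q_k\|_\infty+J\alpha_k$, hence deterministically $\|Q_k\|_\infty\le b_k=\|Q_0\|_\infty+\alpha J\bigl(1/h+\log((k+h)/h)\bigr)$. This decouples model error from the iterate: $\E[(e_{r,k}+(b_k+J)e_{p,k})^2]\le C_{\rm emp}(1+(b_k+J)^2)/k$. The $\log^2 k$ growth of $b_k^2$ is one source of the logarithmic factors in the $\widetilde{\cO}(1/k)$ rate.

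Even with this fix, your perturbation route loses a factor of $J$ relative to the statement. You bound each empirically greedy choice's suboptimality by $2\|\widehat H_k^mQ_k-H_{\rm intra}^mQ_k\|_\infty\le 2m\epsilon$ and sum over $m<J$, which gives an $\cO(J^2\epsilon)$ bias. Squaring and using $b_k=\|Q_0\|_\infty+\cO(J\log k/(1-\gamma))$ then produces a $|\cS|J^6$ term, not the stated $|\cS|J^4$.

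The paper avoids any near-optimality argument. Since $\widehat d_{k,m}$ is exactly greedy for $\widehat H_k^mQ_k$, one has $\widehat H_k^JQ_k=\widehat H_{k,\widehat d_{k,J-1}}\cdots\widehat H_{k,\widehat d_{k,1}}\widehat H_kQ_k$. This uses the same decision rules as the rollout mean $\widetilde H_k^JQ_k=H_{\widehat d_{k,J-1}}\cdots H_{\widehat d_{k,1}}H_{\rm intra}Q_k$, where the $H_{\widehat d_{k,m}}$ are population fixed-rule operators. So $\widetilde H_k^JQ_k-\widehat H_k^JQ_k$ and $\widehat H_k^JQ_k-H_{\rm intra}^JQ_k$ are both pure model errors. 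Each is bounded by a $J$-term hybrid telescoping using sup-norm nonexpansiveness, for an $\cO(J\epsilon)$ total.

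A minor point: the algorithm yields $k$ samples per state--option pair, from which $\widehat p_k(\cdot\mid s,o)$ is built, not $k$ samples per state--action pair. Also, taking the maximum over pairs adds a $\log(|\cS||\cO|)$ term from a union bound.
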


As a result, both algorithms achieve sublinear decay of the mean-square span error: intra-option $Q$-learning achieves $\widetilde \cO(1/k)$, while inter-option $Q$-learning achieves $\widetilde{\cO}(1/k)$ with additional errors from estimating the reward rates and inverse holding times online.
In addition, these bounds imply a sample complexity of $\widetilde{\cO}(1/\varepsilon^2)$ to guarantee $\expect{\sp(Q_k-Q^*)}\le\varepsilon$ (Corollaries~\ref{cor:sample-complexity-intra} and~\ref{cor:sync_inter_new_rate}), 
consistent with prior finite-time analyses of average-reward $Q$-learning with primitive actions~\citep{zhang2021finite,chen2025nonasymptotictheoryseminormlyapunov}.


\textbf{Inter- vs. Intra-option.}
The additional $|\cS||\cO|c^2J^4$ term in the inter-option bound
(Theorem~\ref{thm:sync-inter-ftb}) comes from online estimation of the
reward rates and inverse mean holding times $(\rho,\lambda)$, and hence
depends on the holding-time second-moment bound $c$. 
No such estimation is required in the intra-option setting.

\textbf{Proof sketch.}
Our analysis builds on the generalized Moreau-envelope framework of
\cite{chen2025nonasymptotictheoryseminormlyapunov}, with a key difference: our $J$-step estimators are not conditionally
unbiased estimators of $H^JQ_k$, where $H$ denotes $\bar H_{\rm inter}$ or $H_{\rm intra}$. An unbiased $J$-step estimator would require
intermediate option selections based on finite-horizon values
$H^mQ_k, m \in \{1, 2,\dots, J-1\}$, because at an intermediate state with
$m$ steps remaining, the maximization in the nested composition $H^JQ_k$
is over $H^mQ_k$, rather than over $Q_k$ itself. However, these are unavailable in a model-free setting. We instead sample the intermediate options greedily from $\widehat H_k^mQ_k$. Our proof, therefore, has to combine the $J$-step span contraction with a careful control of the
bias induced by empirical approximately greedy option selection. The proofs are established via the following steps.

\begin{enumerate}[leftmargin=*,itemsep=2pt,topsep=1pt]
\item \textbf{Moreau-envelope Lyapunov recursion.}
We first use the $J$-step span contraction (Theorem~\ref{thm: J-inter} and~\ref{thm: J-intra}) to construct a smooth generalized Moreau-envelope Lyapunov function
$M(Q)=p_m(Q)^2/2$ equivalent to $\sp(Q)^2$ modulo constant vectors. By denoting
$m_k \triangleq \E[M(Q_k-Q^*)]$, smoothness yields a one-step recursion of the form
\ifdoublecol
\begin{align*}
m_{k+1}
\le
m_k
+
\alpha_k
\bigl(
T_{\rm drift}
+
T_{\rm noise}
+&
T_{\rm greedy}
+
T_{\rm est}
\bigr)
\\&+L\alpha_k^2T_{\rm quad},
\end{align*}
\else
\begin{align*}
m_{k+1}
\le
m_k
+
\alpha_k
\bigl(
T_{\rm drift}
+
T_{\rm noise}
+
T_{\rm greedy}
+
T_{\rm est}
\bigr)+L\alpha_k^2T_{\rm quad},
\end{align*}
where $T_{\rm est}$ appears only in the inter-option setting. The
$J$-step contraction implies $T_{\rm drift}\le-cm_k$ for $c>0$,
providing the negative drift in the recursion.

\item \textbf{Removing Bellman-oracle access.}
A direct $J$-step estimator would require the intermediate greedy choice
$\argmax_o(H^m Q_k)(s,o)$, whenever $m$ steps remain. 
Since these values are unavailable, we construct an empirical Bellman operator $\widehat H_k$ from past interactions, and use
$\widehat d_{k,m}(s)\in\arg\max_o(\widehat H_k^mQ_k)(s,o)$. The fresh rollout
still uses the true environment, so its expectation, denoted by $\widetilde H_k^JQ_k$, is neither
$\widehat H_k^JQ_k$ nor $H^JQ_k$. This yields the first-order perturbation
$T_{\rm greedy}
=
\E[\langle\nabla M(Q_k-Q^*),
\widetilde H_k^JQ_k-H^JQ_k\rangle]$.

\item \textbf{Controlling the approximately greedy bias.}
We first prove that the iterates a.s. satisfy $\|Q_k\|_\infty=\cO(\log k)$. Using this bound together with finite-sample
deviation bounds for the empirical Bellman operator gives (Lemma~\ref{lem:empirical-greedy-J}, \ref{lem:inter-finite-j-greedy}):\vspace*{-.2em}
\begin{align*}
\E\left[
\sp\!\left(
\widetilde H_k^JQ_k-H^JQ_k
\right)^2
\right]
=
\widetilde\cO(1/k).
\end{align*}
We use properties of Moreau envelope to
absorb a part of $T_{\rm greedy}$ into $T_{\rm drift}$, while remainder contributes $\widetilde{\cO}(k^{-2})$ after multiplied with $\alpha_k=\Theta(1/k)$.

\item \textbf{Additional bias in inter-option.}
The normalized inter-option Bellman operator introduces a second source of bias due to the unknown reward rate
$\rho$ and inverse mean holding time $\lambda$. We estimate them \textit{online} by
$\widehat\rho_k$ and $\widehat\lambda_k$ yielding finite-time
mean-square bounds of order $\widetilde\cO(1/k)$. Propagating these errors
through the finite $J$-step Bellman composition yields the additional
first-order term $T_{\rm est}$. Conceptually, the inter-option
conditional-mean error decomposes as
$\widetilde H_k^JQ_k-H^JQ_k
=
\bigl(
\widetilde H_k^JQ_k-\bar H_k^JQ_k
\bigr)
+
\bigl(
\bar H_k^JQ_k-H^JQ_k
\bigr),
$
where the first term is due to approximate greedy option selection and the
second is due to online estimation of $\rho$ and $\lambda$.\textit{ The intra-option
setting has only the first source of bias.}

\item \textbf{Rollout noise and the quadratic term.}
After conditioning on the past, the difference between the sampled
$J$-step target and its expectation is a martingale-difference term.
Hence, the linear noise term vanishes. The second-order smoothness remainder is shown
to be $L\alpha_k^2T_{\rm quad}={\cO}(\alpha_k^2m_k)$.

\item \textbf{Unrolling the recursion.}
Combining the preceding bounds gives, for all $k$,
\begin{align*}
m_{k+1}
\le
\left(
1-\cO(\alpha_k)
\right)m_k
+
\widetilde{\cO}(\alpha_k^2).
\end{align*}
Unrolling the recursion yields the stated
$\widetilde\cO(1/k)$ convergence rate.
\end{enumerate}\vspace*{-.5em}
The complete proofs are in Appendix~\ref{sec:synchronous}. 

\textbf{Comparison with asymptotic analysis.}
\cite{wan2021average,wan2024convergenceaveragerewardqlearningweakly} conduct asymptotic analysis of their proposed 1-step inter- and intra-option differential Q-learning algorithms with RVI-type recentering. The analysis differs in a couple of ways. First, they do not need to construct a $J$-step estimator of the option Bellman operators since the contraction property of the Bellman operators is not required in the asymptotic analysis. Rather, asymptotic analysis requires only that the option Bellman operators are Lipschitz-- a property which the $1$-step option Bellman operators satisfy, and hence, they use the $1$-step option Bellman operators. Second, their analysis for their inter-option algorithm employs a coupled Stochastic Approximation (SA) scheme to estimate option lengths whilst simultaneously updating the Q-values. This results in a single-timescale SA with vanishing perturbations. 
This is similar to our derivation of finite-time rates in both the settings, which are modelled as single-timescale SA with decaying perturbations.




\textbf{Rectifying $J$-step estimators in prior work.} \label{rem:prior-J-step-estimators}
Prior finite-time analyses of average-reward Q-learning with primitive
actions \citep{chen2025nonasymptotictheoryseminormlyapunov, zhang2021finite, naskar2026parameter} use a $J$-step estimator of
the form
$
\widehat H_J(Q_k)
=
\sum_{i=0}^{J-1}R_{k+i}
+
\max_{a'}Q_k(S_{k+J},a'),$
with intermediate actions selected greedily from $Q_k$. This estimator is
claimed to be unbiased for the $J$-step Bellman operator. But in general, its
conditional expectation is not $H^JQ_k$. Indeed, the $J$-fold Bellman
composition requires the intermediate action with $m$ steps remaining to
be selected greedily with respect to $H^mQ_k$, rather
than $Q_k$ itself. 

Consider the following zero-reward MDP: let $s_0\xrightarrow{a_0}s_1$, with
$a,b$ at $s_1$ leading deterministically to $x,y$, respectively, and let $x,y$
have only self-loops via $c$. Choose $Q(s_1,a)=1, \enspace  Q(s_1,b)=0, \enspace Q(x,c)=0, \enspace Q(y,c)=10.$
Then $(HQ)(s_1,a)=0$ and $(HQ)(s_1,b)=10$, so
$(H^2Q)(s_0,a_0)=10$. In contrast, an estimator that selects the intermediate
action greedily from $Q$ chooses $a$ instead of $b$ and returns $0$. Thus $\widehat H_J$ is
biased (here $J=2$), and the intermediate greedy choice must generally use $H^mQ$ rather
than $Q$.
Consequently, the corresponding finite-time analyses based on this
unbiasedness property do not directly apply as stated. 

\textit{Our construction rectifies this issue} by replacing the unavailable greedy values with
empirical Bellman iterates and explicitly controlling the resulting bias.
This yields corrected $J$-step estimators and corresponding finite-time rates for average-reward Q-learning with primitive actions, while also extending the methodology to inter- and intra-option Q-learning. Notably, specializing our analysis to average-reward Q-learning with primitive
actions~\citep{chen2025nonasymptotictheoryseminormlyapunov, zhang2021finite}, the harmonic-stepsize rate becomes
$\widetilde{\cO}(1/k)$ rather than $\cO(1/k)$, with an additional
$\widetilde{\cO}(|\cS|J^4/k)$ contribution arising from the empirical
approximately-greedy selections. 


\begin{figure}[t!]
\centering
\ifdoublecol
\includegraphics[width=0.8\linewidth]{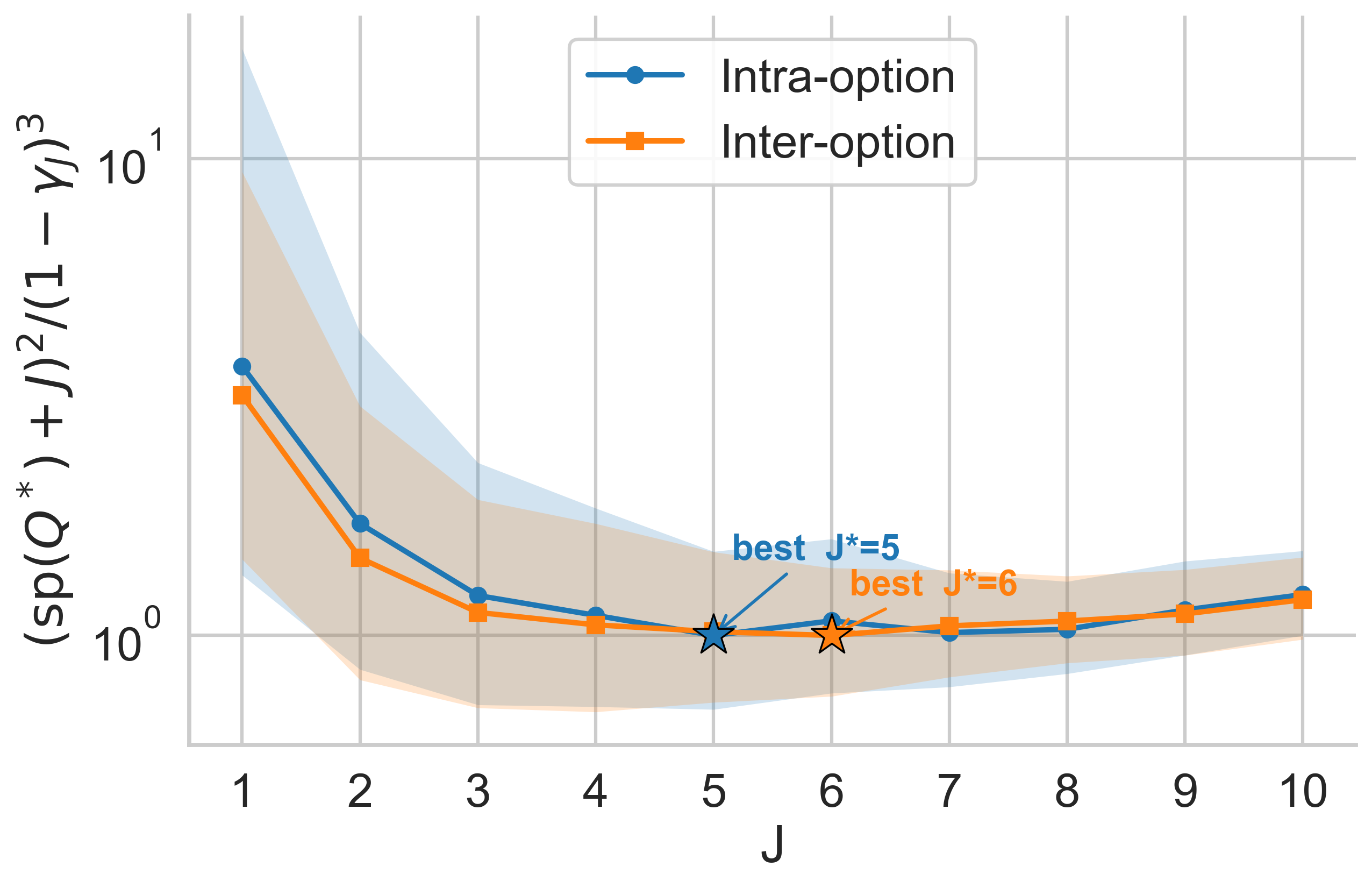}\vspace*{-.5em}
\else 
\includegraphics[width=0.5\linewidth]{images/4room/j_gamma_tradeoff/c_finite_time_bound.png}
\fi
\caption{Guideline to select $J$ with $\frac{(\sp(Q^*)+J)^2}{(1-\gamma)^3}$.}\label{fig:j-gamma-sync}
\end{figure}



\textbf{Guideline to Select $J$.}
The dependence of $J,\gamma$ on the performance bounds can be inferred from
Theorem~\ref{thm:sync-inter-ftb} and~\ref{thm: sync-intra-ftb}.
Specifically, $\frac{J}{1-\gamma(J)}$ reflects the effect of the
choice of $J$ and corresponding $\gamma$ on the finite-time rates.
We empirically investigate this connection (Figure~\ref{fig:j-gamma-sync}) in \textit{Four Rooms}. We observe that there are optimal $J$'s, namely 6 and 5 for inter- and intra-option settings, by tracking the quantity $\frac{(\sp(Q^*)+J)^2}{(1-\gamma)^3}$, since it captures the trade-off between stronger $J$-step contraction and the rollout-length contribution, while excluding the conservative empirical-model estimation term (Appendix~\ref{app:guideline}). Thus, instead of the worst-case bound of $|\cS||\cO|$, we can use small values of $J$ in practice, and achieve the performance gain.

\textbf{Choices of stepsizes.} Our analysis uses the harmonic stepsize $\alpha_k=\Theta(1/k)$, for which
the pathwise growth bound $\|Q_k\|_\infty=\cO(\log k)$ leads to the final
mean-square rate $\widetilde{\cO}(1/k)$. The same proof framework can be
extended to yield guarantees for polynomial stepsizes. In particular, for
$\alpha_k=\alpha/(k+h)^p$ with $p\in(1/2,1)$, the pathwise argument gives
$\|Q_k\|_\infty=\cO(k^{1-p})$. Consequently, the empirical
approximately-greedy perturbation scales as
$\widetilde{\cO}(k^{1-2p})$, implying a corresponding mean-square rate
of $\widetilde{\cO}(k^{1-2p})$. 


\section{\MakeUppercase{RL with Hierarchical Reward Machines}}
Prior works on average-reward HRL~\citep{JMLR:v8:ghavamzadeh07a} derive the average-reward Bellman equations and corresponding Q-learning algorithms, but do not establish convergence of the proposed updates, leaving a gap between the fixed-point characterization and the algorithmic behaviour. Our proposed algorithms are the first to exhibit non-asymptotic convergence to hierarchically optimal policies. Specifically, the analysis establishes convergence of the proposed algorithms to the solutions of \eqref{eq:aoe} with options, and closes this gap.

\textbf{Hierarchical Reward Machines (HRMs) to options.} 
RM reward is Markovian on $\mathcal{X}=S\times U$, where $U$ is RM's states.
Every transition in RM $\langle u,u_t\rangle$ induces an option that starts in state $u$ and terminates with a change in the RM state. Therefore, $\bar H_{\mathrm{inter}}$, $H_{\mathrm{intra}}$ on both $J$-step algorithms defined in Section~\ref{sec:algo} are consistent with the new state space $\mathcal{X}$. In this case, the option holding times are the hitting times of RM transitions which are not in general bounded. Our analysis allow this by only requiring their second moments to be bounded (Assumption~\ref{assm:smdp}).

\textbf{Finite-time guarantees.} 
Under Assumptions~\ref{ass:inter-assumption} and~\ref{ass:intra-assump} and the assumption on the cross-product space (Assumption~\ref{ass:rm-unichain}), both intra- and inter-option operators are $J$-step span contractions (Theorem~\ref{thm:rm-hrm-contraction}). Therefore the $\widetilde{O}(1/k)$ bounds from Theorem~\ref{thm:sync-inter-ftb} and~\ref{thm: sync-intra-ftb} hold but with $|S||\mathcal{O}|$ replaced by $|S||U||\mathcal{O}|$. The intra-option and inter-option bounds vary logarithmically and linearly respectively with RM size $|U|$.

\textbf{Hierarchical optimality.} 
Our analysis guarantees convergence of the Q-values, up to an additive constant, to the optimality class $\mathcal{Q}^*$ of \eqref{eq:aoe} for the option SMDP on $\mathcal{X}$. Under standard assumptions on options (e.g., non-zero probability of option termination, which in HRMs is equivalent to non-zero probability of satisfying the logical formula associated with an outgoing RM transition), the policies obtained by option Q-learning are hierarchically optimal with respect to the given option policies~\citep{JMLR:v8:ghavamzadeh07a}. Specifically, they are optimal among all policies that select between the HRM options. Optimality over all primitive policies requires the option policies to be optimal for their sub-tasks, which depends on the options rather than algorithms. 

A detailed reduction of MDPs with HRMs to options, and implications of our convergence guarantees is in Appendix~\ref{app:HRM}.

\vspace*{-.5em}\section{\MakeUppercase{Experimental Analysis}}\vspace*{-.5em}\label{sec:experiments_main}

\begin{figure}[t!]
    \centering
    \ifdoublecol
    \begin{subfigure}[b]{0.23\textwidth}
        \centering
        \includegraphics[width=\linewidth]{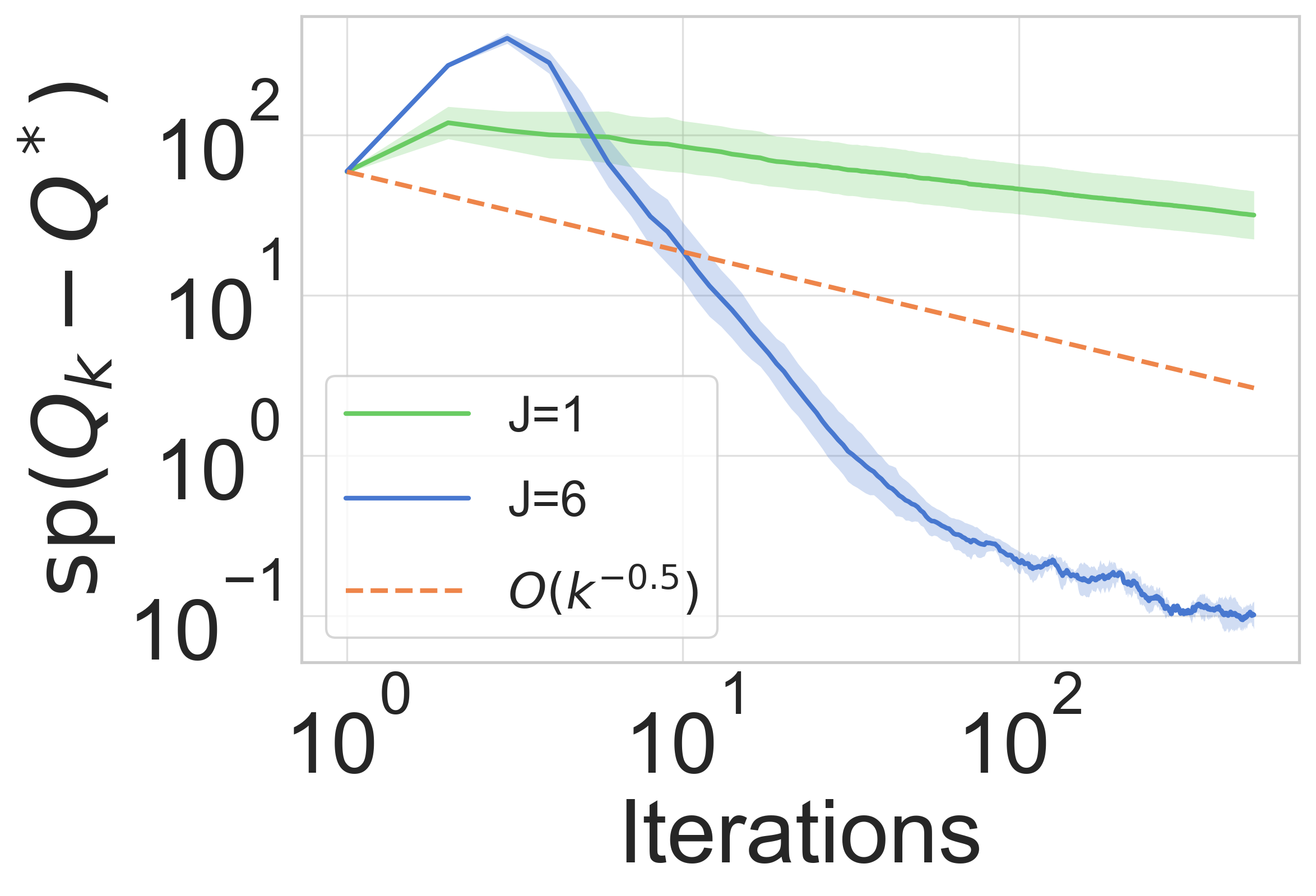}\vspace*{-.5em}
        \caption{Inter-option}
    \end{subfigure}
    \hfill
    \begin{subfigure}[b]{0.23\textwidth}
        \centering
        \includegraphics[width=\linewidth]{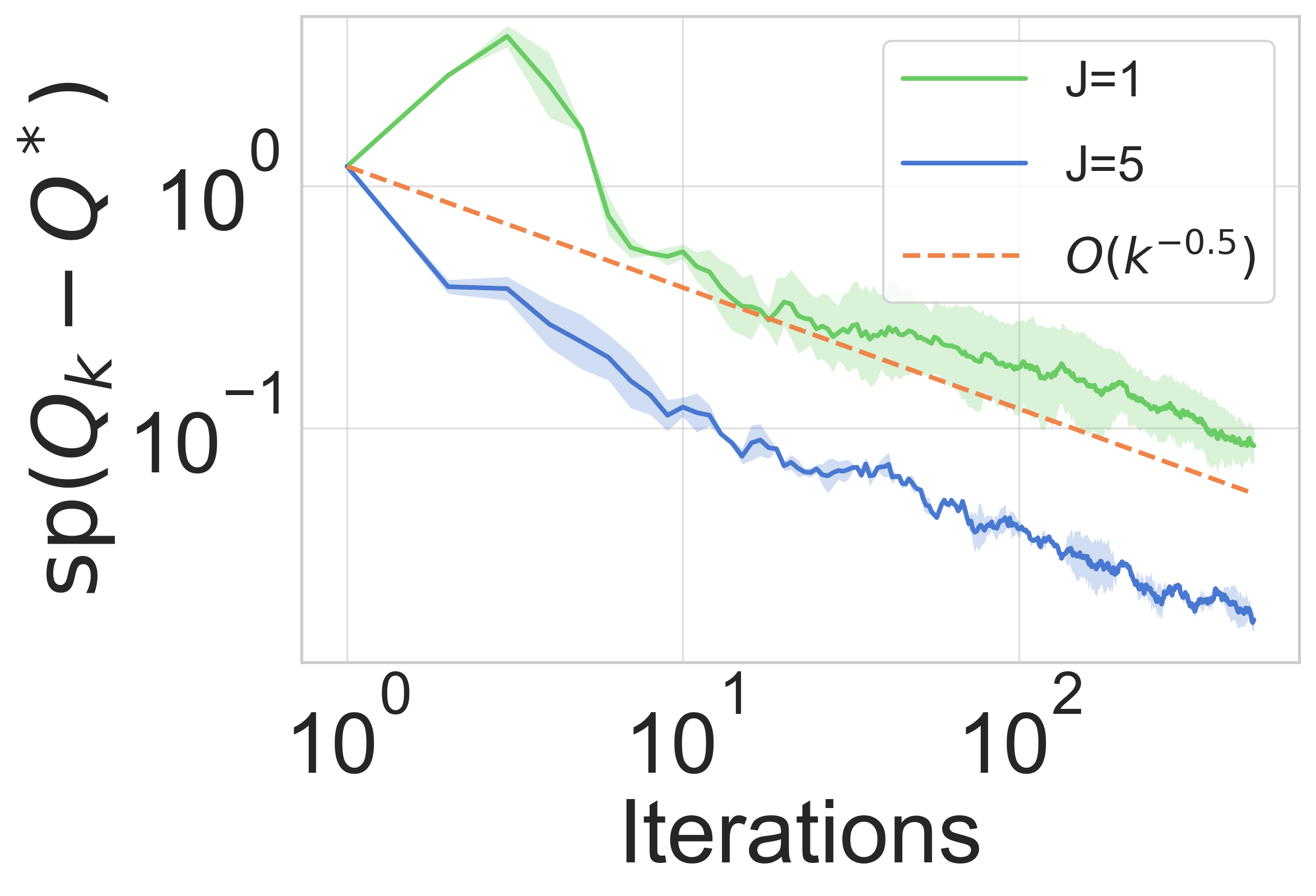}\vspace*{-.5em}
        \caption{Intra-option}
    \end{subfigure}
    \else 
    \begin{subfigure}[b]{0.4\textwidth}
        \centering
        \includegraphics[width=\linewidth]{images/fourrooms_inter_sync.png}\vspace*{-.5em}
        \caption{Inter-option}
    \end{subfigure}
    \hspace{2em}
    \begin{subfigure}[b]{0.4\textwidth}
        \centering
        \includegraphics[width=\linewidth]{images/fourrooms_intra_sync.png}\vspace*{-.5em}
        \caption{Intra-option}
    \end{subfigure}
    \fi 
    
    \caption{Evolution of $\mathrm{sp}(Q_k-Q^*)$ (mean $\pm$ std. over $5$ runs) with $k$ iterations for Four Rooms.}\label{fig:four_rooms_plots}
\end{figure}

We empirically validate our findings across three hierarchical reinforcement learning (HRL) benchmarks: \emph{Four Rooms}~\citep{SUTTON1999181}, \emph{Taxi}~\citep{taxi_diet}, and \emph{Office gridworld}~\citep{toro_reward_machines}. We report the mean square deviations from the optimal option Q-function over iterations. For Four Rooms, Figure~\ref{fig:four_rooms_plots} shows $\mathrm{sp}(Q_k - Q^\ast)$ on log-log axes for the 6-step inter- and 5-step intra-option algorithms, and the one-step Q-learning ($J=1$) over $k$ iterations (averaged over $5$ independent seeds). 
Results for \emph{Taxi} and \emph{Office gridworld} are in Appendix~\ref{sec:extra-experiments}, along with environment and hyper-parameter details.

\textbf{Observations.} (a) We observe that the $J=6$ and $J=5$ step estimators yield strictly \textbf{faster convergence} in inter- and intra-option settings than classic option Q-learnings with $J=1$.
(b) In both the settings, the empirical convergence rates match $\mathcal{O}(k^{-1/2})$ theoretical reference for the intra- and inter-option Q-learning, as stated by Theorem~\ref{thm:sync-inter-ftb} and~\ref{thm: sync-intra-ftb}.


\section{\MakeUppercase{Discussions}}\vspace*{-.5em}

We design two variants of option Q-learning algorithms. We derive the first non-asymptotic sublinear convergence for these option Q-learning algorithms. The proof proceeds by (a) $J$-step contraction for inter- and intra-option Q-learning, (b) designing $J$-step option Bellman operator estimates and Q-update steps with proper learning rates, (c) tackling the effect of only second moment bounded stochastic holding times for options, and (d) controlling the bias introduced by approximately greedy option selection and online estimation of reward rate and inverse mean holding times. Our results establish the first non-asymptotic convergence bound for Q-learning with hierarchical RMs.

We analyse the single timescale framework for synchronous option Q-learning. The natural next step is to analyse the asynchronous setting, where only a
subset of the Q-estimates are updated at each iteration. Another direction is
the analysis of two-timescale option Q-learning
~\citep{manenti2025convergencestabilityqlearninghierarchical}. Second, an important open question is whether the unichain assumptions can be relaxed to more general multichain settings for options, in line with recent developments in average-reward RL~\citep{lee2025optimal, Lee2026PolicyGA}.
Finally, understanding the role of function approximation in the options framework, in terms of stability and convergence, remains an open direction.

\ifdoublecol
\clearpage
\subsection*{AI use statement}

In this work, we used generative AI tools solely for proofreading the manuscript and for suggestions to improve the writing style and clarity.
We have \textbf{not} used generative AI tools to develop theoretical analysis or conceptual frameworks, formulate mathematical claims, provide critical ingredients for proving mathematical claims, assist in the writing of proofs, propose or refine hypotheses, design or provide feedback on research methodology or experiments. 
We take responsibility for the final content of this work.
\fi

\bibliography{cite}

\ifdoublecol
\section*{Checklist}

\begin{enumerate}

\item For all models and algorithms presented, check if you include:
\begin{enumerate}
    \item A clear description of the mathematical setting, assumptions,
    algorithm, and/or model. [Yes]

    \item An analysis of the properties and complexity (time, space,
    sample size) of any algorithm. [Yes]

    \item (Optional) Anonymized source code, with specification of all
    dependencies, including external libraries. [Yes, in Appendix~\ref{sec:extra-experiments}]
\end{enumerate}

\item For any theoretical claim, check if you include:
\begin{enumerate}
    \item Statements of the full set of assumptions of all theoretical
    results. [Yes]

    \item Complete proofs of all theoretical results.
    [Yes]

    \item Clear explanations of any assumptions.
    [Yes]
\end{enumerate}

\item For all figures and tables that present empirical results, check if
you include:
\begin{enumerate}
    \item The code, data, and instructions needed to reproduce the main
    experimental results. [Yes]

    \item All the training details.
    [Yes]

    \item A clear definition of the specific measure or statistics and
    error bars. [Yes]

    \item A description of the computing infrastructure used.
    [Yes]
\end{enumerate}
Check Appendix~\ref{sec:extra-experiments} and Section~\ref{sec:experiments_main} for details.

\item If you are using existing assets or curating/releasing new assets,
check if you include:
\begin{enumerate}
    \item Citations of the creator if your work uses existing assets.
    [Not Applicable]

    \item The license information of the assets, if applicable.
    [Not Applicable]

    \item New assets either in the supplemental material or as a URL,
    if applicable. [Not Applicable]

    \item Information about consent from data providers/curators.
    [Not Applicable]

    \item Discussion of sensitive content if applicable.
    [Not Applicable]
\end{enumerate}

\item If you used crowdsourcing or conducted research with human subjects,
check if you include:
\begin{enumerate}
    \item The full text of instructions given to participants and screenshots.
    [Not Applicable]

    \item Descriptions of potential participant risks, with links to
    Institutional Review Board (IRB) approvals if applicable.
    [Not Applicable]

    \item The estimated hourly wage paid to participants and the total
    amount spent on participant compensation.
    [Not Applicable]
\end{enumerate}
\end{enumerate}
\fi

\clearpage
\appendix
\ifdoublecol
\thispagestyle{empty}

\onecolumn

\aistatstitle{
Non-asymptotic Convergence of Average-reward Q-learning with Options: 
Supplementary Materials
}
\part{}
\parttoc
\newpage
\else 
\newpage\part{APPENDIX}
\fi

\section*{\MakeUppercase{Overview of the Appendix}}
\begin{itemize}[leftmargin=*]
    \item \textbf{Appendix \ref{sec:related_work}: Additional Related Work.}
    Reviews prior literature on average-reward Q-learning, stochastic approximation, hierarchical reinforcement learning with options, and finite-time analyses for seminorm-contractive stochastic approximation schemes.

    \item \textbf{Appendix \ref{sec:J-step-contraction}: J-step contraction results for option Bellman operators.}
    Establishes Puterman-style span seminorm contractions for the inter- and intra-option Bellman operators proving Theorems \ref{thm: J-inter} and \ref{thm: J-intra}. 

    \item \textbf{Appendix \ref{sec:synchronous}: Analysis of synchronous option Q-learning.}
    Introduces the synchronous J-step estimators and develops the finite-time analysis of synchronous inter- and intra-option Q-learning. The section constructs the Lyapunov framework based on a smooth Moreau-envelope approximation of the span seminorm and derives the corresponding non-asymptotic convergence rates proving Theorems \ref{thm:sync-inter-ftb}, \ref{thm: sync-intra-ftb}.




    \item \textbf{Appendix \ref{app:HRM}: Application to average-reward reinforcement learning with reward machines.}
    Shows how hierarchical reward machines induce options, verifies the required assumptions, establishes the corresponding J-step contractions, and derives finite-time convergence guarantees for average-reward hierarchical reinforcement learning.

    \item \textbf{Appendix \ref{sec:extra-experiments}: Additional experimental results.}
    Provides supplementary experiments, implementation details, hyperparameter settings, and additional plots for the Four Rooms, Taxi, and Office environments.
\end{itemize}

\newpage

\section{TABLE OF NOTATIONS}

\begin{table}[h!]
\centering
\caption{Notation for the average-reward option-learning setup and algorithms.}
\label{tab:notation_setup}
\setlength{\tabcolsep}{6pt}
\renewcommand{\arraystretch}{1.15}
\begin{tabular}{l|p{0.78\linewidth}}
\toprule
\textbf{Symbol} & \textbf{Meaning} \\
\midrule

$\cS,\cA,\cO$
& Finite state, primitive-action, and option spaces. \\

$s,a,o$
& Generic state, primitive action, and option. \\

$S_k,A_k,O_k$
& State, primitive action, and option at iteration $k$. \\

$p(s'|s,a)$
& Primitive MDP transition kernel. \\

$r(s,a)$
& Primitive one-step reward. \\

$\pi(a|s,o)$
& Intra-option policy associated with option $o$. \\

$\beta(s,o)$
& Termination probability of option $o$ at state $s$. \\

$\ell(s,o)$
& Random holding time of option $o$ initiated at state $s$. \\

$c$
& Uniform holding-time second-moment parameter:
$\E[\ell(s,o)^2]\le c^2$. \\

$\hat r(s,o)$
& Expected cumulative reward over one execution of option $o$. \\

$\hat\ell(s,o)$
& Expected holding time of option $o$. \\

$\hat p(s'|s,o)$
& Option-level transition kernel. \\

$\bar r^\pi,\bar r^*$
& Average reward rate under policy $\pi$ and the optimal average reward rate. \\

$Q,Q^*,Q_k$
& Option $Q$-function, optimal differential $Q$-function, and $k$-th iterate. \\

$\mathbf 1$
& All-ones vector. \\

$\indicator$ & Indicator random variable \\

$\sp(Q)$
& Span seminorm,
$\sp(Q):=(\max_{s,o}Q(s,o)-\min_{s,o}Q(s,o))/2$. \\

$\Delta_0$
& Initial span error, $\Delta_0:=\sp(Q_0-Q^*)$. \\

$\bar H_{\rm inter},H_{\rm intra}$
& Normalized inter-option and intra-option Bellman operators. \\

$H^J$
& $J$-fold composition of a Bellman operator. \\

$J$
& Multi-step Bellman composition / rollout horizon. \\

$\gamma_{\rm inter}, \gamma_{\rm intra}$
& Contraction factor of the $J$-step Bellman operator in span seminorm. \\

$\rho(s,o)$
& Normalized option reward rate,
$\rho(s,o):=\hat r(s,o)/\hat\ell(s,o)$. \\

$\lambda(s,o)$
& Inverse mean holding time,
$\lambda(s,o):=1/\hat\ell(s,o)$. \\

$\hat\rho_k,\hat\lambda_k$
& Rolling estimates of $\rho$ and $\lambda$. \\

$A_k,B_k$
& Rolling cumulative reward and holding-time statistics used to construct
$\hat\rho_k$ and $\hat\lambda_k$. \\

$\alpha_k,\alpha$
& Stepsize and stepsize-scale parameter. \\

$h$
& Positive offset/regularization parameter in the stepsize. \\

$N_k(s,o)$
& Number of activations/visits of coordinate $(s,o)$ up to iteration $k$. \\
$H_k$ & Approximate normalized inter-option Bellman operator obtained by replacing $\rho, \lambda$ with $\widehat \rho_k, \widehat \lambda_k$ in~\eqref{eq:normalised_inter_operator}.\\
$\widehat H_{\rm k, inter}, \widehat H_{\rm k, intra}$ & Empirical Bellman operators at iteration k corresponding to $H_k$ and $H_{\rm intra}$ respectively.\\
$\widehat T_{\rm k, inter}^J, \widehat T_{\rm k, intra}^J $ & Estimators of $H_k^J$ and $H_{\rm intra}^J$ respectively. \\
$\widetilde H_{\rm k, inter}^J, \widetilde H_{\rm k, intra}^J$ & Expectation of the estimators $\widehat T_{\rm k, inter}^J$ and $\widehat T_{\rm k, intra}^J$ respectively. \\
\bottomrule
\end{tabular}
\end{table}

\clearpage
\section{\MakeUppercase{Additional Related Work}}
\label{sec:related_work}

\textbf{Average-reward Q-learning.}
Classical algorithms for average-reward Q-learning include Relative Value Iteration (RVI) Q-learning and stochastic shortest path (SSP) Q-learning, whose asymptotic convergence properties were established in \citep{doi:10.1137/S0363012999361974} using tools from asynchronous stochastic approximation \citep{doi:10.1137/S0363012995282784}. These results were subsequently extended under weaker assumptions and more general settings in \citep{doi:10.1137/S0363012998346621, wan2024convergenceaveragerewardqlearningweakly, yu2024asynchronous, yu2025averagerewardreinforcementlearningsemimarkov}. However, these works focus primarily on asymptotic convergence and stability, and do not provide finite-time guarantees.

\textbf{Asymptotic stochastic approximation for options.}
The options framework \citep{SUTTON1999181} provides a principled approach to hierarchical reinforcement learning (HRL). While most prior work in HRL has focused on the discounted setting \citep{gurtler2021hierarchical, pmlr-v70-vezhnevets17a, barto2003recent}, comparatively less is understood in the average-reward regime, which is particularly relevant for continuing tasks \citep{JMLR:v8:ghavamzadeh07a}. Existing work \citep{wan2024convergenceaveragerewardqlearningweakly} establishes asymptotic convergence guarantees for RVI-style \citep{mdp_book} option-based Q-learning algorithms, but does not address finite-time analysis. Moreover, the assumptions and proof techniques used for asymptotic stochastic approximation differ substantially from those required for finite-time guarantees.

\textbf{Finite-sample analysis for stochastic approximation.}
Finite-time analysis of stochastic approximation (SA) depends critically on the contraction properties of the underlying fixed-point operator and the behaviour of the noise in the underlying stochastic process. Finite-time guarantees for SA with contractive operators have been extensively studied in \citep{NEURIPS2020_5d44ee6f, chen_lyapunov_finite}, while more recent works have extended these techniques to seminorm contractive operators \citep{chen2025nonasymptotictheoryseminormlyapunov}. However, these results are not directly applicable to average-reward RL with options. In particular, they assume contraction properties that must first be established for the \obo (Theorems \ref{thm: J-inter} and \ref{thm: J-intra}), and often rely on almost sure bounds on the noise. In the SMDP setting, such assumptions would require the holding times to be almost surely bounded, which is overly restrictive. Instead, we assume only bounded second moments of the holding times (Assumption \ref{assm:smdp}), which aligns naturally with assumptions commonly used in asymptotic analyses \citep{yu2025averagerewardreinforcementlearningsemimarkov}.

\textbf{Options.}
Options have been shown to have extensive applications in deep reinforcement learning \citep{machado2023temporal, machado2025representation, machado2017laplacian, bacon2017option, machado2017eigenoption, klissarov2023deep}. Options had been introduced as a general framework for planning by using temporally extended actions \citep{SUTTON1999181}. It allows hierarchical learning and planning \citep{mitchener2022detect, ravindran2002model}, skill sharing across tasks \citep{ramesh2019successor, klissarov2021flexible}, and faster exploration via multi-step actions \citep{klissarov2021flexible}.

\begin{remark}
\label{rem:oracle}

Consider the following MDP with states $(s_0,s_1,x,y)$, zero rewards, and transitions $$
s_0\xrightarrow{a_0}s_1,\qquad
s_1\xrightarrow{a}x,\qquad
s_1\xrightarrow{b}y.
$$
Let $x$ and $y$ each have only self loops, and define
$$
Q(s_1,a)=1,\qquad Q(s_1,b)=0,\qquad Q(x,c)=0,\qquad Q(y,c)=10.
$$

\begin{center}
\begin{tikzpicture}[
    >=stealth,
    node distance=2.2cm,
    every node/.style={font=\small},
    state/.style={circle,draw,minimum size=8mm,inner sep=0pt}
]
    \node[state] (s0) {$s_0$};
    \node[state, right=of s0] (s1) {$s_1$};
    \node[state, above right=1.1cm and 2.0cm of s1] (x) {$x$};
    \node[state, below right=1.1cm and 2.0cm of s1] (y) {$y$};

    \draw[->] (s0) -- node[above] {$a_0$} (s1);
    \draw[->] (s1) -- node[above left] {$a$} (x);
    \draw[->] (s1) -- node[below left] {$b$} (y);

    \draw[->] (x) edge[loop right] node[right] {$c$} (x);
    \draw[->] (y) edge[loop right] node[right] {$c$} (y);
\end{tikzpicture}
\end{center}
Let $\mathcal H$ denote the average reward Bellman operator i.e. $\mathcal H(Q)(s, a) = r(s, a) + \sum_{s'} P(s' |s,a) \max_{a'} Q(s', a')$
Then
$$
(\mathcal HQ)(s_1,a)=0,\qquad
(\mathcal HQ)(s_1,b)=10,
$$
and hence
$$
(\mathcal H^2Q)(s_0,a_0)=\max_{u\in\{a,b\}}(\mathcal HQ)(s_1,u)=10.
$$

However, the rollout construction \citep{zhang2021finite, chen2025nonasymptotictheoryseminormlyapunov, naskar2026parameter} chooses the intermediate action greedily with respect to $Q$. Since
$$
Q(s_1,a)>Q(s_1,b),
$$
it selects $a$, reaches $x$, and returns
$$
\hat{\mathcal H}_{\mathrm{roll}}^2Q(s_0,a_0)=0+0+\max_cQ(x,c)=0.
$$
Therefore,
$$
\mathbb E[\hat{\mathcal H}_{\mathrm{roll}}^2Q(s_0,a_0)]=0\neq10=(\mathcal H^2Q)(s_0,a_0).
$$

Thus, selecting every intermediate action according to $\arg\max_a Q(s,a)$ does not, in general, give an unbiased estimator of $\mathcal H^JQ$.

\end{remark}

\newpage
\section{\MakeUppercase{J-step contraction results for Option Bellman Operators}}
\label{sec:J-step-contraction}


In this section we establish Puterman-type $J$-step span contractions for the intra-option and
inter-option Bellman operators stated in Theorems \ref{thm: J-inter} and \ref{thm: J-intra}. We proceed with the analysis by establishing the following:
\begin{enumerate}
    \item a uniform overlap condition on the induced $J$-step state kernels, and
    \item a policy-sandwich argument for the $J$-fold Bellman iterate.
\end{enumerate}

\subsection{An overlap inequality}
\begin{lemma}[Overlap inequality]
\label{le:overlap}
Let $P,Q$ be probability distributions on a finite set $S$, and let $f:S\to\mathbb{R}$. Then
\[
\bigl|P^\top f-Q^\top f\bigr|
\le
\left(
1-\sum_{j\in S}\min\{P(j),Q(j)\}
\right)2\sp(f).
\]
\end{lemma}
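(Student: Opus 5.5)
The plan is the standard coupling/decomposition argument. Set $m(j)\triangleq\min\{P(j),Q(j)\}$ and $\delta\triangleq 1-\sum_{j\in S} m(j)\in[0,1]$. Decompose each distribution into a common part and a residual: $P = m + (P-m)$ and $Q = m + (Q-m)$. The residuals $P-m$ and $Q-m$ are nonnegative vectors, each with total mass exactly $\delta$, since $\sum_j P(j)=\sum_j Q(j)=1$.

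The common part cancels in the difference, so
\[
P^\top f - Q^\top f = (P-m)^\top f - (Q-m)^\top f .
\]
If $\delta=0$ then $P=Q$ and there is nothing to prove. Otherwise, normalize the residuals to probability distributions $\mu\triangleq(P-m)/\delta$ and $\nu\triangleq(Q-m)/\delta$. This gives $P^\top f-Q^\top f=\delta\,(\mu^\top f-\nu^\top f)$.

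Since $\mu$ and $\nu$ are probability vectors, $\mu^\top f$ and $\nu^\top f$ both lie in $[\min_j f(j),\max_j f(j)]$. Hence
\[
|\mu^\top f-\nu^\top f|\le \max_j f(j)-\min_j f(j)=2\,\sp(f),
\]
using the paper's convention $\sp(f)=(\max f-\min f)/2$. Multiplying by $\delta$ yields the claim.

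No step is a genuine obstacle. The only points needing care are the factor $2$, which comes from the halved span convention, and the degenerate case $\delta=0$, which is handled separately. An equivalent route that avoids normalization uses $\sum_j (P-Q)(j)=0$ to replace $f$ by $f-c\mathbf 1$ with $c$ the midpoint of the range of $f$. Then $|(P-Q)^\top f|\le \|P-Q\|_1\,\sp(f)$, and the identity $\|P-Q\|_1=2\delta$ finishes the proof.
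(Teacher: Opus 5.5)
Your proposal is correct and matches the paper's proof essentially step for step: the same common-part/residual decomposition, the same separate treatment of the degenerate case, the same normalization of the residuals to probability vectors, and the same bound $\max_j f(j)-\min_j f(j)=2\,\sp(f)$. Your alternative route via centering $f$ at the midpoint of its range and using $\|P-Q\|_1=2\delta$ is also valid, though the paper does not use it.
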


\begin{proof}
Let
\[
c(j)\triangleq \min\{P(j),Q(j)\},
\qquad
\alpha\triangleq \sum_{j\in S} c(j)\in[0,1].
\]
If $\alpha=1$, then $P=Q$ and the claim is trivial. Otherwise define the residual measures
\[
\bar P(j)\triangleq P(j)-c(j)\ge 0,
\qquad
\bar Q(j)\triangleq Q(j)-c(j)\ge 0,
\]
so that $\sum_j \bar P(j)=\sum_j \bar Q(j)=1-\alpha$. Normalize:
\[
\tilde P(j)\triangleq \frac{\bar P(j)}{1-\alpha},
\qquad
\tilde Q(j)\triangleq \frac{\bar Q(j)}{1-\alpha}.
\]
Then $\tilde P,\tilde Q$ are probability distributions and
\[
P^\top f-Q^\top f
=
\sum_j (P(j)-Q(j))f(j)
=
\sum_j (\bar P(j)-\bar Q(j))f(j)
=
(1-\alpha)(\tilde P^\top f-\tilde Q^\top f).
\]
Since $\tilde P^\top f$ and $\tilde Q^\top f$ both lie in the interval
$[\min_j f(j),\max_j f(j)]$, we have
\[
|\tilde P^\top f-\tilde Q^\top f|
\le
\max_j f(j)-\min_j f(j)
=
2\sp(f).
\]
Hence
\[
|P^\top f-Q^\top f|
\le
(1-\alpha)2\sp(f)
=
\left(
1-\sum_{j\in S}\min\{P(j),Q(j)\}
\right)2\sp(f),
\]
as required.
\end{proof}

\subsection{Intra-option $J$-step contraction}

Recall the intra-option Bellman operator
\[
(H_{\mathrm{intra}}Q)(s,o)
=
r(s,o)+\sum_{s'\in S} p(s'\mid s, o)\,U_Q(s',o),
\]
where
\[
U_Q(s',o)
=
(1-\beta(s',o))Q(s',o)+\beta(s',o)\max_{o'\in O}Q(s',o').
\]

For $\Delta\in\mathbb{R}^{S\times O}$ define the state-envelope
\[
V_\Delta(s)\triangleq \max_{o\in O}\Delta(s,o),\qquad s\in S.
\]
Then
\[
\sp(V_\Delta)\le \sp(\Delta),
\]
because $V_\Delta$ is a pointwise maximum over the option coordinate.

\paragraph{Policy}
Fix an initial pair $(s,o)\in S\times O$ and a deterministic Markov option-selection policy
$\pi=(\pi_t)_{t\ge 1}$, where
\[
\pi_t:S\times O\to O.
\]
Starting from $(S_0,O_0)=(s,o)$, define the intra-option evolution by
\[
S_{t+1}\sim p(\cdot\mid S_t, O_t),
\]
and after observing $S_{t+1}$,
\[
O_{t+1}
=
\begin{cases}
O_t, & \text{with probability }1-\beta(S_{t+1},O_t),\\[1mm]
\pi_{t+1}(S_{t+1},O_t), & \text{with probability }\beta(S_{t+1},O_t).
\end{cases}
\]
Let $P^J_{\pi}(\cdot\mid s, o)$ denote the law of $S_J$ under this process:
\[
P^J_{\pi}(j\mid s, o)
\triangleq 
\mathbb{P}_\pi(S_J=j\mid S_0=s,O_0=o).
\]

We prove the following Property \ref{property:intra-mix} in Lemma \ref{lem:intra-overlap}. 

\begin{property}\label{property:intra-mix}
Assume that there exists an integer $J\ge 1$ such that for every pair of deterministic Markov
policies $\pi_1,\pi_2$,
\[
\eta_{\mathrm{intra}}(\pi_1,\pi_2)
\triangleq 
\min_{s,u\in S\atop o,o'\in O}
\sum_{j\in S}\min\Bigl\{P^J_{\pi_1}(j\mid s, o),\,P^J_{\pi_2}(j\mid u, o')\Bigr\}
>0.
\]
\end{property}

We first prove a policy-sandwich bound.

\begin{lemma}[Intra-option sandwich bound]
\label{lem:intra-sandwich}
Let $\Delta\triangleq Q-Q'$. Then for every $J\ge 1$ and every $(s,o)\in S\times O$,
\[
(H_{\mathrm{intra}}^JQ)(s,o)-(H_{\mathrm{intra}}^JQ')(s,o)
\le
\sup_{\pi}\sum_{j\in S}P^J_{\pi}(j\mid s, o)V_\Delta(j).
\]
A symmetric lower bound holds with $\inf_\pi$ in place of $\sup_\pi$.
\end{lemma}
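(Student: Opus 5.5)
The plan is to argue by induction on $J$, choosing the option-selection rule in the sandwich to be greedy with respect to $Q$'s finite-horizon iterates. The key one-step inequality is this: for any vectors $A,B\in\mathbb{R}^{S\times O}$ and any state $s'$,
\[
\max_{o'}A(s',o')-\max_{o'}B(s',o')\le A(s',o^\star)-B(s',o^\star),\qquad o^\star\in\arg\max_{o'}A(s',o').
\]
The right-hand side is at most $\max_{o'}(A-B)(s',o')$. Hence
\[
U_A(s',o)-U_B(s',o)\le (1-\beta(s',o))(A-B)(s',o)+\beta(s',o)(A-B)(s',o^\star).
\]
This is the difference $A-B$ averaged over the next option $O_1$, where $O_1$ is drawn by the intra-option switching rule with a deterministic choice $\pi(s',o)=o^\star$ on termination. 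Since $r$ cancels, we get
\[
(H_{\rm intra}A-H_{\rm intra}B)(s,o)\le \mathbb{E}_\pi\bigl[(A-B)(S_1,O_1)\mid S_0=s,O_0=o\bigr].
\]

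\textbf{Induction.} Set $A=H^{J-1}_{\rm intra}Q$ and $B=H^{J-1}_{\rm intra}Q'$, and apply the one-step inequality with the greedy choice $\pi_1(s',o)\in\arg\max_{o'}(H^{J-1}_{\rm intra}Q)(s',o')$. The inductive hypothesis then bounds $(A-B)(S_1,O_1)$ by $\mathbb{E}_{\pi'}\bigl[\Delta(S_{J},O_{J})\bigr]$ along a further $J-1$ steps, for a deterministic Markov policy $\pi'=(\pi'_t)$ started from $(S_1,O_1)$. More precisely, I would strengthen the induction statement to
\[
(H^J_{\rm intra}Q-H^J_{\rm intra}Q')(s,o)\le \mathbb{E}_\pi\bigl[\Delta(S_J,O_J)\bigr]
\]
for some deterministic Markov policy $\pi$, where $O_J$ is the option after the $J$-th switching step. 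At the last step this gives $\Delta(S_J,O_J)\le V_\Delta(S_J)$, which yields $\sum_j P^J_\pi(j\mid s,o)V_\Delta(j)$ and hence the bound with $\sup_\pi$. The base case $J=1$ is exactly the one-step inequality with $A=Q$, $B=Q'$ followed by $\Delta\le V_\Delta$.

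\textbf{Main obstacle.} The greedy selections $\pi'$ from the inductive hypothesis depend on the starting pair $(S_1,O_1)$, and concatenating $\pi_1$ with these need not give a single Markov policy indexed only by time and the current pair. I would handle this by letting the supremum range over history-dependent deterministic selection rules. Alternatively, I would observe that the greedy rule at step $t$ can be chosen as $\arg\max_{o'}(H^{J-t}_{\rm intra}Q)(\cdot,o')$, which is independent of the initial pair. With that choice the composed rule is indeed a time-indexed Markov policy $\pi_t(s',o)$, and the induction closes cleanly if it is run "from the front" with these fixed greedy maps.

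\textbf{Lower bound.} The symmetric lower bound follows by exchanging the roles of $Q$ and $Q'$: choose the greedy option for $B$, use $\max A-\max B\ge (A-B)(s',o^\star_B)$, and finish with $\Delta\ge\min_o\Delta(\cdot,o)$. The stated form with $V_\Delta$ replaced by its infimum counterpart then gives the lower bound with $\inf_\pi$.
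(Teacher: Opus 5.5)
Your argument is correct, and it reaches the bound by a more explicit route than the paper. Both proofs peel off the outermost $H_{\mathrm{intra}}$ and induct. The paper, however, carries $\sup_\pi$ through the induction. It bounds $\max_{o_1}(H^J_{\mathrm{intra}}Q)(s',o_1)-\max_{o_1}(H^J_{\mathrm{intra}}Q')(s',o_1)$ by $\max_{o_1}(H^J_{\mathrm{intra}}Q-H^J_{\mathrm{intra}}Q')(s',o_1)$ and inserts the inductive sup-bound into both the continuation and termination branches. It then asserts that a single policy $\tilde\pi$ (a maximizing termination choice followed by a ``maximizing continuation policy'') dominates both branches. That step tacitly uses the finite-horizon dynamic-programming fact that the suprema from all successor pairs are attained by one Markov policy, which the paper does not spell out.

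You instead fix the policy in advance, $\pi_t(s',\cdot)\in\arg\max_{o'}(H^{J-t}_{\mathrm{intra}}Q)(s',o')$. Because it does not depend on the starting pair, the concatenation in the inductive step is automatically a Markov policy, and no attainment argument is needed; this is the classical argument with decision rules greedy for $Q$. So commit to your second fix. The first one (history-dependent rules) would by itself only prove a weaker inequality, since the supremum would range over a larger class. Your route also yields the sharper intermediate bound $\mathbb{E}_\pi[\Delta(S_J,O_J)]$ on $\mathcal{X}$ before relaxing to $V_\Delta(S_J)$.

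On the lower bound, your reading is the right one: the policy is greedy for $Q'$, and $V_\Delta$ is replaced by $\min_o\Delta(\cdot,o)$. Keeping $V_\Delta$ and merely swapping $\sup_\pi$ for $\inf_\pi$ is false. For example, take $J=1$, $\beta\equiv 0$, $\cO=\{o_1,o_2\}$, $\Delta(\cdot,o_1)\equiv 0$ and $\Delta(\cdot,o_2)\equiv 1$; the left side at $(s,o_1)$ is $0$ while $V_\Delta\equiv 1$. The paper nevertheless uses that $V_\Delta$ form in the proof of Theorem~\ref{thm:J-intra-proxy}. Your $\mathcal{X}$-level bounds, upper via the $Q$-greedy policy and lower via the $Q'$-greedy policy, are both stated in terms of $\Delta(S_J,O_J)$. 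That is what lets the overlap lemma be applied to a single function on $\mathcal{X}$, as in the inter-option proof, so keep them rather than relaxing to state envelopes.
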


\begin{proof}
We prove the upper bound by induction on $J$.

For $J=1$, the reward term cancels:
\[
(H_{\mathrm{intra}}Q)(s,o)-(H_{\mathrm{intra}}Q')(s,o)
=
\sum_{s'} p(s'\mid s, o)\bigl(U_Q(s',o)-U_{Q'}(s',o)\bigr).
\]
Now
\[
U_Q(s',o)-U_{Q'}(s',o)
=
(1-\beta(s',o))\Delta(s',o)
+
\beta(s',o)\bigl(\max_{o_1}Q(s',o_1)-\max_{o_1}Q'(s',o_1)\bigr).
\]
Using
\[
\max_{o_1}Q(s',o_1)-\max_{o_1}Q'(s',o_1)\le \max_{o_1}\Delta(s',o_1)=V_\Delta(s')
\]
and $\Delta(s',o)\le V_\Delta(s')$, we get
\[
U_Q(s',o)-U_{Q'}(s',o)\le V_\Delta(s').
\]
Hence
\[
(H_{\mathrm{intra}}Q)(s,o)-(H_{\mathrm{intra}}Q')(s,o)
\le
\sum_{s'} p(s'\mid s, o)V_\Delta(s').
\]
But this is exactly $\sum_j P^1_{\pi}(j\mid s, o)V_\Delta(j)$, independent of $\pi$, so the claim
holds for $J=1$.

Now suppose the claim holds for some $J\ge 1$. Then
\[
H_{\mathrm{intra}}^{J+1}=H_{\mathrm{intra}}\circ H_{\mathrm{intra}}^J,
\]
and therefore
\[
(H_{\mathrm{intra}}^{J+1}Q)(s,o)-(H_{\mathrm{intra}}^{J+1}Q')(s,o)
=
\sum_{s'} P(s'\mid s, o)
\Bigl(
U_{H_{\mathrm{intra}}^JQ}(s',o)-U_{H_{\mathrm{intra}}^JQ'}(s',o)
\Bigr).
\]
For fixed $s'$, expanding the continuation operator yields
\begin{align*}
&U_{H_{\mathrm{intra}}^JQ}(s',o)-U_{H_{\mathrm{intra}}^JQ'}(s',o)\\
&=
(1-\beta(s',o))
\Bigl(
(H_{\mathrm{intra}}^JQ)(s',o)-(H_{\mathrm{intra}}^JQ')(s',o)
\Bigr)\\
&\qquad
+
\beta(s',o)
\Bigl(
\max_{o_1}(H_{\mathrm{intra}}^JQ)(s',o_1)-\max_{o_1}(H_{\mathrm{intra}}^JQ')(s',o_1)
\Bigr)\\
&\le
(1-\beta(s',o))
\sup_\pi \sum_j P^J_{\pi}(j\mid s', o)V_\Delta(j)
+
\beta(s',o)
\max_{o_1}\sup_\pi \sum_j P^J_{\pi}(j\mid s', o_1)V_\Delta(j).
\end{align*}
To dominate both the continuation and termination branches, we construct a policy in our policy class \( \pi_t : S \times O \to O \). Define \( \tilde{\pi} \) such that, upon termination into a state \( s' \) from option \( o \), it selects an option \( o^\star(s', o) \) achieving the maximizing termination branch, and subsequently follows a maximizing continuation policy. This construction ensures that \( \tilde{\pi} \) dominates both branches. Thus, conditioning on the termination event then gives
\[
U_{H_{\mathrm{intra}}^JQ}(s',o)-U_{H_{\mathrm{intra}}^JQ'}(s',o)
\le
\sup_\pi \sum_j P^{J+1}_{\pi}(j\mid s', o)V_\Delta(j).
\]
Substituting back and averaging over $S_1\sim p(\cdot\mid s, o)$ yields
\[
(H_{\mathrm{intra}}^{J+1}Q)(s,o)-(H_{\mathrm{intra}}^{J+1}Q')(s,o)
\le
\sup_\pi \sum_j P^{J+1}_{\pi}(j\mid s, o)V_\Delta(j).
\]
This completes the induction.
\end{proof}

\begin{theorem}[J-step span contraction for the intra-option Bellman operator]\label{thm:J-intra-proxy}
Under Property \ref{property:intra-mix}, define
\[
\gamma_{\mathrm{intra}}
\triangleq 
1-\min_{\pi_1,\pi_2}\eta_{\mathrm{intra}}(\pi_1,\pi_2)\in[0,1).
\]
Then we have for all $Q,Q'\in\mathbb{R}^{S\times O}$,
\[
\sp\!\bigl(H_{\mathrm{intra}}^JQ-H_{\mathrm{intra}}^JQ'\bigr)
\le
\gamma_{\mathrm{intra}}\,\sp(Q-Q').
\]
\end{theorem}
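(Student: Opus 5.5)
The plan is to combine the sandwich bound of Lemma~\ref{lem:intra-sandwich} with the overlap inequality of Lemma~\ref{le:overlap}, in the standard Puterman fashion. Fix $Q,Q'$ and set $\Delta\triangleq Q-Q'$ and $D\triangleq H_{\mathrm{intra}}^JQ-H_{\mathrm{intra}}^JQ'$. Since $2\sp(D)=\max_{x}D(x)-\min_{y}D(y)$, it suffices to show that for any two pairs $x=(s,o)$ and $y=(u,o')$,
\[
D(s,o)-D(u,o')\le 2\gamma_{\mathrm{intra}}\sp(\Delta).
\]

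First, I will apply Lemma~\ref{lem:intra-sandwich} to both pairs: the upper bound at $(s,o)$ and the symmetric lower bound at $(u,o')$. Because the policy class of deterministic Markov option-selection rules over a finite horizon $J$ is finite, both the $\sup$ and the $\inf$ are attained, say by $\pi_1$ and $\pi_2$. This gives
\[
D(s,o)-D(u,o')\le \sum_j P^J_{\pi_1}(j\mid s,o)V_\Delta(j)-\sum_j P^J_{\pi_2}(j\mid u,o')V_\Delta(j).
\]

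Second, I will apply Lemma~\ref{le:overlap} with $P=P^J_{\pi_1}(\cdot\mid s,o)$, $Q=P^J_{\pi_2}(\cdot\mid u,o')$ and $f=V_\Delta$. The right-hand side is then at most
\[
\Bigl(1-\sum_j\min\{P^J_{\pi_1}(j\mid s,o),P^J_{\pi_2}(j\mid u,o')\}\Bigr)\,2\sp(V_\Delta).
\]
By the definition of $\eta_{\mathrm{intra}}(\pi_1,\pi_2)$ as a minimum over all starting pairs, this is at most $(1-\eta_{\mathrm{intra}}(\pi_1,\pi_2))\,2\sp(V_\Delta)$, and hence at most $\gamma_{\mathrm{intra}}\,2\sp(V_\Delta)$. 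Finally, I use $\sp(V_\Delta)\le\sp(\Delta)$, which was noted earlier. Taking the maximum over $x$ and the minimum over $y$ and dividing by $2$ yields the claim.

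The remaining point is that $\gamma_{\mathrm{intra}}<1$. Property~\ref{property:intra-mix} gives $\eta_{\mathrm{intra}}(\pi_1,\pi_2)>0$ for every pair of policies. Since there are finitely many deterministic Markov policies over $J$ steps, the minimum over pairs is strictly positive, and so $\gamma_{\mathrm{intra}}\in[0,1)$.

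The only step needing care is the use of the sandwich lemma. Its maximizing policy must lie in the same class over which $\eta_{\mathrm{intra}}$ is minimized, namely time-indexed deterministic Markov rules $\pi_t:S\times O\to O$. The construction in the proof of Lemma~\ref{lem:intra-sandwich} produces exactly such rules. Even if only near-optimal policies are available, an $\varepsilon$-argument suffices, because the overlap bound holds uniformly over all pairs of policies.
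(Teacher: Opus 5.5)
You follow the paper's own argument essentially line by line. The gap is in a step the two share: the ``symmetric lower bound'' of Lemma~\ref{lem:intra-sandwich} with the \emph{same} envelope $V_\Delta(j)=\max_{o}\Delta(j,o)$ is false.

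Already for $J=1$ and $\beta\equiv0$ one has $D(u,o')=\sum_{s'}p(s'\mid u,o')\Delta(s',o')$. This is in general strictly smaller than $\sum_{s'}p(s'\mid u,o')V_\Delta(s')$. What does hold is the lower bound with $W_\Delta(j)=\min_{o}\Delta(j,o)$, because $\max_{o}Q(s',o)-\max_{o}Q'(s',o)\ge\min_{o}\Delta(s',o)$. But then you must control $P^\top V_\Delta-\widetilde P^\top W_\Delta$, which involves two different functions, and Lemma~\ref{le:overlap} gives nothing. On the common mass you pay $V_\Delta(j)-W_\Delta(j)$, which can equal $2\sp(\Delta)$, so the estimate collapses to the trivial $2\sp(\Delta)$.

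This cannot be patched under the stated hypothesis, because the statement itself fails under Property~\ref{property:intra-mix}. Take $|\cS|=1$, $\cO=\{o_1,o_2\}$ and $\beta\equiv0$. The state marginals coincide trivially, so $\eta_{\mathrm{intra}}\equiv1$ and $\gamma_{\mathrm{intra}}=0$. Yet $H_{\mathrm{intra}}^JQ=Jr+Q$, so $\sp(H_{\mathrm{intra}}^JQ-H_{\mathrm{intra}}^JQ')=\sp(Q-Q')$. With $\beta\equiv\tfrac12$ the exact factor is $2^{-J}$, still not $0$. The state-only envelope discards the option coordinate, which the continuation term $(1-\beta(s',o))Q(s',o)$ carries forward.

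The repair is to run the sandwich on $\cX=\cS\times\cO$, as the paper does for the inter-option operator. Set up as follows:
\begin{itemize}
\item Let $P_d$ be the kernel of Assumption~\ref{ass:intra-assump}, with $d$ depending on $s'$ only.
\item Let $F_m\triangleq H_{\mathrm{intra}}^mQ-H_{\mathrm{intra}}^mQ'$, so $F_0=\Delta$.
\item Let $d_m$ and $d'_m$ be greedy for $H_{\mathrm{intra}}^mQ$ and $H_{\mathrm{intra}}^mQ'$ respectively.
\end{itemize}
Bound $\max_{o}(H^m_{\mathrm{intra}}Q)(s',o)-\max_{o}(H^m_{\mathrm{intra}}Q')(s',o)$ above by $F_m(s',d_m(s'))$ and below by $F_m(s',d'_m(s'))$. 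This gives $P_{d'_m}F_m\le F_{m+1}\le P_{d_m}F_m$ coordinatewise. Iterating, using that the kernels are nonnegative, yields $P_{d'_{J-1}}\cdots P_{d'_0}\Delta\le F_J\le P_{d_{J-1}}\cdots P_{d_0}\Delta$. Now the same function $\Delta$ appears on both sides, and both sides are $J$-step laws on $\cX$.

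Lemma~\ref{le:overlap} applied on $\cX$ with $f=\Delta$ then gives the contraction. Here $\gamma_{\mathrm{intra}}$ must be defined through the overlap of the joint laws of $(S_J,O_J)$, not of $S_J$ alone. That stronger overlap is what the reachable-set argument of Lemma~\ref{lem:intra-overlap} actually establishes, since it works with subsets of $\cX$. So the conclusion of Theorem~\ref{thm: J-intra} can be recovered, but Property~\ref{property:intra-mix} and the definition of $\gamma_{\mathrm{intra}}$ must be restated on $\cX$.

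Your remarks on attainment of the $\sup$ and $\inf$, and on $\gamma_{\mathrm{intra}}<1$ by finiteness, are fine but do not touch this issue.
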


\begin{proof}
Let
\[
\Delta\triangleq Q-Q',
\qquad
F\triangleq H_{\mathrm{intra}}^JQ-H_{\mathrm{intra}}^JQ'.
\]
By definition of the span seminorm on $S\times O$,
\[
2\sp(F)
=
\max_{(s,o),(u,o')}\bigl(F(s,o)-F(u,o')\bigr).
\]
Fix $(s,o)$ and $(u,o')$. Using Lemma \ref{lem:intra-sandwich}, the upper sandwich bound at $(s,o)$ and the lower sandwich
bound at $(u,o')$ yields,
\[
F(s,o)-F(u,o')
\le
\sup_{\pi_1}\sum_j P^J_{\pi_1}(j\mid s, o)V_\Delta(j)
-
\inf_{\pi_2}\sum_j P^J_{\pi_2}(j\mid u, o')V_\Delta(j).
\]
Hence
\[
F(s,o)-F(u,o')
\le
\sup_{\pi_1,\pi_2}
\left(
\sum_j P^J_{\pi_1}(j\mid s, o)V_\Delta(j)
-
\sum_j P^J_{\pi_2}(j\mid u, o')V_\Delta(j)
\right).
\]
Now applying Lemma \ref{le:overlap} with
\[
P(j)=P^J_{\pi_1}(j\mid s, o),
\qquad
Q(j)=P^J_{\pi_2}(j\mid u, o'),
\qquad
f=V_\Delta.
\]
This gives
\[
\left|
\sum_j P(j)V_\Delta(j)-\sum_j Q(j)V_\Delta(j)
\right|
\le
\left(
1-\sum_j \min\{P(j),Q(j)\}
\right)2\sp(V_\Delta).
\]
By Property \ref{property:intra-mix},
\[
\sum_j \min\{P(j),Q(j)\}\ge \eta_{\mathrm{intra}}(\pi_1,\pi_2),
\]
uniformly over $(s,o),(u,o')$. Therefore
\[
F(s,o)-F(u,o')
\le
2\bigl(1-\eta_{\mathrm{intra}}(\pi_1,\pi_2)\bigr)\sp(V_\Delta)
\le
2\bigl(1-\eta_{\mathrm{intra}}(\pi_1,\pi_2)\bigr)\sp(\Delta).
\]
Taking the supremum over all $(s,o),(u,o')$ and then over $\pi_1,\pi_2$ yields
\[
2\sp(F)
\le
2\left(
1-\min_{\pi_1,\pi_2}\eta_{\mathrm{intra}}(\pi_1,\pi_2)
\right)\sp(\Delta)
=
2\gamma_{\mathrm{intra}}\sp(Q-Q').
\]
\[
\implies \sp(H_{\rm intra}^JQ - H_{\rm intra}^J Q') \le \gamma_{\rm intra}\sp(Q-Q').
\]
\end{proof}

Now we will show Assumption \ref{ass:intra-assump} implies Property \ref{property:intra-mix} thus completing the proof of Theorem \ref{thm: J-intra}.

For a fixed deterministic decision rule $d: \cS \times \cO \rightarrow \cO$, define the transition kernel on the augmented state space $\cX \triangleq \cS \times \cO$ as,
\[
P_d\big((s',o')\mid(s,o)\big)
=
p(s'\mid s,o)
\Big[
(1-\beta(s',o))\indicator\{o'=o\}
+
\beta(s',o)\indicator\{o'=d(s',o)\}
\Big].
\]
An option-selection policy is specified by a sequence of deterministic
decision rules
\[
\mu=(d^{(1)},d^{(2)},\ldots),
\]
where $d^{(k)}:S\times O\to O$ is applied upon termination of an option at
step $k$. Given $\mu$, the resulting process is a time-inhomogeneous Markov
chain on the augmented state space $\mathcal X\triangleq S\times O$, whose
transition kernel at step $k$ is
\[
P_\mu^{(k)}\big((s',o')\mid(s,o)\big)
=
p(s'\mid s,o)
\Big[
(1-\beta(s',o))\indicator\{o'=o\}
+
\beta(s',o)\indicator\{o'=d^{(k)}(s',o)\}
\Big].
\]
Accordingly, $P_\mu^n$ denotes the $n$-step composition of these
kernels, with the $k$th factor using the corresponding decision rule
$d^{(k)}$.

\begin{lemma}
\label{lem:intra-overlap}
Fix two deterministic option-selection policies
\[
\mu_i = (d_i^{(1)}, d_i^{(2)}, \ldots), \quad i=1,2,
\]
and initial states $x_i=(s_i,o_i)\in\mathcal X$.
Define the reachable sets
\[
X_i(n) \triangleq  \{ x\in\mathcal X : \tilde P_{\mu_i}^n(x \mid x_i) > 0 \}.
\]

Under Assumption~\ref{ass:intra-assump}, there exists $J \le |\cS||\cO|$ such that
\[
X_1(J) \cap X_2(J) \neq \emptyset.
\]
Consequently, Property~\ref{property:intra-mix} holds.
\end{lemma}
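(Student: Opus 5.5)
The plan is to exploit the laziness built into Assumption~\ref{ass:intra-assump}: reachable sets only grow over time, and the unichain property forbids two disjoint reachable sets from both stalling. (I read $\tilde P_{\mu_i}^n$ in the statement as the $n$-step composition $P_{\mu_i}^n$ defined just above.)

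\textbf{Step 1: reachable sets are nondecreasing.} Fix $x=(s,o)$. By Assumption~\ref{ass:intra-assump}, $p(s\mid s,o)>0$ and $\beta(s,o)<1$. Hence for every decision rule $d$,
\[
P_d\bigl((s,o)\mid(s,o)\bigr)\ge p(s\mid s,o)\bigl(1-\beta(s,o)\bigr)>0 .
\]
Therefore $X_i(n)\subseteq X_i(n+1)$ for all $n$. We also record a second fact. If $(s,o)\in X_i(n)$ and $p(s'\mid s,o)>0$, then $(s',o)\in X_i(n+1)$, because the continuation branch has probability $p(s'\mid s,o)(1-\beta(s',o))>0$. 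It is useful to write
\[
X_i(n+1)=X_i(n)\cup \mathrm{succ}_{d_i^{(n+1)}}\bigl(X_i(n)\bigr),
\]
where $\mathrm{succ}_d(A)$ is the set of positive-probability successors of $A$ under $P_d$.

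\textbf{Step 2: some set strictly grows while the two are disjoint.} Let $A=X_1(n)$ and $B=X_2(n)$, and suppose $A\cap B=\emptyset$ and that neither set grows at step $n+1$. We build a single stationary rule $d$. Set $d(s',o)=d_1^{(n+1)}(s',o)$ for $(s',o)\in A$, set $d(s',o)=d_2^{(n+1)}(s',o)$ for $(s',o)\in B$, and choose $d$ arbitrarily elsewhere.

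The subtle point is that a transition out of $(s,o)$ consults $d$ at the key $(s',o)$, not at $(s,o)$. By the second fact in Step~1, every such key $(s',o)$ used by a transition out of $A$ already lies in $A$. Likewise every key used by a transition out of $B$ lies in $B$. Since $A$ and $B$ are disjoint, there is no conflict, and $P_d$ reproduces the step-$(n+1)$ transitions of each policy on its own set.

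Because neither set grows, $A$ and $B$ are disjoint closed sets of the finite chain $P_d$. Each closed set contains a recurrent class, so $P_d$ has two recurrent classes. This contradicts the unichain part of Assumption~\ref{ass:intra-assump}.

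\textbf{Step 3: counting gives $J\le|\cS||\cO|$.} Each set is nonempty from the start. As long as the two sets are disjoint, $|X_1(n)|+|X_2(n)|\le|\cX|$, and by Step~2 this sum increases by at least one at every step. Starting from a sum of at least $2$, disjointness can persist for at most $|\cX|-2$ steps. Hence $X_1(J)\cap X_2(J)\ne\emptyset$ for some $J\le|\cX|-1\le|\cS||\cO|$. By monotonicity (Step~1), the intersection stays nonempty for all larger $J$, so one $J$, for instance $J=|\cS||\cO|$, works uniformly over all policies and starting pairs.

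\textbf{Step 4: deducing Property~\ref{property:intra-mix}.} Fix $J$ as above. Take a pair $(j,o'')\in X_1(J)\cap X_2(J)$. Then $P^J_{\pi_1}(j\mid s,o)>0$ and $P^J_{\pi_2}(j\mid u,o')>0$, since the $S_J$-marginal dominates the joint mass at $(j,o'')$. So $\sum_j\min\{\cdot,\cdot\}>0$ for every choice of policies and starting pairs.

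For the uniform positivity needed in $\gamma_{\rm intra}$ (Theorem~\ref{thm:J-intra-proxy}), note that $P^J_\pi$ depends only on the first $J$ decision rules. There are finitely many such rules, and finitely many starting pairs. So the minimum defining $\eta_{\rm intra}$, and its further minimum over $\pi_1,\pi_2$, is taken over a finite set of positive numbers and is positive.

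\textbf{Main obstacle.} The delicate step is Step~2: gluing the two time-inhomogeneous decision rules into one stationary $d$ so that unichain can be invoked. The argument rests on the continuation-branch observation that keeps the keys $(s',o)$ inside each reachable set.
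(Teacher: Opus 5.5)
Your proof is correct and follows the same route as the paper: positive self-loops make the reachable sets monotone, and a stalled pair of disjoint reachable sets is glued into a single stationary selector $d$ under which both sets are closed, contradicting unichain. Two points go beyond the paper without changing the route. First, you check explicitly that $d$ is consulted only at keys $(s',o)$ that stay inside the stalled set, via the continuation branch and $\beta<1$; the paper's proof asserts this step without justification. Second, your common-time counting argument replaces the paper's pigeonhole on separate stabilization times $m_1,m_2$.
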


\begin{proof}
Under Assumption \ref{ass:intra-assump}, for each $i$ the sequence $\{X_i(n)\}_{n\ge 0}$ is monotone increasing due to the positivity of the transition kernel, i.e.,
$X_i(n) \subseteq X_i(n+1)$.

Assume for contradiction that
\[
X_1(|\mathcal X|) \cap X_2(|\mathcal X|) = \emptyset.
\]

Since the sets are monotone increasing, by the pigeonhole principle there exists $m_1, m_2 \le |\mathcal X|$ such that
\[
X_i(m_i) = X_i(m_i+1), \quad i=1,2.
\]

Since $X_i(m_i) = X_i(m_i+1)$, no transitions under $\tilde P_{d_i^{(m_i)}}$
can leave $X_i(m_i)$; hence $X_i(m_i)$ is closed under $\tilde P_{d_i^{(m_i)}}$.

We now construct a policy $\delta : \mathcal X \to O$ as
\[
\delta (s, o) =
\begin{cases}
d_1^{(m_1)}(s, o), & (s,o)\in X_1(m_1),\\
d_2^{(m_2)}(s, o), & (s,o)\in X_2(m_2),\\
\text{arbitrary}, & \text{otherwise}.
\end{cases}
\]

By construction, for any $x\in X_1(m_1)$ and $y\notin X_1(m_1)$,
\[
\tilde P_{\delta}(y \mid x) = \tilde P_{d_1^{(m_1)}}(y \mid x) = 0,
\]
and similarly for $X_2(m_2)$. Thus both $X_1(m_1)$ and $X_2(m_2)$ are closed
under $\tilde P_{\delta}$.

This implies that the Markov chain induced by $\delta$ has at least two
disjoint closed communicating classes, contradicting the unichain assumption of Assumption \ref{ass:intra-assump}.

Therefore,
\[
X_1(|\mathcal X|) \cap X_2(|\mathcal X|) \neq \emptyset.
\]
Hence there exists $x \in \mathcal X$ and $J \le |\mathcal X|$ such that
\[
\tilde P_{\mu_1}^J(x \mid x_1) > 0
\quad \text{and} \quad
\tilde P_{\mu_2}^J(x \mid x_2) > 0,
\]
which establishes Property~\ref{property:intra-mix}. Since $\mu_1$ and $\mu_2$
were arbitrary, the result follows. 
\end{proof}

Thus, the proof of Theorem \ref{thm: J-intra} follows from Theorem \ref{thm:J-intra-proxy} under Assumption \ref{ass:intra-assump}.


\subsection{Inter-option $J$-step contraction}

Recall the normalized inter-option Bellman operator
\[
(H_{\mathrm{inter}}Q)(s,o)
=
\rho(s,o)
+
(1-\lambda(s,o))Q(s,o)
+
\lambda(s,o)
\sum_{s'\in S}\hat p(s'\mid s,o)\max_{o'\in O}Q(s',o'),
\]
where
\[
\rho(s,o)\triangleq \frac{\hat r(s,o)}{\hat\ell(s,o)},
\qquad
\lambda(s,o)\triangleq \frac{1}{\hat\ell(s,o)}.
\]
Since $\rho$ is independent of $Q$, it cancels when taking differences.
The remaining terms can be viewed as a controlled Markov transition on the
state-option space
\[
\mathcal X\triangleq S\times O.
\]

\paragraph{Policy and transformed kernel.}
Fix $X_0=(S_0,O_0)=(s,o)$ and let
$\pi=(\pi_t)_{t\ge1}$ be a deterministic Markov option-selection policy,
where
\[
\pi_t:\cX\to \cO.
\]
For a decision rule $d:S\to O$, define the transformed state-option kernel
\begin{equation}
\label{eq:inter-transition-kernel}
\widetilde P_d\bigl((s',o')\mid(s,o)\bigr)
\triangleq 
(1-\lambda(s,o))
\indicator \{(s',o')=(s,o)\}
+
\lambda(s,o)\hat p(s'\mid s,o)
\indicator \{o'=d(s',o)\}.
\end{equation}
Thus, with probability $1-\lambda(s,o)$ the transformed process remains at
$(s,o)$, while with probability $\lambda(s,o)$ an option-level transition
occurs according to $\hat p(\cdot\mid s,o)$ and the next option is selected
using $d$.

Note that the Bellman-greedy selectors used form the state-only subclass \(d(s',o)=\bar d(s')\). Hence $\bar d(s')$ can be used instead of $d(s',o)$ in~\eqref{eq:inter-transition-kernel}, since the next option selection in the inter-option setting only depends on the next state if the chain has transitioned to a new  state. We enlarge the admissible policy class is only for the overlap argument in Lemma~\ref{lem:inter-mix}.

Under policy $\pi$, let
\[
P^J_{\pi}(y\mid x)
\triangleq 
\mathbb P_\pi(X_J=y\mid X_0=x),
\qquad x,y\in\mathcal X,
\]
denote the resulting $J$-step transition law.

We prove the following Property \ref{property:inter-mix} in Lemma \ref{lem:inter-mix}.
\begin{property}\label{property:inter-mix}
There exists $J\ge1$ such that for every pair of deterministic Markov
policies $\pi_1,\pi_2$,
\[
\eta_{\mathrm{inter}}(\pi_1,\pi_2)
\triangleq 
\min_{x,u\in\mathcal X}
\sum_{y\in\mathcal X}
\min\Bigl\{
P^J_{\pi_1}(y\mid x),
P^J_{\pi_2}(y\mid u)
\Bigr\}
>0.
\]
\end{property}

\begin{lemma}[Inter-option sandwich bound]
Let $\Delta\triangleq Q-Q'$. Then, for every $J\ge1$ and every $x\in\mathcal X$,
\[
\inf_{\pi}
\sum_{y\in\mathcal X}P^J_{\pi}(y\mid x)\Delta(y)
\le
(H_{\mathrm{inter}}^JQ)(x)
-
(H_{\mathrm{inter}}^JQ')(x)
\le
\sup_{\pi}
\sum_{y\in\mathcal X}P^J_{\pi}(y\mid x)\Delta(y).
\]
\end{lemma}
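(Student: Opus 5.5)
We prove both inequalities by induction on $J$, in the same way as the intra-option sandwich bound (Lemma~\ref{lem:intra-sandwich}). The differences are that the bound is stated on $\mathcal X$ itself, with $\Delta(y)$ rather than a state envelope, and that the kernel is the transformed kernel $\widetilde P_d$ of \eqref{eq:inter-transition-kernel}. Throughout, write $F_J\triangleq H_{\mathrm{inter}}^JQ-H_{\mathrm{inter}}^JQ'$ with $F_0=\Delta$. The reward term $\rho$ cancels in every difference, so for any $G,G'$ and $x=(s,o)$,
\[
(H_{\mathrm{inter}}G)(x)-(H_{\mathrm{inter}}G')(x)=(1-\lambda(x))\bigl(G(x)-G'(x)\bigr)+\lambda(x)\sum_{s'}\hat p(s'\mid x)\Bigl(\max_{o'}G(s',o')-\max_{o'}G'(s',o')\Bigr).
\]
We use the elementary bounds
\[
G(s',o^-)-G'(s',o^-)\le\max_{o'}G(s',o')-\max_{o'}G'(s',o')\le G(s',o^+)-G'(s',o^+),
\]
where $o^+\in\arg\max_{o'}G(s',o')$ and $o^-\in\arg\max_{o'}G'(s',o')$. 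These hold because $\lambda(x)\in(0,1)$ by Assumption~\ref{ass:inter-assumption}, so both coefficients in the display above are nonnegative and the bounds can be applied term by term.

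\emph{Base case $J=1$.} Choose the decision rule $d^+(s',o)\triangleq o^+(s')$, built from $G=Q$ and $G'=Q'$. Then the right-hand side is at most
\[
(1-\lambda(x))\Delta(x)+\lambda(x)\sum_{s'}\hat p(s'\mid x)\Delta(s',d^+(s'))=\sum_y\widetilde P_{d^+}(y\mid x)\Delta(y)\le\sup_\pi\sum_yP^1_\pi(y\mid x)\Delta(y).
\]
The lower bound follows the same way using $d^-$ and $\inf_\pi$.

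\emph{Inductive step.} Write $F_{J+1}=H_{\mathrm{inter}}(H^J_{\mathrm{inter}}Q)-H_{\mathrm{inter}}(H^J_{\mathrm{inter}}Q')$ and apply the one-step bound with $G=H^J_{\mathrm{inter}}Q$ and $G'=H^J_{\mathrm{inter}}Q'$. This gives
\[
F_{J+1}(x)\le(1-\lambda(x))F_J(x)+\lambda(x)\sum_{s'}\hat p(s'\mid x)F_J\bigl(s',\bar d(s')\bigr),
\]
where $\bar d(s')\in\arg\max_{o'}(H^J_{\mathrm{inter}}Q)(s',o')$ is a first-step rule. Apply the induction hypothesis at every successor point $y\in\mathcal X$ (the self-loop point $x$ and each $(s',\bar d(s'))$), obtaining a near-maximizing policy $\pi^{y}$ for each. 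Then splice these into a single policy: first-step rule $\pi_1=\bar d$, and from time $2$ onward follow $\pi^{y}$ according to the point $y$ reached at time $1$. By the Chapman--Kolmogorov decomposition of $P^{J+1}_\pi(\cdot\mid x)$ along $X_1$, this yields $F_{J+1}(x)\le\sup_\pi\sum_yP^{J+1}_\pi(y\mid x)\Delta(y)+\varepsilon$ for every $\varepsilon>0$. In fact, since the policy class is finite over a horizon of $J$, the suprema are attained and $\varepsilon=0$. The lower bound is symmetric, using $\arg\max$ of $H^J_{\mathrm{inter}}Q'$ and $\inf$.

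\emph{Main obstacle.} The delicate step is the splicing. The continuation policy chosen after time $1$ depends on the realized point $X_1$, so we must check that the spliced object is still an admissible deterministic Markov policy $\pi_t:\mathcal X\to\mathcal O$. For $t\ge2$, $\pi_t$ would need to depend on $X_1$, which makes it history-dependent. We would handle this in one of two ways. \textbf{(i)} Prove the statement with the supremum over history-dependent deterministic policies, for which splicing is immediate. Property~\ref{property:inter-mix} and Lemma~\ref{le:overlap} are then used only through the reachable-set argument, which works equally well for such policies. \textbf{(ii)} Alternatively, obtain a Markov maximizer directly. The quantity $\sup_\pi\sum_yP^J_\pi(y\mid x)\Delta(y)$ is a finite-horizon dynamic program with terminal reward $\Delta$, so backward induction produces Markov optimal rules $\pi_t$ that are simultaneously optimal from every starting point. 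With these rules, the inductive bound is exactly the Bellman recursion of that dynamic program, and no splicing is needed. We would present route (ii): define $W_J(x)\triangleq\max_\pi\sum_yP^J_\pi(y\mid x)\Delta(y)$, note that $W_{J+1}=\max_d\widetilde P_dW_J$, and show $F_J\le W_J$ by the inductive inequality above together with monotonicity of $\widetilde P_d$.
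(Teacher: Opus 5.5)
Your proof is correct and follows the paper's skeleton: induction on $J$, cancellation of $\rho$, a one-step comparison of the max terms, and nonnegativity of the weights $1-\lambda$ and $\lambda\hat p$.

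There are two differences. The first is cosmetic. In the inductive step the paper bounds $\max_{o_1}(H^J_{\mathrm{inter}}Q)(s',o_1)-\max_{o_1}(H^J_{\mathrm{inter}}Q')(s',o_1)$ by $\max_{o_1}F_J(s',o_1)$. You instead evaluate at $\bar d(s')\in\arg\max_{o'}(H^J_{\mathrm{inter}}Q)(s',o')$ and get $F_{J+1}\le\widetilde P_{\bar d}F_J$. Both reduce to $F_{J+1}\le\max_d\widetilde P_dW_J$.

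The second difference matters. The paper closes the induction by splicing. It picks a maximizing option at each successor, follows a maximizing continuation policy there and on the self-loop branch, and simply asserts that the result is an admissible Markov policy. As you note, that continuation depends on the point reached at time $1$. So the assertion implicitly needs a Markov tail policy that is simultaneously optimal from every starting point. That is exactly the identity $W_{J+1}=\max_d\widetilde P_dW_J$, which the paper never states or proves.

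Your route (ii) supplies it by backward induction. The maximizer $\arg\max_{o'}W_J(s',o')$ depends only on $s'$, so one rule $d(s',o)$ attains the maximum in every predecessor row at once. You then obtain $F_J\le W_J$ from monotonicity of $\widetilde P_d$. This is the rigorous version of the same argument, and it keeps the supremum over deterministic Markov policies exactly as the lemma states it; route (i) would have changed the statement.

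One small wording point: the two elementary max inequalities hold unconditionally. What requires $0\le\lambda(x)\le1$ (already true since $\hat\ell\ge1$) is applying them term by term, not the inequalities themselves.
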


\begin{proof}
We prove the upper bound by induction on $J$; the lower bound follows
analogously.

For $J=1$,
\[
\begin{aligned}
&(H_{\mathrm{inter}}Q-H_{\mathrm{inter}}Q')(s,o)\\
&=
(1-\lambda(s,o))\Delta(s,o)
+
\lambda(s,o)
\sum_{s'\in S}\hat p(s'\mid s,o)
\left(
\max_{o_1\in O}Q(s',o_1)
-
\max_{o_1\in O}Q'(s',o_1)
\right).
\end{aligned}
\]
Using
\[
\max_{o_1\in O}Q(s',o_1)
-
\max_{o_1\in O}Q'(s',o_1)
\le
\max_{o_1\in O}\Delta(s',o_1),
\]
we obtain
\[
\begin{aligned}
&(H_{\mathrm{inter}}Q-H_{\mathrm{inter}}Q')(s,o)\\
&\le
(1-\lambda(s,o))\Delta(s,o)
+
\lambda(s,o)
\sum_{s'\in S}\hat p(s'\mid s,o)
\max_{o_1\in O}\Delta(s',o_1).
\end{aligned}
\]
Choosing, for every successor state $s'$,
\[
\pi_1(s')
\in
\arg\max_{o_1\in O}\Delta(s',o_1),
\]
the right-hand side is precisely
\[
\sum_{y\in\mathcal X}
P^1_{\pi}(y\mid(s,o))\Delta(y)
\]
for an admissible policy $\pi$. Hence
\[
(H_{\mathrm{inter}}Q-H_{\mathrm{inter}}Q')(s,o)
\le
\sup_\pi
\sum_{y\in\mathcal X}
P^1_{\pi}(y\mid(s,o))\Delta(y).
\]

Now assume the claim holds for some $J\ge1$, and define
\[
F_J
\triangleq 
H_{\mathrm{inter}}^JQ
-
H_{\mathrm{inter}}^JQ'.
\]
Then
\[
F_{J+1}
=
H_{\mathrm{inter}}(H_{\mathrm{inter}}^JQ)
-
H_{\mathrm{inter}}(H_{\mathrm{inter}}^JQ'),
\]
and therefore
\[
\begin{aligned}
F_{J+1}(s,o)
&=
(1-\lambda(s,o))F_J(s,o)+
\lambda(s,o)
\sum_{s'\in S}\hat p(s'\mid s,o)
\bigg(
\max_{o_1\in O}(H_{\mathrm{inter}}^JQ)(s',o_1)
-
\max_{o_1\in O}(H_{\mathrm{inter}}^JQ')(s',o_1)
\bigg).
\end{aligned}
\]
Consequently,
\[
F_{J+1}(s,o)
\le
(1-\lambda(s,o))F_J(s,o)
+
\lambda(s,o)
\sum_{s'\in S}\hat p(s'\mid s,o)
\max_{o_1\in O}F_J(s',o_1).
\]
By the induction hypothesis,
\[
F_J(x)
\le
\sup_{\pi}
\sum_{y\in\mathcal X}
P^J_{\pi}(y\mid x)\Delta(y).
\]
Hence,
\[
\begin{aligned}
F_{J+1}(s,o)
&\le
(1-\lambda(s,o))
\sup_{\pi}
\sum_yP^J_{\pi}(y\mid(s,o))\Delta(y)+
\lambda(s,o)
\sum_{s'}\hat p(s'\mid s,o)
\max_{o_1}
\sup_{\pi}
\sum_yP^J_{\pi}(y\mid(s',o_1))\Delta(y).
\end{aligned}
\]

For every successor state $s'$, choose an option attaining the maximizing
branch and thereafter follow a maximizing continuation policy. Together with
a maximizing continuation policy on the self-loop branch, these choices
define an admissible Markov policy $\tilde\pi$. Therefore,
\[
F_{J+1}(s,o)
\le
\sum_{y\in\mathcal X}
P^{J+1}_{\tilde\pi}(y\mid(s,o))\Delta(y)
\le
\sup_{\pi}
\sum_{y\in\mathcal X}
P^{J+1}_{\pi}(y\mid(s,o))\Delta(y).
\]
This proves the upper bound.

For the lower bound, use
\[
\max_o Q(s,o)-\max_o Q'(s,o)
\ge
\min_o\Delta(s,o)
\]
and repeat the same induction with minimizing policies.
\end{proof}

\begin{theorem}[J-step span contraction for the inter-option Bellman operator]
\label{thm:J-inter-proxy}
Under Property~\ref{property:inter-mix}, define
\[
\gamma_{\mathrm{inter}}
\triangleq 
1-
\min_{\pi_1,\pi_2}
\eta_{\mathrm{inter}}(\pi_1,\pi_2)
\in[0,1).
\]
Then, for all $Q,Q'\in\mathbb R^{S\times O}$,
\[
\sp\!\bigl(
H_{\mathrm{inter}}^JQ
-
H_{\mathrm{inter}}^JQ'
\bigr)
\le
\gamma_{\mathrm{inter}}\sp(Q-Q').
\]
\end{theorem}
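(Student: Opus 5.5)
The argument runs parallel to the proof of Theorem~\ref{thm:J-intra-proxy}, now with the inter-option sandwich bound and Lemma~\ref{le:overlap} applied on the state-option space $\mathcal X$ instead of on $S$. Set $\Delta\triangleq Q-Q'$ and $F\triangleq H_{\mathrm{inter}}^JQ-H_{\mathrm{inter}}^JQ'$. The reward term $\rho$ cancels in $F$, so $F$ depends only on $\Delta$ through the controlled kernels $\widetilde P_d$. By the definition of the span seminorm,
\[
2\sp(F)=\max_{x,u\in\mathcal X}\bigl(F(x)-F(u)\bigr).
\]
We therefore fix an arbitrary pair $x,u$ and bound the difference $F(x)-F(u)$.

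At $x$ we apply the upper half of the inter-option sandwich lemma, and at $u$ the lower half. This gives
\[
F(x)-F(u)\le \sup_{\pi_1}\sum_y P^J_{\pi_1}(y\mid x)\Delta(y)-\inf_{\pi_2}\sum_y P^J_{\pi_2}(y\mid u)\Delta(y),
\]
and the right-hand side equals $\sup_{\pi_1,\pi_2}$ of the difference of the two expectations. The policy class is finite over a horizon of $J$ steps, because decision rules are deterministic maps $\mathcal X\to\cO$. Hence the suprema are attained maxima, and the minimum over $\pi_1,\pi_2$ in the definition of $\gamma_{\mathrm{inter}}$ is a minimum over finitely many positive numbers. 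This keeps $\gamma_{\mathrm{inter}}<1$.

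For each fixed pair $\pi_1,\pi_2$ we invoke Lemma~\ref{le:overlap} with $P=P^J_{\pi_1}(\cdot\mid x)$, $Q=P^J_{\pi_2}(\cdot\mid u)$ and $f=\Delta$ on the finite set $\mathcal X$. The difference of expectations is then at most $(1-\sum_y\min\{P(y),Q(y)\})\,2\sp(\Delta)$. Property~\ref{property:inter-mix} bounds the overlap below by $\eta_{\mathrm{inter}}(\pi_1,\pi_2)$ uniformly in $x,u$, so the difference is at most $(1-\eta_{\mathrm{inter}}(\pi_1,\pi_2))\,2\sp(\Delta)\le 2\gamma_{\mathrm{inter}}\sp(\Delta)$. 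Maximizing over $x,u$ and dividing by $2$ gives the claim.

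Unlike the intra case, no state envelope $V_\Delta$ is needed here: the sandwich lemma already gives laws on $\mathcal X$ and pairs them with $\Delta$ itself.

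The only delicate point is ensuring that the policies produced by the sandwich lemma belong to the class over which Property~\ref{property:inter-mix} quantifies, namely deterministic Markov rules $\pi_t:\mathcal X\to\cO$ that may depend on the previous option. The maximizing policy $\tilde\pi$ in the induction is assembled branch by branch, self-loop versus transition, and possibly differently for different histories. I would check that the choice at step $t$ can be written as a function of the current $(s',o)$ and $t$ alone, which is why the class was enlarged to $d(s',o)$. If some history dependence remained, I would instead take the supremum in the sandwich lemma over randomized history-dependent policies. The $J$-step law under such a policy is a mixture of laws under deterministic Markov policies, and the overlap functional $\sum_y\min\{\cdot,\cdot\}$ is concave. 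Since the mixture weights sum to one, the overlap of two mixtures is at least the minimum pairwise overlap of their components, so the same $\gamma_{\mathrm{inter}}$ bound survives.
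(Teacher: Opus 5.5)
Your proof is correct and follows essentially the paper's argument: upper sandwich bound at $x$, lower sandwich bound at $u$, then Lemma~\ref{le:overlap} on $\mathcal X$ with $f=\Delta$, the uniform overlap bound from Property~\ref{property:inter-mix}, and finally a maximization over $x,u$. Your extra remarks are sound and more careful than the paper's proof on two points. First, only finitely many $J$-horizon decision rules matter, so $\min_{\pi_1,\pi_2}\eta_{\mathrm{inter}}>0$. Second, the maximizing policy in the sandwich lemma can be taken Markov by backward induction: at each transition, pick $\arg\max_{o_1}$ of the remaining-horizon value. That makes your mixture/concavity fallback valid but unnecessary.
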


\begin{proof}
Let
\[
\Delta\triangleq Q-Q',
\qquad
F\triangleq H_{\mathrm{inter}}^JQ-H_{\mathrm{inter}}^JQ'.
\]
Fix $x,u\in\mathcal X$. By the upper and lower sandwich bounds,
\[
F(x)-F(u)
\le
\sup_{\pi_1,\pi_2}
\left(
\sum_{y\in\mathcal X}
P^J_{\pi_1}(y\mid x)\Delta(y)
-
\sum_{y\in\mathcal X}
P^J_{\pi_2}(y\mid u)\Delta(y)
\right).
\]
For fixed $\pi_1,\pi_2$, apply Lemma~\ref{le:overlap} with
\[
P(y)=P^J_{\pi_1}(y\mid x),
\qquad
q(y)=P^J_{\pi_2}(y\mid u),
\qquad
f(y)=\Delta(y).
\]
By Property~\ref{property:inter-mix},
\[
\sum_{y\in\mathcal X}\min\{P(y),q(y)\}
\ge
\eta_{\mathrm{inter}}(\pi_1,\pi_2).
\]
Therefore,
\[
F(x)-F(u)
\le
2\bigl(1-\eta_{\mathrm{inter}}(\pi_1,\pi_2)\bigr)
\sp(\Delta),
\]
where we used
\[
\max_y\Delta(y)-\min_y\Delta(y)=2\sp(\Delta).
\]
Taking the supremum over $x,u$ gives
\[
2\sp(F)
\le
2
\left(
1-\min_{\pi_1,\pi_2}
\eta_{\mathrm{inter}}(\pi_1,\pi_2)
\right)
\sp(\Delta).
\]
Hence
\[
\sp(F)
\le
\gamma_{\mathrm{inter}}\sp(Q-Q').
\]
\end{proof}

We now show that Assumption~\ref{ass:inter-assumption} implies
Property~\ref{property:inter-mix}, completing the proof of
Theorem~\ref{thm: J-inter}.

For a deterministic stationary option-selection rule $d:\cX\to \cO$, let
$\widetilde P_d$ denote the corresponding transformed kernel on
$\mathcal X$:
\[
\widetilde P_d\bigl((s',o')\mid(s,o)\bigr)
=
(1-\lambda(s,o))
\indicator \{(s',o')=(s,o)\}
+
\lambda(s,o)\hat p(s'\mid s,o)
\indicator \{o'=d(s', o)\}.
\]

An option-selection policy is specified by a sequence of deterministic
decision rules
\[
\pi=(d^{(0)},d^{(1)},\ldots),
\]
where $d^{(k)}:\cX \to \cO$. Given $\pi$, the resulting process is a time-inhomogeneous Markov
chain on the augmented state space $\mathcal X$, whose
transition kernel at step $k$ is
\[
\widetilde P_\pi^{(k)}\big((s',o')\mid(s,o)\big)
=
(1-\lambda(s,o))
\indicator \{(s',o')=(s,o)\}
+
\lambda(s,o)\hat p(s'\mid s,o)
\indicator \{o'=d^{(k)}(s',o)\}.
\]
Accordingly, $\widetilde P_\pi^n$ denotes the $n$-step composition of these
kernels, with the $k$th factor using the corresponding decision rule
$d^{(k)}$.

For $i=1,2$, fix deterministic option selection policies $\pi_i = \{d_i^{(n)}\}_{n \ge 0}$ and initial
state-option pairs $x_i\in\mathcal X$. Define the reachable sets
\[
X_i(n)
\triangleq 
\left\{
y\in\mathcal X:
P^n_{\pi_i}(y\mid x_i)>0
\right\}.
\]

\begin{lemma}
\label{lem:inter-mix}
Under Assumption~\ref{ass:inter-assumption}, there exists
$J\le |\mathcal X|=|\cS||\cO|$ such that
\[
X_1(J)\cap X_2(J)\neq\emptyset.
\]
Consequently, Property~\ref{property:inter-mix} holds.
\end{lemma}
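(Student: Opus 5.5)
The argument mirrors the proof of Lemma~\ref{lem:intra-overlap}, with the transformed kernel $\widetilde P$ replacing $P_d$. The self-loop structure of $\widetilde P$ will supply the monotonicity of the reachable sets.

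\textbf{Step 1: the reachable sets are nested.} Assumption~\ref{ass:inter-assumption} gives $\lambda(s,o)<1$ for every $(s,o)$. Hence, for any decision rule $d^{(k)}$,
\[
\widetilde P^{(k)}_{\pi_i}(x\mid x)\ge 1-\lambda(x)>0 .
\]
So any $y\in X_i(n)$ is reached again at step $n+1$ with positive probability, which gives $X_i(n)\subseteq X_i(n+1)$.

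\textbf{Step 2: the chains must stabilize.} Suppose, for contradiction, that $X_1(|\cX|)\cap X_2(|\cX|)=\emptyset$. By nestedness, $X_i(n)\subseteq X_i(n+1)$ for all $n$. Both sets are nonempty and contained in $\cX$, so each chain $X_i(0)\subseteq\cdots\subseteq X_i(|\cX|)$ can strictly grow at most $|\cX|-1$ times. By pigeonhole there is $m_i<|\cX|$ with $X_i(m_i)=X_i(m_i+1)$.

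From this equality, every transition of positive probability under the step-$m_i$ kernel, i.e.\ using $d_i^{(m_i)}$, starting in $X_i(m_i)$ stays in $X_i(m_i)$. For the inter-option kernel, this means the following holds for every $(s,o)\in X_i(m_i)$ and every $s'$ with $\lambda(s,o)\hat p(s'\mid s,o)>0$:
\[
(s',d_i^{(m_i)}(s',o))\in X_i(m_i).
\]
The self-loop part is automatically inside $X_i(m_i)$.

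\textbf{Step 3: splice a stationary rule with two closed sets.} Let $A_i\triangleq X_i(m_i)$. The contradiction hypothesis gives $A_1\subseteq X_1(|\cX|)$ and $A_2\subseteq X_2(|\cX|)$, so $A_1\cap A_2=\emptyset$. Define a stationary rule $\delta:\cX\to\cO$ by $\delta(s',o)=d_i^{(m_i)}(s',o)$ when $(s,o)\in A_i$ and arbitrary otherwise.

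This is the main obstacle. The rule $\delta$ is indexed by the \emph{pre-transition} pair $(s',o)$, where $o$ is the current option, so what matters is the source pair $(s,o)$. The kernel $\widetilde P_d$ of Assumption~\ref{ass:inter-assumption} evaluates $d(s',o)$ with $o$ the option being left. Hence $\delta$ is well defined provided no two source pairs, one in $A_1$ and one in $A_2$, share the same $(s',o)$ with conflicting requirements. Two source pairs with the same option $o$ but different states $s$ could both lead to the same $s'$, and then the required next options might conflict.

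\begin{itemize}
\item The first attempt is to show that such a conflict is harmless. If $(s,o)\in A_1$ and $(\tilde s,o)\in A_2$ both reach $s'$, apply closedness to each: $(s',d_1^{(m_1)}(s',o))\in A_1$ and $(s',d_2^{(m_2)}(s',o))\in A_2$. One would then like to reach $A_1\cap A_2\neq\emptyset$ directly, perhaps using the state-only selector subclass noted after \eqref{eq:inter-transition-kernel}.
\item If that fails, the fallback is to weaken the goal. Rather than requiring $A_1$ and $A_2$ themselves to be closed, take, for each $i$, the closed set generated from $A_i$ under $\delta$ with conflicts resolved in favour of $A_1$. Then argue that any closed set under $\widetilde P_\delta$ inside $A_1\cup A_2$ contains a recurrent class. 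Two disjoint recurrent classes contradict unichainness.
\end{itemize}

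With $\delta$ in hand, $A_1$ and $A_2$ are disjoint closed sets of $\widetilde P_\delta$. Each finite closed set contains a recurrent class, so $\widetilde P_\delta$ has two recurrent classes. This contradicts the assumption that $\widetilde P_\delta$ is unichain.

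\textbf{Step 4: conclude Property~\ref{property:inter-mix}.} Step 3 yields some $J\le|\cX|$ and some $y\in X_1(J)\cap X_2(J)$, so both $P^J_{\pi_i}(y\mid x_i)>0$ and hence $\sum_y\min\{P^J_{\pi_1}(y\mid x_1),P^J_{\pi_2}(y\mid x_2)\}>0$. By nestedness we may take the common value $J=|\cX|$, uniformly over all pairs.

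The minimum over $x,u$ is over a finite set. The minimum over the policy pair only matters for the first $J$ decision rules, so it is a minimum over finitely many combinations. Hence $\eta_{\mathrm{inter}}>0$, which is Property~\ref{property:inter-mix}.
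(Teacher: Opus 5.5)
Your Steps 1, 2 and 4 are sound and follow the paper's outline: nested reachable sets from the self-loop, stabilization, splicing a stationary rule, and contradicting unichainness. Your handling of the different stabilization times $m_1\neq m_2$, and of a uniform $J=|\cX|$ via nestedness, is more careful than the paper's.

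\textbf{The gap is Step 3.} It sits exactly where you located it, and you then conclude ``With $\delta$ in hand'' without having constructed $\delta$. In $\widetilde P_d$ the selector is queried at $(s',o)$, where $s'$ is the \emph{successor} state. Such a pair can be entered both from $(s,o)\in A_1$ and from $(\tilde s,o)\in A_2$. Closedness then gives $(s',d_1^{(m_1)}(s',o))\in A_1$ and $(s',d_2^{(m_2)}(s',o))\in A_2$. Disjointness therefore forces $d_1^{(m_1)}(s',o)\neq d_2^{(m_2)}(s',o)$, so the conflict is real.

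Neither of your patches removes it.
\begin{itemize}
\item The first produces two distinct pairs, not a point of $A_1\cap A_2$. Restricting to state-only selectors only creates more conflicts: any two source pairs with a common successor state now clash.
\item In the second, resolving the conflict in favour of $A_1$ sends $(\tilde s,o)$ into $A_1$. The closed set generated by $A_2$ then meets $A_1$, and you no longer have two disjoint closed sets.
\end{itemize}

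The paper's proof makes the same splice by assertion (``similar to the intra-option setting''). In Lemma~\ref{lem:intra-overlap} the splice is legitimate because $\beta(s',o)<1$ and $p(s'\mid s,o)>0$ place $(s',o)$ itself in the closed set. There the selector is only queried inside $A_i$, and disjointness rules out conflicts. $\widetilde P_d$ has no continuation branch, so this mechanism is absent.

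\textbf{Why the state-only route cannot work.} The step cannot be closed using only unichainness of state-only selectors. Take one state $s$, two options, $\hat p(s\mid s,o_j)=1$ and $\lambda<1$. Every state-only rule gives a unichain kernel. Yet always reselecting $o_1$ from $(s,o_1)$, and always reselecting $o_2$ from $(s,o_2)$, gives reachable sets $\{(s,o_1)\}$ and $\{(s,o_2)\}$ for every $n$.

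\textbf{How to close Step 3.} What rescues the lemma as stated is the enlarged class $d:\cX\to\cO$ in Assumption~\ref{ass:inter-assumption}. Take the rule $d(s',o)=o$. Then $\widetilde P_d$ never changes the option, so each slice $\cS\times\{o\}$ is closed and contains its own recurrent class. Unichainness therefore forces $|\cO|=1$. In that case there is a single decision rule and $\delta$ is that rule. $A_1$ and $A_2$ are then disjoint closed sets of one kernel, and your Steps 3 and 4 go through. Use this to complete Step 3, and note that it also shows how restrictive the hypothesis actually is.
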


\begin{proof}
Because every transformed kernel contains a positive self-loop,
the reachable sets are monotone:
\[
X_i(n)\subseteq X_i(n+1),
\qquad i=1,2.
\]

Suppose, for contradiction, that
\[
X_1(n)\cap X_2(n)=\emptyset
\qquad
\text{for all }n\le|\mathcal X|.
\]
Since $\mathcal X$ is finite and the sets $X_i(n)$ are monotone, there
exists $m_1, m_2\le|\mathcal X|$ for which the reachable sets stabilize:
\[
X_i(m_i)=X_i(m_i+1),
\qquad i=1,2.
\]
At the stabilization time, let $d_i^{(m_i)}$ denote the deterministic
decision rule used by $\pi_i$. Since
\[
X_i(m_i)=X_i(m_i+1),
\]
no transition under
$\widetilde P_{d_i^{(m_i)}}$ can leave $X_i(m_i)$. Thus $X_i(m_i)$ is closed
under $\widetilde P_{d_i^{(m_i)}}$.

Because $X_1(m_1)$ and $X_2(m_2)$ are disjoint, define a deterministic
stationary decision rule $\delta:\cX\to \cO$, similar to the intra-option setting, which agrees with the respective
decision rules on the two closed sets and choosing arbitrary options
elsewhere. Equivalently, on the state-option space, $\delta$ induces a
kernel satisfying
\[
\widetilde P_\delta(y\mid x)
=
\widetilde P_{d_1^{(m_1)}}(y\mid x),
\qquad
x\in X_1(m_1),
\]
and
\[
\widetilde P_\delta(y\mid x)
=
\widetilde P_{d_2^{(m_2)}}(y\mid x),
\qquad
x\in X_2(m_2).
\]
Consequently, both $X_1(m_1)$ and $X_2(m_2)$ are closed under
$\widetilde P_\delta$.

The Markov chain induced by $\delta$ therefore contains at least two
disjoint closed communicating classes, contradicting the unichain
assumption.

Hence, for some $J\le|\mathcal X|$,
\[
X_1(J)\cap X_2(J)\neq\emptyset.
\]
Thus there exists $y\in\mathcal X$ such that
\[
P^J_{\pi_1}(y\mid x_1)>0,
\qquad
P^J_{\pi_2}(y\mid x_2)>0.
\]
Therefore,
\[
\sum_{z\in\mathcal X}
\min\Bigl\{
P^J_{\pi_1}(z\mid x_1),
P^J_{\pi_2}(z\mid x_2)
\Bigr\}>0.
\]
Since the state-option space and the set of deterministic decision rules
are finite, taking the minimum over all initial pairs and admissible policy
pairs yields a strictly positive overlap coefficient. Hence
Property~\ref{property:inter-mix} follows.
\end{proof}

Combining the preceding lemma with
Theorem~\ref{thm:J-inter-proxy} establishes
Theorem~\ref{thm: J-inter}.

\paragraph{Relaxing the strict positivity/self-loop conditions.}
\label{app:lazy}
The strict conditions $p(s\mid s,o)>0$, $\lambda(s,o)<1$, and
$\beta(s,o)<1$ are sufficient conditions ensuring aperiodicity of
the augmented state--option chains. They can be relaxed using the standard
aperiodicity transformation~\citep[Sec.~8.5.4]{mdp_book}.

\begin{enumerate}[leftmargin=*,nosep]
\item \textbf{Inter-option.}
For any $\eta\in(0,1)$, define
\[
\bar H_{\mathrm{\eta, inter}}Q
\triangleq
(1-\eta)Q+\eta \bar H_{\mathrm{inter}}Q.
\]
The associated controlled kernel is
$\widetilde P_d^{(\eta)}=(1-\eta)I+\eta\widetilde P_d$ and therefore has
positive self-loops for every selector $d$. Hence the overlap and policy-sandwich
arguments used for Theorem~\ref{thm: J-inter} apply verbatim to
$\bar H_{\mathrm{inter}}^{(\eta)}$, yielding a $J$-step span contraction.
Moreover,
\[
\bar H_{\mathrm{inter}}Q-Q=c\mathbf 1
\quad\Longleftrightarrow\quad
\bar H_{\mathrm{\eta, inter}}Q-Q=\eta c\mathbf 1,
\]
so the quotient-space optimality class is preserved, and hence the greedy optimal
option-selection policies, are unchanged.

\item \textbf{Intra-option.}
Similarly, define
\[
H_{\mathrm{\eta, intra}}Q
\triangleq
(1-\eta)Q+\eta H_{\mathrm{intra}}Q.
\]
For every deterministic selector $d$, the associated kernel is
$P_d^{(\eta)}=(1-\eta)I+\eta P_d$, which has positive self-loops uniformly
over $d$. Thus the proof of Theorem~\ref{thm: J-intra} applies unchanged to
$H_{\mathrm{\eta, intra}}$, giving a $J$-step span contraction. Also,
\[
H_{\mathrm{intra}}Q-Q=c\mathbf 1
\quad\Longleftrightarrow\quad
H_{\mathrm{\eta, intra}}Q-Q=\eta c\mathbf 1,
\]
so the quotient-space optimality class and the induced greedy optimal
option-selection policies are preserved.
\end{enumerate}

\newpage
\section{\MakeUppercase{Analysis of Synchronous Option Algorithms}}
\label{sec:synchronous}

In this section we show that the synchronous inter- and intra-option algorithms outlined in Algorithm \ref{alg:sync-inter} and Algorithm \ref{alg:sync-intra} respectively can be
written as stochastic approximation schemes driven by $J$-step Bellman operators. In section \ref{subsec:sync_estimators}, we formulate the  problem setup and construct the estimators of the J-step Bellman operators and analyze their properties. Section~\ref{subsec:sync-intra} analyzes synchronous intra-option
Q-learning: we first establish an almost-sure growth bound for the iterates,
then control the bias induced by empirically approximately-greedy intermediate
option selection, and finally combine these bounds with the generalized
Moreau-envelope framework of~\cite{chen2025nonasymptotictheoryseminormlyapunov} to obtain the finite-time rate.
Section~\ref{subsec:sync-inter} follows the same analysis for synchronous inter-option Q-learning,
with one additional first-order perturbation arising from the online estimation
of the reward rates and inverse mean holding times.

\begin{algorithm}[t!]
\caption{Synchronous inter-option $J$-step Q-learning}
\label{alg:sync-inter}
\begin{algorithmic}[1]
\Require Horizon $J$, step-sizes $\{\alpha_k\}_{k\ge0}$, initial iterate $Q_0$

\State Initialize $A_0(x)\gets0$, $B_0(x)\gets1$, $C_0\gets1$, and
$N_0(x,s')\gets0$ for every $x=(s,o)\in\cS\times\cO$ and $s'\in\cS$.
\State Set $\widehat p_0(\cdot|x)$ to any distribution on $\cS$.

\For{$k=0,1,2,\ldots$}

    \State For every $x = (s, o)\in\cS\times\cO$, set
    $\widehat\rho_k(x)\gets A_k(x)/B_k(x)$ and
    $\widehat\lambda_k(x)\gets C_k/B_k(x)$.

    \If{$k\ge1$}
        \State For every $x\in\cS\times\cO$ and $s'\in\cS$, set
        $\widehat p_k(s'|x)\gets N_k(x,s')/k$.
    \EndIf

    \State Define the empirical Bellman operator
    \begin{align*}
    (\widehat H_{\rm k, inter}Q)(s,o)\gets
    \widehat\rho_k(s,o)
    +(1-\widehat\lambda_k(s,o))Q(s,o)
    +\widehat\lambda_k(s,o)\sum_{s'}\widehat p_k(s'|s,o)\max_{o'}Q(s',o') \quad \forall (s,o) \in \cS \times \cO.
    \end{align*}
    \ForAll{$x=(s,o)\in\cS\times\cO$}

        \State $X_k^{(0)}\gets(s,o)$

        \For{$i=0,\ldots,J-1$}

            \State Write $X_k^{(i)}=(S_k^{(i)},O_k^{(i)})$.

            \State Independently execute option $O_k^{(i)}$ from $S_k^{(i)}$
            and observe $(\widetilde R_k^{(i)},L_k^{(i)},\overline S_k^{(i+1)})$.

            \State Independently sample
            $D_k^{(i+1)}\sim\operatorname{Bernoulli}(\widehat\lambda_k(X_k^{(i)}))$.

            \If{$D_k^{(i+1)}=0$}
                \State $X_k^{(i+1)}\gets X_k^{(i)}$.
            \ElsIf{$i\le J-2$}
                \State $S_k^{(i+1)}\gets\overline S_k^{(i+1)}$.
                \State $O_k^{(i+1)}\gets
                \arg\max_{o'\in\cO}(\widehat H_{\rm k, inter}^{\,J-i-1}Q_k)(S_k^{(i+1)},o')$.
                \State $X_k^{(i+1)}\gets(S_k^{(i+1)},O_k^{(i+1)})$.
            \EndIf

        \EndFor

        \State Compute
        \begin{align*}
        \widehat T_{k,\mathrm{inter}}^{J}(Q_k)(s,o)
        \gets
        \sum_{i=0}^{J-1}\widehat\rho_k(X_k^{(i)})
        +(1-D_k^{(J)})Q_k(S_k^{(J-1)},O_k^{(J-1)})
        +D_k^{(J)}\max_{o'\in\cO}Q_k(\overline S_k^{(J)},o').
        \end{align*}

        \State Update
        $Q_{k+1}(s,o)\gets
        Q_k(s,o)+\alpha_k\bigl(
        \widehat T_{k,\mathrm{inter}}^{J}(Q_k)(s,o)-Q_k(s,o)
        \bigr)$.

        \State Set
        \begin{align*}
         &A_{k+1}(x)\gets A_k(x)+\widetilde R_k^{(0)}\\
         &B_{k+1}(x)\gets B_k(x)+L_k^{(0)}\\
         &N_{k+1}(x,s')\gets N_k(x,s')+\indicator \{\overline S_k^{(1)}=s'\} \quad \forall s'\in\cS
        \end{align*}
    \EndFor
    \State Set $C_{k+1}\gets C_k+1$.

\EndFor
\end{algorithmic}
\end{algorithm}






\begin{algorithm}[t!]
\caption{Synchronous intra-option $J$-step Q-learning}
\label{alg:sync-intra}
\begin{algorithmic}[1]
\Require Horizon $J$, step-sizes $\{\alpha_k\}_{k\ge0}$, initial iterate $Q_0$

\State Initialize $R_0(x)\gets0$ and $N_0(x,s')\gets0$ for every
$x=(s,o)\in\cS\times\cO$ and $s'\in\cS$.
\State Set $\widehat r_0(x)\gets0$ and $\widehat p_0(\cdot|x)$ to any
distribution on $\cS$.

\For{$k=0,1,2,\ldots$}

    \If{$k\ge1$}
        \State For every $x\in\cS\times\cO$, set
        $\widehat r_k(x)\gets R_k(x)/k$.
        \State For every $x\in\cS\times\cO$ and $s'\in\cS$, set
        $\widehat p_k(s'|x)\gets N_k(x,s')/k$.
    \EndIf

    \State Define the empirical Bellman operator
    \[
    (\widehat H_{k,\mathrm{intra}}Q)(s,o)
    \gets
    \widehat r_k(s,o)
    +
    \sum_{s'}\widehat p_k(s'|s,o)
    \bigl[
    (1-\beta(s',o))Q(s',o)
    +
    \beta(s',o)\max_{o'}Q(s',o')
    \bigr],
    \quad \forall(s,o)\in\cS\times\cO.
    \]

    \ForAll{$x=(s,o)\in\cS\times\cO$}

        \State $S_k^{(0)}\gets s$, $O_k^{(0)}\gets o$.

        \For{$i=0,\ldots,J-1$}

            \State Sample
            $A_k^{(i)}\sim\pi(\cdot|S_k^{(i)},O_k^{(i)})$.

            \State Sample
            $(R_k^{(i+1)},S_k^{(i+1)})
            \sim p(\cdot,\cdot|S_k^{(i)},A_k^{(i)})$.

            \State Sample
            $B_k^{(i+1)}
            \sim\operatorname{Bernoulli}
            (\beta(S_k^{(i+1)},O_k^{(i)}))$.

            \If{$i\le J-2$}
                \If{$B_k^{(i+1)}=0$}
                    \State $O_k^{(i+1)}\gets O_k^{(i)}$.
                \Else
                    \State $O_k^{(i+1)}
                    \gets
                    \arg\max_{o'\in\cO}
                    (\widehat H_{k,\mathrm{intra}}^{\,J-i-1}Q_k)
                    (S_k^{(i+1)},o')$.
                \EndIf
            \EndIf

        \EndFor

        \State Compute
        \[
        \widehat T_{k,\mathrm{intra}}^{J}(Q_k)(s,o)
        \gets
        \sum_{i=0}^{J-1}R_k^{(i+1)}
        +(1-B_k^{(J)})Q_k(S_k^{(J)},O_k^{(J-1)})
        +B_k^{(J)}\max_{o'\in\cO}Q_k(S_k^{(J)},o').
        \]

        \State Update
        $Q_{k+1}(s,o)\gets
        Q_k(s,o)+\alpha_k
        \bigl(
        \widehat T_{k,\mathrm{intra}}^{J}(Q_k)(s,o)
        -Q_k(s,o)
        \bigr)$.

        \State Set
        \begin{align*}
        &R_{k+1}(x)\gets R_k(x)+R_k^{(1)}\\
        &N_{k+1}(x,s')\gets
        N_k(x,s')+\indicator \{S_k^{(1)}=s'\} \quad \forall s' \in \cS
        \end{align*}.

    \EndFor

\EndFor
\end{algorithmic}
\end{algorithm}

\subsection{$J$-step estimators for the synchronous option algorithms}
\label{subsec:sync_estimators}

In the synchronous setting, all coordinates $(s,o)\in S\times O$ are
updated simultaneously at each iteration. Accordingly, for every
$(s,o)\in S\times O$, we construct an independent $J$-step rollout
initialized from that coordinate. We construct the estimators for both the
intra-option and inter-option settings.

\paragraph{Synchronous intra-option estimator.}
Recall that the intra-option Bellman operator is
\begin{align*}
(H_{\mathrm{intra}}Q)(s,o)
=
\sum_{a\in A}\pi(a\mid s,o)
\sum_{s',r} p(s',r\mid s,a)
\Bigl(
r(s,a)+(1-\beta(s',o))Q(s',o)
+\beta(s',o)\max_{o'\in O}Q(s',o')
\Bigr).
\end{align*}

Define the empirical reward and transition kernel by
\[
\widehat r_k(s,o)
\triangleq
\frac1k\sum_{t=0}^{k-1}R_t^{(1)}(s,o),
\qquad
\widehat p_k(s'\mid s,o)
\triangleq
\frac1k\sum_{t=0}^{k-1}
\mathbf 1\{S_t^{(1)}(s,o)=s'\}.
\]
Let $\widehat H_{\rm k, intra}$ then denote the empirical intra-option Bellman operator at iteration k,
\[
(\widehat H_{\rm k, intra}Q)(s,o)
=
\widehat r_k(s,o)
+
\sum_{s'}
\widehat p_k(s'\mid s,o)
\left[
(1-\beta(s',o))Q(s',o)
+
\beta(s',o)\max_{o'}Q(s',o')
\right].
\]

We now describe the construction of the estimator. At iteration k, for each $(s,o)\in S\times O$ and let $S_k^{(0)}=s$, $O_k^{(0)}=o$. For
$i=0,1,\dots,J-1$, generate
\begin{align*}
A_k^{(i)}\sim\pi(\cdot\mid S_k^{(i)},O_k^{(i)}),
\qquad
(R_k^{(i+1)},S_k^{(i+1)})\sim p(\cdot,\cdot\mid S_k^{(i)},A_k^{(i)}),
\end{align*}
and
\begin{align*}
B_k^{(i+1)}
\sim
\mathrm{Bernoulli}\bigl(\beta(S_k^{(i+1)},O_k^{(i)})\bigr).
\end{align*}
If $i\le J-2$, define
\begin{align*}
O_k^{(i+1)}
=
\begin{cases}
O_k^{(i)}, & B_k^{(i+1)}=0,\\[2mm]
\displaystyle
\arg\max_{o'\in O}
\bigl(\widehat H_{\rm k, intra}^{\,J-i-1}Q\bigr)(S_k^{(i+1)},o'),
& B_k^{(i+1)}=1.
\end{cases}
\end{align*}
The corresponding $J$-step target is
\begin{align*}
\widehat T_{\mathrm{k, intra}}^{J}(Q)(s,o)
\triangleq
\sum_{i=0}^{J-1}R_k^{(i+1)}
+
(1-B_k^{(J)})Q(S_k^{(J)},O_k^{(J-1)})
+
B_k^{(J)}\max_{o'\in O}Q(S_k^{(J)},o').
\end{align*}

For a fixed $Q$, define, for
$m=1,\ldots,J-1$, the empirical greedy decision rule
\[
\widehat d_{k,m}(s)
\in
\arg\max_{o\in\cO}
\bigl(\widehat H_{\rm k, intra}^{\,m}Q\bigr)(s,o).
\]
Thus, whenever there are $m$ Bellman steps remaining after an option
terminates, the next option is chosen greedily with respect to
$\widehat H_{\rm k, intra}^{\,m}Q$, rather than the unavailable oracle quantity
$H_{\rm intra}^mQ$.

For a deterministic decision rule $d:\cS\to\cO$, define
\begin{equation}
\label{eq:Hd-intra}
(H_d^{\rm intra} Q)(s,o)
\triangleq
r(s,o)
+
\sum_{s'}p(s'|s,o)
\left[
(1-\beta(s',o))Q(s',o)
+
\beta(s',o)Q(s',d(s'))
\right].
\end{equation}
The intra-option Bellman operator satisfies $H_{\rm intra}Q=\max_d H_d^{\rm intra}Q$ coordinatewise.
The expectation of $\widehat T_{\rm k, intra}^J$ is therefore
\[
\E[\widehat T_{\rm k, intra}^JQ]=\widetilde H_{\rm k, intra}^J Q \triangleq
H_{\widehat d_{k,J-1}}^{\rm intra}
H_{\widehat d_{k,J-2}}^{\rm intra}
\cdots
H_{\widehat d_{k,1}}^{\rm intra}
H_{\rm intra}Q.
\]
Notice that $\widetilde H_{\rm k, intra}^J$ is generally not the $J$-fold composition
of a single operator, since the greedy rule depends on the remaining
horizon.

\paragraph{Synchronous inter-option estimator.}
Let $
\mathcal X\triangleq S\times O,
\quad
x=(s,o)\in\mathcal X$.
For the inter-option setting, recall the normalized Bellman operator
\begin{align*}
(H_{\mathrm{inter}}Q)(s,o)
=
\rho(s,o)
+
(1-\lambda(s,o))Q(s,o)
+
\lambda(s,o)
\sum_{s'\in S}
\hat p(s'\mid s,o)
\max_{o'\in O}Q(s',o'),
\end{align*}
where
$
\rho(s,o)
\triangleq
\frac{\hat r(s,o)}{\hat\ell(s,o)},
\lambda(s,o)
\triangleq
\frac{1}{\hat\ell(s,o)}.
$
Here
$
\hat r(s,o)
=
\E[\widetilde R\mid s,o],
\hat\ell(s,o)
=
\E[L\mid s,o],
$
where $\widetilde R$ and $L$ denote respectively the cumulative reward
and holding time of one execution of option $o$ starting from $s$.

Under Assumption~\ref{assm:smdp}, primitive rewards satisfy
$|R_t|\le 1$. Consequently, $
|\widetilde R|\le L.$
Moreover, since $L\ge1$ almost surely,
\begin{align*}
\hat\ell(s,o)\ge1,
\qquad
0<\lambda(s,o)\le1,
\qquad
|\rho(s,o)|\le1.
\end{align*}

Unlike the intra-option operator, the quantities $\rho$ and $\lambda$
are not known. We therefore estimate them online from
previous synchronous generative samples.

For each $x=(s,o)\in\mathcal X$, let
$
\bigl\{
\widetilde R_t(x),L_t(x)
\bigr\}_{t\ge0}
$
denote the sequence of first-step option-execution samples generated
from the rollout initialized at $x$ at each iteration. Across iterations, these samples
are i.i.d. conditional on the fixed initial coordinate $x$, with
$
\E[\widetilde R_t(x)]
=
\hat r(x),
\E[L_t(x)]
=
\hat\ell(x).
$

At the beginning of iteration $k$, define the  rolling statistics
\begin{align*}
A_k(x)
\triangleq
\sum_{t=0}^{k-1}\widetilde R_t(x), \quad 
B_k(x)
\triangleq
1+\sum_{t=0}^{k-1}L_t(x),
\quad
C_k\triangleq k+1,
\end{align*}
and the corresponding estimates
\begin{align*}
\widehat\rho_k(x)
\triangleq
\frac{A_k(x)}{B_k(x)},
\qquad
\widehat\lambda_k(x)
\triangleq
\frac{C_k}{B_k(x)}.
\end{align*}
The pseudocount in $B_k(x)$ ensures that both quantities are well-defined
also at $k=0$.

Since
$
|A_k(x)|
\le
\sum_{t=0}^{k-1}L_t(x)
\le
B_k(x), \quad
B_k(x)
=
1+\sum_{t=0}^{k-1}L_t(x)
\ge
k+1=C_k,
$
we have pathwise
\begin{align*}
|\widehat\rho_k(x)|\le1,
\qquad
0<\widehat\lambda_k(x)\le1.
\end{align*}
Let $H_k$ denote the Bellman operator obtained by replacing
$(\rho,\lambda)$ by $(\widehat\rho_k,\widehat\lambda_k)$:
\begin{align*}
(H_kQ)(s,o)
=
\widehat\rho_k(s,o)
+
(1-\widehat\lambda_k(s,o))Q(s,o)
+
\widehat\lambda_k(s,o)
\sum_{s'\in S}
\hat p(s'\mid s,o)
\max_{o'\in O}Q(s',o').
\end{align*}

Similar to the intra-option setting, we will define the empircal Bellman operator associated with $H_k$ and then use it to construct a $J$-step estimator.

Define the empirical transition kernel by
\[
\widehat p_k(s'\mid s,o)
\triangleq
\frac1k\sum_{t=0}^{k-1}
\mathbf 1\{\bar S_t^{(1)}(s,o)=s'\}.
\]
Let $\widehat H_{\rm k, inter}$ then denote the empirical inter-option Bellman operator at iteration k,
\[
(\widehat H_{\rm k, inter}Q)(s,o)
=
\widehat\rho_k(s,o)
+
(1-\widehat\lambda_k(s,o))Q(s,o)
+
\widehat\lambda_k(s,o)
\sum_{s'\in S}
\hat p_k(s'\mid s,o)
\max_{o'\in O}Q(s',o').
\]

For each initial coordinate $X_k^{(0)}=(S_k^{(0)},O_k^{(0)})=(s,o)$. At stage
$i=0,\dots,J-1$, independently generate an option-level sample
$
\bigl(
\widetilde R_i,L_i,\overline S_k^{(i+1)}
\bigr)
$
from the option-execution law corresponding to $X_k^{(i)}=(S_k^{(i)},O_k^{(i)})$, and
independently generate
\begin{align*}
D_k^{(i+1)}
\sim
\mathrm{Bernoulli}
\bigl(
\widehat\lambda_k(X_k^{(i)})
\bigr).
\end{align*}

If $D_k^{(i+1)}=0$, set
$
X_k^{(i+1)}=X_k^{(i)}.
$

If $D_k^{(i+1)}=1$ and $i\le J-2$, set
$
S_k^{(i+1)}=\overline S_k^{(i+1)},
$
and choose
$
O_k^{(i+1)}
\in
\arg\max_{o'\in O}
\bigl(
\widehat H_{\rm k, inter}^{\,J-i-1}Q
\bigr)(S_k^{(i+1)},o').
$

The $J$-step synchronous target is then
\begin{align*}
\widehat{T}_{k,\mathrm{inter}}^{J}
(Q)(s,o)
\triangleq
\sum_{i=0}^{J-1}
\widehat\rho_k(X_k^{(i)})
+ (1-D_k^{(J)})
Q(S_k^{(J-1)}, O_k^{(J-1)}) + D_k^{(J)}\max_{o \in \cO} Q(\bar S_K^{(J)}, o).
\end{align*}

Similar to the intra-option setting, we will now characterize the conditional mean of the estimator $\widehat{T}_{k,\mathrm{inter}}^{J}
(Q)$. 

For a fixed $Q$, define, for
$m=1,\ldots,J-1$, the empirical greedy decision rule
\[
\widehat d_{k,m}(s)
\in
\arg\max_{o\in\cO}
\bigl(\widehat H_{\rm k, inter}^{\,m}Q\bigr)(s,o).
\]

For a deterministic decision rule $d:\cS\to\cO$, define
\begin{equation}
\label{eq:Hd-inter}
(H_d^{\rm k, inter} Q)(s,o)
\triangleq
\widehat\rho_k(s,o)
+
(1-\widehat\lambda_k(s,o))Q(s,o)
+
\widehat\lambda_k(s,o)
\sum_{s'\in S}
\hat p(s'\mid s,o)Q(s',d(s')).
\end{equation}
The Bellman operator $H_k$ satisfies $H_kQ=\max_d H_d^{\rm k, inter}Q$ coordinatewise.
The expectation of $\widehat T_{\rm k, inter}^J$ is therefore
\[
\E[\widehat T_{\rm k, inter}^JQ]=\widetilde H_{\rm k, inter}^J Q \triangleq
H_{\widehat d_{k,J-1}}^{\rm k, inter}
H_{\widehat d_{k,J-2}}^{\rm k, inter}
\cdots
H_{\widehat d_{k,1}}^{\rm k, inter} H_kQ.
\]
Notice that $\widetilde H_{\rm k, inter}^J$ is generally not the $J$-fold composition
of a single operator, since the greedy rule depends on the remaining
horizon.

\paragraph{Synchronous SA recursions.}
Let $\cF_k$ denote the history available immediately before generating the
fresh rollouts at iteration $k$. More precisely, let
\[
\cF_k
\triangleq
\sigma\!\left(
Q_0,\,
\{\cZ_t(x):0\le t\le k-1,\ x\in\cS\times\cO\}
\right),
\]
where $\cZ_t(x)$ collects all rollout samples generated from initial
coordinate $x$ at iteration $t$. Hence, $Q_k$ and all empirical quantities
constructed from past samples, including the empirical Bellman operators
$\widehat H_k$, are $\cF_k$-measurable, while the rollouts generated at
iteration $k$ are fresh conditional on $\cF_k$.

For intra-option learning, the synchronous recursion is
\[
Q_{k+1}
=
Q_k
+
\alpha_k
\left(
\widehat T_{k,\mathrm{intra}}^{J}(Q_k)
-
Q_k
\right).
\]
By construction,
\[
\E\!\left[
\widehat T_{k,\mathrm{intra}}^{J}(Q_k)
\mid
\cF_k
\right]
=
\widetilde H_{k,\mathrm{intra}}^JQ_k,
\]
where $\widetilde H_{k,\mathrm{intra}}^JQ_k$ is the conditional mean of the
fresh $J$-step rollout when the intermediate options are selected using the
empirical continuation values $\widehat H_{k, intra}^mQ_k$. Define
\[
w_k^{\mathrm{intra}}
\triangleq
\widehat T_{k,\mathrm{intra}}^{J,\mathrm{sync}}(Q_k)
-
\widetilde H_{k,\mathrm{intra}}^JQ_k.
\]
Then $\E[w_k^{\mathrm{intra}}\mid\cF_k]=0$, so
$\{w_k^{\mathrm{intra}}\}_{k\ge0}$ is a martingale-difference noise sequence
with respect to $\{\cF_k\}_{k\ge0}$. Moreover,
\begin{align*}
\widehat T_{k,\mathrm{intra}}^{J}(Q_k)
-
H_{\mathrm{intra}}^JQ_k
=
w_k^{\mathrm{intra}}
+
\underbrace{
\left(
\widetilde H_{k,\mathrm{intra}}^JQ_k
-
H_{\mathrm{intra}}^JQ_k
\right)
}_{\text{approximately-greedy bias}}.
\end{align*}
Therefore, the intra-option recursion can be written as
\[
Q_{k+1}
=
Q_k
+
\alpha_k
\left(
H_{\mathrm{intra}}^JQ_k
-
Q_k
+
w_k^{\mathrm{intra}}
+
B_k^{\mathrm{intra}}(Q_k)
\right),
\]
where
\[
B_k^{\mathrm{intra}}(Q_k)
\triangleq
\widetilde H_{k,\mathrm{intra}}^JQ_k
-
H_{\mathrm{intra}}^JQ_k.
\]

For inter-option learning, the synchronous recursion is
\[
Q_{k+1}
=
Q_k
+
\alpha_k
\left(
\widehat T_{k,\mathrm{inter}}^{J}(Q_k)
-
Q_k
\right).
\]
Its conditional mean satisfies
\[
\E\!\left[
\widehat T_{k,\mathrm{inter}}^{J}(Q_k)
\mid
\cF_k
\right]
=
\widetilde H_{k,\mathrm{inter}}^JQ_k.
\]
Recall that $H_k$ denotes the normalized inter-option Bellman operator
obtained by replacing $(\rho,\lambda)$ with
$(\widehat\rho_k,\widehat\lambda_k)$ while retaining the population
option-transition kernel. Define
\[
w_k^{\mathrm{inter}}
\triangleq
\widehat T_{k,\mathrm{inter}}^{J,\mathrm{sync}}(Q_k)
-
\widetilde H_{k,\mathrm{inter}}^JQ_k.
\]
Then $\E[w_k^{\mathrm{inter}}\mid\cF_k]=0$, so
$\{w_k^{\mathrm{inter}}\}_{k\ge0}$ is again a martingale-difference noise
sequence. The total inter-option error decomposes as
\begin{align*}
\widehat T_{k,\mathrm{inter}}^{J}(Q_k)
-
H_{\mathrm{inter}}^JQ_k
=
w_k^{\mathrm{inter}}
+
\underbrace{
\left(
\widetilde H_{k,\mathrm{inter}}^JQ_k
-
H_k^JQ_k
\right)
}_{\text{approximately-greedy bias}}
+
\underbrace{
\left(
H_k^JQ_k
-
H_{\mathrm{inter}}^JQ_k
\right)
}_{(\widehat\rho_k,\widehat\lambda_k)\text{ bias}}.
\end{align*}
Hence,
\[
Q_{k+1}
=
Q_k
+
\alpha_k
\left(
H_{\mathrm{inter}}^JQ_k
-
Q_k
+
w_k^{\mathrm{inter}}
+
B_{k,\mathrm{greedy}}^{\mathrm{inter}}(Q_k)
+
B_{k,\mathrm{est}}^{\mathrm{inter}}(Q_k)
\right),
\]
where
\[
B_{k,\mathrm{greedy}}^{\mathrm{inter}}(Q_k)
\triangleq
\widetilde H_{k,\mathrm{inter}}^JQ_k-H_k^JQ_k,
\qquad
B_{k,\mathrm{est}}^{\mathrm{inter}}(Q_k)
\triangleq
H_k^JQ_k-H_{\mathrm{inter}}^JQ_k.
\]
Thus, in both settings $w_k$ represents the fresh martingale-difference
rollout noise, while the remaining terms are predictable first-order
perturbations that are controlled separately in the finite-sample analysis.
\subsection{Finite-time bounds for synchronous intra-option Q-learning}
\label{subsec:sync-intra}
We recall that the synchronous intra-option Q-learning update is
\begin{equation}
\label{eq:intra_sync_algorithm}
Q_{k+1}
=
Q_k+\alpha_k\Bigl(\widetilde H_{\rm k, intra}^{\,J}(Q_k)-Q_k+\omega_k\Bigr),\quad 
\omega_k
\triangleq
\widehat T_{\rm k, intra}^{J}(Q_k)-\widetilde H_{\rm k, intra}^{\,J}(Q_k), \alpha_k = \frac{\alpha}{k+h}.
\end{equation}
where
\begin{align*}
\widehat T_{\rm k, intra}^{J}(Q_k)
=
\sum_{i=0}^{J-1}R_k^{(i+1)}
+
(1-B_k^{(J)})Q_k(S_k^{(J)},O_k^{(J-1)})
+
B_k^{(J)}\max_{o'\in\cO}Q_k(S_k^{(J)},o').
\end{align*}

We first state a result inspired by~\citep{chen2025nonasymptoticguaranteesaveragerewardqlearning}, which quantifies the growth of the iterates generated by Algorithm~\ref{alg:sync-intra}.

\begin{lemma}[Almost-sure growth of the synchronous intra-option iterates]
\label{le:growth-intra-sync}
For every $k\ge1$, almost surely,
\[
\|Q_k\|_\infty
\le
b_k,
\]
where
\[
b_k
\triangleq
\|Q_0\|_\infty
+
\alpha J
\left(
\frac1h
+
\log\left(\frac{k+h}{h}\right)
\right)
=
\cO(\log k).
\]
\end{lemma}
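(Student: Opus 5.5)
Since the sampled target $\widehat T^{J}_{k,\mathrm{intra}}(Q_k)$ is almost surely a $J$-step reward sum plus one bootstrapped value, we prove the claim by a deterministic pathwise induction on $\|Q_k\|_\infty$ along every sample path. The first step is the almost-sure bound
\[
\|\widehat T^{J}_{k,\mathrm{intra}}(Q_k)\|_\infty \le J + \|Q_k\|_\infty .
\]
It holds because each primitive reward satisfies $R_k^{(i+1)}\in[0,1]$ by Assumption~\ref{assm:smdp}, so $\bigl|\sum_{i=0}^{J-1}R_k^{(i+1)}\bigr|\le J$. The bootstrap term $(1-B_k^{(J)})Q_k(S_k^{(J)},O_k^{(J-1)})+B_k^{(J)}\max_{o'}Q_k(S_k^{(J)},o')$ is a convex combination of entries of $Q_k$, hence bounded in absolute value by $\|Q_k\|_\infty$. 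This holds regardless of how the intermediate options were selected, so the empirical greedy rules $\widehat d_{k,m}$ play no role here.

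Next, the update $Q_{k+1}=(1-\alpha_k)Q_k+\alpha_k\widehat T^{J}_{k,\mathrm{intra}}(Q_k)$ is applied coordinatewise. When $\alpha_k\le 1$, the triangle inequality gives
\[
\|Q_{k+1}\|_\infty\le(1-\alpha_k)\|Q_k\|_\infty+\alpha_k(J+\|Q_k\|_\infty)=\|Q_k\|_\infty+\alpha_k J .
\]
If $\alpha_k>1$ (possible for small $k$ when $\alpha>h$), we use the cruder bound $|1-\alpha_k|+\alpha_k$. Since the stated constant has no such factor, I would assume $\alpha\le h$, i.e.\ $\alpha_k\le1$ for all $k$, as is implicit in the choice of $h$. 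Under this assumption, telescoping gives
\[
\|Q_k\|_\infty\le\|Q_0\|_\infty+\alpha J\sum_{t=0}^{k-1}\frac{1}{t+h}.
\]

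The last step is the harmonic sum estimate. We separate the $t=0$ term $1/h$ and bound the rest by an integral, using that $1/(t+h)\le\int_{t-1}^{t}\frac{du}{u+h}$ for $t\ge1$. This yields
\[
\sum_{t=0}^{k-1}\frac{1}{t+h}\le\frac1h+\log\frac{k-1+h}{h}\le\frac1h+\log\frac{k+h}{h},
\]
which is exactly $b_k$. It is $\cO(\log k)$ because $\|Q_0\|_\infty$, $\alpha$, $J$ and $h$ are constants.

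The only delicate point is the stepsize condition $\alpha_k\le1$ needed for the convex-combination step. I expect to handle it either by the stated standing assumption $h\ge\alpha$, or by absorbing the finitely many early iterates with $\alpha_k>1$ into the constant. Everything else is routine.
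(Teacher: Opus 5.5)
Your proof is correct and follows essentially the same route as the paper: a pathwise bound on the $J$-step target (the paper writes it as $\min Q_k\le\widehat T^{J}_{k,\mathrm{intra}}(Q_k)\le J+\max Q_k$, which is equivalent to your absolute-value version), the convex-combination step giving $\|Q_{k+1}\|_\infty\le\|Q_k\|_\infty+J\alpha_k$, telescoping, and the harmonic-sum/integral estimate. The paper also simply assumes $\alpha_k\in[0,1]$ without comment. Your flagged condition $h\ge\alpha$ is in fact implied by the lower bound on $h$ in Theorem~\ref{thm:intra-final} (for $|\cS||\cO|\ge2$), so you do not need the fallback argument for $\alpha_k>1$.
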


\begin{proof}
Since the primitive rewards are bounded in $[0,1]$, the $J$-step
intra-option estimator in~\eqref{eq:sync_intra} satisfies, pathwise,
\[
\min_{s',o'}Q_k(s',o')
\le
\widehat{T}_{\rm k,intra}^{J}(Q_k)(s,o)
\le
J+\max_{s',o'}Q_k(s',o')
\]
for every $(s,o)$. Indeed, the accumulated reward over the $J$ steps
lies in $[0,J]$, while the terminal contribution is either a single
Q-value or the maximum over the option values at the terminal state.

The synchronous update is
\[
Q_{k+1}(s,o)
=
(1-\alpha_k)Q_k(s,o)
+
\alpha_k
\widehat T_{\rm k, intra}^{J}(Q_k)(s,o),
\]
where $\alpha_k\in[0,1]$. Hence, for every $(s,o)$,
\[
Q_{k+1}(s,o)
\le
\max_{s',o'}Q_k(s',o')
+
J\alpha_k, \quad 
Q_{k+1}(s,o)
\ge
\min_{s',o'}Q_k(s',o').
\]
Therefore,
\[
\max_{s,o}Q_{k+1}(s,o)
\le
\max_{s,o}Q_k(s,o)+J\alpha_k, \quad 
\min_{s,o}Q_{k+1}(s,o)
\ge
\min_{s,o}Q_k(s,o).
\]
It follows that
\[
\|Q_{k+1}\|_\infty
\le
\|Q_k\|_\infty+J\alpha_k.
\]

Iterating this inequality gives
\[
\|Q_k\|_\infty
\le
\|Q_0\|_\infty
+
J\sum_{t=0}^{k-1}\alpha_t.
\]
Substituting $\alpha_t=\alpha/(t+h)$,
\[
\|Q_k\|_\infty
\le
\|Q_0\|_\infty
+
\alpha J
\sum_{t=0}^{k-1}\frac1{t+h}.
\]
Using
\[
\sum_{t=0}^{k-1}\frac1{t+h}
\le
\frac1h
+
\int_0^k\frac{dx}{x+h}
=
\frac1h
+
\log\left(\frac{k+h}{h}\right),
\]
we obtain
\[
\|Q_k\|_\infty
\le
\|Q_0\|_\infty
+
\alpha J
\left(
\frac1h
+
\log\left(\frac{k+h}{h}\right)
\right)
=
b_k.
\]
Hence $b_k=\cO(\log k)$.
\end{proof}

\subsubsection{Lyapunov framework, error decomposition and drift control}
\label{subsec:async-inter-lyapunov}

We use the generalized Moreau-envelope Lyapunov function for the finite-sample analysis~\citep{chen2025nonasymptotictheoryseminormlyapunov},
\begin{align*}
M(Q)
\triangleq
\min_{u\in\mathbb R^{|\cS||\cO|}}
\left\{
\frac12\sp(u)^2
+
\frac{1}{2\theta}\|Q-u\|_q^2
\right\},
\end{align*}
for some $\theta>0$ and $q \ge 1$, which will be chosen later. Let $l_q= (|\cS||\cO|)^{-1/q}$ and $u_q =1$. Then, $l_q\|Q\|_q \le \|Q\|_\infty\le u_q\|Q\|_q$ for any $Q \in \bbR^{|\cS||\cO|}$.

The properties of the lyapunov function are stated in the following lemma.

\begin{lemma}[Properties of the Moreau envelope]\label{le:Moreau}
The following properties hold:
\begin{enumerate}
\item The function $M(\cdot)$ is convex, differentiable, and satisfies
\begin{align*}
    M(Q_2) \leq M(Q_1) + \langle \nabla M(Q_1), Q_2 - Q_1 \rangle + \frac{L}{2} \sp(Q_2 - Q_1)^2, \quad \forall\, Q_1, Q_2 \in \mathbb{R}^{|\cS||\cO|},
\end{align*}
where $L = (q-1)/(\ell_q^2 \theta)$.  
\item There exists a seminorm $p_m(\cdot)$, which satisfies $p_m(Q) = \min_{c \in \mathbb{R}} \|Q - c e\|_m$ for some norm $\|\cdot\|_m$, such that $M(Q) = p_m(Q)^2/2$. 
\item It holds that $\ell_m p_m(Q) \leq \sp(Q) \leq u_m p_m(Q)$ for all $Q \in \mathbb{R}^{|\mathcal{S}||\mathcal{O}|}$, where $\ell_m = (1 + \theta \ell_q^2)^{1/2}$ and $u_m = (1 + \theta u_q^2)^{1/2}$.
\item It holds for all $Q \in \mathbb{R}^{|\mathcal{S}||\mathcal{O}|}$ and $c \in \mathbb{R}$ that $\langle \nabla M(Q), c e \rangle = 0$.
\item It holds for all $Q_1, Q_2, Q_3 \in \mathbb{R}^{|\mathcal{S}||\cO|}$ that $\langle \nabla M(Q_1) - \nabla M(Q_2), Q_3 \rangle \leq L \sp(Q_1 - Q_2) \sp(Q_3)$.
\item $||\nabla p_m(Q)||_{m,*} \le 1$ where $||\cdot||_{m, \star}$ is the dual norm of $||\cdot||_m$
\end{enumerate}
\end{lemma}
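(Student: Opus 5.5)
The plan follows the generalized Moreau-envelope construction of \cite{chen2025nonasymptotictheoryseminormlyapunov}, specialised to the span seminorm on $\mathbb R^{|\cS||\cO|}$. Write $f(u)=\frac12\sp(u)^2$ and $g(u)=\frac1{2\theta}\|u\|_q^2$, so that $M=f\,\square\,g$ is an infimal convolution.

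\textbf{Convexity and smoothness (item 1).} Convexity of $M$ is immediate, since the infimal convolution of two convex functions is convex. For $q\ge2$, the function $\frac12\|\cdot\|_q^2$ is $(q-1)$-smooth with respect to $\|\cdot\|_q$. Hence $g$ is $\frac{q-1}{\theta}$-smooth in $\|\cdot\|_q$, and the infimal convolution inherits differentiability with
\[
\nabla M(Q)=\nabla g(Q-u^*(Q)),
\]
where $u^*(Q)$ is any minimizer. This gives $M(Q_2)\le M(Q_1)+\langle\nabla M(Q_1),Q_2-Q_1\rangle+\frac{q-1}{2\theta}\|Q_2-Q_1\|_q^2$.

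To replace $\|\cdot\|_q$ by $\sp$, use invariance under constant shifts. Both $\sp(u)$ and the minimizing structure are unchanged by $u\mapsto u+ce$, so $M(Q+ce)=M(Q)$. One may therefore shift $Q_2-Q_1$ by the constant $c$ minimizing $\|Q_2-Q_1-ce\|_\infty$, whose value is exactly $\sp(Q_2-Q_1)$. Then $\|\cdot\|_q\le \ell_q^{-1}\|\cdot\|_\infty$ gives $L=(q-1)/(\ell_q^2\theta)$.

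\textbf{Seminorm representation (items 2 and 3).} $M$ is positively $2$-homogeneous, since both $f$ and $g$ are. Also $M^{1/2}$ is convex: the set $\{M\le 1/2\}$ is the convex set $\{f\le\cdot\}+\{g\le\cdot\}$ rescaled. So $\sqrt{2M}$ is a gauge, i.e.\ a seminorm $p_m$ whose kernel is $\mathrm{span}(e)$. Set
\[
\|Q\|_m \triangleq \min_{u}\bigl(\sp(u)^2+\|Q-u\|_q^2/\theta\bigr)^{1/2}\ \text{restricted suitably},
\]
or more simply $\|Q\|_m^2=p_m(Q)^2+|\text{mean}(Q)|^2$. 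Any such norm with $p_m(Q)=\min_c\|Q-ce\|_m$ suffices.

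For the equivalence in item 3, bound $M$ against $\sp(Q)^2$ from both sides. The choice $u=Q$ gives $M(Q)\le\frac12\sp(Q)^2$. For the reverse, use $\sp(Q)\le\sp(u)+\sp(Q-u)\le \sp(u)+u_q\|Q-u\|_q$ together with Cauchy–Schwarz in the form $(a+b)^2\le(1+\theta u_q^2)(a^2+b^2/\theta)$. Minimizing over $u$ yields the stated constants $\ell_m,u_m$. I expect the labelling of which constant is lower and which is upper to need matching with the source's conventions.

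\textbf{Remaining items (4 to 6).} Item 4 follows by differentiating $M(Q+te)=M(Q)$ at $t=0$. Item 5 follows from the smoothness inequality of item 1 applied in both orders: the gradient of an $L$-smooth function in the seminorm $\sp$ is $L$-Lipschitz into the dual seminorm, and pairing with $Q_3$, shifted to remove constants using item 4, gives $L\sp(Q_1-Q_2)\sp(Q_3)$. Item 6 is the standard fact that the gradient of a norm-like convex $1$-homogeneous function lies in the dual unit ball. It follows from $p_m(Q+h)-p_m(Q)\le p_m(h)\le\|h\|_m$.

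\textbf{Main obstacle.} The delicate step is item 1's passage from $\|\cdot\|_q$-smoothness to $\sp$-smoothness. It requires that the gradient annihilate $e$, which is exactly item 4, so I would prove item 4 first. The rest is essentially a transcription of \cite{chen2025nonasymptotictheoryseminormlyapunov}, and I would cite it for the details.
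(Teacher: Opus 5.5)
Your route differs from the paper's mainly in that you try to re-derive items (1)--(5), whereas the paper simply cites \cite[Proposition 4.1]{chen2025nonasymptoticguaranteesaveragerewardqlearning}. Your item (6) argument is exactly the paper's: $p_m(Q+h)-p_m(Q)\le p_m(h)\le\|h\|_m$, so the gradient lies in the dual unit ball.

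Your sketch of item (3), however, has a genuine gap. The choice $u=Q$ only gives $2M(Q)\le\sp(Q)^2$, i.e.\ $p_m(Q)\le\sp(Q)$. That is the lower inequality with constant $1$, not $\ell_m=(1+\theta\ell_q^2)^{1/2}>1$. Your Cauchy--Schwarz step yields only $u_m$.

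To obtain $\ell_m$ you must interpolate toward the kernel. Let $c^*$ be the mid-range of $Q$, so that $\|Q-c^*e\|_\infty=\sp(Q)$, and take $u=c^*e+t(Q-c^*e)$ with $t\in[0,1]$. Then $\sp(u)=t\,\sp(Q)$ and $\|Q-u\|_q=(1-t)\|Q-c^*e\|_q\le(1-t)\sp(Q)/\ell_q$, so
\[
p_m(Q)^2\le \sp(Q)^2\Bigl(t^2+\frac{(1-t)^2}{\theta\ell_q^2}\Bigr).
\]
Choosing $t=1/(1+\theta\ell_q^2)$ gives $p_m(Q)^2\le\sp(Q)^2/(1+\theta\ell_q^2)$.

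The statement's labelling is already consistent: the $\ell_q$-constant comes from this upper bound on $M$, and the $u_q$-constant from your lower bound. So no convention-matching is needed. The gap also matters downstream. The later analysis uses $u_m^2/\ell_m^2=(1+\theta)/(1+\theta/e)\le e$, and with $\ell_m=1$ this fails whenever $(1+\gamma)/(2\gamma)>\sqrt e$.

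There is a smaller imprecision in item (5). Adding the two descent inequalities (``applied in both orders'') only controls the pairing with $Q_3=Q_1-Q_2$. For arbitrary $Q_3$ you need the co-coercivity argument, which uses convexity of $M$:
\begin{itemize}
\item Apply the descent bound to $\psi=M-\langle\nabla M(Q_2),\cdot\rangle$, which is minimized at $Q_2$, and minimize over the step direction.
\item Symmetrize, which gives $\sup_{\sp(Z)\le 1}\langle\nabla M(Q_1)-\nabla M(Q_2),Z\rangle\le L\,\sp(Q_1-Q_2)$.
\end{itemize}
This is legitimate for the seminorm because $\nabla M(Q_1)-\nabla M(Q_2)$ annihilates $e$ by item (4), so that supremum is finite. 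Item (5) then follows.
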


\begin{proof}
Lemma~\ref{le:Moreau}(1)-(5) is identical to \citep[Proposition 4.1]{chen2025nonasymptoticguaranteesaveragerewardqlearning}. 

For Lemma~\ref{le:Moreau}(6), the proof of \citep[Lemma 6.5, Appendix B.3]{chen2025nonasymptoticguaranteesaveragerewardqlearning} yields the following,
\begin{align*}
|p_m(Q_1)-p_m(Q_2)|\le p_m(Q_1-Q_2) = \min_{c' \in \bbR}||Q_1-Q_2-c'e||_m \le ||Q_1-Q_2||_m
\end{align*}
Hence, $p_m(\cdot)$ is 1-Lipschitz continuous with respect to $||\cdot||_m$. Hence, \citep[Lemma 2.6]{shalev2012online} implies that $||\nabla p_m(Q)||_{m,*} \le 1$

\end{proof}


Let $M_k\triangleq M(Q_k-Q^*)$. Using \eqref{eq:intra_sync_algorithm} and Lemma \ref{le:Moreau}(1),
\begin{align}
\E[M_{k+1}]
\le{}&
\E[M_k]
+
\alpha_k
\E\!\left[
\left\langle
\nabla M(Q_k-Q^*),
\widehat T_{\rm k, intra}^J(Q_k)-Q_k
\right\rangle
\right]+
\frac{L\alpha_k^2}{2}
\E\!\left[
\sp\!\left(
\widehat T_{\rm k, intra}^J(Q_k)-Q_k
\right)^2
\right].
\label{eq:async_inter_moreau_start}
\end{align}
Adding and subtracting the ideal mean terms gives
\begin{equation}
\E[M_{k+1}]
\le
\E[M_k]
+
\alpha_k(T_{\rm drift}+T_{\rm greedy}+T_{\rm noise})
+
\frac{L\alpha_k^2}{2}T_{\rm quad }.
\label{eq:async_inter_decomposition}
\end{equation}
Here
\begin{align*}
T_{\rm drift}
&\triangleq
\E\!\left[
\left\langle
\nabla M(Q_k-Q^*),
H_{\rm intra}^J(Q_k)-Q_k
\right\rangle
\right],
\\
T_{\rm greedy}
&\triangleq
\E\!\left[
\left\langle
\nabla M(Q_k-Q^*),
\widetilde H_{\rm k, intra}^J(Q_k)-H_{\rm intra}^J(Q_k)
\right\rangle
\right],
\\
T_{\rm noise}
&\triangleq
\E\!\left[
\left\langle
\nabla M(Q_k-Q^*),
\omega_k
\right\rangle
\right],
\\
T_{\rm quad}
&\triangleq
\E\!\left[
\sp\!\left(
\widehat T_{\rm k, intra}^J(Q_k)-Q_k
\right)^2
\right].
\end{align*}

The terms have the following interpretations: $T_1$ is the deterministic contraction drift of the ideal $J$-step Bellman operator; $T_2$ is the bias incurred as part of the construction of the $J$-step estimator using approximately greedy options; $T_3$ is the error from the martingale difference noise; and $T_4$ is the quadratic noise term arising from the smoothness property of the lyapunov function. For notational clarity, we will henceforth write $\widetilde H_{\rm k, intra}$ as $\widetilde H_{k}$ in this section.

\begin{lemma}
\label{le:T1_intra_new}
Defining $\phi_1
\triangleq
1-\frac{\gamma u_m}{\ell_m}$, we get for every $k\ge1$,
\begin{align*}
T_{\rm drift}
\le
-2\phi_1\E[M_k],
\end{align*}
\end{lemma}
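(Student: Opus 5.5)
The proof follows the standard seminorm-contraction drift argument of \cite{chen2025nonasymptotictheoryseminormlyapunov}. It uses only the span contraction of Theorem~\ref{thm: J-intra}, the fixed-point property of $Q^*$, and the Moreau-envelope properties in Lemma~\ref{le:Moreau}.

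The first step is to remove the fixed point. Because $Q^*$ solves \eqref{eq:aoe} for the intra-option SMDP, $H_{\rm intra}Q^*-Q^*=\bar r^*\mathbf 1$. Since $H_{\rm intra}(Q+c\mathbf 1)=H_{\rm intra}Q+c\mathbf 1$, iterating gives $H_{\rm intra}^JQ^*=Q^*+J\bar r^*\mathbf 1$. We then write
\[
H_{\rm intra}^J(Q_k)-Q_k=\bigl(H_{\rm intra}^JQ_k-H_{\rm intra}^JQ^*\bigr)-(Q_k-Q^*)+J\bar r^*\mathbf 1 .
\]
By Lemma~\ref{le:Moreau}(4), the inner product of $\nabla M(Q_k-Q^*)$ with the constant vector vanishes. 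This leaves
\[
\bigl\langle \nabla M(Q_k-Q^*),H_{\rm intra}^JQ_k-H_{\rm intra}^JQ^*\bigr\rangle-\bigl\langle\nabla M(Q_k-Q^*),Q_k-Q^*\bigr\rangle .
\]

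For the second inner product, Lemma~\ref{le:Moreau}(2) gives $M=p_m^2/2$. Since $p_m$ is positively homogeneous of degree one, Euler's identity yields $\langle\nabla M(x),x\rangle=p_m(x)^2=2M(x)$.

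For the first inner product, write $\nabla M(x)=p_m(x)\nabla p_m(x)$. Hölder's inequality with the dual-norm bound in Lemma~\ref{le:Moreau}(6) gives $\langle\nabla M(x),y\rangle\le p_m(x)\|y\|_m$. Lemma~\ref{le:Moreau}(4) lets us shift $y$ by any constant $c\mathbf 1$ without changing the inner product, so the bound improves to $p_m(x)\,p_m(y)$. We then chain Lemma~\ref{le:Moreau}(3), Theorem~\ref{thm: J-intra}, and Lemma~\ref{le:Moreau}(3) again:
\[
p_m\bigl(H^J_{\rm intra}Q_k-H^J_{\rm intra}Q^*\bigr)\le \tfrac{1}{\ell_m}\sp\bigl(H^J_{\rm intra}Q_k-H^J_{\rm intra}Q^*\bigr)\le\tfrac{\gamma}{\ell_m}\sp(Q_k-Q^*)\le\tfrac{\gamma u_m}{\ell_m}p_m(Q_k-Q^*).
\]
Hence the first inner product is at most $\frac{\gamma u_m}{\ell_m}p_m(Q_k-Q^*)^2=\frac{2\gamma u_m}{\ell_m}M_k$.

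Combining the two pieces gives the pathwise bound
\[
\bigl\langle\nabla M(Q_k-Q^*),H^J_{\rm intra}Q_k-Q_k\bigr\rangle\le -2\Bigl(1-\tfrac{\gamma u_m}{\ell_m}\Bigr)M_k .
\]
Taking expectations proves the lemma with $\phi_1=1-\gamma u_m/\ell_m$.

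The main care point is the constant-shift step. It must be checked that Lemma~\ref{le:Moreau}(4) indeed allows replacing $\|y\|_m$ by $p_m(y)$, which holds because $\langle\nabla M(x),y\rangle=\langle\nabla M(x),y-c\mathbf 1\rangle$ for every $c$ and we may then minimize over $c$. A second check is that the homogeneity identity survives possible non-differentiability of $p_m$ at the origin; at $x=0$ both sides are zero, so this causes no problem. Positivity of $\phi_1$ is not required for this lemma; it is arranged later by choosing $\theta$ so that $u_m/\ell_m$ is close to $1$.
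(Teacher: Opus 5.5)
Your proof is correct and follows the paper's argument essentially step for step: you remove $H^J_{\rm intra}Q^*-Q^*\in\ker(\sp)$ via Lemma~\ref{le:Moreau}(4), and you bound the cross term with the same shift-by-constant, dual-norm, norm-equivalence and $J$-step contraction chain. The only cosmetic difference is that you get $\langle\nabla M(x),x\rangle=2M(x)$ exactly from Euler's identity for the degree-two homogeneous $M$, whereas the paper gets the inequality $\langle\nabla p_m(x),-x\rangle\le -p_m(x)$ from convexity of $p_m$; both yield the $-2M_k$ term.
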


\begin{proof}

The proof follows from \cite[Lemma 6.5]{chen2025nonasymptoticguaranteesaveragerewardqlearning} using the contraction property of the $J$-step intra-option Bellman operator (Theorem~\ref{thm: J-intra}). Specifically,
since $H_{\rm intra}^J(Q^*)-Q^*=Jr^*\mathbf 1\in\text{ker}(\sp)$, we have,
\begin{align}
    \langle\nabla M(Q_k-Q^*),H_{\rm intra}^J(Q_k)-Q_k\rangle=\,&\langle\nabla M(Q_k-Q^*),H_{\rm intra}^J(Q_k)-H_{\rm intra}^J(Q^*)+Q^*-Q_k\rangle\nonumber\\
    =\,&\underbrace{\langle\nabla M(Q_k-Q^*),H_{\rm intra}^J(Q_k)-H_{\rm intra}^J(Q^*)\rangle}_{T_{1,1}}\nonumber\\
    &+\underbrace{\langle\nabla M(Q_k-Q^*),Q^*-Q_k\rangle}_{T_{1,2}}.\label{eq:pf:le:T1}
\end{align}
Next, we bound the terms $T_{1,1}$ and $T_{1,2}$ on the right-hand side of the previous inequality.

Using the fact that the Moreau envelope is differentiable,
\begin{align*}
    \nabla M(Q)=p_m(Q)\nabla p_m(Q),\quad \forall\,Q\in\mathbb{R}^{|\mathcal{S}||\mathcal{O}|}.
\end{align*}
It follows that for any $c\in\mathbb{R}$, we have
\begin{align*}
    T_{1,1}
    =\,&\langle\nabla M(Q_k-Q^*),H_{\rm intra}^J(Q_k)-H_{\rm intra}^J(Q^*)+ce\rangle\tag{Lemma \ref{le:Moreau} (4)}\\
    =\,&p_m(Q_k-Q^*)\langle\nabla p_m(Q_k-Q^*),H_{\rm intra}^J(Q_k)-H_{\rm intra}^J(Q^*)-ce\rangle\\
    \leq \,&p_m(Q_k-Q^*)\|\nabla p_m(Q_k-Q^*)\|_{m,*}\|H_{\rm intra}^J(Q_k)-H_{\rm intra}^J(Q^*)-ce\|_m,
\end{align*}
where $\|\cdot\|_{m,*}$ is the dual norm of $\|\cdot\|_m$. Using Lemma~\ref{le:Moreau}(6), we have
\begin{align}
    T_{1,1}\leq \,&p_m(Q_k-Q^*)\|\nabla p_m(Q_k-Q^*)\|_{m,*}\|H_{\rm intra}^J(Q_k)-H_{\rm intra}^J(Q^*)-ce\|_m\nonumber\\
    \leq \,&p_m(Q_k-Q^*)\|H_{\rm intra}^J(Q_k)-H_{\rm intra}^J(Q^*)-ce\|_m\nonumber\\
    = \,&p_m(Q_k-Q^*)p_m(H_{\rm intra}^J(Q_k)-H_{\rm intra}^J(Q^*))\tag{Choosing $c=\argmin_{c'\in\mathbb{R}}\|H_{\rm intra}^J(Q_k)-H_{\rm intra}^J(Q^*)-c'e\|_m$}\nonumber\\
    \leq  \,&\frac{1}{\ell_m}p_m(Q_k-Q^*)\sp(H_{\rm intra}^J(Q_k)-H_{\rm intra}^J(Q^*))\tag{Lemma \ref{le:Moreau} (3)}\nonumber\\
    \leq \,&\frac{\gamma_{\rm intra}}{\ell_m}p_m(Q_k-Q^*)\sp(Q_k-Q^*)\tag{Theorem~\ref{thm: J-intra}}\nonumber\\
    \leq \,&\frac{\gamma_{\rm intra} u_m}{\ell_m}p_m(Q_k-Q^*)^2\tag{Lemma \ref{le:Moreau} (3)}\nonumber\\
    =\,&\frac{2\gamma_{\rm intra} u_m}{\ell_m}M(Q_k-Q^*),\label{eq:pf:le:T11}
\end{align}
.

Next, for $T_{1,2}$, since $p_m(\cdot)$ is a convex function, we have
\begin{align*}
    p_m(0)-p_m(Q_k-Q^*)\geq \langle \nabla p_m(Q_k-Q^*),Q^*-Q_k\rangle.
\end{align*}
It follows that
\begin{align}
    T_{1,2}=\langle\nabla M(Q_k-Q^*),Q^*-Q_k\rangle
    &=p_m(Q_k-Q^*)\langle\nabla p_m(Q_k-Q^*),Q^*-Q_k\rangle\nonumber\\
    &\leq -p_m(Q_k-Q^*)^2\nonumber\\
    &=-2M(Q_k-Q^*).\label{eq:pf:le:T12}
\end{align}
Finally, combining (\ref{eq:pf:le:T11}) and (\ref{eq:pf:le:T12}) together in (\ref{eq:pf:le:T1}), we have 
\begin{align*}
    \langle\nabla M(Q_k-Q^*),H_{\rm intra}^J(Q_k)-Q_k\rangle\leq\,& T_{1,1}+T_{1,2}\\
    \leq\,&  -2\left(1-\frac{\gamma_{\rm intra} u_m}{\ell_m}\right)M(Q_k-Q^*)\\
    =\,&  -2\phi_1 M(Q_k-Q^*).
\end{align*}
Writing $\gamma = \gamma_{\rm intra}$ and taking expectations on both sides of the inequality finishes the proof.
\end{proof}

\subsubsection{Bounding $T_{\rm greedy}$}

We first recall the notation used in constructing
$\widetilde H_k^J$. Let $H\triangleq H_{\rm intra}$ denote the
population intra-option Bellman operator and let $\widehat H_k \triangleq \widehat H_{\rm k, intra}$ For $m=1,\ldots,J-1$, define
\[
\widehat d_{k,m}(s)
\in
\arg\max_{o\in\cO}
\bigl(\widehat H_k^{\,m}Q\bigr)(s,o).
\]
Thus, when $m$ Bellman steps remain, the rollout chooses the next option
using the empirical continuation value $\widehat H_k^mQ$.

For a deterministic option-selection rule $d:\cS\to\cO$, define the
population and empirical fixed-policy Bellman operators
\begin{align*}
(H_dQ)(s,o)
&\triangleq
r(s,o)
+
\sum_{s'}p(s'|s,o)
\left[
(1-\beta(s',o))Q(s',o)
+
\beta(s',o)Q(s',d(s'))
\right],
\\
(\widehat H_{k,d}Q)(s,o)
&\triangleq
\widehat r_k(s,o)
+
\sum_{s'}\widehat p_k(s'|s,o)
\left[
(1-\beta(s',o))Q(s',o)
+
\beta(s',o)Q(s',d(s'))
\right].
\end{align*}
The optimality operators satisfy
$HQ=\max_dH_dQ$ and
$\widehat H_kQ=\max_d\widehat H_{k,d}Q$ coordinatewise. In particular,
\[
\widehat H_k^JQ
=
\widehat H_{k,\widehat d_{k,J-1}}
\cdots
\widehat H_{k,\widehat d_{k,1}}
\widehat H_kQ,
\]
while the conditional mean of the fresh rollout is
\[
\widetilde H_k^JQ
=
H_{\widehat d_{k,J-1}}
\cdots
H_{\widehat d_{k,1}}
HQ.
\]
Thus, $\widetilde H_k^JQ$ and $\widehat H_k^JQ$ use exactly the same
sequence of empirical greedy rules and differ only in whether these rules
are evaluated under the population or empirical model.

We first state a uniform one-step empirical-model bound.

\begin{lemma}[One-step empirical Bellman estimation error]
\label{lem:empirical-bellman-one-step-explicit}
For every $k\ge1$,
\[
\E\left[
\left\|
(\widehat H_k-H)Q_k
\right\|_\infty^2
\right]
\le
\frac{
C_{\rm emp}(1+b_k^2)
}{k},
\]
where
\[
C_{\rm emp}
\triangleq
4\left(
|\cS|
+
2\log(|\cS||\cO|)
+
2
\right).
\]
Moreover, for every deterministic option-selection rule $d$,
\[
\E\left[
\left\|
(\widehat H_{k,d}-H_d)Q_k
\right\|_\infty^2
\right]
\le
\frac{
C_{\rm emp}(1+b_k^2)
}{k}.
\]
\end{lemma}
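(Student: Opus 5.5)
Fix $k\ge1$ and a coordinate $x=(s,o)$, and write the one-step error as a reward part plus a transition part. By definition,
\[
\bigl((\widehat H_k-H)Q_k\bigr)(x)=\bigl(\widehat r_k(x)-r(x)\bigr)+\sum_{s'}\bigl(\widehat p_k(s'|x)-p(s'|x)\bigr)\,U_{Q_k}(s',o),
\]
where $U_{Q}(s',o)=(1-\beta(s',o))Q(s',o)+\beta(s',o)\max_{o'}Q(s',o')$. For a fixed rule $d$, the same identity holds with $\max_{o'}Q(s',o')$ replaced by $Q(s',d(s'))$. In both cases $|U_{Q_k}(s',o)|\le\|Q_k\|_\infty\le b_k$ almost surely, by Lemma~\ref{le:growth-intra-sync}. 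Using $(a+b)^2\le 2a^2+2b^2$ and Hölder, we get
\[
\|(\widehat H_k-H)Q_k\|_\infty^2\le 2\max_x|\widehat r_k(x)-r(x)|^2+2b_k^2\max_x\|\widehat p_k(\cdot|x)-p(\cdot|x)\|_1^2 .
\]
The $d$-version follows by the identical argument, since only the bound $|U|\le b_k$ was used. The key point is that $\widehat r_k,\widehat p_k$ are built only from the first-step samples $R_t^{(1)}(x),S_t^{(1)}(x)$, $t<k$. These are i.i.d. across $t$ for fixed $x$ and independent across $x$, because each rollout is fresh given the past and the first action and transition do not depend on $Q_t$. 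So the $Q_k$-dependence has been separated out through the deterministic envelope $b_k$, and no conditioning issue remains.

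\textbf{Reward term.} Each $\widehat r_k(x)$ is an average of $k$ i.i.d. variables in $[0,1]$ with mean $r(x)$. By Hoeffding, $\Pr(|\widehat r_k(x)-r(x)|>t)\le 2e^{-2kt^2}$. A union bound over the $|\cS||\cO|$ coordinates, integrated via $\E[Z^2]=\int_0^\infty \Pr(Z^2>u)\,du$ with the split at $u_0=\log(2|\cS||\cO|)/(2k)$, gives
\[
\E\max_x|\widehat r_k-r|^2\le\frac{\log(2|\cS||\cO|)+1}{2k}.
\]

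\textbf{Transition term.} Write $\|\widehat p_k-p\|_1=\max_{f\in\{\pm1\}^{\cS}}\langle \widehat p_k-p,f\rangle$. Two routes are available:
\begin{itemize}
\item Bound the mean directly: $\E\|\widehat p_k-p\|_1\le\sum_{s'}\sqrt{p(s')(1-p(s'))/k}\le\sqrt{|\cS|/k}$.
\item Apply McDiarmid to the 1-Lipschitz (bounded-difference $2/k$) function $\|\widehat p_k(\cdot|x)-p(\cdot|x)\|_1$. Its deviation above the mean is sub-Gaussian with variance proxy of order $1/k$, so a union bound over $x$ contributes $2\log(|\cS||\cO|)/k$ plus constants.
\end{itemize}
Squaring, with $(a+b)^2\le2a^2+2b^2$, gives
\[
\E\max_x\|\widehat p_k-p\|_1^2\le\frac{2|\cS|+4\log(|\cS||\cO|)+O(1)}{k}.
\]
Combining this with the reward bound and bounding $\max(1,b_k^2)\le 1+b_k^2$ yields a constant of the form $4(|\cS|+2\log(|\cS||\cO|)+2)$.

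\textbf{Main obstacle.} The main difficulty is the constant bookkeeping in the transition term, namely getting $|\cS|$ rather than $2^{|\cS|}$ or $|\cS|\log$ factors. The plan is to handle this through the mean bound $\sqrt{|\cS|/k}$ plus McDiarmid for the fluctuation, instead of union-bounding over all sign vectors $f$. The integration of the tail bounds to reach exactly the stated $C_{\rm emp}$ is routine and may need slightly loose absolute constants, which are absorbed by the factor $4$.
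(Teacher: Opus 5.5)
Your proposal is correct and follows essentially the same route as the paper: the pathwise bound via the almost-sure envelope $\|Q_k\|_\infty\le b_k$, then Hoeffding with a union bound and tail integration for the reward error. For the transition error it likewise combines a $\sqrt{|\cS|/k}$ mean bound with McDiarmid (bounded differences $2/k$) and a union bound. Done exactly, that tail integration gives $\E\max_x\|\widehat p_k(\cdot|x)-p(\cdot|x)\|_1^2\le(2|\cS|+4\log(|\cS||\cO|)+4)/k$, so your $O(1)$ is exactly $4$ and the stated $C_{\rm emp}$ follows without extra slack.
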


\begin{proof}
First note that, for every $(s,o)$,
\begin{align*}
&\left|
((\widehat H_k-H)Q_k)(s,o)
\right|
\\
&\le
|\widehat r_k(s,o)-r(s,o)|+
\sum_{s'}
|\widehat p_k(s'\mid s,o)-p(s'\mid s,o)|\times
\left|
(1-\beta(s',o))Q_k(s',o)
+
\beta(s',o)\max_{o'}Q_k(s',o')
\right|.
\end{align*}

where,
\[
r(s,o)
=
\E[R_t^{(1)}(s,o)]
=
\sum_{a\in\cA}
\pi(a\mid s,o)r(s,a),
\]
and
\[
p(s'\mid s,o)
=
\mathbb{P}(S_t^{(1)}(s,o)=s')
=
\sum_{a\in\cA}
\pi(a\mid s,o)p(s'\mid s,a).
\]

We first bound the empirical reward error. For each $(s,o)$, observe that
\[
\widehat r_k(s,o)-r(s,o)
=
\frac1k\sum_{t=0}^{k-1}
\left(
R_t^{(1)}(s,o)-\E[R_t^{(1)}(s,o)]
\right).
\]
Since $R_t^{(1)}(s,o)\in[0,1]$, Hoeffding's lemma~\citep{10.1093/acprof:oso/9780199535255.001.0001} gives
\[
\E\left[e^{\lambda(\widehat r_k(s,o)-r(s,o))}\right]
\le
\exp\left(\frac{\lambda^2}{8k}\right),
\]
and hence
\[
\mathbb{P}\left(
|\widehat r_k(s,o)-r(s,o)|\ge t
\right)
\le
2e^{-2kt^2}.
\]
A union bound gives
\[
\mathbb{P}\left(
\max_{s,o}|\widehat r_k(s,o)-r(s,o)|\ge t
\right)
\le
2|\cS||\cO| e^{-2kt^2}.
\]
Setting
\[
t_0^2
=
\frac{\log(2|\cS||\cO|)}{2k},
\]
and using the tail-integration formula,
\begin{equation}
\begin{aligned}
\E\left[
\max_{s,o}|\widehat r_k(s,o)-r(s,o)|^2
\right]
&=
\int_0^\infty
2t\,
\mathbb{P}\left(
\max_{s,o}|\widehat r_k(s,o)-r(s,o)|\ge t
\right)
dt
\nonumber\\
&\le
t_0^2
+
\int_{t_0}^\infty
4|\cS||\cO|t e^{-2kt^2}\,dt
\nonumber\\
&=
\frac{\log(2|\cS||\cO|)+1}{2k}.
\end{aligned}
\end{equation}
Therefore,
\begin{equation}
 \label{eq:empirical-reward-bound}  \E\left[
\max_{s,o}
|\widehat r_k(s,o)-r(s,o)|^2
\right]
\le
\frac{
\log(2|\cS||\cO|)+1
}{2k}.
\end{equation}

We next bound the empirical transition error. Fix $(s,o)$ and define
\[
Z_{s,o}
\triangleq
\left\|
\widehat p_k(\cdot\mid s,o)-p(\cdot\mid s,o)
\right\|_1.
\]
Enumerating $\cS=\{s_1,\ldots,s_{|\cS|}\}$ , we have
\[
\widehat p_k(s_j|s, o)
=
\frac1k
\sum_{t=0}^{k-1}
\mathbf 1\{S_t^{(1)}(s,o)=s_j\}.
\]
Since $\E[\widehat p_k(\cdot|s,o)] = p(\cdot|s, o)$, hence
\[
\E[(\widehat p_k(s_j|s, o)-p(s_j|s, o))^2]
=
\frac{p(s_j|s, o)(1-p(s_j|s, o))}{k},
\]
and therefore
\begin{align*}
\E\left[
\|\widehat p_k(\cdot\mid s,o)-p(\cdot\mid s,o)\|_2^2
\right]
&=
\frac1k
\sum_{j=1}^{|\cS|}
p(s_j|s, o)(1-p(s_j|s, o))
\\
&\le
\frac{1-\sum_{j=1}^{|\cS|}p(s_j|s, o)^2}{k} \le \frac1k.
\end{align*}
Hence, by $\|x\|_1\le\sqrt{|\cS|}\|x\|_2$ and Jensen's inequality,
\begin{align*}
\E[Z_{s,o}]
&\le
\sqrt{|\cS|}
\E\left[
\|\widehat p_k(\cdot\mid s,o)-p(\cdot\mid s,o)\|_2
\right]
\\
&\le
\sqrt{
|\cS|
\E\left[
\|\widehat p_k(\cdot\mid s,o)-p(\cdot\mid s,o)\|_2^2
\right]
}
\\
&\le
\sqrt{\frac{|\cS|}{k}}.
\end{align*}

We now control deviations of $Z_{s,o}$ around its mean. Replacing one
transition sample in the empirical distribution can change
$\widehat p_k(\cdot\mid s,o)$ by at most $2/k$ in $\ell_1$ norm.
Therefore, by McDiarmid's inequality~\citep{10.1093/acprof:oso/9780199535255.001.0001},
\[
\mathbb{P}\left(
Z_{s,o}-\E[Z_{s,o}]
\ge t
\right)
\le
\exp\left(
-\frac{kt^2}{2}
\right).
\]
Let
\[
Y_{s,o}
\triangleq
\bigl(
Z_{s,o}-\E[Z_{s,o}]
\bigr)_+ \implies \{Y_{s,o} \ge t\} = \{Z_{s,o}-\E[Z_{s,o}] \ge t\}.
\]
A union bound gives
\[
\mathbb{P}\left(
\max_{s,o}Y_{s,o}\ge t
\right)
\le
|\cS||\cO|\exp\left(
-\frac{kt^2}{2}
\right).
\]
Setting
\[
t_0^2
=
\frac{2\log |\cS||\cO|}{k},
\]
and integrating the tail,
\begin{align*}
\E\left[
\max_{s,o}Y_{s,o}^2
\right]
&=
\int_0^\infty
2t\,
\mathbb{P}\left(
\max_{s,o}Y_{s,o}\ge t
\right)dt
\\
&\le
t_0^2
+
\int_{t_0}^\infty
2t|\cS||\cO|
\exp\left(
-\frac{kt^2}{2}
\right)dt
\\
&=
\frac{2\log |\cS||\cO|}{k}
+
\frac{2}{k}.
\end{align*}
Since
\[
\max_{s,o}Z_{s,o}
\le
\sqrt{\frac{|\cS|}{k}}
+
\max_{s,o}Y_{s,o},
\]
using $(a+b)^2\le2a^2+2b^2$ yields
\begin{align}
\E\left[
\max_{s,o}
\left\|
\widehat p_k(\cdot\mid s,o)-p(\cdot\mid s,o)
\right\|_1^2
\right]
&\le
\frac{2|\cS|}{k}
+
2\E\left[
\max_{s,o}Y_{s,o}^2
\right]
\nonumber\\
&\le
\frac{
2|\cS|+4\log(|\cS||\cO|)+4
}{k}.
\label{eq:empirical-transition-bound}
\end{align}

Since $0\le\beta(s',o)\le1$,
\[
\left|
(1-\beta(s',o))Q_k(s',o)
+
\beta(s',o)\max_{o'}Q_k(s',o')
\right|
\le
\|Q_k\|_\infty
\le b_k.
\]
Therefore,
\[
\left|
((\widehat H_k-H)Q_k)(s,o)
\right|
\le
|\widehat r_k(s,o)-r(s,o)|
+
b_k
\|\widehat p_k(\cdot\mid s,o)-p(\cdot\mid s,o)\|_1.
\]
Taking the maximum over $(s,o)$ and using $(a+b)^2\le2a^2+2b^2$,
\begin{align*}
\left\|
(\widehat H_k-H)Q_k
\right\|_\infty^2
&\le
2
\max_{s,o}
|\widehat r_k(s,o)-r(s,o)|^2+
2b_k^2
\max_{s,o}
\|\widehat p_k(\cdot\mid s,o)-p(\cdot\mid s,o)\|_1^2.
\end{align*}
Taking expectations and applying
\eqref{eq:empirical-reward-bound} and
\eqref{eq:empirical-transition-bound} yields
\begin{align*}
\E\left[
\left\|
(\widehat H_k-H)Q_k
\right\|_\infty^2
\right]
&\le
\frac{\log(2|\cS||\cO|)+1}{2k}
+
\frac{(2|\cS|+4\log(|\cS||\cO|)+4)b_k^2}{k}.
\end{align*}
Hence
\[
\E\left[
\left\|
(\widehat H_k-H)Q_k
\right\|_\infty^2
\right]
\le
\frac{
4(|\cS|+2\log(|\cS||\cO|)+2)(1+b_k^2)
}{k}.
\]
\end{proof}

\begin{lemma}[Population-rollout versus empirical-model error]
\label{lem:tilde-vs-empirical-intra}
For every $k\ge1$,
\[
\E\left[
\left\|
\widetilde H_k^JQ_k-\widehat H_k^JQ_k
\right\|_\infty^2
\right]
\le
\frac{
C_{\rm emp}J^2
\left(1+(b_k+J)^2\right)
}{k}.
\]
\end{lemma}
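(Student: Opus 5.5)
Both $\widetilde H_k^JQ_k$ and $\widehat H_k^JQ_k$ are built from the \emph{same} sequence of empirical greedy rules $\widehat d_{k,J-1},\dots,\widehat d_{k,1}$; the only difference is the model (population versus empirical) under which each factor is evaluated. I would therefore telescope, replacing one factor at a time. Write $G_j$ for the $j$-th factor from the right: $G_1=H$ or $\widehat H_k$, and $G_j=H_{\widehat d_{k,j-1}}$ or $\widehat H_{k,\widehat d_{k,j-1}}$ for $j\ge 2$. Define the intermediate iterates $V_j\triangleq \widehat H_k^{\,j}Q_k$, i.e. the empirical composition of the first $j$ factors. Then
\[
\widetilde H_k^JQ_k-\widehat H_k^JQ_k=\sum_{j=1}^{J} \mathcal P_{j+1:J}\bigl[(G_j-\widehat G_j)V_{j-1}\bigr],
\]
where $\mathcal P_{j+1:J}$ denotes the composition of the population factors $j+1,\dots,J$. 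Strictly speaking, this is a difference of affine maps applied to two arguments, so it is cleaner to write it as $\mathcal P_{j+1:J}(G_jV_{j-1})-\mathcal P_{j+1:J}(\widehat G_jV_{j-1})$.

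Each population factor $H_d$ is affine with a stochastic linear part, so it is $1$-Lipschitz in $\|\cdot\|_\infty$. The optimality operator $H$ is a max of such maps and hence also nonexpansive. Each telescoped term is therefore bounded in sup-norm by $\|(G_j-\widehat G_j)V_{j-1}\|_\infty$. Cauchy--Schwarz over the $J$ terms then gives
\[
\|\widetilde H_k^JQ_k-\widehat H_k^JQ_k\|_\infty^2\le J\sum_{j=1}^J\|(G_j-\widehat G_j)V_{j-1}\|_\infty^2 .
\]

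Each one-step error has the same form as in Lemma~\ref{lem:empirical-bellman-one-step-explicit}: a reward deviation plus $\|V_{j-1}\|_\infty$ times the $\ell_1$ transition deviation. The $\max_{s,o}$ reward and transition deviations are bounded in the proof of that lemma uniformly in the function they multiply, so the bound transfers once I control $\|V_{j-1}\|_\infty$. Since $\widehat r_k\in[0,1]$, $\widehat p_k$ is a probability kernel, and the continuation brackets are convex combinations, each empirical factor satisfies $\|\widehat G V\|_\infty\le 1+\|V\|_\infty$. Hence $\|V_{j-1}\|_\infty\le b_k+J$ pathwise by Lemma~\ref{le:growth-intra-sync}. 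This gives
\[
\|(G_j-\widehat G_j)V_{j-1}\|_\infty\le \max_{s,o}|\widehat r_k-r|+(b_k+J)\max_{s,o}\|\widehat p_k-p\|_1 .
\]
Squaring, taking expectations, and applying \eqref{eq:empirical-reward-bound} and \eqref{eq:empirical-transition-bound} yields a bound $C_{\rm emp}(1+(b_k+J)^2)/k$ for each term. Summing over the $J$ terms with the factor $J$ gives $C_{\rm emp}J^2(1+(b_k+J)^2)/k$.

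The main subtlety is that $V_{j-1}$ and the greedy rules $\widehat d_{k,m}$ are random and depend on the same samples as $\widehat r_k,\widehat p_k$. Lemma~\ref{lem:empirical-bellman-one-step-explicit} cannot be applied naively as if for a fixed $Q$ and fixed $d$. I avoid this by never taking expectations before bounding. I use the pathwise inequality above, in which the only random quantities in the final bound are $\max_{s,o}|\widehat r_k-r|$ and $\max_{s,o}\|\widehat p_k-p\|_1$, and these are uniform over all $d$ and all $V$. The deterministic envelope $b_k+J$ handles the dependence, so the tail-integration bounds apply directly.
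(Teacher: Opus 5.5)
Your proposal is correct and follows essentially the same argument as the paper: a hybrid telescoping over the shared empirical greedy rules, sup-norm nonexpansiveness of the outer population fixed-policy factors, and the pathwise bound $\max_{s,o}|\widehat r_k-r|+(b_k+J)\max_{s,o}\|\widehat p_k-p\|_1$. That bound is uniform in $V$ and $d$, which neutralizes the sample dependence before any expectation is taken, and the concentration bounds of Lemma~\ref{lem:empirical-bellman-one-step-explicit} then finish the estimate. The only cosmetic difference is that you apply Cauchy--Schwarz over the $J$ telescoped terms, whereas the paper sums the common pathwise bound before squaring; both give the same $J^2$ factor.
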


\begin{proof}
For $m=1,\ldots,J-1$, define
\[
\widehat d_{k,m}(s)
\in
\arg\max_{o\in\cO}
\bigl(\widehat H_k^mQ_k\bigr)(s,o).
\]
By construction,
\[
\widetilde H_k^JQ_k
=
H_{\widehat d_{k,J-1}}
\cdots
H_{\widehat d_{k,1}}
HQ_k,
\]
whereas
\[
\widehat H_k^JQ_k
=
\widehat H_{k,\widehat d_{k,J-1}}
\cdots
\widehat H_{k,\widehat d_{k,1}}
\widehat H_kQ_k.
\]

We first verify sup-norm non-expansiveness. For any deterministic decision
rule $d$ and any $Q,Q'$,
\begin{align*}
|(H_dQ-H_dQ')(s,o)|
&\le
\sum_{s'}p(s'|s,o)
\left[
(1-\beta(s',o))|Q(s',o)-Q'(s',o)|
+
\beta(s',o)|Q(s',d(s'))-Q'(s',d(s'))|
\right]\\
&\le
\|Q-Q'\|_\infty.
\end{align*}
Hence
\[
\|H_dQ-H_dQ'\|_\infty
\le
\|Q-Q'\|_\infty.
\]
Replacing $p$ by $\widehat p_k$ gives
\[
\|\widehat H_{k,d}Q-\widehat H_{k,d}Q'\|_\infty
\le
\|Q-Q'\|_\infty.
\]
Similarly,
\[
\|HQ-HQ'\|_\infty
\le
\|Q-Q'\|_\infty,
\qquad
\|\widehat H_kQ-\widehat H_kQ'\|_\infty
\le
\|Q-Q'\|_\infty,
\]
since
\[
\left|
\max_{o'}Q(s',o')
-
\max_{o'}Q'(s',o')
\right|
\le
\|Q-Q'\|_\infty.
\]

For $r=0,\ldots,J$, define the hybrid iterates
\[
Z_r
\triangleq
H_{\widehat d_{k,J-1}}
\cdots
H_{\widehat d_{k,r}}
\widehat H_{k,\widehat d_{k,r-1}}
\cdots
\widehat H_{k,\widehat d_{k,1}}
\widehat H_kQ_k,
\]
with the convention that empty compositions are omitted. Thus
\[
Z_0
=
\widetilde H_k^JQ_k,
\qquad
Z_J
=
\widehat H_k^JQ_k.
\]
Therefore,
\[
\left\|
\widetilde H_k^JQ_k-\widehat H_k^JQ_k
\right\|_\infty
\le
\sum_{r=0}^{J-1}
\|Z_r-Z_{r+1}\|_\infty.
\]

For $r=0$, we have
\[
Z_0
=
H_{\widehat d_{k,J-1}}
\cdots
H_{\widehat d_{k,1}}
HQ_k,
\]
and
\[
Z_1
=
H_{\widehat d_{k,J-1}}
\cdots
H_{\widehat d_{k,1}}
\widehat H_kQ_k.
\]
By non-expansiveness of the outer population fixed-policy operators,
\[
\|Z_0-Z_1\|_\infty
\le
\|(\widehat H_k-H)Q_k\|_\infty.
\]

For $1\le r\le J-1$, define
\[
V_r
\triangleq
\widehat H_{k,\widehat d_{k,r-1}}
\cdots
\widehat H_{k,\widehat d_{k,1}}
\widehat H_kQ_k.
\]
Then
\[
Z_r
=
H_{\widehat d_{k,J-1}}
\cdots
H_{\widehat d_{k,r+1}}
H_{\widehat d_{k,r}}V_r,
\]
while
\[
Z_{r+1}
=
H_{\widehat d_{k,J-1}}
\cdots
H_{\widehat d_{k,r+1}}
\widehat H_{k,\widehat d_{k,r}}V_r.
\]
Again by non-expansiveness of the outer operators,
\[
\|Z_r-Z_{r+1}\|_\infty
\le
\left\|
\left(
\widehat H_{k,\widehat d_{k,r}}
-
H_{\widehat d_{k,r}}
\right)V_r
\right\|_\infty.
\]

Since rewards are bounded in $[0,1]$,
\[
\|\widehat H_kQ\|_\infty
\le
1+\|Q\|_\infty,
\qquad
\|\widehat H_{k,d}Q\|_\infty
\le
1+\|Q\|_\infty.
\]
Hence, using $\|Q_k\|_\infty\le b_k$ almost surely,
\[
\|V_r\|_\infty
\le
b_k+r
\le
b_k+J.
\]

Let
\[
e_{r,k}
\triangleq
\max_{s,o}
|\widehat r_k(s,o)-r(s,o)|,
\qquad
e_{p,k}
\triangleq
\max_{s,o}
\|\widehat p_k(\cdot|s,o)-p(\cdot|s,o)\|_1.
\]
For the first hybrid difference,
\[
\|(\widehat H_k-H)Q_k\|_\infty
\le
e_{r,k}+b_ke_{p,k}.
\]
For every $1\le r\le J-1$,
\[
\left\|
\left(
\widehat H_{k,\widehat d_{k,r}}
-
H_{\widehat d_{k,r}}
\right)V_r
\right\|_\infty
\le
e_{r,k}+\|V_r\|_\infty e_{p,k}
\le
e_{r,k}+(b_k+J)e_{p,k}.
\]
Therefore,
\[
\left\|
\widetilde H_k^JQ_k-\widehat H_k^JQ_k
\right\|_\infty
\le
J\left(
e_{r,k}+(b_k+J)e_{p,k}
\right).
\]
Squaring gives
\[
\left\|
\widetilde H_k^JQ_k-\widehat H_k^JQ_k
\right\|_\infty^2
\le
J^2
\left(
e_{r,k}+(b_k+J)e_{p,k}
\right)^2.
\]

By the argument of Lemma~\ref{lem:empirical-bellman-one-step-explicit}, replacing the a.s. upper bound of $\|Q_k\|_\infty \le b_k$ with $(b_k+J)$,
\[
\E\left[
\left(
e_{r,k}+(b_k+J)e_{p,k}
\right)^2
\right]
\le
\frac{
C_{\rm emp}
\left(
1+(b_k+J)^2
\right)
}{k}.
\]
Hence
\[
\E\left[
\left\|
\widetilde H_k^JQ_k-\widehat H_k^JQ_k
\right\|_\infty^2
\right]
\le
\frac{
C_{\rm emp}J^2
\left(
1+(b_k+J)^2
\right)
}{k},
\]
which proves the result.
\end{proof}
\begin{lemma}[Empirical versus population $J$-step Bellman error]
\label{lem:empirical-vs-population-intra}
For every $k\ge1$,
\[
\E\left[
\left\|
\widehat H_k^JQ_k-H^JQ_k
\right\|_\infty^2
\right]
\le
\frac{
C_{\rm emp}J^2
\left(1+(b_k+J)^2\right)
}{k}.
\]
\end{lemma}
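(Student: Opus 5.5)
We telescope $\widehat H_k^J-H^J$ one step at a time, mirroring the hybrid-iterate argument of Lemma~\ref{lem:tilde-vs-empirical-intra}, but now using the optimality operators on both sides instead of fixed-policy operators. For $r=0,\ldots,J$ define
\[
Y_r\triangleq H^{J-r}\widehat H_k^{\,r}Q_k,
\]
so that $Y_0=H^JQ_k$ and $Y_J=\widehat H_k^JQ_k$. Then
\[
\|\widehat H_k^JQ_k-H^JQ_k\|_\infty\le\sum_{r=0}^{J-1}\|Y_{r+1}-Y_r\|_\infty .
\]
Each difference has the form
\[
Y_{r+1}-Y_r=H^{J-r-1}\bigl(\widehat H_k V_r\bigr)-H^{J-r-1}\bigl(HV_r\bigr),
\qquad V_r\triangleq\widehat H_k^{\,r}Q_k .
\]

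The sup-norm non-expansiveness of $H$ was verified in the proof of Lemma~\ref{lem:tilde-vs-empirical-intra}. It gives
\[
\|Y_{r+1}-Y_r\|_\infty\le\|(\widehat H_k-H)V_r\|_\infty .
\]
To bound this one-step error, note that the bracket $(1-\beta)V(s',o)+\beta\max_{o'}V(s',o')$ is bounded in absolute value by $\|V\|_\infty$. Hence, exactly as in Lemma~\ref{lem:empirical-bellman-one-step-explicit},
\[
\|(\widehat H_k-H)V_r\|_\infty\le e_{r,k}+\|V_r\|_\infty\,e_{p,k},
\]
with $e_{r,k},e_{p,k}$ the reward and $\ell_1$ transition errors defined in Lemma~\ref{lem:tilde-vs-empirical-intra}. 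Since $\widehat r_k\in[0,1]$ and $\widehat p_k$ is a probability kernel, $\|\widehat H_kQ\|_\infty\le 1+\|Q\|_\infty$. Combined with the almost-sure bound $\|Q_k\|_\infty\le b_k$ from Lemma~\ref{le:growth-intra-sync}, this gives $\|V_r\|_\infty\le b_k+r\le b_k+J$ pathwise.

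Summing over $r$ yields the pathwise bound
\[
\|\widehat H_k^JQ_k-H^JQ_k\|_\infty\le J\bigl(e_{r,k}+(b_k+J)e_{p,k}\bigr).
\]
We then square this bound and take expectations. The coefficient $b_k+J$ is deterministic, so it factors out. We can therefore reuse the bounds \eqref{eq:empirical-reward-bound} and \eqref{eq:empirical-transition-bound} together with $(a+b)^2\le 2a^2+2b^2$. This gives
\[
\E\bigl[(e_{r,k}+(b_k+J)e_{p,k})^2\bigr]\le \frac{C_{\rm emp}\bigl(1+(b_k+J)^2\bigr)}{k},
\]
which produces the stated $C_{\rm emp}J^2(1+(b_k+J)^2)/k$.

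The only delicate point is that $V_r$ depends on the same samples that define $\widehat p_k$ and $\widehat r_k$, so the empirical error cannot be applied conditionally on $V_r$. We avoid this because the bound $e_{r,k}+\|V_r\|_\infty e_{p,k}$ is uniform over all $V$. We only use the deterministic worst-case envelope $b_k+J$, and the expectation falls solely on $e_{r,k}$ and $e_{p,k}$, which are maxima over $(s,o)$ controlled by the union-bound concentration already established. No independence between $V_r$ and the empirical model is needed.
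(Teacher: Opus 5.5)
Your proof is correct and is essentially the paper's argument: the paper telescopes through $Z_r=\widehat H_k^{J-r}H^rQ_k$, with the empirical operators outside (using sup-norm non-expansiveness of $\widehat H_k$ and $\|H^rQ_k\|_\infty\le b_k+J$), whereas you put the population operators outside, an inessential swap that yields the same pathwise bound $J\bigl(e_{r,k}+(b_k+J)e_{p,k}\bigr)$ and the same constant. Your observation that $\|(\widehat H_k-H)V\|_\infty\le e_{r,k}+\|V\|_\infty e_{p,k}$ holds uniformly in $V$, so that no independence between the inner iterate and the empirical model is needed, makes explicit a point the paper leaves implicit when it applies Lemma~\ref{lem:empirical-bellman-one-step-explicit} to $H^rQ_k$ rather than $Q_k$.
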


\begin{proof}
Define
\[
Z_r
\triangleq
\widehat H_k^{\,J-r}H^rQ_k,
\qquad
r=0,\ldots,J.
\]
Then $Z_0=\widehat H_k^JQ_k$ and $Z_J=H^JQ_k$. Therefore,
\[
\left\|
\widehat H_k^JQ_k-H^JQ_k
\right\|_\infty
\le
\sum_{r=0}^{J-1}
\|Z_r-Z_{r+1}\|_\infty.
\]
By the sup-norm non-expansiveness of $\widehat H_k$,
\[
\|Z_r-Z_{r+1}\|_\infty
\le
\left\|
(\widehat H_k-H)H^rQ_k
\right\|_\infty.
\]
Moreover,
\[
\|H^rQ_k\|_\infty
\le
\|Q_k\|_\infty+r
\le
b_k+J.
\]
Hence, using Lemma~\ref{lem:empirical-bellman-one-step-explicit} and Cauchy--Schwarz,
\[
\E\left[
\left\|
\widehat H_k^JQ_k-H^JQ_k
\right\|_\infty^2
\right]
\le
\frac{
C_{\rm emp}J^2
\left(1+(b_k+J)^2\right)
}{k}.
\]
\end{proof}

\begin{lemma}[Finite-$J$ empirical-oracle error]
\label{lem:empirical-greedy-J}
For every $k\ge1$,
\[
\E\left[
\sp\left(
\widetilde H_k^JQ_k-H^JQ_k
\right)^2
\right]
\le
\frac{
4C_{\rm emp}J^2
\left(1+(b_k+J)^2\right)
}{k}.
\]
Consequently, since $b_k=\cO(J\log k)$,
\[
\E\left[
\sp\left(
\widetilde H_k^JQ_k-H^JQ_k
\right)^2
\right]
=
\widetilde{\cO}\left(
\frac{J^4}{k}
\right).
\]
\end{lemma}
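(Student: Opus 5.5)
The plan is a triangle inequality through the empirical $J$-step operator $\widehat H_k^JQ_k$, followed by the two preceding lemmas. Pathwise we write
\[
\widetilde H_k^JQ_k-H^JQ_k
=
\bigl(\widetilde H_k^JQ_k-\widehat H_k^JQ_k\bigr)
+
\bigl(\widehat H_k^JQ_k-H^JQ_k\bigr).
\]
The span seminorm with the paper's normalization $\sp(x)=(\max x-\min x)/2$ satisfies $\sp(x)\le\|x\|_\infty$. It is also subadditive, so
\[
\sp\bigl(\widetilde H_k^JQ_k-H^JQ_k\bigr)
\le
\|\widetilde H_k^JQ_k-\widehat H_k^JQ_k\|_\infty
+
\|\widehat H_k^JQ_k-H^JQ_k\|_\infty .
\]

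Next we square this and use $(a+b)^2\le 2a^2+2b^2$, then take expectations. Lemma~\ref{lem:tilde-vs-empirical-intra} bounds the first term by $C_{\rm emp}J^2(1+(b_k+J)^2)/k$. Lemma~\ref{lem:empirical-vs-population-intra} gives the same bound for the second term. The total is therefore at most $4C_{\rm emp}J^2(1+(b_k+J)^2)/k$, which is the claimed constant. Even the crude bound $\sp\le 2\|\cdot\|_\infty$ combined with a single-term estimate would fit under the factor $4$, so the constant has slack.

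For the rate, substitute $b_k$ from Lemma~\ref{le:growth-intra-sync}:
\[
b_k=\|Q_0\|_\infty+\alpha J\bigl(1/h+\log((k+h)/h)\bigr)=\cO(J\log k).
\]
Then $(b_k+J)^2=\cO(J^2\log^2k)$, which gives $J^2(1+(b_k+J)^2)=\widetilde\cO(J^4)$. Since $C_{\rm emp}=\cO(|\cS|+\log(|\cS||\cO|))$, the bound is $\widetilde\cO(J^4/k)$, with the $|\cS|$ factor hidden in the constant, consistent with Table~\ref{tab: ftb}.

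The only real content lies in the two cited lemmas, which we take as given. The one step that needs care is that the hybrid telescoping in Lemma~\ref{lem:tilde-vs-empirical-intra} uses the same empirical greedy rules $\widehat d_{k,m}$ in both $\widetilde H_k^J$ and $\widehat H_k^J$. This is what removes any sensitivity to argmax ties, so no margin condition is needed. Everything else is routine bookkeeping.
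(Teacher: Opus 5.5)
Your proof is correct and is exactly the paper's argument. You decompose through $\widehat H_k^JQ_k$, use subadditivity of the span together with $\sp(\cdot)\le\|\cdot\|_\infty$, apply $(a+b)^2\le 2a^2+2b^2$, and invoke Lemmas~\ref{lem:tilde-vs-empirical-intra} and~\ref{lem:empirical-vs-population-intra} to get $(2+2)\,C_{\rm emp}J^2(1+(b_k+J)^2)/k$. You should drop your aside about slack in the constant: along this route the factor $4$ is exactly attained, and no single-term bound on $\widetilde H_k^JQ_k-H^JQ_k$ with constant $C_{\rm emp}J^2(\cdot)/k$ is available, so using $\sp\le 2\|\cdot\|_\infty$ would inflate the factor to $16$.
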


\begin{proof}
Combining Lemma~\ref{lem:tilde-vs-empirical-intra} and \ref{lem:empirical-vs-population-intra}, and using $\sp(\cdot) \le \|\cdot\|_\infty$ yields Lemma~\ref{lem:empirical-greedy-J}.
\end{proof}

\begin{proposition}[Control of the approximately greedy perturbation]
\label{prop:T-greedy}
Recall
\[
T_{\rm greedy}
\triangleq
\E\left[
\left\langle
\nabla M(Q_k-Q^*),
\widetilde H_k^JQ_k-H^JQ_k
\right\rangle
\right].
\]
Then, for every $k\ge1$,
\[
T_{\rm greedy}
\le
\frac{\phi_1}{2}m_k
+
\frac{
4C_{\rm emp}J^2
\left(1+(b_k+J)^2\right)
}{
\phi_1\ell_m^2k
},
\]
where $m_k\triangleq\E[M(Q_k-Q^*)]$. In particular,
\[
T_{\rm greedy}
\le
\frac{\phi_1}{2}m_k
+
\widetilde{\cO}\left(
\frac{J^4}{k}
\right).
\]
\end{proposition}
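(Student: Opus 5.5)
The plan is to bound the inner product pointwise and then apply Young's inequality. This lets half of the drift absorb the first-order bias, leaving a remainder governed by Lemma~\ref{lem:empirical-greedy-J}.

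Write $E_k\triangleq\widetilde H_k^JQ_k-H^JQ_k$. By Lemma~\ref{le:Moreau}(4), we may subtract any constant vector $c e$ from $E_k$ without changing the inner product. Using $\nabla M(Q)=p_m(Q)\nabla p_m(Q)$ together with Lemma~\ref{le:Moreau}(6), we get
\[
\langle\nabla M(Q_k-Q^*),E_k\rangle
\le p_m(Q_k-Q^*)\,\|\nabla p_m(Q_k-Q^*)\|_{m,*}\,\|E_k-ce\|_m
\le p_m(Q_k-Q^*)\,p_m(E_k),
\]
after choosing $c$ as the minimizer in the definition of $p_m$. This is exactly the argument already used for $T_{1,1}$ in Lemma~\ref{le:T1_intra_new}. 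By Lemma~\ref{le:Moreau}(3), $p_m(E_k)\le \sp(E_k)/\ell_m$.

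Next, apply Young's inequality $ab\le \frac{\phi_1}{4}a^2+\frac{1}{\phi_1}b^2$ with $a=p_m(Q_k-Q^*)$ and $b=\sp(E_k)/\ell_m$. Since $p_m(Q_k-Q^*)^2=2M(Q_k-Q^*)$, this gives
\[
\langle\nabla M(Q_k-Q^*),E_k\rangle\le \frac{\phi_1}{2}M(Q_k-Q^*)+\frac{1}{\phi_1\ell_m^2}\sp(E_k)^2 .
\]
This requires $\phi_1>0$, i.e.\ $\theta$ chosen so that $\gamma u_m/\ell_m<1$. That choice is assumed, as in the drift lemma.

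Taking expectations, the first term becomes $\frac{\phi_1}{2}m_k$. For the second term, Lemma~\ref{lem:empirical-greedy-J} gives $\E[\sp(E_k)^2]\le 4C_{\rm emp}J^2(1+(b_k+J)^2)/k$. Here $b_k$ is deterministic, by Lemma~\ref{le:growth-intra-sync}, so it passes through the expectation. The resulting bound is exactly the stated one. The $\widetilde\cO(J^4/k)$ form follows from $b_k=\cO(J\log k)$, which gives $J^2(b_k+J)^2=\widetilde\cO(J^4)$.

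The only delicate point is the first step. We need the pointwise bound via the dual-norm estimate $\|\nabla p_m\|_{m,*}\le1$, and we must shift $E_k$ by a constant so that the span rather than a sup-norm appears. Once that is done, the rest is bookkeeping with constants.
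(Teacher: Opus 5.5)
Your proof is correct and follows the paper's argument: bound $\langle\nabla M(Q_k-Q^*),E_k\rangle$ via the dual-norm estimate by $\sqrt{2M(Q_k-Q^*)}\,\sp(E_k)/\ell_m$, apply Young's inequality with parameter $\phi_1/2$, and finish with Lemma~\ref{lem:empirical-greedy-J}. Your explicit use of Lemma~\ref{le:Moreau}(4) to shift $E_k$ by a constant before converting to $\sp(E_k)/\ell_m$ is slightly more careful than the written proof. That proof applies Lemma~\ref{le:Moreau}(3) directly to the norm $\|E_k\|_m$ rather than to the seminorm $p_m(E_k)$.
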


\begin{proof}

Recall that $M(Q) = \frac12 p_m(Q)^2$ for some seminorm $p_m$ (Lemma~\ref{le:Moreau}). Hence, $\nabla M(Q) = p_m(Q)\nabla p_m(Q)$. Hence, we have
\begin{align*}
\left\langle
\nabla M(Q_k-Q^*),
\widetilde H_k^JQ_k-H^JQ_k
\right\rangle
&\le p_m(Q_k-Q^*)\left\langle\nabla p_m(Q_k-Q^*), \widetilde H_k^JQ_k-H^JQ_k\right\rangle\\
&\le p_m(Q_k-Q^*) \|\nabla p_m(Q_k-Q^*)\|_{m, *}\|\widetilde H_k^JQ_k-H^JQ_k\|_m \tag{Cauchy-schwarz inequality} 
\\&\le \frac{
\sqrt{2M(Q_k-Q^*)}
}{
\ell_m
}
\sp\left(
\widetilde H_k^JQ_k-H^JQ_k
\right).\tag{Lemma~\ref{le:Moreau}(3)(6)}
\end{align*}

Taking expectations and applying the AM-GM inequality with parameter
$\phi_1/2$ gives
\begin{align*}
T_{\rm greedy}
&\le
\frac{\phi_1}{2}m_k
+
\frac1{\phi_1\ell_m^2}
\E\left[
\sp\left(
\widetilde H_k^JQ_k-H^JQ_k
\right)^2
\right]
\\
&\le
\frac{\phi_1}{2}m_k
+
\frac{
4C_{\rm emp}J^2
\left(1+(b_k+J)^2\right)
}{
\phi_1\ell_m^2k
},
\end{align*}
where the final inequality follows from
Lemma~\ref{lem:empirical-greedy-J}.
Since $b_k=\cO(J\log k)$, the second term is
$\widetilde{\cO}(J^4/k)$.
\end{proof}
\subsubsection{Bounding $T_{\rm noise}$ and $T_{\rm quad}$}

\begin{proposition}
	\label{prop:T_noise_intra}
	It holds for any $k \geq 1$ that
	\begin{align*}
		T_{\rm noise} = 0.
	\end{align*}
\end{proposition}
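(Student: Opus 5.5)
We would prove $T_{\rm noise}=0$ by the tower property, conditioning on the pre-rollout history $\cF_k$. Recall that
\[
T_{\rm noise}=\E\bigl[\langle\nabla M(Q_k-Q^*),\omega_k\rangle\bigr],\qquad \omega_k=\widehat T_{\rm k, intra}^J(Q_k)-\widetilde H_{\rm k, intra}^J(Q_k).
\]

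\textbf{Step 1: measurability.} $Q_k$ is built from $Q_0$ and the rollouts of iterations $0,\dots,k-1$, so it is $\cF_k$-measurable. $Q^*$ is deterministic, and $\nabla M$ is a deterministic (continuous) map by Lemma~\ref{le:Moreau}(1). Hence the vector $\nabla M(Q_k-Q^*)$ is $\cF_k$-measurable.

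\textbf{Step 2: conditional mean of the noise.} The empirical quantities $\widehat r_k$ and $\widehat p_k$, the empirical operator $\widehat H_{\rm k, intra}$, and the greedy rules $\widehat d_{k,m}$ for $m=1,\dots,J-1$ are all functions of past samples, so they are $\cF_k$-measurable. Therefore the operator $\widetilde H^J_{\rm k, intra}$, and its value at $Q_k$, are $\cF_k$-measurable.

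The fresh rollouts at iteration $k$ are generated from the true primitive kernel, the option policies $\pi$, and the termination probabilities $\beta$. They are independent of $\cF_k$ given these $\cF_k$-measurable selection rules. So conditioning on $\cF_k$ and iterating backward over the $J$ stages reproduces the composition $H_{\widehat d_{k,J-1}}\cdots H_{\widehat d_{k,1}}HQ_k$. This is exactly the identity $\E[\widehat T^J_{\rm k, intra}(Q_k)\mid\cF_k]=\widetilde H^J_{\rm k, intra}Q_k$ stated in the construction, and it gives $\E[\omega_k\mid\cF_k]=0$ coordinatewise.

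\textbf{Step 3: integrability and conclusion.} We need the conditional expectation to be legitimate. By Lemma~\ref{le:growth-intra-sync}, $\|Q_k\|_\infty\le b_k$ almost surely. Rewards lie in $[0,1]$, so $|\omega_k|\le 2(J+b_k)$ almost surely. By Lemma~\ref{le:Moreau}(2),(3),(6), $\nabla M(Q)=p_m(Q)\nabla p_m(Q)$ is bounded in dual norm by $p_m(Q_k-Q^*)$, which is bounded almost surely. Hence the inner product is integrable, and
\[
T_{\rm noise}=\E\bigl[\langle\nabla M(Q_k-Q^*),\E[\omega_k\mid\cF_k]\rangle\bigr]=0.
\]

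\textbf{Main obstacle.} The only delicate point is Step 2: we must justify that the greedy option choices inside the rollout depend only on $\cF_k$ and not on the fresh samples. They do, since $\widehat H_{\rm k, intra}$ uses data only up to iteration $k-1$, and the estimates are updated only after the rollout. Given this, the backward induction over stages is routine.
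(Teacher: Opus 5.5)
Your proof is correct and follows the same route as the paper. Both condition on $\cF_k$, use $\E[\omega_k\mid\cF_k]=0$ from the construction of $\widetilde H^J_{\rm k, intra}$, and pull out the $\cF_k$-measurable factor $\nabla M(Q_k-Q^*)$. Your backward-induction check of the conditional-mean identity and your integrability argument via $\|Q_k\|_\infty\le b_k$ are details the paper leaves implicit, but they do not change the argument.
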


\begin{proof}
By construction,
$\widetilde H_{k,\rm intra}^{\,J}(Q_k)
=
\E[\widehat T_{\rm k, intra}^{J}(Q_k)|\mathcal F_k]$.
Hence $\E[\omega_k|\mathcal F_k]=0$.
Since $\nabla M(Q_k-Q^*)$ is $\mathcal F_k$-measurable,
\[
T_{\rm noise}
=
\E\left[
\left\langle
\nabla M(Q_k-Q^*),
\E[\omega_k|\mathcal F_k]
\right\rangle
\right]
=
0.
\]
\end{proof}

\begin{proposition}[Control of the quadratic term]
\label{prop:T-quad-intra}
Recall
\[
T_{\rm quad}
\triangleq
\E\left[
\sp\left(
\widehat T_{\rm k, intra}^{J}(Q_k)-Q_k
\right)^2
\right].
\]
Then $\forall k\ge 1$,
\[
T_{\rm quad}
\le
16u_m^2m_k
+
2\left(
J+2\sp(Q^*)
\right)^2,
\]
where $m_k\triangleq\E[M(Q_k-Q^*)]$. Consequently,
\[
\frac{L\alpha_k^2}{2}T_{\rm quad}
\le
8Lu_m^2\alpha_k^2m_k
+
L\alpha_k^2
\left(
J+2\sp(Q^*)
\right)^2.
\]
In particular, if
\[
\alpha_k
\le
\frac{\phi_1}{16Lu_m^2},
\]
then
\[
\frac{L\alpha_k^2}{2}T_{\rm quad}
\le
\frac{\phi_1}{2}\alpha_km_k
+
L\alpha_k^2
\left(
J+2\sp(Q^*)
\right)^2.
\]
Thus, the first term can be absorbed into the negative drift, while the
remaining contribution is $\cO(\alpha_k^2)$.
\end{proposition}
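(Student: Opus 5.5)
The plan is to bound $\sp(\widehat T_{\rm k, intra}^J(Q_k)-Q_k)$ pathwise by centering everything at $Q^*$. Write
\[
\widehat T_{\rm k, intra}^J(Q_k)-Q_k
=
\bigl(\widehat T_{\rm k, intra}^J(Q_k)-\widehat T_{\rm k, intra}^J(Q^*)\bigr)
+\bigl(\widehat T_{\rm k, intra}^J(Q^*)-Q^*\bigr)
+(Q^*-Q_k).
\]
Here $\widehat T_{\rm k, intra}^J(Q^*)$ denotes the same sampled trajectory (same rewards, states, terminations and selected options) with the terminal bootstrap evaluated at $Q^*$ instead of $Q_k$. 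The trajectory is frozen, so only the terminal term changes with $Q$.

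\textbf{First term.} Coordinatewise it equals $(1-B_k^{(J)})(Q_k-Q^*)(S_k^{(J)},O_k^{(J-1)})+B_k^{(J)}\bigl(\max_{o'}Q_k(S_k^{(J)},o')-\max_{o'}Q^*(S_k^{(J)},o')\bigr)$. Each entry therefore lies in $[\min(Q_k-Q^*),\max(Q_k-Q^*)]$, and the span of the whole vector is at most $\sp(Q_k-Q^*)$.

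\textbf{Second term.} The accumulated reward lies in $[0,J]$, and the terminal contribution lies in $[\min Q^*,\max Q^*]$. So each coordinate lies in an interval of length $J+2\sp(Q^*)$. After subtracting $Q^*$, whose range is $2\sp(Q^*)$, the span is at most $\tfrac12(J+4\sp(Q^*))\le J+2\sp(Q^*)$.

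\textbf{Third term.} This contributes exactly $\sp(Q_k-Q^*)$.

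Using $(a+b)^2\le 2a^2+2b^2$ and the triangle inequality for $\sp$, we get
\[
\sp(\widehat T_{\rm k, intra}^J(Q_k)-Q_k)^2\le 2\bigl(2\sp(Q_k-Q^*)\bigr)^2+2(J+2\sp(Q^*))^2 .
\]
By Lemma~\ref{le:Moreau}(2)(3), $\sp(Q_k-Q^*)^2\le u_m^2p_m(Q_k-Q^*)^2=2u_m^2M(Q_k-Q^*)$. Hence the first piece is at most $16u_m^2M(Q_k-Q^*)$. Taking expectations gives $T_{\rm quad}\le16u_m^2m_k+2(J+2\sp(Q^*))^2$.

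The consequences then follow by arithmetic. Multiplying by $L\alpha_k^2/2$ gives the second display. Under $\alpha_k\le\phi_1/(16Lu_m^2)$ we have $8Lu_m^2\alpha_k^2\le\tfrac{\phi_1}{2}\alpha_k$, which gives the third display.

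The only delicate point is the second term, specifically the span bookkeeping (the factor $1/2$ in the span definition). This is not a real obstacle, since the crude bound $J+2\sp(Q^*)$ suffices. It also matters that rewards are nonnegative and bounded by one (Assumption~\ref{assm:smdp}), and that no $\|Q_k\|_\infty$ growth bound is needed, because the rollout-dependent parts cancel in the first term.
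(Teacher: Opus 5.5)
Your proof is correct and is essentially the paper's argument. The paper bounds each coordinate of $\widehat T_{\rm k, intra}^J(Q_k)-Q_k$ in $[-2\sp(Q_k),\,J+2\sp(Q_k)]$ to get $\sp(\widehat T_{\rm k, intra}^J(Q_k)-Q_k)\le J+2\sp(Q_k)$, then uses $\sp(Q_k)\le\sp(Q_k-Q^*)+\sp(Q^*)$; this reaches the same pathwise bound $2\sp(Q_k-Q^*)+J+2\sp(Q^*)$ as your frozen-trajectory decomposition around $Q^*$, and from there the squaring, the bound $\sp(Q_k-Q^*)^2\le 2u_m^2M(Q_k-Q^*)$, and the stepsize arithmetic coincide (the frozen-trajectory device is valid but not needed).
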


\begin{proof}
Since rewards are bounded in $[0,1]$, for every $(s,o)$,
\[
\min_{s',o'}Q_k(s',o')
\le
\widehat T_{\rm k, intra}^{J}(Q_k)(s,o)
\le
J+\max_{s',o'}Q_k(s',o').
\]
Hence
\begin{align*}
\widehat T_{\rm k, intra}^{J}(Q_k)(s,o)-Q_k(s,o)
&\le
J+\max_{s',o'}Q_k(s',o')
-\min_{s',o'}Q_k(s',o')
\\
&=
J+2\sp(Q_k),
\end{align*}
while
\begin{align*}
\widehat T_{\rm k, intra}^{J}(Q_k)(s,o)-Q_k(s,o)
&\ge
\min_{s',o'}Q_k(s',o')
-\max_{s',o'}Q_k(s',o')
\\
&=
-2\sp(Q_k).
\end{align*}
Therefore,
\begin{align*}
\sp\left(
\widehat T_{\rm k, intra}^{J}(Q_k)-Q_k
\right)
&\le
\frac{
J+4\sp(Q_k)
}{2}\le
J+2\sp(Q_k).
\end{align*}
It follows that
\begin{align*}
T_{\rm quad}
&\le
\E\left[
\left(
J+2\sp(Q_k)
\right)^2
\right]
\\
&\le
\E\left[
\left(
J
+
2\sp(Q_k-Q^*)
+
2\sp(Q^*)
\right)^2
\right]
\\
&\le
8\E\left[
\sp(Q_k-Q^*)^2
\right]
+
2\left(
J+2\sp(Q^*)
\right)^2.
\end{align*}
Using Lemma~\ref{le:Moreau}, we have
\[
\E\left[
\sp(Q_k-Q^*)^2
\right]
\le
2u_m^2m_k.
\]
Therefore,
\[
T_{\rm quad}
\le
16u_m^2m_k
+
2\left(
J+2\sp(Q^*)
\right)^2.
\]
Multiplying by $L\alpha_k^2/2$ gives
\[
\frac{L\alpha_k^2}{2}T_{\rm quad}
\le
8Lu_m^2\alpha_k^2m_k
+
L\alpha_k^2
\left(
J+2\sp(Q^*)
\right)^2.
\]
Finally, if
\[
\alpha_k
\le
\frac{\phi_1}{16Lu_m^2} \implies
8Lu_m^2\alpha_k^2
\le
\frac{\phi_1}{2}\alpha_k,
\]
which proves the result.
\end{proof}

\subsubsection{Unrolling the Lyapunov recursion}
\label{sec:intra-unrolling}

We now combine the preceding bounds and instantiate the generalized
Moreau envelope for the span seminorm.

Following \cite{chen2025nonasymptoticguaranteesaveragerewardqlearning, chen2025nonasymptotictheoryseminormlyapunov}, choose
\[
q\triangleq2\log (|\cS||\cO|),
\qquad
\theta
\triangleq
\left(
\frac{1+\gamma}{2\gamma}
\right)^2-1.
\]
where $\gamma$ is the contraction of the $J$-step intra-option Bellman operator (Theorem~\ref{thm: J-intra}).
Then
\[
\ell_{q}\|x\|_q
\le
\|x\|_\infty
\le
u_{q}\|x\|_q,
\qquad
\ell_{q}=e^{-1/2},
\qquad
u_{q}=1.
\]
Consequently,
\[
\ell_m
=
\sqrt{1+\frac{\theta}{e}},
\qquad
u_m
=
\sqrt{1+\theta}.
\]

Moreover, since $x\mapsto \|x\|_q^2/2$ is $(q-1)$-smooth with
respect to $\|\cdot\|_q$, the corresponding Moreau envelope is
$L$-smooth with respect to the span seminorm with
\[
L
=
\frac{e(q-1)}{\theta}.
\]
In particular,
\begin{align}
\frac{u_m^2}{\ell_m^2}
&=
\frac{1+\theta}{1+\theta/e}
\le e,
\label{eq:moreau-ratio}
\\
\gamma\frac{u_m}{\ell_m}
&\le
\gamma\sqrt{1+\theta}
=
\frac{1+\gamma}{2},
\label{eq:moreau-drift}
\end{align}
and therefore
\[
\phi_1
\triangleq
1-\gamma\frac{u_m}{\ell_m}
\ge
\frac{1-\gamma}{2}.
\]
Furthermore,
\begin{align}
Lu_m^2
&=
e(q-1)\frac{1+\theta}{\theta}
\nonumber\\
&\le
\frac{4e\log (|\cS||\cO|)}{1-\gamma}.
\label{eq:Lum-bound}
\end{align}

\begin{proposition}[One-step Lyapunov recursion]
\label{prop:intra-one-step-final}
Let
\[
m_k
\triangleq
\E[M(Q_k-Q^*)].
\]
Suppose
\[
\alpha_k
\le
\frac{\phi_1}{16Lu_m^2}.
\]
Then, for every $k\ge1$,
\begin{align}
m_{k+1}
\le{}&
(1-\phi_1\alpha_k)m_k
+
L\alpha_k^2
\left(
J+2\sp(Q^*)
\right)^2
\nonumber\\
&+
\frac{
4\alpha_k C_dJ^2 
\left(1+(b_k+J)^2\right)
}{
\phi_1\ell_m^2 k
},
\label{eq:intra-final-recursion}
\end{align}
where
\[
C_d
\triangleq
|\cS|
+
2\log (|\cS||\cO|)+2,
\]
and
\[
b_k
=
\|Q_0\|_\infty
+
\alpha J
\left(
\frac1h
+
\log\left(\frac{k+h}{h}\right)
\right).
\]
\end{proposition}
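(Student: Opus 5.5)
The plan is to start from the decomposition \eqref{eq:async_inter_decomposition}, $m_{k+1}\le m_k+\alpha_k(T_{\rm drift}+T_{\rm greedy}+T_{\rm noise})+\frac{L\alpha_k^2}{2}T_{\rm quad}$. This is obtained by taking expectations in the smoothness inequality of Lemma~\ref{le:Moreau}(1), applied with $Q_1=Q_k-Q^*$ and $Q_2=Q_{k+1}-Q^*$. Here $Q_{k+1}-Q_k=\alpha_k(\widehat T_{\rm k,intra}^J(Q_k)-Q_k)$, and we add and subtract $H^J_{\rm intra}Q_k$ and $\widetilde H_k^JQ_k$ inside the inner product. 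Each of the four terms has already been bounded, so the proof mainly consists of substituting those bounds and collecting coefficients.

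The substitutions go as follows:
\begin{itemize}
\item By Lemma~\ref{le:T1_intra_new}, $T_{\rm drift}\le-2\phi_1 m_k$.
\item By Proposition~\ref{prop:T_noise_intra}, $T_{\rm noise}=0$, because $\nabla M(Q_k-Q^*)$ is $\cF_k$-measurable and $\omega_k$ is a martingale difference.
\item By Proposition~\ref{prop:T-greedy}, $T_{\rm greedy}\le\frac{\phi_1}{2}m_k+\frac{4C_{\rm emp}J^2(1+(b_k+J)^2)}{\phi_1\ell_m^2k}$.
\item By Proposition~\ref{prop:T-quad-intra}, under the stepsize condition $\alpha_k\le\phi_1/(16Lu_m^2)$, we get $\frac{L\alpha_k^2}{2}T_{\rm quad}\le\frac{\phi_1}{2}\alpha_k m_k+L\alpha_k^2(J+2\sp(Q^*))^2$.
\end{itemize}
Summing the $m_k$ coefficients gives $1+\alpha_k(-2\phi_1+\tfrac{\phi_1}{2}+\tfrac{\phi_1}{2})=1-\phi_1\alpha_k$, which is the stated contraction factor. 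The additive terms are $L\alpha_k^2(J+2\sp(Q^*))^2$ and $\alpha_k$ times the greedy remainder.

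The constant needs some bookkeeping. The statement writes $C_d=|\cS|+2\log(|\cS||\cO|)+2$, while $C_{\rm emp}=4C_d$. Proposition~\ref{prop:T-greedy} carries the factor $4C_{\rm emp}$, which comes from combining Lemmas~\ref{lem:tilde-vs-empirical-intra} and \ref{lem:empirical-vs-population-intra} via $(a+b)^2\le 2a^2+2b^2$ (and $\sp\le\|\cdot\|_\infty$). So I would recheck that combination and the AM--GM step, $ab\le\frac{\phi_1}{2}a^2/2+\ldots$, to make sure the stated coefficient $4C_dJ^2/(\phi_1\ell_m^2)$ comes out exactly. I expect this to be the main obstacle. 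If the constants do not line up exactly, I would re-derive the AM--GM split with parameter chosen so that the $m_k$ coefficient is $\phi_1/2$ and track the resulting factor. Either way the order $\widetilde\cO(J^4/k)$ is unaffected. Beyond that, I only need to note that Lemma~\ref{le:growth-intra-sync} supplies the pathwise bound $b_k$ used in those lemmas, and that $k\ge1$ so the empirical estimators are defined.
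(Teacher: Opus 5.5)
Your decomposition, the four substitutions and the coefficient count $-2\phi_1+\frac{\phi_1}{2}+\frac{\phi_1}{2}=-\phi_1$ are exactly the paper's proof. You are right to worry about the constant, but the remedy you propose does not resolve it. Proposition~\ref{prop:T-greedy} as stated gives the remainder $4C_{\rm emp}J^2(1+(b_k+J)^2)/(\phi_1\ell_m^2k)=16C_dJ^2(\cdots)/(\phi_1\ell_m^2k)$. The paper's own proof simply writes $4C_d$ when it quotes this bound. Re-tuning the AM--GM split cannot recover the factor $4$. The $m_k$-coefficient $\phi_1/2$ is forced by the drift budget, and with that coefficient Young's inequality gives exactly $\frac{1}{\phi_1\ell_m^2}\E[\sp(\widetilde H_k^JQ_k-H^JQ_k)^2]$. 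The slack has to come from the span bound in Lemma~\ref{lem:empirical-greedy-J}, which uses only $\sp(\cdot)\le\|\cdot\|_\infty$ together with $(a+b)^2\le 2a^2+2b^2$.

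The missing observation is about sign. Every $H_d$ is monotone, and $H_d\le H$ pointwise because $Q(s',d(s'))\le\max_{o'}Q(s',o')$ and $\beta\ge0$. By induction over the composition, $\widetilde H_k^JQ_k=H_{\widehat d_{k,J-1}}\cdots H_{\widehat d_{k,1}}HQ_k\le H^JQ_k$ coordinatewise. Hence $x\triangleq\widetilde H_k^JQ_k-H^JQ_k\le 0$, and so $2\sp(x)=\max x-\min x\le-\min x\le\|x\|_\infty$. Combining this with Lemmas~\ref{lem:tilde-vs-empirical-intra} and~\ref{lem:empirical-vs-population-intra} gives
\[
\E[\sp(x)^2]\le\tfrac14\E\|x\|_\infty^2\le\tfrac12\Bigl(\E\|\widetilde H_k^JQ_k-\widehat H_k^JQ_k\|_\infty^2+\E\|\widehat H_k^JQ_k-H^JQ_k\|_\infty^2\Bigr)\le\frac{C_{\rm emp}J^2(1+(b_k+J)^2)}{k}.
\]
The AM--GM step then yields exactly $C_{\rm emp}/(\phi_1\ell_m^2)=4C_d/(\phi_1\ell_m^2)$, which is the stated constant. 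Without this step your argument, like the paper's, proves the recursion only with $16C_d$ in place of $4C_d$. The $\widetilde\cO(1/k)$ rate is unaffected, but the inequality as stated is not obtained.
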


\begin{proof}
The Moreau smoothness inequality gives
\begin{align*}
m_{k+1}
\le
m_k
+
\alpha_k
\left(
T_{\rm drift}
+
T_{\rm noise}
+
T_{\rm greedy}
\right)
+
\frac{L\alpha_k^2}{2}
T_{\rm quad}.
\end{align*}
By Lemma~\ref{le:T1_intra_new},
\[
T_{\rm drift}
\le
-2\phi_1m_k.
\]
Proposition~\ref{prop:T-greedy} gives
\[
T_{\rm greedy}
\le
\frac{\phi_1}{2}m_k
+
\frac{
4C_dJ^2 
\left(1+(b_k+J)^2\right)
}{
\phi_1\ell_m^2 k
},
\]
while Proposition~\ref{prop:T_noise_intra} gives
\[
T_{\rm noise}=0.
\]
Finally, Proposition~\ref{prop:T-quad-intra} yields
\[
\frac{L\alpha_k^2}{2}T_{\rm quad}
\le
\frac{\phi_1}{2}\alpha_km_k
+
L\alpha_k^2
\left(
J+2\sp(Q^*)
\right)^2.
\]
Combining the four bounds gives
\[
-2\phi_1
+
\frac{\phi_1}{2}
+
\frac{\phi_1}{2}
=
-\phi_1,
\]
which proves~\eqref{eq:intra-final-recursion}.
\end{proof}

We now state the full version of Theorem~\ref{thm: sync-intra-ftb}.

\begin{theorem}[Synchronous intra-option Q-learning]
\label{thm:intra-final}
Let
\[
\alpha_k
=
\frac{\alpha}{k+h},
\qquad
\alpha
=
\frac{4}{1-\gamma},
\]
and suppose
\begin{align}
h
\ge
\frac{
512e\log (|\cS||\cO|)
}{
(1-\gamma)^3
}.
\label{eq:intra-h-condition}
\end{align}
Then the stepsize condition of
Proposition~\ref{prop:intra-one-step-final} is satisfied.
Moreover, for every $k\ge2$,
\begin{align}
\E\left[
\sp(Q_k-Q^*)^2
\right]
\le{}&
e\Delta_1^2
\frac{
h(h-1)
}{
(k+h-2)(k+h-1)
}
\nonumber\\
&+
\frac{
128e\log (|\cS||\cO|)
}{
(1-\gamma)^3
}
\left(
J+2\sp(Q^*)
\right)^2
\frac{k-1}{
(k+h-2)(k+h-1)
}
\nonumber\\
&+
\frac{
64eC_dJ^2 
}{
(1-\gamma)^2
}
\left(
1+(b_{k-1}+J)^2
\right)\times
\frac{
(1+\log(k))
}{
(k+h-2)
},
\label{eq:intra-explicit-final}
\end{align}
where
\[
\Delta_1
\triangleq
\sp(Q_0-Q^*)
+
\frac{\alpha}{h}
\left(
J+2\sp(Q_0)
\right).
\]
Consequently,
\begin{align}
\E\left[
\sp(Q_k-Q^*)^2
\right]
=
\widetilde{\cO}\left(\frac1k\right).
\label{eq:intra-final-rate}
\end{align}
More explicitly,
\begin{align}
\E\left[
\sp(Q_k-Q^*)^2
\right]
=
\cO\left(
\frac{
\left(J+\sp(Q^*)\right)^2\log (|\cS||\cO|)
}{
(1-\gamma)^3k
}
+
\frac{
C_dJ^2 
\left(1+(b_k+J)^2\right)
}{
(1-\gamma)^2k
}
\right),
\label{eq:intra-final-order}
\end{align}
up to the faster-decaying initialization term.
\end{theorem}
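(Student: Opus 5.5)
The plan is to verify the stepsize condition of Proposition~\ref{prop:intra-one-step-final}, then unroll the recursion \eqref{eq:intra-final-recursion} with the harmonic stepsize. Finally we convert back from $m_k$ to $\E[\sp(Q_k-Q^*)^2]$ using Lemma~\ref{le:Moreau}(3).

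\textbf{Stepsize condition.} We have $\phi_1\ge(1-\gamma)/2$ and, by \eqref{eq:Lum-bound}, $Lu_m^2\le 4e\log(|\cS||\cO|)/(1-\gamma)$. Hence it suffices that $\alpha/h\le (1-\gamma)^2/(128e\log(|\cS||\cO|))$. With $\alpha=4/(1-\gamma)$ this is exactly \eqref{eq:intra-h-condition}, and since $\alpha_k\le\alpha/h$ the condition holds for all $k$. Also $\phi_1\alpha_k\ge 2/(k+h)$, so $1-\phi_1\alpha_k\le 1-2/(k+h)$.

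\textbf{Unrolling.} Starting the recursion at $k=1$, we first bound $m_1$ crudely. One update gives $\sp(Q_1-Q^*)\le \sp(Q_0-Q^*)+\alpha_0\,\sp(\widehat T^J(Q_0)-Q_0)$. Using the pathwise estimate from the proof of Proposition~\ref{prop:T-quad-intra}, this is at most $\sp(Q_0-Q^*)+\frac{\alpha}{h}(J+2\sp(Q_0))=\Delta_1$, so $m_1\le \Delta_1^2/(2\ell_m^2)$.

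Iterating $m_{t+1}\le(1-\tfrac{2}{t+h})m_t+c_t$ gives the standard product bound
\[
\prod_{t=j}^{k-1}\Bigl(1-\frac{2}{t+h}\Bigr)\le\frac{(j+h-2)(j+h-1)}{(k+h-2)(k+h-1)} .
\]
This yields three contributions:
\begin{itemize}
\item the initial term $m_1\,h(h-1)/((k+h-2)(k+h-1))$;
\item the $\alpha_t^2$ noise term, whose weighted sum is $\sum_t \alpha^2 (t+h)^{-2}(t+h)^2/((k+h)^2)\lesssim \alpha^2 (k-1)/((k+h-2)(k+h-1))$, multiplied by $L(J+2\sp(Q^*))^2$;
\item the greedy-bias term, where we bound $b_t\le b_{k-1}$ by monotonicity.
\end{itemize}
For the greedy-bias term the summand is $\frac{\alpha}{t+h}\cdot\frac{1}{t}\cdot\frac{(t+h)^2}{(k+h)^2}\approx \alpha(t+h)/(t(k+h)^2)$. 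Summing gives roughly $\alpha(k+h\log k)/(k+h)^2\lesssim \alpha(1+\log k)/(k+h-2)$, with constant $4C_dJ^2(1+(b_{k-1}+J)^2)/(\phi_1\ell_m^2)$.

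\textbf{Converting back.} We use $\E[\sp(\cdot)^2]\le 2u_m^2 m_k$ together with $u_m^2/\ell_m^2\le e$. The initial term gives $e\Delta_1^2$. In the noise term, $2u_m^2 L\alpha^2\le 2\cdot 4e\log(|\cS||\cO|)/(1-\gamma)\cdot 16/(1-\gamma)^2$, which gives the constant $128e\log/(1-\gamma)^3$. In the bias term we use $\phi_1^{-1}\le 2/(1-\gamma)$, $\alpha=4/(1-\gamma)$ and $u_m^2/\ell_m^2\le e$, which gives $64e/(1-\gamma)^2$ up to bookkeeping.

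\textbf{Rates.} The $\widetilde\cO(1/k)$ rate and \eqref{eq:intra-final-order} follow because $b_k=\cO(\log k)$. The main obstacle is the careful summation of the greedy-bias term, whose extra $1/t$ factor (rather than $\alpha_t$) produces the $\log k$. I would handle it by splitting $(t+h)/t=1+h/t$ and summing each piece explicitly. The other delicate point is the index shift to start at $k=1$: the bias bound requires $k\ge1$, which explains $\Delta_1$ and the $(k+h-2)$ denominators.
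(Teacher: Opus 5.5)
Your proposal is correct and follows the paper's proof essentially step for step. It uses the same stepsize verification via $\phi_1\ge(1-\gamma)/2$ and \eqref{eq:Lum-bound}, the same pathwise bound $\sp(Q_1-Q^*)\le\Delta_1$, the same unrolling from $k=1$ with the telescoping product and the split of the greedy-bias sum into $1+(h-1)/t$, and the same conversion via $2u_m^2$ and $u_m^2/\ell_m^2\le e$. One thing to tidy: your heuristic weight $(t+h)^2/(k+h)^2$ should be the exact product $(t+h-1)(t+h)/((k+h-2)(k+h-1))$, because putting $(k+h)^2$ in the denominator is not a valid upper bound. With the exact weight, the sums are bounded exactly as in the paper.
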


\begin{proof}
By~\eqref{eq:Lum-bound} and
$\phi_1\ge(1-\gamma)/2$,
\[
\frac{\phi_1}{16Lu_m^2}
\ge
\frac{(1-\gamma)^2}{
128e\log (|\cS||\cO|)
}.
\]
Since
\[
\alpha_k
\le
\frac{\alpha}{h}
=
\frac{4}{(1-\gamma)h},
\]
condition~\eqref{eq:intra-h-condition} ensures
\[
\alpha_k
\le
\frac{\phi_1}{16Lu_m^2}.
\]

Moreover,
\[
\alpha\phi_1
\ge
\frac{4}{1-\gamma}
\frac{1-\gamma}{2}
=
2.
\]
Therefore, from
Proposition~\ref{prop:intra-one-step-final},
\begin{align}
m_{k+1}
\le{}&
\left(
1-\frac{2}{k+h}
\right)m_k
+
\frac{B}{(k+h)^2}
+
\frac{
G\left(1+(b_k+J)^2\right)
}{
k(k+h)
},
\label{eq:intra-recursion-simple}
\end{align}
where
\[
B
\triangleq
L\alpha^2
\left(
J+2\sp(Q^*)
\right)^2
\]
and
\[
G
\triangleq
\frac{
4\alpha C_dJ^2 
}{
\phi_1\ell_m^2
}.
\]

For $1\le i\le k-1$,
\begin{align*}
\prod_{j=i+1}^{k-1}
\left(
1-\frac{2}{j+h}
\right)
&=
\prod_{j=i+1}^{k-1}
\frac{j+h-2}{j+h}
\\
&=
\frac{
(i+h-1)(i+h)
}{
(k+h-2)(k+h-1)
}.
\end{align*}
Hence, unrolling~\eqref{eq:intra-recursion-simple} from iteration $1$
gives
\begin{align}
m_k
\le{}&
m_1
\frac{
h(h-1)
}{
(k+h-2)(k+h-1)
}
\nonumber\\
&+
\frac{B}{
(k+h-2)(k+h-1)
}
\sum_{i=1}^{k-1}
\frac{i+h-1}{i+h}
\nonumber\\
&+
\frac{G}{
(k+h-2)(k+h-1)
}
\sum_{i=1}^{k-1}
\left(
1+(b_i+J)^2
\right)
\frac{i+h-1}{i}.
\label{eq:intra-unrolled}
\end{align}
Since
\[
\frac{i+h-1}{i+h}\le1,
\]
the second sum satisfies
\[
\sum_{i=1}^{k-1}
\frac{i+h-1}{i+h}
\le
k-1.
\]
Furthermore, $b_i$ is nondecreasing, so
\begin{align*}
&
\sum_{i=1}^{k-1}
\left(
1+(b_i+J)^2
\right)
\frac{i+h-1}{i}
\\
&\le
\left(
1+(b_{k-1}+J)^2
\right)
\sum_{i=1}^{k-1}
\left(
1+\frac{h-1}{i}
\right)
\\
&\le
\left(
1+(b_{k-1}+J)^2
\right)
\left[
(k-1)
+
(h-1)(1+\log(k-1))
\right].
\end{align*}
Thus,
\begin{align}
m_k
\le{}&
m_1
\frac{
h(h-1)
}{
(k+h-2)(k+h-1)
}
\nonumber\\
&+
B
\frac{k-1}{
(k+h-2)(k+h-1)
}
\nonumber\\
&+
G
\left(
1+(b_{k-1}+J)^2
\right)
\frac{
(k-1)+(h-1)(1+\log(k-1))
}{
(k+h-2)(k+h-1)
}.
\label{eq:intra-m-bound}
\end{align}

It remains to convert the Moreau error to the span error. By norm
equivalence,
\[
\sp(Q_k-Q^*)^2
\le
2u_m^2M(Q_k-Q^*),
\]
and hence
\[
\E[\sp(Q_k-Q^*)^2]
\le
2u_m^2m_k.
\]
Also,
\[
M(Q_1-Q^*)
\le
\frac{
\sp(Q_1-Q^*)^2
}{
2\ell_m^2
}.
\]
Since
\[
\sp(Q_1-Q^*)
\le
\sp(Q_0-Q^*)
+
\alpha_0
\sp\left(
\widehat T_{\rm k, intra}^J(Q_0)-Q_0
\right),
\]
Proposition~\ref{prop:T-quad-intra} and the corresponding pathwise bound give
\[
\sp(Q_1-Q^*)
\le
\Delta_1.
\]
Therefore, using~\eqref{eq:moreau-ratio},
\[
2u_m^2m_1
\le
\frac{u_m^2}{\ell_m^2}
\Delta_1^2
\le
e\Delta_1^2.
\]

For the quadratic contribution,
\begin{align*}
2u_m^2B
&=
2Lu_m^2\alpha^2
\left(
J+2\sp(Q^*)
\right)^2
\\
&\le
\frac{
128e\log (|\cS||\cO|)
}{
(1-\gamma)^3
}
\left(
J+2\sp(Q^*)
\right)^2.
\end{align*}
Finally,
\begin{align*}
2u_m^2G
&=
\frac{
8\alpha C_dJ^2 u_m^2
}{
\phi_1\ell_m^2
}
\\
&\le
\frac{
64eC_dJ^2 
}{
(1-\gamma)^2
},
\end{align*}
where we used
$u_m^2/\ell_m^2\le e$,
$\phi_1\ge(1-\gamma)/2$, and
$\alpha=4/(1-\gamma)$.
Substituting these estimates into~\eqref{eq:intra-m-bound}
proves~\eqref{eq:intra-explicit-final}.

Finally, since
\[
b_k
=
\|Q_0\|_\infty
+
\cO\left(
\frac{J}{1-\gamma}\log k
\right),
\]
the empirical-greedy contribution is
$\widetilde{\cO}(1/k)$, while the initialization term is
$\cO(k^{-2})$ and the quadratic term is $\cO(k^{-1})$.
This proves~\eqref{eq:intra-final-rate}.
\end{proof}

\begin{corollary}[Sample complexity of synchronous intra-option Q-learning]
\label{cor:sample-complexity-intra}
Suppose Assumptions~\ref{assm:smdp} and~\ref{ass:intra-assump} hold.
Consider synchronous intra-option Q-learning with
\[
\alpha_k
=
\frac{4}{1-\gamma}\frac{1}{k+h},
\qquad
h
\ge
\frac{512e\log(|\cS||\cO|)}{(1-\gamma)^3}.
\]
Then, for every $\epsilon>0$, it suffices to take
\[
k
\ge
\widetilde{\cO}\left(\frac{1}{\epsilon^2}\right)
\]
to guarantee
\[
\E[\sp(Q_k-Q^*)]\le\epsilon.
\]
More explicitly, it suffices to take
\begin{align*}
k
=
\widetilde{\cO}\Bigg(
\frac{1}{\epsilon^2}
\Bigg[
&
h
\left(
\sp(Q_0)
+
\frac{J}{1-\gamma}
\right)^2
+
\frac{
J^2\log(|\cS||\cO|)
}{
(1-\gamma)^5
}+
\frac{
C_dJ^2
}{
(1-\gamma)^2
}
\left(
1+\|Q_0\|_\infty^2
+
\frac{J^2}{(1-\gamma)^2}
\right)
\Bigg]
\Bigg),
\end{align*}
where
\[
C_d
\triangleq
|\cS|
+
2\log(|\cS||\cO|)+2.
\]
\end{corollary}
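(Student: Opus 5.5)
The corollary follows by inverting the explicit bound \eqref{eq:intra-explicit-final} of Theorem~\ref{thm:intra-final}. First, Jensen's inequality gives $\E[\sp(Q_k-Q^*)]\le\sqrt{\E[\sp(Q_k-Q^*)^2]}$. So it is enough to choose $k$ so that each of the three terms on the right of \eqref{eq:intra-explicit-final} is at most $\epsilon^2/3$. The stepsize condition \eqref{eq:intra-h-condition} is assumed, so Theorem~\ref{thm:intra-final} applies directly with $\alpha=4/(1-\gamma)$.

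I would bound the three terms one at a time.

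\textbf{Initialization term.} Use $h(h-1)/((k+h-2)(k+h-1))\le h/(k+h-2)$ and expand
\[
\Delta_1^2\le 2\sp(Q_0-Q^*)^2+\frac{2\alpha^2}{h^2}\bigl(J+2\sp(Q_0)\bigr)^2 .
\]
This shows the term is $\cO\bigl(h(\sp(Q_0)+J/(1-\gamma))^2/k\bigr)$, with $\sp(Q^*)$ absorbed into the constants. I would only claim it is $\cO(1/k)$ rather than $\cO(1/k^2)$, to avoid square roots.

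\textbf{Quadratic term.} Use $(k-1)/((k+h-2)(k+h-1))\le 1/k$. This gives a contribution of order $\log(|\cS||\cO|)(J+\sp(Q^*))^2/((1-\gamma)^3k)$.

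\textbf{Empirical-greedy term.} Substitute
\[
b_{k-1}\le \|Q_0\|_\infty+\frac{4J}{1-\gamma}\Bigl(\frac1h+\log\frac{k+h}{h}\Bigr).
\]
Then $(1+(b_{k-1}+J)^2)(1+\log k)/(k+h-2)$ is $\cO\bigl((1+\|Q_0\|_\infty^2+J^2/(1-\gamma)^2)\,\mathrm{polylog}(k)/k\bigr)$. The polylogarithmic factors in $k$ are hidden in the $\widetilde{\cO}$.

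Collecting the three terms gives $\E[\sp(Q_k-Q^*)^2]\le \mathrm{polylog}(k)\,K/k$, where $K$ is the bracketed quantity in the statement.

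\textbf{Main obstacle.} The only delicate step is inverting an inequality of the form $\mathrm{polylog}(k)\,K/k\le\epsilon^2$. I would use the standard fact that $k\ge cK\epsilon^{-2}\log^{3}(K/\epsilon^2)$ suffices for a large enough constant $c$; this can be checked by plugging in, since $\log^3 k/k$ is eventually decreasing. This gives $k=\widetilde{\cO}(K/\epsilon^2)$, which is the claim.

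I would also make sure the dependence on $(1-\gamma)$ in the quadratic term matches the stated $(1-\gamma)^{-5}$. Since $h\gtrsim(1-\gamma)^{-3}$ multiplies $(J/(1-\gamma))^2$ in the initialization term, that term already produces $J^2\log(|\cS||\cO|)/(1-\gamma)^5$. The displayed form is therefore a valid, if loose, upper bound.
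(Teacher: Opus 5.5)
\textbf{Gap: $\sp(Q^*)$ is never eliminated.} Your overall route is the same as the paper's: Jensen, term-by-term bounds on \eqref{eq:intra-explicit-final} using $h\ge 2$, then inverting the polylogarithmic factor. Your explicit inversion $k\ge cK\epsilon^{-2}\log^3(K/\epsilon^2)$ is actually more careful than the paper, which just writes $\widetilde{\cO}$. The problem is that the explicit bracket in the statement contains no $\sp(Q^*)$, while two of the three terms in \eqref{eq:intra-explicit-final} do:
\begin{itemize}
\item the initialization term, through $\sp(Q_0-Q^*)\le\sp(Q_0)+\sp(Q^*)$;
\item the quadratic term, through $(J+2\sp(Q^*))^2$.
\end{itemize}
``Absorbing $\sp(Q^*)$ into the constants'' is not legitimate here. $\sp(Q^*)$ is instance dependent and can grow with $J$ and with $1/(1-\gamma)$, so treating it as a constant discards exactly the $J$- and $(1-\gamma)$-dependence the explicit form claims. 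This only affects the explicit form; the coarse claim $k\ge\widetilde{\cO}(1/\epsilon^2)$ is unaffected.

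Your closing remark misdiagnoses the issue. The observation $h\,(J/(1-\gamma))^2\ge 512e\,J^2\log(|\cS||\cO|)/(1-\gamma)^5$ is correct, and it even shows the second bracket term is redundant under the condition on $h$. But it only helps once the quadratic contribution is known to be $\cO\bigl(J^2\log(|\cS||\cO|)/((1-\gamma)^5k)\bigr)$, and establishing that is precisely what requires a bound on $\sp(Q^*)$.

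\textbf{The missing step.} It is a one-line consequence of Theorem~\ref{thm: J-intra}, and it is what the paper uses. Since $H_{\rm intra}^JQ^*-Q^*$ is a constant vector and $H_{\rm intra}^J0$ has entries in $[0,J]$,
\[
\sp(Q^*)=\sp(H_{\rm intra}^JQ^*)\le\sp(H_{\rm intra}^JQ^*-H_{\rm intra}^J0)+\sp(H_{\rm intra}^J0)\le\gamma\,\sp(Q^*)+J,
\]
so $\sp(Q^*)\le J/(1-\gamma)$. This resolves both terms:
\begin{itemize}
\item $\sp(Q_0-Q^*)\le\sp(Q_0)+J/(1-\gamma)$, hence $h\Delta_1^2=\cO\bigl(h(\sp(Q_0)+J/(1-\gamma))^2\bigr)$.
\item $(J+2\sp(Q^*))^2\le 9J^2/(1-\gamma)^2$, so the quadratic term is $\cO\bigl(J^2\log(|\cS||\cO|)/((1-\gamma)^5k)\bigr)$.
\end{itemize}
This bound, not the domination by $h$, is where the $(1-\gamma)^{-5}$ in the statement comes from. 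With it inserted, your argument is complete and matches the paper's proof.
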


\begin{proof}
From Theorem~\ref{thm:intra-final}, for every $k\ge2$,
\begin{align*}
\E[\sp(Q_k-Q^*)^2]
\le{}&
e\Delta_1^2
\frac{h(h-1)}
{(k+h-2)(k+h-1)}+
\frac{
128e\log (|\cS||\cO|)
}{
(1-\gamma)^3
}
\left(
J+2\sp(Q^*)
\right)^2
\frac{k-1}
{(k+h-2)(k+h-1)}
\\
&+
\frac{
64eC_dJ^2
}{
(1-\gamma)^2
}
\left(
1+(b_{k-1}+J)^2
\right)
\frac{1+\log k}{k+h-2},
\end{align*}
where $d=|\cS||\cO|$. Since $h\ge2$,
\[
\frac{h(h-1)}
{(k+h-2)(k+h-1)}
\le
\frac{h}{k},
\qquad
\frac{k-1}
{(k+h-2)(k+h-1)}
\le
\frac1k,
\]
and
\[
\frac1{k+h-2}\le\frac1k.
\]
Therefore,
\begin{align*}
\E[\sp(Q_k-Q^*)^2]
\le
\frac1k
\Bigg[
&
eh\Delta_1^2
+
\frac{
128e\log (|\cS||\cO|)
}{
(1-\gamma)^3
}
\left(
J+2\sp(Q^*)
\right)^2
\\
&+
\frac{
64eC_dJ^2
}{
(1-\gamma)^2
}
\left(
1+(b_{k-1}+J)^2
\right)
(1+\log k)
\Bigg].
\end{align*}

Since $Q^*$ is a span fixed point of $H_{\rm intra}^J$,
\[
\sp(Q^*)
\le
\sp(H_{\rm intra}^JQ^*-H_{\rm intra}^J0)
+
\sp(H_{\rm intra}^J0)
\le
\gamma\sp(Q^*)+J.
\]
Hence
\[
\sp(Q^*)\le\frac{J}{1-\gamma}.
\]
Therefore,
\[
\sp(Q_0-Q^*)
\le
\sp(Q_0)+\frac{J}{1-\gamma},
\]
and, recalling
\[
\Delta_1
=
\sp(Q_0-Q^*)
+
\frac{\alpha}{h}
\left(
J+2\sp(Q_0)
\right),
\qquad
\alpha=\frac4{1-\gamma},
\]
we have
\[
\Delta_1
=
\cO\left(
\sp(Q_0)+\frac{J}{1-\gamma}
\right).
\]

Moreover, Lemma~\ref{le:growth-intra-sync} gives
\[
b_k
=
\|Q_0\|_\infty
+
\cO\left(
\frac{J}{1-\gamma}\log k
\right).
\]
Consequently,
\[
\left(
1+(b_{k-1}+J)^2
\right)(1+\log k)
=
\widetilde{\cO}\left(
1+\|Q_0\|_\infty^2
+
\frac{J^2}{(1-\gamma)^2}
\right).
\]
Substituting these bounds yields
\begin{align*}
\E[\sp(Q_k-Q^*)^2]
=
\widetilde{\cO}\Bigg(
\frac1k
\Bigg[
&
h
\left(
\sp(Q_0)+\frac{J}{1-\gamma}
\right)^2
+
\frac{
J^2\log (|\cS||\cO|)
}{
(1-\gamma)^5
}
\\
&+
\frac{
C_dJ^2
}{
(1-\gamma)^2
}
\left(
1+\|Q_0\|_\infty^2
+
\frac{J^2}{(1-\gamma)^2}
\right)
\Bigg]
\Bigg).
\end{align*}
Finally, Jensen's inequality gives
\[
\E[\sp(Q_k-Q^*)]
\le
\sqrt{
\E[\sp(Q_k-Q^*)^2]
}.
\]
Thus the stated choice of $k$ guarantees
$\E[\sp(Q_k-Q^*)]\le\epsilon$.
\end{proof}

\subsection{Finite-time bounds for synchronous inter-option Q-learning}
\label{subsec:sync-inter}

We recall that the synchronous inter-option Q-learning update is
\begin{equation}
\label{eq:inter-sync-algorithm}
Q_{k+1}
=
Q_k
+
\alpha_k
\Bigl(
\widetilde H_{k,\rm inter}^{\,J}(Q_k)
-
Q_k
+
\omega_k
\Bigr),
\qquad
\alpha_k=\frac{\alpha}{k+h},
\end{equation}
where
\[
\omega_k
\triangleq
\widehat T_{\rm k, inter}^{J}(Q_k)
-
\widetilde H_{k,\rm inter}^{\,J}(Q_k).
\]
Here, $\widetilde H_{k,\rm inter}^{\,J}(Q_k)$ is the conditional mean of
the fresh $J$-step rollout given the history before iteration $k$.

Recall also that
\[
H_kQ
\triangleq
\widehat\rho_k
+
(1-\widehat\lambda_k)Q
+
\widehat\lambda_k \mathcal P Q,
\]
where
\[
(\mathcal P Q)(s,o)
\triangleq
\sum_{s'}
\widehat p(s'|s,o)
\max_{o'}Q(s',o').
\]
Thus $H_k$ is the population normalized inter-option Bellman operator with
$\rho,\lambda$ replaced by their rolling estimates
$\widehat\rho_k,\widehat\lambda_k$.

Let $\widehat H_k$ denote $\widehat H_{\rm k, inter}$, the empirical Bellman operator associated with
$H_k$, obtained by replacing the option-transition kernel
$\widehat p$ by its empirical estimate constructed from samples available
before iteration $k$.

We first establish the almost-sure growth bound used throughout the
analysis.

\begin{lemma}[Almost-sure growth of the synchronous inter-option iterates]
\label{lem:growth-inter-sync}
For every $k\ge1$, almost surely,
\[
\|Q_k\|_\infty\le b_k,
\]
where
\[
b_k
\triangleq
\|Q_0\|_\infty
+
\alpha J
\left(
\frac1h
+
\log\left(\frac{k+h}{h}\right)
\right)
=
\cO(\log k).
\]
\end{lemma}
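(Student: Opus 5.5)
The argument mirrors Lemma~\ref{le:growth-intra-sync}. The key is a pathwise sandwich for the inter-option target: for every $x=(s,o)$,
\[
\min_{y}Q_k(y)-J\;\le\;\widehat T_{k,\mathrm{inter}}^{J}(Q_k)(x)\;\le\;J+\max_{y}Q_k(y).
\]
This follows from the explicit form of the estimator in \eqref{eq:sync_inter}. The accumulated part $\sum_{i=0}^{J-1}\widehat\rho_k(X_k^{(i)})$ consists of $J$ terms, each bounded pathwise by $|\widehat\rho_k(x)|\le1$; this was shown in the construction, since $|A_k(x)|\le\sum_t L_t(x)\le B_k(x)$. The terminal part is either a single coordinate $Q_k(X_k^{(J-1)})$ (when $D_k^{(J)}=0$) or $\max_{o'}Q_k(\bar S_k^{(J)},o')$ (when $D_k^{(J)}=1$), and either one lies in $[\min_y Q_k(y),\max_y Q_k(y)]$. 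Rewards are in $[0,1]$, so $\widetilde R\ge0$ and hence $\widehat\rho_k\ge0$. The lower bound therefore actually holds without the $-J$, but the weaker two-sided bound suffices.

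Next, assume $h$ is large enough that $\alpha_k\le\alpha/h\le1$ (the same implicit requirement as in the intra case). Then the update $Q_{k+1}=(1-\alpha_k)Q_k+\alpha_k\widehat T_{k,\mathrm{inter}}^J(Q_k)$ is a convex combination coordinatewise. This gives $\max_x Q_{k+1}(x)\le\max_y Q_k(y)+J\alpha_k$ and $\min_xQ_{k+1}(x)\ge\min_yQ_k(y)$, the latter using $\widehat\rho_k\ge0$, or $\ge \min_yQ_k(y)-J\alpha_k$ in general. In either case $\|Q_{k+1}\|_\infty\le\|Q_k\|_\infty+J\alpha_k$.

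Iterating gives $\|Q_k\|_\infty\le\|Q_0\|_\infty+J\sum_{t=0}^{k-1}\alpha_t$. With $\alpha_t=\alpha/(t+h)$ and the integral comparison $\sum_{t=0}^{k-1}(t+h)^{-1}\le 1/h+\log((k+h)/h)$, this yields exactly $b_k$, which is $\cO(\log k)$.

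The only point needing care is the uniform pathwise bound on $\widehat\rho_k$. Holding times are unbounded, having only bounded second moments, but the normalization by $B_k(x)$ together with $|\widetilde R|\le L$ makes the bound deterministic, so no moment assumption on the holding times enters. Everything else is identical to the intra-option argument.
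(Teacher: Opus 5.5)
Your proposal is correct and matches the paper's proof step for step. The steps are the pathwise sandwich $\min_y Q_k(y)\le \widehat T_{k,\mathrm{inter}}^{J}(Q_k)(x)\le J+\max_y Q_k(y)$ (from $0\le\widehat\rho_k\le 1$), the convex-combination update giving $\|Q_{k+1}\|_\infty\le\|Q_k\|_\infty+J\alpha_k$, and the integral bound on the harmonic sum; your remark that the convex-combination step needs $\alpha_k\le 1$, i.e.\ $h\ge\alpha$, usefully makes explicit what the paper simply asserts as $\alpha_k\in[0,1]$.
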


\begin{proof}
Since $0\le\widehat\rho_k(x)\le1$, every realized $J$-step inter-option
target satisfies, pathwise,
\[
\min_xQ_k(x)
\le
\widehat T_{\rm k, inter}^{J}(Q_k)(x)
\le
J+\max_xQ_k(x).
\]
Indeed, the first $J$ terms of the target are values of
$\widehat\rho_k$ and hence lie in $[0,1]$, while the terminal contribution
is either a single Q-value or the maximum over the option values at the
terminal state.

Since
\[
Q_{k+1}(x)
=
(1-\alpha_k)Q_k(x)
+
\alpha_k\widehat T_{\rm k, inter}^{J}(Q_k)(x)
\]
and $\alpha_k\in[0,1]$,
\[
Q_{k+1}(x)
\le
\max_{x'}Q_k(x')
+
J\alpha_k,
\qquad
Q_{k+1}(x)
\ge
\min_{x'}Q_k(x').
\]
Therefore
\[
\|Q_{k+1}\|_\infty
\le
\|Q_k\|_\infty+J\alpha_k.
\]
Iterating and using $\alpha_t=\alpha/(t+h)$ gives
\[
\|Q_k\|_\infty
\le
\|Q_0\|_\infty
+
\alpha J\sum_{t=0}^{k-1}\frac1{t+h}.
\]
Finally,
\[
\sum_{t=0}^{k-1}\frac1{t+h}
\le
\frac1h+\log\left(\frac{k+h}{h}\right),
\]
which proves the result.
\end{proof}

\subsubsection{Lyapunov framework, error decomposition and drift control}

We use the same generalized Moreau-envelope Lyapunov function as in
Section~\ref{subsec:sync-intra},
\[
M(Q)
\triangleq
M_{q,\theta}(Q)
=
\min_{u}
\left\{
\frac12\sp(u)^2
+
\frac1{2\theta}\|Q-u\|_q^2
\right\}.
\]
By Lemma~\ref{le:Moreau}, there exists a seminorm $p_m$ such that
$M(Q)=p_m(Q)^2/2$ and
\[
\ell_mp_m(Q)\le\sp(Q)\le u_mp_m(Q).
\]
Moreover,
\[
M(Q_2)
\le
M(Q_1)
+
\langle\nabla M(Q_1),Q_2-Q_1\rangle
+
\frac L2\sp(Q_2-Q_1)^2.
\]

Let $M_k\triangleq M(Q_k-Q^*)$ and
$m_k\triangleq\E[M_k]$. Using~\eqref{eq:inter-sync-algorithm},
\[
\E[M_{k+1}]
\le
m_k
+
\alpha_k
\E\left[
\left\langle
\nabla M(Q_k-Q^*),
\widehat T_{\rm k, inter}^{J}(Q_k)-Q_k
\right\rangle
\right]
+
\frac{L\alpha_k^2}{2}T_{\rm quad}.
\]

We decompose the conditional-mean error as
\[
\widetilde H_{k,\rm inter}^{\,J}Q_k-\bar H_{\rm inter}^JQ_k
=
\left(
\widetilde H_{k,\rm inter}^{\,J}Q_k-H_k^JQ_k
\right)
+
\left(
H_k^JQ_k-\bar H_{\rm inter}^JQ_k
\right).
\]
Hence
\[
m_{k+1}
\le
m_k
+
\alpha_k
\left(
T_{\rm drift}
+
T_{\rm greedy}
+
T_{\rm est}
+
T_{\rm noise}
\right)
+
\frac{L\alpha_k^2}{2}T_{\rm quad},
\]
where
\begin{align*}
T_{\rm drift}
&\triangleq
\E\left[
\left\langle
\nabla M(Q_k-Q^*),
\bar H_{\rm inter}^JQ_k-Q_k
\right\rangle
\right],
\\
T_{\rm greedy}
&\triangleq
\E\left[
\left\langle
\nabla M(Q_k-Q^*),
\widetilde H_{k,\rm inter}^{\,J}Q_k-H_k^JQ_k
\right\rangle
\right],
\\
T_{\rm est}
&\triangleq
\E\left[
\left\langle
\nabla M(Q_k-Q^*),
H_k^JQ_k-\bar H_{\rm inter}^JQ_k
\right\rangle
\right],
\\
T_{\rm noise}
&\triangleq
\E\left[
\left\langle
\nabla M(Q_k-Q^*),
\omega_k
\right\rangle
\right],
\\
T_{\rm quad}
&\triangleq
\E\left[
\sp\left(
\widehat T_{\rm k, inter}^{J}(Q_k)-Q_k
\right)^2
\right].
\end{align*}

Thus, relative to the intra-option analysis, the only additional
first-order term is $T_{\rm est}$.

\begin{lemma}
\label{lem:T-drift-inter}
Define
\[
\phi_1
\triangleq
1-\frac{\gamma u_m}{\ell_m}, \quad \gamma = \gamma_{\rm inter}.
\]
Then, for every $k\ge1$,
\[
T_{\rm drift}
\le
-2\phi_1m_k.
\]
\end{lemma}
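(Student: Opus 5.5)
The plan is to reuse the intra-option drift argument of Lemma~\ref{le:T1_intra_new} almost verbatim, replacing $H_{\rm intra}$ by $\bar H_{\rm inter}$ and invoking Theorem~\ref{thm: J-inter} instead of Theorem~\ref{thm: J-intra}. First, I will show that $Q^*$ is a span fixed point of $\bar H_{\rm inter}^J$. Multiplying \eqref{eq:aoe} by $\lambda$ gives $\bar H_{\rm inter}Q^*=Q^*+\bar r^*\mathbf 1$. Moreover, $\bar H_{\rm inter}(Q+c\mathbf 1)=\bar H_{\rm inter}Q+c\mathbf 1$, because the coefficients $(1-\lambda)$ and $\lambda\sum_{s'}\hat p$ sum to one. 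Inducting on $J$ then gives $\bar H_{\rm inter}^JQ^*-Q^*=J\bar r^*\mathbf 1\in\ker(\sp)$. With this, I split
\[
\langle\nabla M(Q_k-Q^*),\bar H_{\rm inter}^JQ_k-Q_k\rangle
=\langle\nabla M(Q_k-Q^*),\bar H_{\rm inter}^JQ_k-\bar H_{\rm inter}^JQ^*\rangle+\langle\nabla M(Q_k-Q^*),Q^*-Q_k\rangle,
\]
where the constant vector has been dropped using Lemma~\ref{le:Moreau}(4).

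For the first term, I write $\nabla M=p_m\nabla p_m$ and again use Lemma~\ref{le:Moreau}(4) to subtract the optimal constant $c\mathbf 1$. Hölder with the dual norm and Lemma~\ref{le:Moreau}(6) then bound the term by $p_m(Q_k-Q^*)\,p_m(\bar H_{\rm inter}^JQ_k-\bar H_{\rm inter}^JQ^*)$. By Lemma~\ref{le:Moreau}(3), this is at most $\ell_m^{-1}p_m(Q_k-Q^*)\,\sp(\bar H_{\rm inter}^JQ_k-\bar H_{\rm inter}^JQ^*)$. Theorem~\ref{thm: J-inter} bounds the span by $\gamma_{\rm inter}\sp(Q_k-Q^*)$, and a second application of Lemma~\ref{le:Moreau}(3) gives $\sp(Q_k-Q^*)\le u_m p_m(Q_k-Q^*)$. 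Altogether the first term is at most $2\gamma u_m\ell_m^{-1}M(Q_k-Q^*)$.

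For the second term, convexity of $p_m$ gives $\langle\nabla p_m(Q_k-Q^*),Q^*-Q_k\rangle\le p_m(0)-p_m(Q_k-Q^*)=-p_m(Q_k-Q^*)$, so the term is at most $-2M(Q_k-Q^*)$. Adding the two bounds yields $-2\phi_1 M(Q_k-Q^*)$ pathwise, and taking expectations finishes the proof.

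The only point needing care is the fixed-point and constant-shift property of the \emph{normalized} operator. This is where the normalization matters: the unnormalized $H_{\rm inter}$ would shift constants by $\hat\ell$-dependent amounts that are not uniform, so $H_{\rm inter}^JQ^*-Q^*$ would not lie in $\ker(\sp)$. Once $\bar H_{\rm inter}(Q+c\mathbf 1)=\bar H_{\rm inter}Q+c\mathbf 1$ is checked directly, everything else is identical to the intra-option case.
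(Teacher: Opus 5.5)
Your proposal is correct and matches the paper's proof, which simply repeats the intra-option drift argument of Lemma~\ref{le:T1_intra_new} with $\bar H_{\rm inter}$ in place of $H_{\rm intra}$ and Theorem~\ref{thm: J-inter} in place of Theorem~\ref{thm: J-intra}. Your explicit check that $\bar H_{\rm inter}^JQ^*-Q^*=J\bar r^*\mathbf 1$ supplies a fact the paper uses only implicitly. One small correction to your closing remark: the unnormalized $H_{\rm inter}$ also commutes with constant shifts, since $\sum_{s'}\hat p(s'|s,o)=1$. What actually fails without normalization is that the residual $H_{\rm inter}Q^*-Q^*=\bar r^*\hat\ell$ is not a constant vector.
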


\begin{proof}
The proof is identical to Lemma~\ref{le:T1_intra_new}, using the
$J$-step contraction of $\bar H_{\rm inter}$ from
Theorem~\ref{thm: J-inter}.
\end{proof}

\subsubsection{Bounding $T_{\rm greedy}$}

We first quantify the error introduced by selecting the intermediate
options using empirical finite-horizon continuation values. Recall that
$H_k$ denotes the normalized inter-option Bellman operator with
$\rho,\lambda$ replaced by $\widehat\rho_k,\widehat\lambda_k$, while
$\widehat H_k$ additionally replaces the option-transition kernel
$\widehat p$ by its empirical estimate $\widehat p_k$.

For a deterministic option-selection rule $d:\cS\to\cO$, define
\begin{align*}
(H_{k,d}Q)(s,o)
\triangleq{}&
\widehat\rho_k(s,o)
+
(1-\widehat\lambda_k(s,o))Q(s,o)+
\widehat\lambda_k(s,o)
\sum_{s'}
\widehat p(s'|s,o)Q(s',d(s')),
\end{align*}
and
\begin{align*}
(\widehat H_{k,d}Q)(s,o)
\triangleq{}&
\widehat\rho_k(s,o)
+
(1-\widehat\lambda_k(s,o))Q(s,o)+
\widehat\lambda_k(s,o)
\sum_{s'}
\widehat p_k(s'|s,o)Q(s',d(s')).
\end{align*}
The corresponding optimality operators are $H_k$ and $\widehat H_k$,
respectively.

For $m=1,\ldots,J-1$, let
\[
\widehat d_{k,m}(s)
\in
\arg\max_{o\in\cO}
\bigl(\widehat H_k^{\,m}Q_k\bigr)(s,o).
\]
Then
\[
\widetilde H_{k,\rm inter}^{\,J}Q_k
=
H_{k,\widehat d_{k,J-1}}
\cdots
H_{k,\widehat d_{k,1}}
H_kQ_k,
\]
whereas
\[
\widehat H_k^JQ_k
=
\widehat H_{k,\widehat d_{k,J-1}}
\cdots
\widehat H_{k,\widehat d_{k,1}}
\widehat H_kQ_k.
\]
Thus, the two compositions use exactly the same sequence of empirical
greedy rules and differ only through the option-transition kernel.

Define
\[
e_{p,k}
\triangleq
\max_{s,o}
\left\|
\widehat p_k(\cdot|s,o)-\widehat p(\cdot|s,o)
\right\|_1.
\]
The transition-kernel concentration argument (Lemma~\ref{lem:empirical-bellman-one-step-explicit}) gives
\[
\E[e_{p,k}^2]
\le
\frac{C_p}{k},
\qquad
C_p
\triangleq
2|\cS|+4\log(|\cS||\cO|)+4.
\]

\begin{lemma}[Population-rollout versus empirical-model error]
\label{lem:inter-tilde-vs-empirical}
For every $k\ge1$,
\[
\E\left[
\left\|
\widetilde H_{k,\mathrm{inter}}^{\,J}Q_k
-
\widehat H_k^JQ_k
\right\|_\infty^2
\right]
\le
\frac{
C_pJ^2(b_k+J)^2
}{k}.
\]
\end{lemma}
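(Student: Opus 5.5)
We mirror the proof of Lemma~\ref{lem:tilde-vs-empirical-intra}, using a hybrid (telescoping) argument over the $J$ layers of the two compositions. The key simplification is that $H_k$ and $\widehat H_k$ share the same $\widehat\rho_k$ and $\widehat\lambda_k$. Hence the reward and self-loop terms cancel exactly in every layer difference, and only the kernel error $e_{p,k}$ survives.

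\textbf{Step 1: non-expansiveness.} We first record that each fixed-rule operator $H_{k,d}$, $\widehat H_{k,d}$ and each optimality operator $H_k$, $\widehat H_k$ is non-expansive in $\|\cdot\|_\infty$. Since $0<\widehat\lambda_k\le1$ pathwise, each is a convex combination $(1-\widehat\lambda_k)Q(s,o)+\widehat\lambda_k\sum_{s'}p(s')(\cdot)$ plus a $Q$-independent term, and $|\max Q-\max Q'|\le\|Q-Q'\|_\infty$.

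\textbf{Step 2: telescoping.} For $r=0,\dots,J$ we define hybrids
\[
Z_r\triangleq H_{k,\widehat d_{k,J-1}}\cdots H_{k,\widehat d_{k,r}}\,\widehat H_{k,\widehat d_{k,r-1}}\cdots\widehat H_{k,\widehat d_{k,1}}\widehat H_kQ_k ,
\]
with $Z_0=\widetilde H^J_{k,\rm inter}Q_k$ and $Z_J=\widehat H_k^JQ_k$, and bound $\|Z_0-Z_J\|_\infty\le\sum_{r=0}^{J-1}\|Z_r-Z_{r+1}\|_\infty$. By Step 1, each summand is at most the one-layer discrepancy.
\begin{itemize}
\item For $r=0$, it is at most $\|(H_k-\widehat H_k)Q_k\|_\infty$.
\item For $r\ge1$, it is at most $\|(H_{k,\widehat d_{k,r}}-\widehat H_{k,\widehat d_{k,r}})V_r\|_\infty$, where $V_r$ is the inner empirical composition.
\end{itemize}
In each case the difference equals $\widehat\lambda_k(s,o)\sum_{s'}(\widehat p-\widehat p_k)(s'|s,o)\,g(s')$, with $g$ either $\max_{o'}V(s',o')$ or $V(s',d(s'))$. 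It is therefore bounded by $e_{p,k}\|V\|_\infty$, using $\widehat\lambda_k\le1$.

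\textbf{Step 3: sup-norm of the inner iterates.} We need $\|V_r\|_\infty\le b_k+J$. This follows from $|\widehat\rho_k|\le1$ (pathwise, shown earlier), which gives $\|\widehat H_kQ\|_\infty,\ \|\widehat H_{k,d}Q\|_\infty\le1+\|Q\|_\infty$. Combined with $\|Q_k\|_\infty\le b_k$ from Lemma~\ref{lem:growth-inter-sync}, this gives $\|V_r\|_\infty\le b_k+r\le b_k+J$; the $r=0$ layer has $\|Q_k\|_\infty\le b_k$.

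\textbf{Step 4: summing and taking expectations.} Summing the $J$ layers gives $\|Z_0-Z_J\|_\infty\le J(b_k+J)e_{p,k}$ pathwise. Squaring, taking expectations, and using that $b_k$ is deterministic together with $\E[e_{p,k}^2]\le C_p/k$ yields the claim.

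\textbf{Main obstacle.} The only delicate point is confirming that the kernel concentration bound applies to the inter-option empirical kernel. The first-step successors $\bar S_t^{(1)}(x)$ are i.i.d.\ across $t$ for each fixed $x$, with law $\widehat p(\cdot|x)$. This holds because the first option executed from $X_k^{(0)}=x$ does not depend on $Q_k$ or on the greedy rules. So the Hoeffding/McDiarmid argument of Lemma~\ref{lem:empirical-bellman-one-step-explicit} transfers verbatim.

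We must also check that $\widehat d_{k,m}$ and $\widehat\lambda_k,\widehat\rho_k$ being random (and $\cF_k$-measurable) causes no issue. It does not, because the bound in Step 2 is pathwise and uniform over decision rules.
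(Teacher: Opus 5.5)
Your proposal is correct and follows the paper's proof essentially step for step. The paper also uses the hybrid telescoping $Z_r$ over the $J$ layers with sup-norm non-expansiveness, the observation that $H_{k,d}$ and $\widehat H_{k,d}$ differ only through the kernel (giving $e_{p,k}\|V\|_\infty$ per layer), the bound $\|V_r\|_\infty\le b_k+J$ from $0\le\widehat\rho_k\le 1$, and $\E[e_{p,k}^2]\le C_p/k$. Your check that the first-step successors $\bar S_t^{(1)}(x)$ are i.i.d.\ with law $\hat p(\cdot\mid x)$ simply makes explicit what the paper invokes when it applies the concentration bound of Lemma~\ref{lem:empirical-bellman-one-step-explicit} to the inter-option setting.
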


\begin{proof}
For $m=1,\ldots,J-1$, define the empirical greedy rules
\[
\widehat d_{k,m}(s)
\in
\arg\max_{o\in\cO}
\bigl(\widehat H_k^mQ_k\bigr)(s,o).
\]
By construction,
\[
\widetilde H_{k,\mathrm{inter}}^{\,J}Q_k
=
H_{k,\widehat d_{k,J-1}}
\cdots
H_{k,\widehat d_{k,1}}
H_kQ_k,
\]
whereas
\[
\widehat H_k^JQ_k
=
\widehat H_{k,\widehat d_{k,J-1}}
\cdots
\widehat H_{k,\widehat d_{k,1}}
\widehat H_kQ_k.
\]
Thus, the two compositions use exactly the same empirical greedy rules and
differ only through the option-transition kernel.

We first verify sup-norm non-expansiveness. For any deterministic
decision rule $d$ and any $Q,Q'$,
\[
|(H_{k,d}Q-H_{k,d}Q')(s,o)|
\le
(1-\widehat\lambda_k(s,o))|Q(s,o)-Q'(s,o)|
+
\widehat\lambda_k(s,o)
\sum_{s'}
\widehat p(s'|s,o)
|Q(s',d(s'))-Q'(s',d(s'))|.
\]
Since $0\le\widehat\lambda_k(s,o)\le1$ and
$\widehat p(\cdot|s,o)$ is a probability distribution,
\[
\|H_{k,d}Q-H_{k,d}Q'\|_\infty
\le
\|Q-Q'\|_\infty.
\]
Replacing $\widehat p$ by $\widehat p_k$ gives
\[
\|\widehat H_{k,d}Q-\widehat H_{k,d}Q'\|_\infty
\le
\|Q-Q'\|_\infty.
\]
Likewise, using
\[
\left|
\max_{o'}Q(s',o')-\max_{o'}Q'(s',o')
\right|
\le
\|Q-Q'\|_\infty,
\]
we obtain
\[
\|H_kQ-H_kQ'\|_\infty
\le
\|Q-Q'\|_\infty,
\qquad
\|\widehat H_kQ-\widehat H_kQ'\|_\infty
\le
\|Q-Q'\|_\infty.
\]

We now replace the empirical-model operators by their population-model
counterparts one stage at a time. Define
\[
Z_0
\triangleq
H_{k,\widehat d_{k,J-1}}
\cdots
H_{k,\widehat d_{k,1}}
H_kQ_k,
\]
and, for $1\le r\le J-1$,
\[
Z_r
\triangleq
H_{k,\widehat d_{k,J-1}}
\cdots
H_{k,\widehat d_{k,r}}
\widehat H_{k,\widehat d_{k,r-1}}
\cdots
\widehat H_{k,\widehat d_{k,1}}
\widehat H_kQ_k,
\]
with
\[
Z_J
\triangleq
\widehat H_{k,\widehat d_{k,J-1}}
\cdots
\widehat H_{k,\widehat d_{k,1}}
\widehat H_kQ_k.
\]
Hence,
\[
Z_0
=
\widetilde H_{k,\mathrm{inter}}^{\,J}Q_k,
\qquad
Z_J
=
\widehat H_k^JQ_k,
\]
and therefore
\[
\left\|
\widetilde H_{k,\mathrm{inter}}^{\,J}Q_k
-
\widehat H_k^JQ_k
\right\|_\infty
\le
\sum_{r=0}^{J-1}
\|Z_r-Z_{r+1}\|_\infty.
\]

For the first hybrid difference,
\[
Z_0
=
H_{k,\widehat d_{k,J-1}}
\cdots
H_{k,\widehat d_{k,1}}
H_kQ_k,
\]
whereas
\[
Z_1
=
H_{k,\widehat d_{k,J-1}}
\cdots
H_{k,\widehat d_{k,1}}
\widehat H_kQ_k.
\]
By non-expansiveness of the outer population fixed-policy operators,
\[
\|Z_0-Z_1\|_\infty
\le
\|(\widehat H_k-H_k)Q_k\|_\infty.
\]

For $1\le r\le J-1$, define
\[
V_r
\triangleq
\widehat H_{k,\widehat d_{k,r-1}}
\cdots
\widehat H_{k,\widehat d_{k,1}}
\widehat H_kQ_k.
\]
Then
\[
Z_r
=
H_{k,\widehat d_{k,J-1}}
\cdots
H_{k,\widehat d_{k,r+1}}
H_{k,\widehat d_{k,r}}V_r,
\]
while
\[
Z_{r+1}
=
H_{k,\widehat d_{k,J-1}}
\cdots
H_{k,\widehat d_{k,r+1}}
\widehat H_{k,\widehat d_{k,r}}V_r.
\]
Hence, again by non-expansiveness,
\[
\|Z_r-Z_{r+1}\|_\infty
\le
\left\|
\left(
\widehat H_{k,\widehat d_{k,r}}
-
H_{k,\widehat d_{k,r}}
\right)V_r
\right\|_\infty.
\]

Let
\[
e_{p,k}
\triangleq
\max_{s,o}
\left\|
\widehat p_k(\cdot|s,o)-\widehat p(\cdot|s,o)
\right\|_1.
\]
Since $H_{k,d}$ and $\widehat H_{k,d}$ differ only through the
option-transition kernel, for every deterministic rule $d$ and every $V$,
\[
|((\widehat H_{k,d}-H_{k,d})V)(s,o)|
=
\widehat\lambda_k(s,o)
\left|
\sum_{s'}
\bigl(
\widehat p_k(s'|s,o)-\widehat p(s'|s,o)
\bigr)V(s',d(s'))
\right|
\le
e_{p,k}\|V\|_\infty.
\]
Therefore,
\[
\|(\widehat H_{k,d}-H_{k,d})V\|_\infty
\le
e_{p,k}\|V\|_\infty.
\]
The same argument for the optimality operators gives
\[
\|(\widehat H_k-H_k)V\|_\infty
\le
e_{p,k}\|V\|_\infty.
\]

It remains to control the intermediate iterates. Since
$0\le\widehat\rho_k\le1$,
\[
\|\widehat H_kV\|_\infty
\le
1+\|V\|_\infty,
\qquad
\|\widehat H_{k,d}V\|_\infty
\le
1+\|V\|_\infty.
\]
Using $\|Q_k\|_\infty\le b_k$ almost surely therefore gives
\[
\|V_r\|_\infty
\le
b_k+r
\le
b_k+J.
\]
Consequently,
\[
\|Z_0-Z_1\|_\infty
\le
e_{p,k}b_k
\le
e_{p,k}(b_k+J),
\]
and, for every $1\le r\le J-1$,
\[
\|Z_r-Z_{r+1}\|_\infty
\le
e_{p,k}(b_k+J).
\]
Thus,
\[
\left\|
\widetilde H_{k,\mathrm{inter}}^{\,J}Q_k
-
\widehat H_k^JQ_k
\right\|_\infty
\le
Je_{p,k}(b_k+J).
\]
Squaring and taking expectations gives
\[
\E\left[
\left\|
\widetilde H_{k,\mathrm{inter}}^{\,J}Q_k
-
\widehat H_k^JQ_k
\right\|_\infty^2
\right]
\le
J^2(b_k+J)^2\E[e_{p,k}^2].
\]
Finally, the empirical transition-kernel concentration bound from Lemma~\ref{lem:empirical-bellman-one-step-explicit} applied to the inter-option setting gives
\[
\E[e_{p,k}^2]
\le
\frac{C_p}{k},
\]
and hence
\[
\E\left[
\left\|
\widetilde H_{k,\mathrm{inter}}^{\,J}Q_k
-
\widehat H_k^JQ_k
\right\|_\infty^2
\right]
\le
\frac{
C_pJ^2(b_k+J)^2
}{k}.
\]
\end{proof}

\begin{lemma}[Empirical versus population $J$-step Bellman error]
\label{lem:inter-empirical-vs-population}
For every $k\ge1$,
\[
\E\left[
\left\|
\widehat H_k^JQ_k-H_k^JQ_k
\right\|_\infty^2
\right]
\le
\frac{
C_pJ^2(b_k+J)^2
}{k}.
\]
\end{lemma}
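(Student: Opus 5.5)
I will prove this the same way as Lemma~\ref{lem:empirical-vs-population-intra}: a telescoping hybrid argument that swaps one Bellman application at a time, followed by the one-step transition-kernel deviation bound. Define, for $r=0,\ldots,J$,
\[
Z_r\triangleq \widehat H_k^{\,J-r}H_k^{\,r}Q_k ,
\]
so that $Z_0=\widehat H_k^JQ_k$ and $Z_J=H_k^JQ_k$. The triangle inequality then gives $\|\widehat H_k^JQ_k-H_k^JQ_k\|_\infty\le\sum_{r=0}^{J-1}\|Z_r-Z_{r+1}\|_\infty$.

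For each consecutive pair, write $Z_r=\widehat H_k^{\,J-r-1}\widehat H_k(H_k^rQ_k)$ and $Z_{r+1}=\widehat H_k^{\,J-r-1}H_k(H_k^rQ_k)$. The proof of Lemma~\ref{lem:inter-tilde-vs-empirical} already shows that $\widehat H_k$ is sup-norm non-expansive; this uses $0\le\widehat\lambda_k\le 1$ pathwise and the fact that the maximum is 1-Lipschitz. Stripping the outer $\widehat H_k^{\,J-r-1}$ therefore gives
\[
\|Z_r-Z_{r+1}\|_\infty\le\|(\widehat H_k-H_k)H_k^rQ_k\|_\infty .
\]
The operators $\widehat H_k$ and $H_k$ share the same $\widehat\rho_k$ and $\widehat\lambda_k$ and differ only in the transition kernel. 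Hence, exactly as in that proof,
\[
\|(\widehat H_k-H_k)V\|_\infty\le \widehat\lambda_k\, e_{p,k}\,\|V\|_\infty\le e_{p,k}\|V\|_\infty .
\]

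Next I bound the intermediate iterates. Since $0\le\widehat\rho_k\le1$, we have $\|H_kV\|_\infty\le 1+\|V\|_\infty$. Combined with $\|Q_k\|_\infty\le b_k$ from Lemma~\ref{lem:growth-inter-sync}, this gives $\|H_k^rQ_k\|_\infty\le b_k+r\le b_k+J$. Summing the $J$ terms yields the pathwise bound
\[
\|\widehat H_k^JQ_k-H_k^JQ_k\|_\infty\le J(b_k+J)e_{p,k}.
\]
Because $b_k$ is deterministic, squaring and taking expectations gives $J^2(b_k+J)^2\E[e_{p,k}^2]\le C_pJ^2(b_k+J)^2/k$.

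The only delicate point is the concentration bound $\E[e_{p,k}^2]\le C_p/k$. The random quantities $\widehat\lambda_k$, $\widehat\rho_k$ and $Q_k$ are all dominated pathwise by deterministic constants, so no coupling with the sample-driven $e_{p,k}$ arises. What remains is to check that the samples $\bar S_t^{(1)}(s,o)$ feeding $\widehat p_k$ are i.i.d. draws from $\widehat p(\cdot|s,o)$ across $t$. They are, because each iteration uses a fresh first-step option execution from the fixed coordinate. The mean-plus-McDiarmid argument of Lemma~\ref{lem:empirical-bellman-one-step-explicit} then applies verbatim, with $\widehat p$ in place of $p$.
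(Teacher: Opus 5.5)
Your proposal is correct and follows the paper's proof essentially line for line. It uses the same hybrid sequence $Z_r=\widehat H_k^{\,J-r}H_k^{\,r}Q_k$, sup-norm non-expansiveness of $\widehat H_k$, the pathwise bound $\|H_k^rQ_k\|_\infty\le b_k+J$, and the kernel concentration $\E[e_{p,k}^2]\le C_p/k$; your additional check that the first-step samples $\bar S_t^{(1)}(s,o)$ are i.i.d.\ draws from $\widehat p(\cdot\mid s,o)$ supplies a justification the paper leaves implicit.
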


\begin{proof}
Define
\[
Z_r
\triangleq
\widehat H_k^{\,J-r}H_k^rQ_k,
\qquad
r=0,\ldots,J.
\]
Then $Z_0=\widehat H_k^JQ_k$ and $Z_J=H_k^JQ_k$. Hence
\[
\left\|
\widehat H_k^JQ_k-H_k^JQ_k
\right\|_\infty
\le
\sum_{r=0}^{J-1}
\|Z_r-Z_{r+1}\|_\infty.
\]
By sup-norm non-expansiveness of $\widehat H_k$,
\[
\|Z_r-Z_{r+1}\|_\infty
\le
\left\|
(\widehat H_k-H_k)H_k^rQ_k
\right\|_\infty.
\]
Since
\[
\|H_k^rQ_k\|_\infty
\le
b_k+r
\le
b_k+J,
\]
the one-step empirical model comparison gives, pathwise,
\[
\left\|
(\widehat H_k-H_k)H_k^rQ_k
\right\|_\infty
\le
e_{p,k}(b_k+J).
\]
Therefore
\[
\left\|
\widehat H_k^JQ_k-H_k^JQ_k
\right\|_\infty
\le
Je_{p,k}(b_k+J).
\]
Squaring and using $\E[e_{p,k}^2]\le C_p/k$ proves the result.
\end{proof}

\begin{lemma}[Finite-$J$ empirical-greedy error]
\label{lem:inter-finite-j-greedy}
For every $k\ge1$,
\[
\E\left[
\sp\left(
\widetilde H_{k,\rm inter}^{\,J}Q_k
-
H_k^JQ_k
\right)^2
\right]
\le
\frac{
4C_pJ^2(b_k+J)^2
}{k}.
\]
In particular, since $b_k=\cO(J\log k)$,
\[
\E\left[
\sp\left(
\widetilde H_{k,\rm inter}^{\,J}Q_k
-
H_k^JQ_k
\right)^2
\right]
=
\widetilde{\cO}\left(
\frac{J^4}{k}
\right).
\]
\end{lemma}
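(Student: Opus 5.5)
The plan mirrors the intra-option argument (Lemma~\ref{lem:empirical-greedy-J}). First insert the empirical $J$-step iterate $\widehat H_k^JQ_k$ and apply the triangle inequality:
\[
\sp\bigl(\widetilde H_{k,\rm inter}^{\,J}Q_k-H_k^JQ_k\bigr)
\le
\sp\bigl(\widetilde H_{k,\rm inter}^{\,J}Q_k-\widehat H_k^JQ_k\bigr)
+
\sp\bigl(\widehat H_k^JQ_k-H_k^JQ_k\bigr).
\]
Next use $\sp(\cdot)\le\|\cdot\|_\infty$ to pass to sup-norms. The span in the paper is half the range, so in fact $\sp(x)\le\|x\|_\infty$ holds with room to spare.

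Then square and use $(a+b)^2\le 2a^2+2b^2$. Taking expectations gives
\[
\E\bigl[\sp(\cdot)^2\bigr]
\le
2\E\bigl[\|\widetilde H_{k,\rm inter}^{\,J}Q_k-\widehat H_k^JQ_k\|_\infty^2\bigr]
+
2\E\bigl[\|\widehat H_k^JQ_k-H_k^JQ_k\|_\infty^2\bigr].
\]
Lemma~\ref{lem:inter-tilde-vs-empirical} bounds the first expectation by $C_pJ^2(b_k+J)^2/k$, and Lemma~\ref{lem:inter-empirical-vs-population} gives the same bound for the second. Summing yields $4C_pJ^2(b_k+J)^2/k$, which is the stated constant.

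For the order statement, Lemma~\ref{lem:growth-inter-sync} gives $b_k=\|Q_0\|_\infty+\alpha J(1/h+\log((k+h)/h))$, so $b_k=\cO(J\log k)$ once $\|Q_0\|_\infty$ and $\alpha$ are treated as constants. Then $(b_k+J)^2=\widetilde\cO(J^2)$, and the whole bound is $\widetilde\cO(J^4/k)$, with $C_p$ contributing only $|\cS|$ and $\log(|\cS||\cO|)$ factors.

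There is no real obstacle here, since both ingredient lemmas are already established. The only points to check are that $\widehat H_k^JQ_k$ is a legitimate intermediate quantity and that both lemmas hold for the same $\cF_k$-measurable $Q_k$ and the same index $k\ge1$. Both conditions are satisfied.
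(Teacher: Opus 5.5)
Your proposal is correct and matches the paper's proof step for step: you insert $\widehat H_k^JQ_k$, bound the span by the sup-norm, use $(a+b)^2\le 2a^2+2b^2$, and apply Lemmas~\ref{lem:inter-tilde-vs-empirical} and~\ref{lem:inter-empirical-vs-population} to obtain $4C_pJ^2(b_k+J)^2/k$. The order statement then follows from $b_k=\cO(J\log k)$ exactly as you describe, treating $\|Q_0\|_\infty$ and $\alpha$ as constants.
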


\begin{proof}
Decompose
\[
\widetilde H_{k,\rm inter}^{\,J}Q_k-H_k^JQ_k
=
\left(
\widetilde H_{k,\rm inter}^{\,J}Q_k-\widehat H_k^JQ_k
\right)
+
\left(
\widehat H_k^JQ_k-H_k^JQ_k
\right).
\]
Using $\sp(x)\le\|x\|_\infty$ and
$(a+b)^2\le2a^2+2b^2$,
\begin{align*}
&
\E\left[
\sp\left(
\widetilde H_{k,\rm inter}^{\,J}Q_k-H_k^JQ_k
\right)^2
\right]\le
2
\E\left[
\left\|
\widetilde H_{k,\rm inter}^{\,J}Q_k-\widehat H_k^JQ_k
\right\|_\infty^2
\right]+
2
\E\left[
\left\|
\widehat H_k^JQ_k-H_k^JQ_k
\right\|_\infty^2
\right].
\end{align*}
Applying
Lemmas~\ref{lem:inter-tilde-vs-empirical}
and~\ref{lem:inter-empirical-vs-population}
gives
\[
\E\left[
\sp\left(
\widetilde H_{k,\rm inter}^{\,J}Q_k-H_k^JQ_k
\right)^2
\right]
\le
\frac{
4C_pJ^2(b_k+J)^2
}{k}.
\]
The final claim follows from $b_k=\cO(J\log k)$.
\end{proof}

\begin{proposition}[Control of the approximately greedy perturbation]
\label{prop:T-greedy-inter}
Recall
\[
T_{\rm greedy}
\triangleq
\E\left[
\left\langle
\nabla M(Q_k-Q^*),
\widetilde H_{k,\rm inter}^{\,J}Q_k-H_k^JQ_k
\right\rangle
\right].
\]
Then, for every $k\ge1$,
\[
T_{\rm greedy}
\le
\frac{\phi_1}{4}m_k
+
\frac{
8C_pJ^2(b_k+J)^2
}{
\phi_1\ell_m^2k
}.
\]
In particular,
\[
T_{\rm greedy}
\le
\frac{\phi_1}{4}m_k
+
\widetilde{\cO}\left(
\frac{J^4}{k}
\right).
\]
\end{proposition}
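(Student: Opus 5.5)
We follow the template of Proposition~\ref{prop:T-greedy}, now with the constant $\phi_1/4$ in place of $\phi_1/2$. The intended use of this bound is to leave room for $T_{\rm est}$, which will take another $\phi_1/4$.

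The first step bounds the inner product pathwise. Since $M(Q)=p_m(Q)^2/2$ by Lemma~\ref{le:Moreau}(2), we have $\nabla M(Q)=p_m(Q)\nabla p_m(Q)$. Write $E_k\triangleq \widetilde H_{k,\rm inter}^{\,J}Q_k-H_k^JQ_k$. By Lemma~\ref{le:Moreau}(4), $\nabla M$ is orthogonal to $e$, so we may replace $E_k$ by $E_k-ce$ for any $c$. Then Hölder's inequality with the dual norm, Lemma~\ref{le:Moreau}(6) and Lemma~\ref{le:Moreau}(3) give
\[
\langle\nabla M(Q_k-Q^*),E_k\rangle\le p_m(Q_k-Q^*)\,p_m(E_k)\le \frac{\sqrt{2M(Q_k-Q^*)}}{\ell_m}\,\sp(E_k).
\]

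The second step is a weighted AM--GM inequality, $xy\le \frac{\phi_1}{8}x^2+\frac{2}{\phi_1}y^2$, applied with $x=\sqrt{2M(Q_k-Q^*)}$ and $y=\sp(E_k)/\ell_m$. Taking expectations, this yields
\[
T_{\rm greedy}\le \frac{\phi_1}{4}m_k+\frac{2}{\phi_1\ell_m^2}\,\E[\sp(E_k)^2].
\]
Lemma~\ref{lem:inter-finite-j-greedy} bounds the last expectation by $4C_pJ^2(b_k+J)^2/k$. Substituting gives exactly the constant $8C_pJ^2(b_k+J)^2/(\phi_1\ell_m^2k)$ in the statement.

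The $\widetilde{\cO}(J^4/k)$ form then follows from Lemma~\ref{lem:growth-inter-sync}, since $b_k=\cO(J\log k)$ and hence $(b_k+J)^2J^2=\widetilde{\cO}(J^4)$.

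No step here is a genuine obstacle; the substantive work was already done in Lemma~\ref{lem:inter-finite-j-greedy}. The only places needing care are the constant bookkeeping in the AM--GM split, to obtain $\phi_1/4$ with factor $8$, and checking that $\nabla M(Q_k-Q^*)$ and $E_k$ are handled jointly under the expectation.
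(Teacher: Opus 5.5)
Your proposal is correct and matches the paper's proof step for step: the pathwise bound $\langle\nabla M(Q_k-Q^*),E_k\rangle\le \sqrt{2M(Q_k-Q^*)}\,\sp(E_k)/\ell_m$, then a weighted AM--GM split with parameter $\phi_1/4$, then Lemma~\ref{lem:inter-finite-j-greedy}. Your explicit centering via Lemma~\ref{le:Moreau}(4), which replaces $\|E_k\|_m$ by $p_m(E_k)$ before applying Lemma~\ref{le:Moreau}(3), is slightly more careful than the argument of Proposition~\ref{prop:T-greedy} that the paper invokes, since that argument passes directly from $\|E_k\|_m$ to $\sp(E_k)/\ell_m$.
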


\begin{proof}
By the Moreau-envelope inequality argument from Proposition~\ref{prop:T-greedy},
\[
\left\langle
\nabla M(Q_k-Q^*),z
\right\rangle
\le
\frac{\sqrt{2M_k}}{\ell_m}\sp(z).
\]
Applying the AM-GM inequality with parameter $\phi_1/4$ gives
\[
T_{\rm greedy}
\le
\frac{\phi_1}{4}m_k
+
\frac{2}{\phi_1\ell_m^2}
\E\left[
\sp\left(
\widetilde H_{k,\rm inter}^{\,J}Q_k-H_k^JQ_k
\right)^2
\right].
\]
Applying Lemma~\ref{lem:inter-finite-j-greedy} yields
\[
T_{\rm greedy}
\le
\frac{\phi_1}{4}m_k
+
\frac{
8C_pJ^2(b_k+J)^2
}{
\phi_1\ell_m^2k
}.
\]
Since $b_k=\cO(J\log k)$, the second term is
$\widetilde{\cO}(J^4/k)$.
\end{proof}
\subsubsection{Bounding $T_{\rm est}$}

We now control the additional error caused by estimating the reward rate
and inverse mean holding time.

Define
\[
e_{\rho,k}
\triangleq
\|\widehat\rho_k-\rho\|_\infty,
\qquad
e_{\lambda,k}
\triangleq
\|\widehat\lambda_k-\lambda\|_\infty.
\]

\begin{lemma}[Accuracy of the rolling inter-option estimates]
\label{lem:inter-rolling}
Under Assumption~\ref{assm:smdp}, for every $x\in\cS\times\cO$,
\[
\E|\widehat\rho_k(x)-\rho(x)|^2
\le
\frac{c^2}{k+1},
\qquad
\E|\widehat\lambda_k(x)-\lambda(x)|^2
\le
\frac{c^2}{k+1}.
\]
Consequently,
\[
\E[e_{\rho,k}^2]
\le
\frac{|\cS||\cO|c^2}{k+1},
\qquad
\E[e_{\lambda,k}^2]
\le
\frac{|\cS||\cO|c^2}{k+1}.
\]
\end{lemma}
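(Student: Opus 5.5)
The plan is to write each estimation error as a single fraction whose denominator is bounded below pathwise and whose numerator is a centred i.i.d.\ sum plus a deterministic pseudocount correction. Then take second moments and use independence of the first-step samples $\{(\widetilde R_t(x),L_t(x))\}_{t\ge0}$ across iterations. Fix $x$ and write $B_k=B_k(x)=1+\sum_{t<k}L_t(x)$. Two pathwise facts from the construction will be used throughout: $B_k\ge k+1$ (because $L_t\ge1$), and $0\le \widetilde R_t\le L_t$ (because primitive rewards lie in $[0,1]$). The latter gives $0\le\rho(x)\le1$ and $\hat r(x)\le\hat\ell(x)$. We also record $1\le\hat\ell(x)\le\sqrt{\E[L^2]}\le c$, so in particular $c\ge1$.

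\emph{Step 1: the $\lambda$-estimate.} We have
\[
\widehat\lambda_k-\lambda=\frac{(k+1)\hat\ell-B_k}{B_k\hat\ell}=\frac{\sum_{t<k}(\hat\ell-L_t)+(\hat\ell-1)}{B_k\hat\ell}.
\]
Using $B_k\ge k+1$ pathwise, the random denominator can be replaced by the deterministic $(k+1)\hat\ell$. This is the key move that avoids any ratio-of-random-variables delicacy. The summands $\hat\ell-L_t$ are i.i.d.\ with mean zero, so the cross terms vanish. Hence
\[
\E\bigl[\text{numerator}^2\bigr]=k\,\mathrm{Var}(L)+(\hat\ell-1)^2\le kc^2+\hat\ell^2\le (k+1)c^2 .
\]
Dividing by $(k+1)^2\hat\ell^2\ge(k+1)^2$ gives $\E|\widehat\lambda_k-\lambda|^2\le c^2/(k+1)$.

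\emph{Step 2: the $\rho$-estimate.} Similarly,
\[
\widehat\rho_k-\rho=\frac{A_k\hat\ell-\hat r B_k}{B_k\hat\ell}=\frac{\sum_{t<k}Y_t-\hat r}{B_k\hat\ell},\qquad Y_t\triangleq\hat\ell\,(\widetilde R_t-\rho L_t).
\]
The $Y_t$ are i.i.d.\ and mean zero, since $\E\widetilde R_t=\hat r=\rho\hat\ell$. Since $\widetilde R_t\in[0,L_t]$ and $\rho\in[0,1]$, we get $|\widetilde R_t-\rho L_t|\le L_t$, so $\E Y_t^2\le\hat\ell^2c^2$. Also $\hat r^2\le\hat\ell^2\le\hat\ell^2c^2$. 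Therefore the numerator has second moment at most $(k+1)\hat\ell^2c^2$. Dividing by the pathwise lower bound $B_k^2\hat\ell^2\ge(k+1)^2\hat\ell^2$ yields $\E|\widehat\rho_k-\rho|^2\le c^2/(k+1)$.

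\emph{Step 3: uniform bounds.} For the sup-norm statements, bound
\[
\max_x|\widehat\rho_k(x)-\rho(x)|^2\le\sum_x|\widehat\rho_k(x)-\rho(x)|^2,
\]
take expectations, and apply Steps 1--2 coordinatewise to obtain the factor $|\cS||\cO|$; the same applies to $\lambda$.

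I expect the main obstacle to be the random denominator $B_k$. The plan handles it by the pathwise bound $B_k\ge k+1$, which is exactly what the pseudocount buys. The only other care needed is in the $\rho$ numerator: recentring via $Y_t=\hat\ell(\widetilde R_t-\rho L_t)$ is what gives a mean-zero sum whose variance depends only on the second moment $c^2$, rather than on an almost-sure bound on holding times.
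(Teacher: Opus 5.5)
Your proposal is correct and follows essentially the same route as the paper: you write each error as a centred i.i.d.\ sum plus a pseudocount correction over $B_k(x)$, replace $B_k(x)$ by its pathwise lower bound $k+1$, compute the second moment, and then pass to the sup-norm via $\max_x\le\sum_x$. The differences are only cosmetic. You multiply numerator and denominator by $\hat\ell(x)$, so your summands are $\hat\ell-L_t$ and $\hat\ell(\widetilde R_t-\rho L_t)$ rather than the paper's $1-\lambda L_t$ and $\widetilde R_t-\rho L_t$. You also state explicitly the fact $c\ge\hat\ell\ge1$, which the paper uses tacitly in the step $(kc^2+1)/(k+1)^2\le c^2/(k+1)$.
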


\begin{proof}
Fix $x$. Writing
\[
B_k(x)
\triangleq
1+\sum_{t=0}^{k-1}L_t(x),
\]
we have $B_k(x)\ge k+1$. Since
$\rho(x)=\E[\widetilde R(x)]/\E[L(x)]$,
\[
\E[\widetilde R_t(x)-\rho(x)L_t(x)]=0
\]
and
\[
\widehat\rho_k(x)-\rho(x)
=
\frac{
\sum_{t=0}^{k-1}
(\widetilde R_t(x)-\rho(x)L_t(x))
-\rho(x)
}{
B_k(x)
}.
\]
The summands are independent and mean-zero. Since
$0\le\widetilde R_t(x)\le L_t(x)$ and $0\le\rho(x)\le1$,
\[
|\widetilde R_t(x)-\rho(x)L_t(x)|
\le
L_t(x).
\]
Therefore
\[
\E|\widehat\rho_k(x)-\rho(x)|^2
\le
\frac{
kc^2+1
}{
(k+1)^2
}
\le
\frac{c^2}{k+1}.
\]

Similarly, since $\lambda(x)=1/\E[L(x)]$,
\[
\E[1-\lambda(x)L_t(x)]=0
\]
and
\[
\widehat\lambda_k(x)-\lambda(x)
=
\frac{
\sum_{t=0}^{k-1}(1-\lambda(x)L_t(x))
+
1-\lambda(x)
}{
B_k(x)
}.
\]
Moreover,
\[
\E[(1-\lambda(x)L)^2|x]
=
\lambda(x)^2\text{Var}(L|x)
\le
\E[L^2|x]
\le
c^2.
\]
Hence
\[
\E|\widehat\lambda_k(x)-\lambda(x)|^2
\le
\frac{c^2}{k+1}.
\]
Finally,
\[
e_{\rho,k}^2
\le
\sum_x|\widehat\rho_k(x)-\rho(x)|^2,
\qquad
e_{\lambda,k}^2
\le
\sum_x|\widehat\lambda_k(x)-\lambda(x)|^2,
\]
which proves the maximal bounds.
\end{proof}


\begin{lemma}[Finite-$J$ parameter-estimation error]
\label{lem:inter-parameter-error}
For every $Q$,
\[
\sp(H_k^JQ-\bar H_{\rm inter}^JQ)
\le
Je_{\rho,k}
+
J\bigl(2\sp(Q)+J-1\bigr)e_{\lambda,k}.
\]
Consequently,
\[
\E\left[
\sp(H_k^JQ_k-\bar H_{\rm inter}^JQ_k)^2
\right]
\le
\mathcal E_k,
\]
where
\[
\mathcal E_k
\triangleq
\frac{
2|\cS||\cO|c^2J^2
}{
k+1
}
\left[
1+
(2b_k+J-1)^2
\right].
\]
\end{lemma}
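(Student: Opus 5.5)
The plan is to telescope between the two $J$-fold compositions, swapping one factor $H_k$ for $\bar H_{\rm inter}$ at a time, and to pay for each swap with a one-step parameter error.

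\textbf{Telescoping.} Write $H\triangleq\bar H_{\rm inter}$. For $r=0,\dots,J$ define the hybrid iterates $Z_r\triangleq H_k^{\,J-r}H^rQ$, so that $Z_0=H_k^JQ$ and $Z_J=H^JQ$. Since $\sp(\cdot)\le\|\cdot\|_\infty$,
\[
\sp(H_k^JQ-H^JQ)\le\sum_{r=0}^{J-1}\|Z_r-Z_{r+1}\|_\infty .
\]
Set $V_r\triangleq H^rQ$. Then $Z_r-Z_{r+1}=H_k^{J-r-1}(H_kV_r)-H_k^{J-r-1}(HV_r)$. As in the proof of Lemma~\ref{lem:inter-tilde-vs-empirical}, $H_k$ is sup-norm non-expansive, using $0<\widehat\lambda_k\le1$. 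Hence
\[
\|Z_r-Z_{r+1}\|_\infty\le\|(H_k-H)V_r\|_\infty .
\]

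\textbf{One-step difference.} Let $(\mathcal PV)(s,o)\triangleq\sum_{s'}\hat p(s'|s,o)\max_{o'}V(s',o')$. Then
\[
(H_k-H)V=(\widehat\rho_k-\rho)+(\widehat\lambda_k-\lambda)(\mathcal PV-V).
\]
Both $(\mathcal PV)(s,o)$ and $V(s,o)$ lie in $[\min V,\max V]$, so $|\mathcal PV-V|\le2\sp(V)$ coordinatewise. This gives
\[
\|(H_k-H)V_r\|_\infty\le e_{\rho,k}+2\sp(V_r)\,e_{\lambda,k}.
\]

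\textbf{Span growth of $H^rQ$.} We need $2\sp(H^rQ)\le2\sp(Q)+r$. Because $1-\lambda\ge0$, the operator $H$ is monotone and satisfies $H(Q+c\mathbf 1)=HQ+c\mathbf 1$. Apply it to the sandwich $\min Q\,\mathbf 1\le Q\le\max Q\,\mathbf 1$. This yields
\[
\max HQ-\min HQ\le(\max Q-\min Q)+(\max\rho-\min\rho)\le2\sp(Q)+1,
\]
using $0\le\rho\le1$. Iterating over $r$ applications of $H$ gives $2\sp(V_r)\le2\sp(Q)+r$. Summing over $r=0,\dots,J-1$,
\[
\sp(H_k^JQ-H^JQ)\le Je_{\rho,k}+e_{\lambda,k}\sum_{r=0}^{J-1}(2\sp(Q)+r)\le Je_{\rho,k}+J\bigl(2\sp(Q)+J-1\bigr)e_{\lambda,k},
\]
where the last step bounds each $r$ by $J-1$. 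This is the deterministic claim.

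\textbf{Mean-square bound.} Set $Q=Q_k$ and square. By $(a+b)^2\le2a^2+2b^2$,
\[
\sp(H_k^JQ_k-H^JQ_k)^2\le2J^2e_{\rho,k}^2+2J^2\bigl(2\sp(Q_k)+J-1\bigr)^2e_{\lambda,k}^2 .
\]
By Lemma~\ref{lem:growth-inter-sync}, $\sp(Q_k)\le\|Q_k\|_\infty\le b_k$ almost surely, and $b_k$ is deterministic. So the random factor $\sp(Q_k)$ can be replaced by $b_k$ before taking expectations, which avoids any dependence issue between $Q_k$ and $e_{\lambda,k}$. Lemma~\ref{lem:inter-rolling} then gives $\E[e_{\rho,k}^2],\E[e_{\lambda,k}^2]\le|\cS||\cO|c^2/(k+1)$, and the bound $\mathcal E_k$ follows.

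\textbf{Main obstacle.} The only delicate step is the span-growth estimate for $H^rQ$. A naive sup-norm bound would give a factor $\|Q\|_\infty$ rather than $\sp(Q)$. The monotonicity and constant-shift argument, which relies on $\lambda\le1$, is what yields the sharper $2\sp(Q)+r$.
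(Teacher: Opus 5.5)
Your proposal is correct and follows essentially the same route as the paper: the hybrid telescoping $Z_r=H_k^{J-r}\bar H_{\rm inter}^{\,r}Q$, the one-step bound $e_{\rho,k}+2\sp(V)e_{\lambda,k}$ via $\|\mathcal PV-V\|_\infty\le 2\sp(V)$, a span-growth bound for $\bar H_{\rm inter}^{\,r}Q$, and finally squaring with the pathwise bound $\sp(Q_k)\le b_k$ and Lemma~\ref{lem:inter-rolling}. The differences are minor. You control the increments through sup-norm non-expansiveness of $H_k$ instead of span non-expansiveness. Your monotonicity and constant-shift argument also proves the span-growth step, which the paper only asserts as $\sp(\bar H_{\rm inter}V)\le\sp(V)+1$, and in fact sharpens it to $2\sp(\bar H_{\rm inter}^{\,r}Q)\le 2\sp(Q)+r$.
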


\begin{proof}
The one-step operators satisfy
\[
\bar H_{\rm inter}Q
=
\rho+Q+\lambda(\mathcal PQ-Q),
\qquad
H_kQ
=
\widehat\rho_k+Q+\widehat\lambda_k(\mathcal PQ-Q).
\]
Hence
\[
H_kQ-\bar H_{\rm inter}Q
=
(\widehat\rho_k-\rho)
+
(\widehat\lambda_k-\lambda)(\mathcal PQ-Q).
\]
Since
\[
\|\mathcal PQ-Q\|_\infty
\le
2\sp(Q),
\]
we obtain
\[
\sp(H_kQ-\bar H_{\rm inter}Q)
\le
e_{\rho,k}
+
2\sp(Q)e_{\lambda,k}.
\]

We now compare the $J$-fold compositions by replacing one occurrence of
$\bar H_{\rm inter}$ by $H_k$ at a time. For $r=0,\ldots,J$, define
\[
Z_r
\triangleq
H_k^{J-r}\bar H_{\rm inter}^{\,r}Q.
\]
Then
\[
Z_0=H_k^JQ,
\qquad
Z_J=\bar H_{\rm inter}^JQ.
\]
Therefore, by the triangle inequality for the span seminorm,
\[
\sp(H_k^JQ-\bar H_{\rm inter}^JQ)
\le
\sum_{r=0}^{J-1}\sp(Z_r-Z_{r+1}).
\]

For each $r=0,\ldots,J-1$,
\[
Z_r
=
H_k^{J-r-1}H_k\bar H_{\rm inter}^{\,r}Q,
\]
whereas
\[
Z_{r+1}
=
H_k^{J-r-1}\bar H_{\rm inter}^{\,r+1}Q.
\]
Hence
\[
Z_r-Z_{r+1}
=
H_k^{J-r-1}
\bigl(
H_k\bar H_{\rm inter}^{\,r}Q
\bigr)
-
H_k^{J-r-1}
\bigl(
\bar H_{\rm inter}^{\,r+1}Q
\bigr).
\]
By the span non-expansiveness of $H_k$, established in
Lemma~\ref{lem:inter-tilde-vs-empirical},
\[
\sp(Z_r-Z_{r+1})
\le
\sp\left(
H_k\bar H_{\rm inter}^{\,r}Q
-
\bar H_{\rm inter}^{\,r+1}Q
\right).
\]
Since
\[
\bar H_{\rm inter}^{\,r+1}Q
=
\bar H_{\rm inter}
\bigl(
\bar H_{\rm inter}^{\,r}Q
\bigr),
\]
the one-step bound gives
\[
\sp(Z_r-Z_{r+1})
\le
e_{\rho,k}
+
2\sp(\bar H_{\rm inter}^{\,r}Q)e_{\lambda,k}.
\]
Summing over $r$ yields
\[
\sp(H_k^JQ-\bar H_{\rm inter}^JQ)
\le
Je_{\rho,k}
+
2e_{\lambda,k}
\sum_{r=0}^{J-1}
\sp(\bar H_{\rm inter}^{\,r}Q).
\]

Because rewards are bounded by one hence using $\widetilde R \le L \enspace a.s. \implies \rho = \E[{\widetilde R}]/\E[L] \le 1$,
\[
\sp(\bar H_{\rm inter}V)
\le
\sp(V)+1.
\]
Iterating this inequality gives
\[
\sp(\bar H_{\rm inter}^{\,r}Q)
\le
\sp(Q)+r.
\]
Therefore,
\[
\sum_{r=0}^{J-1}
\sp(\bar H_{\rm inter}^{\,r}Q)
\le
J\sp(Q)+\frac{J(J-1)}{2},
\]
and hence
\[
\sp(H_k^JQ-\bar H_{\rm inter}^JQ)
\le
Je_{\rho,k}
+
J\bigl(2\sp(Q)+J-1\bigr)e_{\lambda,k}.
\]

Setting $Q=Q_k$, using $\sp(Q_k)\le\|Q_k\|_\infty\le b_k$, and applying
$(a+b)^2\le2a^2+2b^2$ gives
\[
\sp(H_k^JQ_k-\bar H_{\rm inter}^JQ_k)^2
\le
2J^2e_{\rho,k}^2
+
2J^2(2b_k+J-1)^2e_{\lambda,k}^2.
\]
Taking expectations and using Lemma~\ref{lem:inter-rolling},
\[
\E[e_{\rho,k}^2]
\le
\frac{|\cS||\cO|c^2}{k+1},
\qquad
\E[e_{\lambda,k}^2]
\le
\frac{|\cS||\cO|c^2}{k+1},
\]
we obtain
\[
\E\left[
\sp(H_k^JQ_k-\bar H_{\rm inter}^JQ_k)^2
\right]
\le
\frac{
2|\cS||\cO|c^2J^2
}{
k+1
}
\left[
1+
(2b_k+J-1)^2
\right].
\]
\end{proof}




\begin{proposition}[Control of the parameter-estimation perturbation]
\label{prop:T-est-inter}
For every $k\ge1$,
\[
T_{\rm est}
\le
\frac{\phi_1}{4}m_k
+
\frac{2\mathcal E_k}{\phi_1\ell_m^2}.
\]
In particular,
\[
T_{\rm est}
\le
\frac{\phi_1}{4}m_k
+
\widetilde{\cO}\left(\frac1k\right).
\]
\end{proposition}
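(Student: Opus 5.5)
The argument is the same one used for $T_{\rm greedy}$ in Proposition~\ref{prop:T-greedy-inter}, with Lemma~\ref{lem:inter-parameter-error} supplying the error bound.

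\textbf{Step 1: pathwise inner-product bound.} Set $z_k\triangleq H_k^JQ_k-\bar H_{\rm inter}^JQ_k$. We use $\nabla M(Q)=p_m(Q)\nabla p_m(Q)$ and Hölder's inequality in the pair $(\|\cdot\|_{m,*},\|\cdot\|_m)$. Lemma~\ref{le:Moreau}(4) lets us subtract any constant vector from $z_k$. Optimizing that constant turns $\|z_k\|_m$ into $p_m(z_k)$. Lemma~\ref{le:Moreau}(6) gives $\|\nabla p_m\|_{m,*}\le1$, and Lemma~\ref{le:Moreau}(3) gives $p_m(z_k)\le\sp(z_k)/\ell_m$. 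Together with $p_m(Q_k-Q^*)=\sqrt{2M_k}$, this yields, pathwise,
\[
\langle\nabla M(Q_k-Q^*),z_k\rangle\le\frac{\sqrt{2M_k}}{\ell_m}\,\sp(z_k).
\]

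\textbf{Step 2: AM--GM.} We write $ab\le \frac{\epsilon}{2}a^2+\frac{1}{2\epsilon}b^2$ with $a=\sqrt{2M_k}$, $b=\sp(z_k)/\ell_m$ and $\epsilon=\phi_1/4$. This gives
\[
\langle\nabla M(Q_k-Q^*),z_k\rangle\le\frac{\phi_1}{4}M_k+\frac{2}{\phi_1\ell_m^2}\sp(z_k)^2 .
\]
Taking expectations, the first term becomes $\frac{\phi_1}{4}m_k$.

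\textbf{Step 3: invoke the parameter-estimation lemma.} Lemma~\ref{lem:inter-parameter-error} gives $\E[\sp(z_k)^2]\le\mathcal E_k$. Substituting this yields the claimed bound $T_{\rm est}\le\frac{\phi_1}{4}m_k+\frac{2\mathcal E_k}{\phi_1\ell_m^2}$.

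\textbf{Step 4: order of the remainder.} By Lemma~\ref{lem:growth-inter-sync}, $b_k=\cO(\log k)$. Hence
\[
\mathcal E_k=\frac{2|\cS||\cO|c^2J^2}{k+1}\bigl[1+(2b_k+J-1)^2\bigr]=\widetilde\cO(1/k),
\]
since $\ell_m\ge1$ and $\phi_1\ge(1-\gamma)/2$ are constants.

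\textbf{Expected obstacle: measurability in Step 1.} $\widehat\rho_k,\widehat\lambda_k$ and $Q_k$ are all $\cF_k$-measurable, so $z_k$ is predictable and the inequality holds pathwise with no conditioning argument needed. The one point to check is that Lemma~\ref{lem:inter-parameter-error} is applied at $Q=Q_k$ using the almost-sure bound $\sp(Q_k)\le b_k$. That bound is already built into $\mathcal E_k$, so this step should go through.
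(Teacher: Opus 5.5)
Your proposal is correct and follows the paper's proof: the pathwise Moreau pairing bound $\langle\nabla M(Q_k-Q^*),z_k\rangle\le\sqrt{2M_k}\,\sp(z_k)/\ell_m$, then AM--GM with parameter $\phi_1/4$, then Lemma~\ref{lem:inter-parameter-error}. Your Step~1 is slightly more careful than the paper's version, because you make explicit the use of Lemma~\ref{le:Moreau}(4) to pass from $\|z_k\|_m$ to $p_m(z_k)$, a step the paper leaves implicit.
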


\begin{proof}
The Moreau pairing inequality gives
\[
T_{\rm est}
\le
\E\left[
\frac{\sqrt{2M_k}}{\ell_m}
\sp(H_k^JQ_k-\bar H_{\rm inter}^JQ_k)
\right].
\]
Applying the AM-GM inequality with parameter $\phi_1/4$ yields
\[
T_{\rm est}
\le
\frac{\phi_1}{4}m_k
+
\frac2{\phi_1\ell_m^2}
\E\left[
\sp(H_k^JQ_k-\bar H_{\rm inter}^JQ_k)^2
\right].
\]
Lemma~\ref{lem:inter-parameter-error} proves the result.
\end{proof}

\subsubsection{Bounding $T_{\rm noise}$ and $T_{\rm quad}$}

\begin{proposition}
\label{prop:T-noise-inter}
For every $k\ge1$,
\[
T_{\rm noise}=0.
\]
\end{proposition}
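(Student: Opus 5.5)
The plan is to mirror Proposition~\ref{prop:T_noise_intra} and argue by conditioning on $\cF_k$. First I would check the measurability claims. The iterate $Q_k$ is a deterministic function of $Q_0$ and the rollouts from iterations $0,\ldots,k-1$, so it is $\cF_k$-measurable. Since $Q^*$ is deterministic, $\nabla M(Q_k-Q^*)$ is also $\cF_k$-measurable. The same holds for the rolling statistics $A_k,B_k,C_k$ and hence for $\widehat\rho_k,\widehat\lambda_k$, for the empirical kernel $\widehat p_k$, and therefore for the empirical operator $\widehat H_{k,\rm inter}$ and the greedy rules $\widehat d_{k,m}$. Consequently $\widetilde H_{k,\rm inter}^{\,J}Q_k$ is $\cF_k$-measurable.

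Second, I would verify that $\E[\widehat T_{k,\rm inter}^J(Q_k)\mid\cF_k]=\widetilde H_{k,\rm inter}^{\,J}Q_k$. This is the identity asserted in the synchronous SA recursions, and it is the one step that deserves care. Conditional on $\cF_k$, the fresh rollout draws option executions from the population law $\hat p(\cdot\mid X_k^{(i)})$ and coin flips $D_k^{(i+1)}\sim\mathrm{Bernoulli}(\widehat\lambda_k(X_k^{(i)}))$, and it selects options via the $\cF_k$-measurable rules $\widehat d_{k,J-i-1}$. Working backward through the stages, the tower property shows that each stage produces one application of $H_{k,\widehat d_{k,m}}$:
\begin{itemize}
\item the $\widehat\rho_k(X_k^{(i)})$ summand supplies the reward term;
\item the event $D=0$ supplies the self-loop weight $1-\widehat\lambda_k$;
\item the event $D=1$ followed by the greedy transition supplies $\widehat\lambda_k\sum_{s'}\hat p(s'\mid\cdot)(\cdot)(s',\widehat d(s'))$.
\end{itemize}
At the final stage the $\max_{o'}Q_k$ term produces $H_k$. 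Composing these stages gives $\widetilde H_{k,\rm inter}^{\,J}Q_k$. Every quantity in the target is bounded given $\cF_k$: $|\widehat\rho_k|\le1$, and $Q_k$ is bounded by $b_k$ from Lemma~\ref{lem:growth-inter-sync}. The target never involves $L_k^{(i)}$ or $\widetilde R_k^{(i)}$ directly, so the conditional expectation exists and the heavy-tailed holding times create no integrability issue.

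With these two facts, $\E[\omega_k\mid\cF_k]=0$. Pulling out the measurable factor gives
\[
T_{\rm noise}=\E\big[\langle\nabla M(Q_k-Q^*),\E[\omega_k\mid\cF_k]\rangle\big]=0.
\]
This requires $\E|\langle\nabla M(Q_k-Q^*),\omega_k\rangle|<\infty$. That holds because $\nabla M$ is bounded on bounded sets by the smoothness in Lemma~\ref{le:Moreau}, and $\|\omega_k\|_\infty\le 2(J+b_k)$ pathwise. The only real obstacle is making the backward-induction identification of the conditional mean precise.
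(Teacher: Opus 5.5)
Your proof is correct and follows the same route as the paper: $\nabla M(Q_k-Q^*)$ is $\cF_k$-measurable, and $\E[\omega_k\mid\cF_k]=0$ because the conditional mean of the fresh rollout is $\widetilde H_{k,\rm inter}^{\,J}Q_k$, so conditioning on $\cF_k$ makes the term vanish. The paper simply asserts that identity ``by construction,'' whereas you also check it stage by stage (ending with the $H_k$ stage) and verify integrability using $\|Q_k\|_\infty\le b_k$; this is a useful addition but does not change the argument.
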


\begin{proof}
By construction,
$\widetilde H_{k,\rm inter}^{\,J}(Q_k)
=
\E[\widehat T_{\rm k, inter}^{J}(Q_k)|\mathcal F_k]$.
Hence $\E[\omega_k|\mathcal F_k]=0$.
Since $\nabla M(Q_k-Q^*)$ is $\mathcal F_k$-measurable,
\[
T_{\rm noise}
=
\E\left[
\left\langle
\nabla M(Q_k-Q^*),
\E[\omega_k|\mathcal F_k]
\right\rangle
\right]
=
0.
\]
\end{proof}

\begin{proposition}[Control of the quadratic term]
\label{prop:T-quad-inter}
For every $k\ge1$,
\[
T_{\rm quad}
\le
16u_m^2m_k
+
2\left(
J+2\sp(Q^*)
\right)^2.
\]
Consequently,
\[
\frac{L\alpha_k^2}{2}T_{\rm quad}
\le
8Lu_m^2\alpha_k^2m_k
+
L\alpha_k^2
\left(
J+2\sp(Q^*)
\right)^2.
\]
In particular, if
\[
\alpha_k
\le
\frac{\phi_1}{16Lu_m^2},
\]
then
\[
\frac{L\alpha_k^2}{2}T_{\rm quad}
\le
\frac{\phi_1}{2}\alpha_km_k
+
L\alpha_k^2
\left(
J+2\sp(Q^*)
\right)^2.
\]
\end{proposition}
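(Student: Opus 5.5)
I will mirror the proof of Proposition~\ref{prop:T-quad-intra}, since the only change is the form of the realized target $\widehat T_{\rm k, inter}^{J}(Q_k)$. The first step is a pathwise sandwich for that target. By construction each of the $J$ summands $\widehat\rho_k(X_k^{(i)})$ lies in $[0,1]$; this uses $0\le\widetilde R\le L$, so $0\le\widehat\rho_k\le 1$ pathwise. The terminal contribution is either $Q_k(X_k^{(J-1)})$ or $\max_{o'}Q_k(\bar S_k^{(J)},o')$, and in both cases it lies in $[\min_x Q_k(x),\max_x Q_k(x)]$. Hence, for every initial coordinate $x$,
\[
\min_{x'}Q_k(x')\le \widehat T_{\rm k, inter}^{J}(Q_k)(x)\le J+\max_{x'}Q_k(x'),
\]
which is exactly the sandwich already used in Lemma~\ref{lem:growth-inter-sync}.

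From the sandwich I will subtract $Q_k(x)$ coordinatewise. This gives
\[
-2\sp(Q_k)\le \widehat T_{\rm k, inter}^{J}(Q_k)(x)-Q_k(x)\le J+2\sp(Q_k).
\]
Therefore $\sp(\widehat T_{\rm k, inter}^{J}(Q_k)-Q_k)\le (J+4\sp(Q_k))/2\le J+2\sp(Q_k)$. Next I will use the triangle inequality $\sp(Q_k)\le \sp(Q_k-Q^*)+\sp(Q^*)$ and the bound $(a+b)^2\le 2a^2+2b^2$, splitting the expression as $2\sp(Q_k-Q^*)$ plus $(J+2\sp(Q^*))$. This yields
\[
T_{\rm quad}\le 8\E[\sp(Q_k-Q^*)^2]+2(J+2\sp(Q^*))^2.
\]
Lemma~\ref{le:Moreau}(2)--(3) gives $\sp(Q)^2\le u_m^2p_m(Q)^2=2u_m^2M(Q)$, so $\E[\sp(Q_k-Q^*)^2]\le 2u_m^2m_k$. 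That produces the first claimed bound, $16u_m^2m_k+2(J+2\sp(Q^*))^2$.

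The remaining two statements are algebra. Multiplying by $L\alpha_k^2/2$ gives $8Lu_m^2\alpha_k^2m_k+L\alpha_k^2(J+2\sp(Q^*))^2$. Under the condition $\alpha_k\le\phi_1/(16Lu_m^2)$ we have $8Lu_m^2\alpha_k^2\le \phi_1\alpha_k/2$, which gives the absorbed form.

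The only point needing care is the pathwise sandwich in the first step. One must check that nothing exceeding $1$ per step enters the target. The raw option rewards $\widetilde R_k^{(i)}$ and holding times $L_k^{(i)}$ can be unbounded, but they do not appear directly in the target: only the normalized estimates $\widehat\rho_k$ do, and these are bounded pathwise by the ratio construction with the pseudocount. One must also check that the self-loop branch ($D_k^{(J)}=0$) returns a genuine coordinate of $Q_k$. Once that is verified, the bound holds almost surely, without using the second-moment assumption on holding times.
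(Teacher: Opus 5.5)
Your proposal is correct and follows the paper's proof step for step. The steps are the same: the pathwise sandwich $\min_x Q_k(x)\le \widehat T_{\rm k, inter}^{J}(Q_k)(x)\le J+\max_x Q_k(x)$ from $0\le\widehat\rho_k\le 1$; then $\sp(\widehat T_{\rm k, inter}^{J}(Q_k)-Q_k)\le J+2\sp(Q_k)$; then the triangle inequality with $(a+b)^2\le 2a^2+2b^2$; then $\E[\sp(Q_k-Q^*)^2]\le 2u_m^2 m_k$; and finally the same absorption algebra. You are more explicit than the paper about why the target is bounded pathwise (via the pseudocount and the self-loop branch), which the paper takes for granted.
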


\begin{proof}
Since $0\le\widehat\rho_k(x)\le1$, every realized inter-option target
satisfies
\[
\min_xQ_k(x)
\le
\widehat T_{\rm k, inter}^{J}(Q_k)(x)
\le
J+\max_xQ_k(x).
\]
Hence
\[
\sp\left(
\widehat T_{\rm k, inter}^{J}(Q_k)-Q_k
\right)
\le
J+2\sp(Q_k).
\]
Using
\[
\sp(Q_k)
\le
\sp(Q_k-Q^*)+\sp(Q^*)
\]
and $(a+b)^2\le2a^2+2b^2$,
\[
T_{\rm quad}
\le
8\E[\sp(Q_k-Q^*)^2]
+
2(J+2\sp(Q^*))^2.
\]
Since
\[
\E[\sp(Q_k-Q^*)^2]
\le
2u_m^2m_k,
\]
the first claim follows. The remaining statements are immediate.
\end{proof}

\subsubsection{Unrolling the Lyapunov recursion}

We now use exactly the same Moreau-envelope choice as in the
intra-option analysis. Let
\[
q\triangleq2\log (|\cS||\cO|),
\qquad
\theta
\triangleq
\left(
\frac{1+\gamma}{2\gamma}
\right)^2-1.
\]
Then
\[
\ell_m=\sqrt{1+\theta/e},
\qquad
u_m=\sqrt{1+\theta},
\qquad
L=\frac{e(q-1)}{\theta},
\]
and
\[
\frac{u_m^2}{\ell_m^2}\le e,
\qquad
\phi_1
=
1-\gamma\frac{u_m}{\ell_m}
\ge
\frac{1-\gamma}{2},
\qquad
Lu_m^2
\le
\frac{4e\log (|\cS||\cO|)}{1-\gamma}.
\]

\begin{proposition}[One-step Lyapunov recursion]
\label{prop:inter-one-step-final}
Suppose
\[
\alpha_k
\le
\frac{\phi_1}{16Lu_m^2}.
\]
Then, for every $k\ge1$,
\begin{align*}
m_{k+1}
\le{}&
(1-\phi_1\alpha_k)m_k
+
L\alpha_k^2
(J+2\sp(Q^*))^2
\\
&+
\frac{
2\alpha_kC_pJ^2(1+(b_k+J))^2
}{
\phi_1\ell_m^2k
}
+
\frac{
2\alpha_k\mathcal E_k
}{
\phi_1\ell_m^2
}.
\end{align*}
\end{proposition}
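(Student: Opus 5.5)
The plan is to assemble the recursion exactly as in Proposition~\ref{prop:intra-one-step-final}, adding the extra perturbation $T_{\rm est}$. I start from the Moreau smoothness inequality of Lemma~\ref{le:Moreau}(1) applied to $Q_{k+1}-Q^*=(Q_k-Q^*)+\alpha_k(\widehat T_{\rm k, inter}^J(Q_k)-Q_k)$, and take expectations. Inside the first-order inner product I write
\[
\widehat T_{\rm k, inter}^J(Q_k)-Q_k=(\bar H_{\rm inter}^JQ_k-Q_k)+(\widetilde H_{k,\rm inter}^JQ_k-H_k^JQ_k)+(H_k^JQ_k-\bar H_{\rm inter}^JQ_k)+\omega_k .
\]
This gives $m_{k+1}\le m_k+\alpha_k(T_{\rm drift}+T_{\rm greedy}+T_{\rm est}+T_{\rm noise})+\tfrac{L\alpha_k^2}{2}T_{\rm quad}$, as already displayed before Lemma~\ref{lem:T-drift-inter}.

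Next I substitute the bounds already established, one term at a time.
\begin{itemize}
\item Lemma~\ref{lem:T-drift-inter} gives $T_{\rm drift}\le -2\phi_1 m_k$. This uses Theorem~\ref{thm: J-inter} and the fact that $\bar H_{\rm inter}^JQ^*-Q^*$ is a constant vector, so it lies in the kernel of $\nabla M$ pairing by Lemma~\ref{le:Moreau}(4).
\item Proposition~\ref{prop:T-greedy-inter} gives $T_{\rm greedy}\le \tfrac{\phi_1}{4}m_k+\tfrac{8C_pJ^2(b_k+J)^2}{\phi_1\ell_m^2 k}$.
\item Proposition~\ref{prop:T-est-inter} gives $T_{\rm est}\le \tfrac{\phi_1}{4}m_k+\tfrac{2\mathcal E_k}{\phi_1\ell_m^2}$.
\item Proposition~\ref{prop:T-noise-inter} gives $T_{\rm noise}=0$.
\item Under the stated stepsize condition, Proposition~\ref{prop:T-quad-inter} gives $\tfrac{L\alpha_k^2}{2}T_{\rm quad}\le \tfrac{\phi_1}{2}\alpha_k m_k+L\alpha_k^2(J+2\sp(Q^*))^2$.
\end{itemize}

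Collecting the coefficients of $\alpha_k m_k$ gives $-2\phi_1+\tfrac{\phi_1}{4}+\tfrac{\phi_1}{4}+\tfrac{\phi_1}{2}=-\phi_1$. This yields the contraction factor $(1-\phi_1\alpha_k)$. The remaining terms are the quadratic constant and the two bias remainders multiplied by $\alpha_k$.

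The main obstacle is purely bookkeeping: making the greedy remainder match the displayed form $\tfrac{2\alpha_kC_pJ^2(1+(b_k+J))^2}{\phi_1\ell_m^2k}$. Proposition~\ref{prop:T-greedy-inter} as stated produces the constant $8$ with $(b_k+J)^2$. I would reconcile the two in one of two ways. The first is to redo the AM--GM split in Proposition~\ref{prop:T-greedy-inter} with a different weight, trading a smaller share of the drift budget. The second is simply to note that the stated bound holds up to the absolute constant and carry it through.

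If the exact constant is required, I would re-derive the greedy remainder directly from Lemma~\ref{lem:inter-finite-j-greedy}. I would apply Young's inequality $ab\le \tfrac{\phi_1}{4}a^2+\tfrac{1}{\phi_1}b^2$ with $a=\sqrt{2M_k}$ and $b=\sp(\cdot)/\ell_m$. I would also check that $(1+(b_k+J))^2\ge (b_k+J)^2$ absorbs any leftover factor. Beyond this, everything is a direct combination of the earlier propositions, and the measurability facts (that $Q_k$, $\widehat\rho_k$, $\widehat\lambda_k$ and $\widehat p_k$ are $\cF_k$-measurable) were already used in those results.
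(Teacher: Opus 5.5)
Your assembly is exactly the paper's proof, which consists only of the coefficient count $-2\phi_1+\frac{\phi_1}{4}+\frac{\phi_1}{4}+\frac{\phi_1}{2}=-\phi_1$. You are also right that Proposition~\ref{prop:T-greedy-inter} only delivers the remainder $\frac{8C_pJ^2(b_k+J)^2}{\phi_1\ell_m^2k}$, whereas the statement claims $\frac{2C_pJ^2(1+(b_k+J))^2}{\phi_1\ell_m^2k}$; the paper's one-line proof passes over this silently. However, none of your proposed fixes closes the gap.
\begin{itemize}
\item Since $b_k+J>1$, we have $(1+(b_k+J))^2<4(b_k+J)^2$, so the larger bracket cannot absorb the factor $4$.
\item Reweighting the AM--GM split goes the wrong way. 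With weight $\epsilon$ on $m_k$, the remainder is $\frac{1}{2\epsilon\ell_m^2}\E[\sp(\cdot)^2]$. With Lemma~\ref{lem:inter-finite-j-greedy} as it stands, reaching the constant $2$ therefore needs $\epsilon=\phi_1$, a \emph{larger} share of the drift budget. Given Lemma~\ref{lem:inter-parameter-error} and the stepsize hypothesis, $T_{\rm est}$ and $T_{\rm quad}$ are pinned at $\frac{\phi_1}{4}$ and $\frac{\phi_1}{2}$. The net coefficient would then become $-\frac{\phi_1}{4}$, which destroys the $(1-\phi_1\alpha_k)$ factor that the later choice $\alpha\phi_1\ge 2$ relies on.
\item Your explicit Young step $ab\le\frac{\phi_1}{4}a^2+\frac{1}{\phi_1}b^2$ with $a=\sqrt{2M_k}$ in fact yields $\frac{\phi_1}{2}m_k$ and a remainder constant $4$, matching neither target.
\item Finally, ``up to an absolute constant'' proves a weaker inequality than the one stated.
\end{itemize}

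The missing idea is that the factor $4$ is lost in Lemma~\ref{lem:inter-finite-j-greedy}, not in the AM--GM step, because the greedy bias is one-sided. Each $H_{k,d}$ is monotone (as $0<\widehat\lambda_k\le 1$), and $H_{k,d}V\le H_kV$ coordinatewise. Induction along the composition therefore gives $\widetilde H_{k,\rm inter}^{\,J}Q_k\le H_k^JQ_k$. For a nonnegative vector $x$ one has $\sp(x)\le\frac12\|x\|_\infty$. The pathwise estimates inside Lemmas~\ref{lem:inter-tilde-vs-empirical} and~\ref{lem:inter-empirical-vs-population} give $\|H_k^JQ_k-\widetilde H_{k,\rm inter}^{\,J}Q_k\|_\infty\le 2Je_{p,k}(b_k+J)$. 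Hence
\[
\E\Bigl[\sp\bigl(\widetilde H_{k,\rm inter}^{\,J}Q_k-H_k^JQ_k\bigr)^2\Bigr]\le J^2(b_k+J)^2\,\E[e_{p,k}^2]\le\frac{C_pJ^2(b_k+J)^2}{k}.
\]
This is a factor $4$ better than Lemma~\ref{lem:inter-finite-j-greedy}. The same $\frac{\phi_1}{4}$ split as in Proposition~\ref{prop:T-greedy-inter} then gives $T_{\rm greedy}\le\frac{\phi_1}{4}m_k+\frac{2C_pJ^2(b_k+J)^2}{\phi_1\ell_m^2k}$. This is at most the stated term, since $(b_k+J)^2\le(1+(b_k+J))^2$. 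With this replacement, your combination goes through verbatim.
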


\begin{proof}
Combining
Lemma~\ref{lem:T-drift-inter},
Proposition~\ref{prop:T-greedy-inter},
Proposition~\ref{prop:T-est-inter},
Proposition~\ref{prop:T-noise-inter}, and
Proposition~\ref{prop:T-quad-inter} gives
\[
-2\phi_1
+
\frac{\phi_1}{4}
+
\frac{\phi_1}{4}
+
\frac{\phi_1}{2}
=
-\phi_1
\]
for the coefficient multiplying $\alpha_km_k$, which proves the claim.
\end{proof}

We now state the full version of Theorem~\ref{thm:sync-inter-ftb}.
\begin{theorem}[Synchronous inter-option Q-learning]
\label{thm:inter-final}
Let
\[
\alpha_k=\frac{\alpha}{k+h},
\qquad
\alpha=\frac4{1-\gamma},
\]
and suppose
\[
h
\ge
\frac{
512e\log (|\cS||\cO|)
}{
(1-\gamma)^3
}.
\]
Then the stepsize condition of
Proposition~\ref{prop:inter-one-step-final} is satisfied.

Define
\[
\Delta_1
\triangleq
\sp(Q_0-Q^*)
+
\frac{\alpha}{h}
\left(
J+2\sp(Q_0)
\right)
\]
and
\[
G_k
\triangleq
(2|\cS|+4\log (|\cS||\cO|)+4)J^2 (b_k+J)^2
+
2|\cS||\cO|c^2J^2
\left[
1+(2b_k+J-1)^2
\right].
\]
Then, for every $k\ge2$,
\begin{align*}
\E[\sp(Q_k-Q^*)^2]
\le{}&
e\Delta_1^2
\frac{
h(h-1)
}{
(k+h-2)(k+h-1)
}
\\
&+
\frac{
128e\log (|\cS||\cO|)
}{
(1-\gamma)^3
}
(J+2\sp(Q^*))^2
\frac{
k-1
}{
(k+h-2)(k+h-1)
}
\\
&+
\frac{
32e
}{
(1-\gamma)^2
}
G_{k-1}
\frac{
(k-1)+(h-1)(1+\log(k-1))
}{
(k+h-2)(k+h-1)
}.
\end{align*}
Consequently,
\[
\E[\sp(Q_k-Q^*)^2]
=
\widetilde{\cO}\left(\frac1k\right).
\]
More explicitly, up to the faster-decaying initialization term,
\[
\E[\sp(Q_k-Q^*)^2]
=
\cO\left(
\frac{
(J+\sp(Q^*))^2\log (|\cS||\cO|)
}{
(1-\gamma)^3k
}
+
\frac{
C_pJ^2 (b_k+J)^2
}{
(1-\gamma)^2k
}
+
\frac{
|\cS||\cO|c^2J^2
[1+(2b_k+J)^2]
}{
(1-\gamma)^2k
}
\right).
\]
\end{theorem}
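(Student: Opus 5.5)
The proof will mirror the intra-option argument of Theorem~\ref{thm:intra-final}, starting from the one-step recursion of Proposition~\ref{prop:inter-one-step-final}. First we check the stepsize condition. With $q=2\log(|\cS||\cO|)$ and the stated $\theta$, we have $\phi_1\ge(1-\gamma)/2$ and $Lu_m^2\le 4e\log(|\cS||\cO|)/(1-\gamma)$. Hence $\phi_1/(16Lu_m^2)\ge(1-\gamma)^2/(128e\log(|\cS||\cO|))$. Since $\alpha_k\le\alpha/h=4/((1-\gamma)h)$, the assumed lower bound on $h$ gives $\alpha_k\le\phi_1/(16Lu_m^2)$. Next, $\alpha\phi_1\ge2$, so the contraction factor satisfies $1-\phi_1\alpha_k\le 1-2/(k+h)$. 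The recursion then takes the form
\[
m_{k+1}\le\Bigl(1-\tfrac{2}{k+h}\Bigr)m_k+\frac{B}{(k+h)^2}+\frac{G'_k}{k(k+h)},
\]
with $B=L\alpha^2(J+2\sp(Q^*))^2$. The term $G'_k$ collects $\tfrac{2\alpha}{\phi_1\ell_m^2}$ times the greedy contribution $C_pJ^2(b_k+J)^2$ and the estimation contribution $k\mathcal E_k$. We use $k/(k+1)\le1$ to absorb the $1/(k+1)$ in $\mathcal E_k$ into $1/k$. The result is $G'_k\le \tfrac{2\alpha}{\phi_1\ell_m^2}G_k$, where $G_k$ is exactly the quantity defined in the statement.

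Second, we unroll from iteration $1$ using the telescoping product
\[
\prod_{j=i+1}^{k-1}\frac{j+h-2}{j+h}=\frac{(i+h-1)(i+h)}{(k+h-2)(k+h-1)}.
\]
This yields three terms. The first is $m_1h(h-1)/((k+h-2)(k+h-1))$. The second is the $B$-term, whose sum is at most $k-1$. The third is the $G$-term. Because $G_i$ is nondecreasing in $i$ (as $b_i$ is), this sum is bounded by $G_{k-1}\sum_{i=1}^{k-1}(1+(h-1)/i)\le G_{k-1}[(k-1)+(h-1)(1+\log(k-1))]$.

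Third, we convert back to the span using $\E[\sp(Q_k-Q^*)^2]\le 2u_m^2m_k$ and $m_1\le \sp(Q_1-Q^*)^2/(2\ell_m^2)$. The pathwise bound $\sp(\widehat T^J_{k,\rm inter}(Q_0)-Q_0)\le J+2\sp(Q_0)$ from Proposition~\ref{prop:T-quad-inter} gives $\sp(Q_1-Q^*)\le\Delta_1$. With $u_m^2/\ell_m^2\le e$, this produces the $e\Delta_1^2$ term. For the other two terms, $2u_m^2B\le 128e\log(|\cS||\cO|)(J+2\sp(Q^*))^2/(1-\gamma)^3$, and $2u_m^2\cdot 2\alpha/(\phi_1\ell_m^2)\le 2e\cdot 2\cdot\tfrac{4}{1-\gamma}\cdot\tfrac{2}{1-\gamma}=32e/(1-\gamma)^2$. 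These match the constants in the display.

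Finally, the $\widetilde{\cO}(1/k)$ rate follows because $b_k=\|Q_0\|_\infty+\cO(J\log k/(1-\gamma))$ by Lemma~\ref{lem:growth-inter-sync}. So $G_{k-1}$ is polylogarithmic in $k$, the initialization term is $\cO(k^{-2})$, and the remaining terms are $\cO((1+\log k)/k)\cdot G_{k-1}$. The main obstacle is bookkeeping rather than new ideas. Specifically, one must ensure that the greedy term (denominator $k$) and the estimation term (denominator $k+1$) combine into a single $G_k/(k(k+h))$ with the stated constant. One must also check that monotonicity of $b_k$ justifies pulling out $G_{k-1}$ and that the small-$i$ harmonic tail $(h-1)\log k$ is handled correctly. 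Should the constant $32e$ fail by a factor of two, we would re-split the AM-GM parameters $\phi_1/4$ in Propositions~\ref{prop:T-greedy-inter} and~\ref{prop:T-est-inter}.
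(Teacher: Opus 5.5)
Your proposal follows the paper's own proof essentially line for line. It uses the same stepsize check $\phi_1/(16Lu_m^2)\ge(1-\gamma)^2/(128e\log(|\cS||\cO|))$, the same use of $\alpha\phi_1\ge 2$, the same telescoping product and harmonic-sum bound, and the same conversion via $2u_m^2$ and $u_m^2/\ell_m^2\le e$, with matching constants $e$, $128e$ and $32e$.

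There is one step that does not go through as you state it, and the paper's proof makes the same silent move. It is exactly the bookkeeping you flagged: the claim that the greedy part of $G'_k$ equals $\frac{2\alpha}{\phi_1\ell_m^2}\,C_pJ^2(b_k+J)^2$. This does not follow from either source you could cite.
\begin{itemize}
\item Proposition~\ref{prop:inter-one-step-final}, as displayed, has $(1+(b_k+J))^2$, which exceeds $(b_k+J)^2$.
\item Going back to Proposition~\ref{prop:T-greedy-inter}, its residual is $\frac{8C_pJ^2(b_k+J)^2}{\phi_1\ell_m^2k}$. After factoring out $\frac{2\alpha}{\phi_1\ell_m^2}$, this is $4C_pJ^2(b_k+J)^2$, not $C_pJ^2(b_k+J)^2$.
\end{itemize}
So the cited results only yield the bound with $4C_p$ in place of $C_p$ in $G_k$.

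Your fallback of re-splitting the AM--GM weights cannot close this gap. After $T_{\rm drift}$ contributes $-2\phi_1$ and the quadratic term $+\phi_1/2$, you must keep the net coefficient at $-\phi_1$; that is what gives $1-\phi_1\alpha_k\le 1-2/(k+h)$ and the $(k+h-2)(k+h-1)$ telescoping. This leaves only $\phi_1/2$ of budget for $T_{\rm greedy}+T_{\rm est}$. Shrinking the greedy residual by a factor $4$ would need weight $\phi_1$ on $T_{\rm greedy}$ alone.

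What does work is a sharper span bound that exploits one-sidedness:
\begin{itemize}
\item Every $H_{k,d}$ is monotone and satisfies $H_{k,d}V\le H_kV$ coordinatewise. By induction over the composition, $\widetilde H_{k,\mathrm{inter}}^JQ_k\le H_k^JQ_k$ coordinatewise.
\item The proofs of Lemmas~\ref{lem:inter-tilde-vs-empirical} and~\ref{lem:inter-empirical-vs-population} give pathwise bounds $\|\cdot\|_\infty\le Je_{p,k}(b_k+J)$. Hence every coordinate of $\widetilde H_{k,\mathrm{inter}}^JQ_k-H_k^JQ_k$ lies in $[-2Je_{p,k}(b_k+J),\,0]$.
\item Since $\mathrm{sp}$ is half the range, $\mathrm{sp}(\widetilde H_{k,\mathrm{inter}}^JQ_k-H_k^JQ_k)\le Je_{p,k}(b_k+J)$. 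This gives $\E[\mathrm{sp}(\cdot)^2]\le C_pJ^2(b_k+J)^2/k$ instead of $4C_pJ^2(b_k+J)^2/k$.
\end{itemize}
With AM--GM weight $\phi_1/4$, this produces exactly the stated greedy coefficient, and the explicit inequality with the given $G_k$ then holds. Alternatively, simply state $G_k$ with $4C_p$. Either way, the $\widetilde{\cO}(1/k)$ rate and the big-$\cO$ form are unaffected.
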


\begin{proof}
Since
\[
\frac{\phi_1}{16Lu_m^2}
\ge
\frac{(1-\gamma)^2}{128e\log (|\cS||\cO|)}
\]
and $\alpha_k\le4/((1-\gamma)h)$, the stated lower bound on $h$
ensures the stepsize condition.

Moreover,
\[
\alpha\phi_1
\ge
\frac4{1-\gamma}\frac{1-\gamma}{2}
=
2.
\]
Therefore Proposition~\ref{prop:inter-one-step-final} implies
\[
m_{k+1}
\le
\left(
1-\frac2{k+h}
\right)m_k
+
\frac{B}{(k+h)^2}
+
\frac{
D_k
}{
k(k+h)
},
\]
where
\[
B
\triangleq
L\alpha^2(J+2\sp(Q^*))^2
\]
and, using $(k+1)^{-1}\le k^{-1}$,
\[
D_k
\triangleq
\frac{2\alpha}{\phi_1\ell_m^2}G_k.
\]

For $1\le i\le k-1$,
\[
\prod_{j=i+1}^{k-1}
\left(
1-\frac2{j+h}
\right)
=
\frac{
(i+h-1)(i+h)
}{
(k+h-2)(k+h-1)
}.
\]
Unrolling from iteration $1$ therefore gives
\begin{align*}
m_k
\le{}&
m_1
\frac{
h(h-1)
}{
(k+h-2)(k+h-1)
}
\\
&+
B
\frac{
k-1
}{
(k+h-2)(k+h-1)
}
\\
&+
D_{k-1}
\frac{
(k-1)+(h-1)(1+\log(k-1))
}{
(k+h-2)(k+h-1)
},
\end{align*}
where we used that $b_k$, and hence $D_k$, is nondecreasing.

By norm equivalence,
\[
\E[\sp(Q_k-Q^*)^2]
\le
2u_m^2m_k.
\]
Furthermore,
\[
2u_m^2m_1
\le
\frac{u_m^2}{\ell_m^2}\Delta_1^2
\le
e\Delta_1^2.
\]
For the quadratic contribution,
\[
2u_m^2B
=
2Lu_m^2\alpha^2
(J+2\sp(Q^*))^2
\le
\frac{
128e\log (|\cS||\cO|)
}{
(1-\gamma)^3
}
(J+2\sp(Q^*))^2.
\]
Finally,
\[
2u_m^2D_k
=
\frac{
4\alpha u_m^2
}{
\phi_1\ell_m^2
}
G_k
\le
\frac{
32e
}{
(1-\gamma)^2
}
G_k,
\]
where we used
$u_m^2/\ell_m^2\le e$,
$\phi_1\ge(1-\gamma)/2$, and
$\alpha=4/(1-\gamma)$.

Substitution proves the explicit bound.

Finally,
\[
b_k
=
\|Q_0\|_\infty
+
\cO\left(
\frac{J}{1-\gamma}\log k
\right),
\]
so both first-order perturbations contribute
$\widetilde{\cO}(1/k)$, while the quadratic contribution is
$\cO(1/k)$ and the initialization term is $\cO(k^{-2})$.
Hence
\[
\E[\sp(Q_k-Q^*)^2]
=
\widetilde{\cO}(1/k).
\]
\end{proof}

\begin{corollary}[Rate and sample complexity]
\label{cor:sync_inter_new_rate}
Under the assumptions and stepsize conditions of
Theorem~\ref{thm:inter-final}, the iterates generated by
Algorithm~\ref{alg:sync-inter} satisfy
\[
\E\!\left[\sp(Q_k-Q^*)^2\right]
=
\widetilde{\cO}\!\left(\frac{1}{k}\right),
\qquad
\E\!\left[\sp(Q_k-Q^*)\right]
=
\widetilde{\cO}\!\left(\frac{1}{\sqrt{k}}\right).
\]
More explicitly, up to the faster-decaying initialization term,
\[
\E\!\left[\sp(Q_k-Q^*)^2\right]
=
\widetilde{\cO}\!\left(
\frac{1}{k}
\left[
\frac{(J+\sp(Q^*))^2\log(|\cS||\cO|)}{(1-\gamma)^3}
+
\frac{C_pJ^2(b_k+J)^2}{(1-\gamma)^2}
+
\frac{|\cS||\cO|c^2J^2\bigl(1+(2b_k+J)^2\bigr)}{(1-\gamma)^2}
\right]
\right),
\]
where $C_p=2|\cS|+4\log(|\cS||\cO|)+4$ and
$b_k=\cO(\log k)$.

Consequently, for every $\varepsilon>0$, it suffices to take
\[
k
=
\widetilde{\cO}\!\left(\frac{1}{\varepsilon^2}\right)
\]
to guarantee
\[
\E[\sp(Q_k-Q^*)]\le\varepsilon.
\]
\end{corollary}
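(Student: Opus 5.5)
The corollary follows directly from the explicit bound of Theorem~\ref{thm:inter-final}. I would simplify each of its three terms to an expression of the form (constant)$/k$, up to logarithmic factors, and then pass from the mean-square bound to a first-moment bound.

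The first step is to bound each term of Theorem~\ref{thm:inter-final} for $k\ge2$ and $h\ge2$. I would use
\[
\frac{h(h-1)}{(k+h-2)(k+h-1)}\le\frac hk,\qquad \frac{k-1}{(k+h-2)(k+h-1)}\le\frac1k,
\]
together with
\[
\frac{(k-1)+(h-1)(1+\log(k-1))}{(k+h-2)(k+h-1)}\le\frac{1+\log k}{k}.
\]
The last inequality holds because $(k+h-2)(k+h-1)\ge (k-1)(k+h-2)$ and $(k-1)+(h-1)(1+\log(k-1))\le (k+h-2)(1+\log(k-1))$. With these, the initialization term is at most $e\Delta_1^2h/k$. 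It is $\cO(1/k)$ for fixed $h$ and in fact $\cO(k^{-2})$ once $k\gg h$, so it is the faster-decaying term. The quadratic term is $\cO\bigl((J+\sp(Q^*))^2\log(|\cS||\cO|)/((1-\gamma)^3k)\bigr)$. The perturbation term is $\cO\bigl(G_{k-1}(1+\log k)/((1-\gamma)^2k)\bigr)$.

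Next I would expand $G_{k-1}$ using its definition. Its first summand is $C_pJ^2(b_{k-1}+J)^2$ and its second is $2|\cS||\cO|c^2J^2[1+(2b_{k-1}+J-1)^2]$. Since $b_k$ is nondecreasing, both are bounded by the corresponding expressions with $b_k$, which gives the displayed explicit rate. By Lemma~\ref{lem:growth-inter-sync}, $b_k=\|Q_0\|_\infty+\alpha J(1/h+\log((k+h)/h))=\cO(\log k)$. Hence $G_k(1+\log k)$ is polylogarithmic in $k$, and the whole bound is $\widetilde\cO(1/k)$. For completeness I would also bound $\sp(Q^*)$. Using the $J$-step contraction of Theorem~\ref{thm: J-inter} as in Corollary~\ref{cor:sample-complexity-intra}, and the fact that $\sp(\bar H_{\rm inter}^J 0)\le J$ because $0\le\rho\le1$, this gives $\sp(Q^*)\le J/(1-\gamma)$, so every constant is finite.

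Finally, Jensen's inequality gives $\E[\sp(Q_k-Q^*)]\le(\E[\sp(Q_k-Q^*)^2])^{1/2}=\widetilde\cO(k^{-1/2})$. To make the right-hand side at most $\varepsilon$, it suffices that $C\,\mathrm{polylog}(k)/k\le\varepsilon^2$, where $C$ collects the $k$-independent constants. This holds for $k=\widetilde\cO(C/\varepsilon^2)$.

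The only mildly delicate point is that the $\log k$ factors in $b_k$ appear on both sides of this inversion. I would handle it with the standard fact that $k\ge 2a\log^p(2a)$ up to constants implies $\log^p(k)/k\le1/a$. This fact absorbs the polylog into the $\widetilde\cO$. Everything else is direct substitution, so I expect no real obstacle.
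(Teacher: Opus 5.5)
Your proposal is correct and follows the paper's own route. You take the explicit bound of Theorem~\ref{thm:inter-final}, use $b_k=\cO(\log k)$ to get $\widetilde\cO(1/k)$, then apply Jensen and invert to obtain $k=\widetilde\cO(\varepsilon^{-2})$; you simply spell out the term-by-term simplifications that the paper leaves implicit here, as it does explicitly in Corollary~\ref{cor:sample-complexity-intra}.

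One small nit concerns the perturbation fraction. Your two stated inequalities only give $(1+\log(k-1))/(k-1)\le 2(1+\log k)/k$, not the claimed $(1+\log k)/k$. The sharper bound is nevertheless true: cross-multiply with $a=k-1$, $b=h-1$. In any case the constant factor is irrelevant under $\widetilde\cO$.
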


\begin{proof}
Theorem~\ref{thm:inter-final} gives
\[
\E[\sp(Q_k-Q^*)^2]
=
\widetilde{\cO}\!\left(\frac{1}{k}\right),
\]
since $b_k=\cO(\log k)$, while the initialization contribution decays as
$\cO(k^{-2})$. Therefore, by Jensen's inequality,
\[
\E[\sp(Q_k-Q^*)]
\le
\sqrt{\E[\sp(Q_k-Q^*)^2]}
=
\widetilde{\cO}\!\left(\frac{1}{\sqrt{k}}\right).
\]
Hence, choosing
$k=\widetilde{\cO}(\varepsilon^{-2})$
ensures $\E[\sp(Q_k-Q^*)]\le\varepsilon$.
\end{proof}



\newpage
\section{\MakeUppercase{Application: Average-reward Q-learning with Reward Machines}}\label{app:HRM}

In this section we show that the J-step span contraction (Theorem~\ref{thm: J-inter} and~\ref{thm: J-intra}) and the finite-time bounds for synchronous intra- and inter-option Q learning (Theorems~\ref{thm:sync-inter-ftb} and~\ref{thm: sync-intra-ftb}) hold for average-reward reinforcement learning in Reward Machines~\citep{toro_reward_machines}. Reward machines (RMs) are a type of finite state machine, which take as input an interaction history and outputs a reward function. RMs are used to formalise tasks specified by linear temporal logic, sub-goal sequences or regular languages, in which reward is non-Markovian. Our results show the first non-asymptotic convergence guarantees for average reward Q-learning for hierarchical RMs.

\subsection{Average-reward MDPs with reward machines}
Using the formalism introduced in~\cite{toro_reward_machines}, we define average-reward MDPRMs as follows:

\begin{definition}[Average-reward MDPRM]
\label{def:armrm}
An average-reward Markov decision process with a reward
machine (\textsc{ar-MDPRM}) can be given by a tuple
\[
   \mathcal{T}
   \;=\;
   \bigl\langle\, S,\,A,\,p,\,
                 \mathcal{P},\,L,\,
                 U,\,u_0,\,F,\,\delta_u,\,\delta_r \,\bigr\rangle,
\]
where $(S,A,p)$ is the underlying primitive MDP,
$\mathcal{P}$ is a finite set of propositional symbols,
$L:S\times A\times S\to 2^{\mathcal{P}}$ is the labelling function,
$U$ is a finite set of RM states with initial state $u_0\in U$ and
terminal set $F$ (with $U\cap F=\emptyset$),
$\delta_u:U\times 2^{\mathcal{P}}\to U\cup F$ is the
state-transition function, and
$\delta_r:U\to[\,S\times A\times S\to\mathbb{R}\,]$ is the
state-reward function, where we aim to maximise the cumulative average reward given by
\[
   \bar r^{\,\pi}
   \;=\;
   \lim_{T\to\infty}\frac1T\,\mathbb{E}^{\pi}
       \!\Bigl[\,\sum_{t=0}^{T-1} R_{t+1}\,\Bigr],
   \qquad
   R_{t+1}=\delta_r(u_t)(s_t,a_t,s_{t+1}).
\]
\end{definition}
Without loss of generality, we assume rewards in $[0,1]$ consistent with our Assumption~\ref{assm:smdp}.
The reward signal $R_{t+1}$ is non-Markovian in the original state space $S$, but is Markovian in the augmented state space $\mathcal{X}=S\times U$, following Observation $1$ in~\cite{toro_reward_machines}, repeated here for clarity: 

\begin{observation}
\textit{Given an MDPRM}
$T = \langle S, A, p, \gamma, P, L, U, u_0, F, \delta_u, \delta_r \rangle$,
\textit{let $\mathcal{M}_T$ be the MDP}
$(S', A', r', p', \gamma')$
\textit{defined such that}
$S' = S \times (U \cup F)$,
$A' = A$,
$\gamma' = \gamma$,
\begin{equation}
p'\big((s', u') \mid (s, u), a\big) =
\begin{cases}
p(s' \mid s, a) & \text{if } u \in F \text{ and } u' = u, \\
p(s' \mid s, a) & \text{if } u \in U \text{ and } u' = \delta_u(u, L(s, a, s')), \\
0 & \text{otherwise,}
\end{cases}
\end{equation}
and
\begin{equation}
r'\big((s, u), a, (s', u')\big)
= \delta_r(u)(s, a, s')
\quad \text{if } u \notin F \ (\text{zero otherwise}).
\end{equation}
\textit{Then any policy for $\mathcal{M}_T$ achieves the same expected discounted return in $T$, and vice versa.}
\end{observation}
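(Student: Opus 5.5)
The plan is to build an explicit correspondence between policies of $T$ and policies of $\mathcal M_T$, and to show that corresponding policies induce the same law on primitive trajectories and the same pathwise reward sequence. Equality of expected discounted returns then follows. The same argument, with the Ces\`aro limit in place of discounting, also yields equality of the average reward $\bar r^{\pi}$ in Definition~\ref{def:armrm}.

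\textbf{Step 1: the RM state is a function of the history.} For a primitive history $h_t=(s_0,a_0,s_1,\dots,s_t)$, define $u_t=\mathsf u(h_t)$ recursively by $u_0$ and
\[
u_{t+1}=\delta_u\bigl(u_t,L(s_t,a_t,s_{t+1})\bigr)\quad\text{if } u_t\in U,\qquad u_{t+1}=u_t\quad\text{if } u_t\in F.
\]
This recursion is exactly the $u$-component of $p'$. Hence, under $\mathcal M_T$ started from $(s_0,u_0)$, the second coordinate of the augmented state equals $\mathsf u(h_t)$ almost surely. Consequently, the augmented history $((s_0,u_0),a_0,\dots,(s_t,u_t))$ and the primitive history $h_t$ determine one another.

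\textbf{Step 2: the policy maps.} Given a (possibly history-dependent) policy $\pi'$ for $\mathcal M_T$, define a policy for $T$ by
\[
\pi(\cdot\mid h_t)\triangleq\pi'\bigl(\cdot\mid (s_0,\mathsf u(h_0)),a_0,\dots,(s_t,\mathsf u(h_t))\bigr).
\]
In particular, a Markov policy on $S\times U$ becomes a finite-memory policy on $T$. Conversely, a history-dependent policy $\pi$ of $T$ is a policy of $\mathcal M_T$ that simply ignores the redundant $u$-coordinates.

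\textbf{Step 3: equality of trajectory laws and rewards.} Fix corresponding policies $\pi$ and $\pi'$. By induction on $t$, the joint law of $(s_0,a_0,\dots,s_t,a_t)$ is the same in both models. The action kernels agree by Step 2, and the $s$-marginal of $p'$ is $p(s'\mid s,a)$ in both branches of its definition. Pathwise, on $\{u_t\in U\}$ the rewards coincide:
\[
R_{t+1}=\delta_r(u_t)(s_t,a_t,s_{t+1})=r'\bigl((s_t,u_t),a_t,(s_{t+1},u_{t+1})\bigr).
\]
On $\{u_t\in F\}$ both rewards are $0$, using the terminal convention of $T$, under which no reward accrues after reaching $F$. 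Since rewards lie in $[0,1]$, the discounted sums converge absolutely, and linearity of expectation together with dominated convergence gives equal expected returns. The same holds for every finite-horizon Ces\`aro average, and therefore for its limit.

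\textbf{Main obstacle.} I expect the main difficulty to be bookkeeping rather than analysis. It requires fixing the admissible policy classes precisely, since Markov policies of $\mathcal M_T$ are not Markov in $T$. It also requires treating the terminal set $F$ consistently, with reward $0$ and $u$ frozen, so that the pathwise reward identity holds on every trajectory.
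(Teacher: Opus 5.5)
Your proof is correct. There is no proof in the paper to compare it against. The paper quotes this statement from \cite{toro_reward_machines} (their Observation~1) without argument. It uses it only to introduce Proposition~\ref{prop:cross-product}, the average-reward analogue, which it also states without proof.

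Your argument supplies the missing content. The $u$-coordinate of $\mathcal{M}_T$, started from $(s_0,u_0)$, is a deterministic function of the primitive history. The $s$-marginal of $p'$ equals $p(\cdot\mid s,a)$ in both branches of its definition. Hence corresponding policies induce the same law on primitive trajectories, and they produce the same reward sequence pathwise.

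This route gives you two things beyond the bare citation. First, it fixes the policy classes precisely: a Markov policy on $S\times(U\cup F)$ is executable in $T$ as a finite-memory policy, and a history-dependent policy of $T$ is a policy of $\mathcal{M}_T$ that ignores the redundant $u$-coordinates. Second, rewards agree at every finite horizon, so one argument proves both the discounted statement and Proposition~\ref{prop:cross-product}. The Ces\`aro averages are identical sequences, so either both limits exist and coincide or neither exists.

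You also correctly identify the conventions the argument depends on. The comparison is made from the augmented initial state $(s_0,u_0)$. No reward accrues in $T$ once the RM enters $F$, which matches the frozen, zero-reward absorbing branch of $p'$. For the dominated-convergence step only boundedness of $\delta_r$ is needed. That holds automatically for finite $S,A,U$, so you do not need the paper's normalisation to $[0,1]$.
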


Rewriting the average reward analogue as,
\begin{proposition}[Cross-product reduction]
\label{prop:cross-product}
Let $\mathcal{T}$ be an \textsc{ar-MDPRM}. Define the augmented MDP
$\mathcal{M}_{\mathcal{T}}=(\mathcal{X},A,\bar p,\bar r)$ by
\[
   \mathcal{X}\;=\;S\times(U\cup F),
   \qquad
   \bar p\bigl((s',u')\,\bigm|\,(s,u),a\bigr)
     \;=\;
     \begin{cases}
        p(s'|s,a) & \text{if } u\in U \text{ and } u'=\delta_u(u,L(s,a,s')),\\
        p(s'|s,a)\,\mathbf{1}\{u'=u\} & \text{if } u\in F,\\
        0 & \text{otherwise},
     \end{cases}
\]
and $\bar r\bigl((s,u),a,(s',u')\bigr)=\delta_r(u)(s,a,s')$ when $u\in U$
(and $0$ on $F$). Then for any policy
$\pi:\mathcal{X}\to A$, the expected average reward induced in $\mathcal{T}$ and $\mathcal{M}_{\mathcal{T}}$ are equal.
\end{proposition}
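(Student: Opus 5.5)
The plan is to couple the two processes on a common probability space and show that, for any fixed Markov policy $\pi:\mathcal X\to A$, the joint law of the primitive trajectory $(s_t,a_t)_{t\ge0}$ together with the reward sequence $(R_{t+1})_{t\ge0}$ coincides in $\mathcal T$ and in $\mathcal M_{\mathcal T}$. Once the reward sequences agree in distribution, the expectations $\mathbb E^\pi[\sum_{t=0}^{T-1}R_{t+1}]$ agree for every finite $T$. Dividing by $T$ and taking $\lim_{T\to\infty}$ (or $\liminf$ or $\limsup$, whichever convention defines $\bar r^\pi$) then gives equality of the average rewards. Under Assumption~\ref{assm:smdp}, rewards lie in $[0,1]$, so each finite-horizon expectation is bounded by $T$ and no integrability issue arises.

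First I make precise what "policy $\pi:\mathcal X\to A$ in $\mathcal T$" means. In $\mathcal T$ the RM state $u_t$ is a deterministic function of the history $(s_0,a_0,s_1,\dots,s_t)$, obtained by the recursion $u_{t+1}=\delta_u(u_t,L(s_t,a_t,s_{t+1}))$ while $u_t\in U$, with $u_t$ frozen once it enters $F$, and started from $u_0$. So $\pi$ acts in $\mathcal T$ as the history-dependent policy $a_t\sim\pi(\cdot\mid s_t,u_t)$.

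Second, I prove by induction on $t$ that the finite-dimensional laws of $(s_0,u_0,a_0,\dots,s_t,u_t)$ agree under both models, starting from $(s_0,u_0)$. The base case is immediate. For the inductive step, conditional on the history, both models draw $a_t\sim\pi(\cdot\mid s_t,u_t)$ and $s_{t+1}\sim p(\cdot\mid s_t,a_t)$. In $\mathcal M_{\mathcal T}$ the next RM component is forced by $\bar p$ to be $\delta_u(u_t,L(s_t,a_t,s_{t+1}))$ when $u_t\in U$, and to be $u_t$ when $u_t\in F$. This is exactly the deterministic update in $\mathcal T$, so the two conditional kernels coincide case by case. This step follows Observation 1 of \citet{toro_reward_machines}, with the discount factor removed. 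Third, the reward is the same measurable function of $(s_t,u_t,a_t,s_{t+1})$ in both models: it equals $\delta_r(u_t)(s_t,a_t,s_{t+1})$ on $U$ and $0$ on $F$. For this to hold I adopt the convention that $\mathcal T$ also yields zero reward after termination, so that both reward sequences are equal in law.

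The main obstacle is interpretive rather than technical. The quantity $\bar r^\pi$ involves a limit that may not exist for non-unichain policies. I will sidestep this by proving equality at every finite horizon $T$, which makes the statement hold under whatever limit notion is used. The remaining care needed is the treatment of the terminal set $F$, where the conventions of the two models must match; I handle this through the zero-reward convention above.
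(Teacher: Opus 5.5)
Your argument is correct and is essentially what the paper relies on. The paper gives no separate proof; it presents the proposition as the average-reward rewriting of Observation 1 of Toro Icarte et al., and your induction on the joint law of $(s_t,u_t,a_t)$, followed by equality of $\mathbb{E}^\pi[\sum_{t=0}^{T-1}R_{t+1}]$ at every finite $T$, is precisely the argument that justifies replacing the discounted criterion with the average-reward one. Your conventions on $F$ (frozen RM state and zero reward) are needed because $\delta_u$ and $\delta_r$ are only defined on $U$, and they supply what the paper leaves implicit.
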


\paragraph{The HRM construction:} Hierarchical Reward Machines (HRMs) introduced in~\cite{toro_reward_machines}, define options for each ordered pair $\langle u,u_t\rangle\in U\times(U\cup F)$ such that
$u_t=\delta_u(u,\sigma)$ for some $\sigma\in 2^{\mathcal{P}}$. Concretely, we can define the options with the initiation set $\mathcal{I}_{\langle u,u_t\rangle}=\{(s,u):s\in S\}$, intra-option policy $\pi_{\langle u,u_t\rangle}(a\,|\,s)$, and termination condition 
\begin{align}
\beta_{\langle u, u_t \rangle}(s', u') &=
\begin{cases}
1 & \text{if } u' \neq u \text{ or } s' \text{ is terminal}, \\
0 & \text{otherwise,}
\end{cases}
\end{align}
i.e. the option terminates when the underlying state in the RM changes, or the environment resets. This induces an SMDP on the option-level state
space $\mathcal{X}=S\times U$, with the holding time
\[
   \ell\bigl((s,u),\langle u,u_t\rangle\bigr)
   \;=\;
   \inf\bigl\{\,k\ge 1:u_k\neq u\;\text{or}\;s_k\text{ terminal}\bigr\}.
\]
The reward $\hat r((s,u),\langle u,u_t\rangle)$ and expected holding
time $\hat \ell((s,u),\langle u,u_t\rangle)$ are then defined as before, and the normalised inter-option and
intra-option Bellman operators $\bar H_{\mathrm{inter}}$ and $H_{\mathrm{intra}}$ of
Eqs.~\ref{eq:normalised_inter_operator},~\ref{eq:intra-operator} act on
$Q\in\mathbb{R}^{\mathcal{X}\times\mathcal{O}}$ exactly as before.

However, note that HRM options may terminate deterministically and hence need not satisfy the strict condition $\beta(x,o)<1$. In this case we apply the lazy intra-option operator $H_{\rm \eta, intra}=(1-\eta)I+\eta H_{\rm intra}$ (Appendix~\ref{app:lazy}). Its optimality class coincides with that of $H_{\rm intra}$, while its augmented kernels have uniformly positive self-loops. A model-free $J$-step estimator is obtained by inserting an independent Bernoulli-$\eta$ lazy branch at each rollout step. Specifically, to simulate a $J$-step rollout for the lazy intra-option operator
$H_{\mathrm{\eta, intra}}\triangleq (1-\eta)I+\eta H_{\mathrm{intra}}$,
we augment the original intra-option rollout with an independent lazy coin toss at
each step.  At the $i$th step of the $J$-step rollout with $X_k^{(i)}=(S_k^{(i)},O_k^{(i)})$, draw
$L_k^{(i+1)}\sim\mathrm{Bernoulli}(\eta)$.  If
$L_k^{(i+1)}=0$, set $X_k^{(i+1)}=X_k^{(i)}$ and incur zero reward.  If
$L_k^{(i+1)}=1$, perform one ordinary intra-option transition as in Algorithm~\ref{alg:sync-intra}: sample the
primitive action and next state, observe the one-step reward, sample the
termination indicator according to $\beta$, and, upon termination, choose the
next option greedily with respect to the remaining-horizon lazy empirical Bellman value
$
O_k^{(i+1)}
\in
\arg\max_{o'\in\mathcal O}
\left(\widehat H_{k,\eta, \mathrm{intra}}^{\,J-i-1}Q_k\right)
(S_k^{(i+1)},o')
$. Here, $\widehat H_{k,\eta, \mathrm{intra}}$ is the empirical Bellman operator corresponding to $H_{\eta, \mathrm{intra}}$, constructed like the other empirical Bellman operators in Appendix~\ref{subsec:sync_estimators}.
Thus, the lazy $J$-step estimator is obtained from the original intra-option
rollout by inserting an independent Bernoulli-$\eta$ self-loop at every stage. Therefore the contraction and finite-time analysis will apply to the lazy intra-option recursion.

\subsection{Verifying Assumptions}
We now verify that the Assumptions~\ref{ass:inter-assumption},~\ref{ass:intra-assump} hold for the HRM framework.

\begin{assumption}[HRM unichain and reachability]
\label{ass:rm-unichain}
The augmented MDP $\mathcal{M}_{\mathcal{T}}$ defined in
Proposition~\ref{prop:cross-product} satisfies:
\begin{enumerate}
\item Assumption~\ref{ass:intra-assump} for intra-option and Assumption~\ref{ass:inter-assumption} for inter-option setting hold with $\mathcal{X}$ as $S$, i.e., for every deterministic stationary policy $\mu:\mathcal{X}\to\mathcal{O}$, the induced Markov chain on $\mathcal{X}\times\mathcal{O}$ is unichain.
\item The positivity conditions hold on $\mathcal{X}$ i.e., $p(x\,|\,x,o)>0$ for all $x\in \mathcal{X}$ and $o\in O$ (this can be relaxed to the standard aperiodicity transformation of \cite[Section~8.5.4]{mdp_book}).
\item The holding times satisfy
      $\mathbb{E}\bigl[\,\ell((s,u),o)^2\,\bigr]\le c^2$ for all
      $(s,u)\in\mathcal{X}$ and $o\in\mathcal{O}$.
\end{enumerate}
\end{assumption}

\textit{Remark.} If the RM has no terminal states, i.e. $F=\emptyset$ (the never-ending case considered in~\cite{toro_reward_machines}), and the underlying MDP is unichain, then the augmented MDP $\mathcal{M}_{\mathcal{T}}$ is unichain if the RM is strongly connected on the recurrent class of every induced chain. Under the analogous hypothesis in the discounted reward setting ~\cite{toro_reward_machines} prove convergence for \textsc{CRM}. If the RM has terminal states, we can introduce episode restarts, where upon reaching a state $u \in F$, the augmented chain is reinitialised to $(s_0, u_0)$ following a fixed distribution. In this case, the resulting kernel $\bar{p}$ is again unichain and aperiodic. 

\textit{Remark.} Under the intra-option policy $\pi_{\langle u,u_t\rangle}$, the hitting time of the set $\{u'\neq u\}\cup\{s'\text{ terminal}\}$ from $(s,u)$ is given by $\ell((s,u),\langle u,u_t\rangle)$. On the finite cross-product state space $\mathcal{X}$, any irreducible chain has hitting time with finite moments of every order, so Assumption~\ref{ass:rm-unichain}(3) follows whenever the intra-option chain is irreducible on $\{(s,u):s\in S, u \in U \}$. This relaxes the almost-surely bounded holding-time assumption required by a naive application of \cite{chen2025nonasymptotictheoryseminormlyapunov, chen2025nonasymptoticguaranteesaveragerewardqlearning} to the second-moment condition already used in our main results, and is consistent with
the asymptotic analysis of \cite{yu2024asynchronous}.

\par The $J$-step rollouts are now defined on the option level chain over $\mathcal{X}\times\mathcal{O}$, and the resulting estimators follow as in Section~\ref{sec:algo}, with intermediate options chosen greedily with respect to empirical operators, and the bias is controlled exactly as in Appendix~\ref{sec:synchronous}, where the next state $x_{i+1}=(s_{i+1},u_{i+1})$ is obtained following the RM transition $u_{i+1}=\delta_u(u_i,L(s_i,a_i,s_{i+1}))$.

\subsection{J-step span contraction for MDPRM operators}

\begin{theorem}[HRM inter-option and intra-option J-step contraction]
\label{thm:rm-hrm-contraction}
Under Assumption~\ref{ass:rm-unichain}(1)--(3) applied to the HRM
formulation, the inter-option operator $\bar H_{\mathrm{inter}}$ of
Eq.~(\ref{eq:normalised_inter_operator}) and the lazy intra-option operator $H^{(\eta)}_{\mathrm{intra}}$ satisfy, for some
$1\le J\le |\mathcal{X}||\mathcal{O}|$ and $\gamma_{\rm inter}, \gamma_{\rm \eta, intra}\in(0,1)$,
\[
   \mathrm{sp}\bigl(\bar H_{\mathrm{inter}}^{J}Q-\bar H_{\mathrm{inter}}^{J}Q'\bigr)
   \;\le\;\gamma_{\rm inter}\,\mathrm{sp}(Q-Q'),
   \qquad
   \mathrm{sp}\bigl((H_{\mathrm{\eta, intra}}^{J}Q-H_{\mathrm{\eta, intra}}^{J}Q'\bigr)
   \;\le\;\gamma_{\rm \eta, intra}\,\mathrm{sp}(Q-Q').
\]
\end{theorem}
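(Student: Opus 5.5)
The plan is to reduce the statement to the two general contraction results already proved, Theorem~\ref{thm:J-inter-proxy} and Theorem~\ref{thm:J-intra-proxy}. These hold on any finite state space, with $\mathcal X=S\times U$ playing the role of $\cS$. The HRM structure enters only through the kernels $\widetilde P_d$ and $P_d$, now defined on $\mathcal X\times\mathcal O$. It therefore suffices to verify Property~\ref{property:inter-mix} for $\bar H_{\rm inter}$ and Property~\ref{property:intra-mix} for $H_{\eta,\rm intra}$ on the augmented space, taking $J\le|\mathcal X||\mathcal O|$. The sandwich lemmas and the overlap inequality (Lemma~\ref{le:overlap}) then go through verbatim. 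The reward terms $\rho$ and $r$ cancel in differences, and $\sp(V_\Delta)\le\sp(\Delta)$ still holds.

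\textbf{Inter-option.} By Assumption~\ref{ass:rm-unichain}(1), $\widetilde P_d$ is unichain for every deterministic $d:\mathcal X\times\mathcal O\to\mathcal O$. To invoke Lemma~\ref{lem:inter-mix} we also need positive self-loops, i.e.\ $\lambda(x,o)<1$, which means $\hat\ell(x,o)>1$. This is the one place where HRM options may fail the hypothesis, since an option can terminate after exactly one step. When this happens, I would replace $\bar H_{\rm inter}$ by the lazy operator $\bar H_{\eta,\rm inter}$ from Appendix~\ref{app:lazy}. Its kernel $(1-\eta)I+\eta\widetilde P_d$ has self-loops of mass at least $1-\eta$, and it is unichain whenever $\widetilde P_d$ is, because adding self-loops does not change the closed classes. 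Assumption~\ref{ass:rm-unichain}(2) supplies $\hat p(x|x,o)>0$ where it is needed. With these in place, the reachable sets $X_i(n)$ are monotone. The stabilisation/closed-set argument of Lemma~\ref{lem:inter-mix} then shows that two disjoint stabilised reachable sets would give a stationary selector with two closed classes, which contradicts unichain. So the overlap $\eta_{\rm inter}(\pi_1,\pi_2)>0$, and taking the minimum over the finitely many policy pairs restricted to horizon $J$ gives $\gamma_{\rm inter}<1$. Holding times enter only through $\lambda\in(0,1]$, which is well defined by (3).

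\textbf{Intra-option.} HRM termination is deterministic, so $\beta\in\{0,1\}$ and $\beta<1$ fails; this is exactly why the statement is phrased for the lazy operator $H_{\eta,\rm intra}$. I would verify that $H_{\eta,\rm intra}$ satisfies the sandwich bound of Lemma~\ref{lem:intra-sandwich}. The laziness adds a $(1-\eta)\Delta(s,o)$ branch, and since $\Delta(s,o)\le V_\Delta(s)$ this branch is dominated by $V_\Delta$. The sandwich then holds with the lazy kernels $P^{(\eta)}_d=(1-\eta)I+\eta P_d$. These have uniformly positive self-loops and remain unichain under Assumption~\ref{ass:rm-unichain}(1), so Lemma~\ref{lem:intra-overlap} yields Property~\ref{property:intra-mix}. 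Theorem~\ref{thm:J-intra-proxy} then gives $\gamma_{\eta,\rm intra}<1$.

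\textbf{Main obstacle.} The delicate step is the intra-option sandwich. There the bound is stated through state marginals $P^J_\pi(\cdot|s,o)$ on $\mathcal X$, while laziness keeps the option coordinate fixed. I would handle this by working with the joint law on $\mathcal X\times\mathcal O$ and bounding $\Delta(x,o)\le V_\Delta(x)$ only at the final step. One could also argue from the RM side that unichain transfers to the cross product under strong connectivity of the RM, but I would instead simply take (1) as the hypothesis, which is how the statement is set up.
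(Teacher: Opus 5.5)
Your reduction is the one the paper uses. The paper's proof is only a sketch: it states that the overlap inequality (Lemma~\ref{le:overlap}) and the policy-sandwich argument go through verbatim with $\mathcal X=S\times U$ in place of $S$, and it covers the lazy intra-option operator through the aperiodicity-transform paragraph (Appendix~\ref{app:lazy}). Your intra-option details are correct and more explicit than the paper. The extra branch $(1-\eta)\Delta(x,o)\le(1-\eta)V_\Delta(x)$ is absorbed by the sandwich. The kernel $(1-\eta)I+\eta P_d$ has the same closed classes as $P_d$ and supplies the self-loops that Lemma~\ref{lem:intra-overlap} needs.

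The step to remove is your contingency for the inter-option part. Suppose $\lambda(x,o)=1$ somewhere and you pass to $(1-\eta)I+\eta\bar H_{\rm inter}$. You then obtain a span contraction for a different operator, and it does not transfer back to $\bar H_{\rm inter}$. Laziness is precisely what removes periodicity, so in general $(1-\eta)I+\eta\bar H$ can contract while $\sp(\bar H^J\Delta)=\sp(\Delta)$ for every $J$. A zero-reward swap of two coordinates is the standard example. Sharing the optimality class does not help, because the statement concerns the contraction of $\bar H_{\rm inter}^J$ itself.

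The branch is also unnecessary. Assumption~\ref{ass:rm-unichain}(1) imposes Assumption~\ref{ass:inter-assumption} on $\mathcal X$, and that assumption includes $\lambda(x,o)<1$. So the positive self-loops used in Lemma~\ref{lem:inter-mix} are available, and Theorem~\ref{thm:J-inter-proxy} applies directly to $\bar H_{\rm inter}$, which is what the paper's sketch does. If you do not want to rely on that part of the hypothesis, you need an argument for $\bar H_{\rm inter}$ itself, not for its lazy surrogate.
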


\begin{proof}[Proof sketch]
In our formulation, the options operate on a finite
state-option space $\mathcal{X}\times\mathcal{O}$ of size
$|S||U|\cdot|\mathcal{O}|$ with finite expected holding time. Lemma~\ref{le:overlap} (overlap lemma) along with the policy-sandwich argument for the $J$-fold Bellman iterate, hold exactly with $\mathcal{X}$ as the state space instead of $S$,
proving the $J$-step contraction for $H_{\mathrm{intra}}$ and $\bar H_{\mathrm{inter}}$.
\end{proof}

\subsection{Finite-time bounds for HRM}
Combining Theorem~\ref{thm:rm-hrm-contraction} with Theorems~\ref{thm:  sync-intra-ftb},~\ref{thm:sync-inter-ftb} yields finite-time guarantees for average-reward Q-learning on MDPRMs.

\begin{corollary}[Finite-time bound for synchronous Q-learning on MDPRMs]
\label{cor:sync-rm}
    Consider the synchronous J-step recursion
\[
   Q_{k+1}
   \;=\;
   Q_k\,+\,\alpha_k\bigl(\widehat H^{J}(Q_k)-Q_k\bigr)
\]
on $\mathcal{X}\times\mathcal{O}$ where $\widehat H^{J}$ is any of the J-step estimators defined before.  Under Assumption~\ref{ass:rm-unichain} and
$\alpha_k=\frac{4}{(1-\gamma)(k+h)}$ with $\gamma$ according to inter/ intra-option, and $h$ as in Theorems~\ref{thm:sync-inter-ftb} and~\ref{thm: sync-intra-ftb},
\[
   \mathbb{E}\bigl[\,\mathrm{sp}(Q_k-Q^{*})^{2}\,\bigr]
   \;=\;\widetilde O(1/k),
\]
where the leading constants are the same as in Theorems~\ref{thm:sync-inter-ftb} and~\ref{thm: sync-intra-ftb}, with $|S||\mathcal{O}|$ replaced by $|S||U||\mathcal{O}|$. 
\end{corollary}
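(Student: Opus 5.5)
The plan is to obtain Corollary~\ref{cor:sync-rm} as a pure reduction: the HRM problem, after the cross-product construction of Proposition~\ref{prop:cross-product}, is an instance of the option framework on the enlarged state space $\mathcal X=S\times U$. The synchronous analyses of Theorems~\ref{thm:intra-final} and~\ref{thm:inter-final} then apply with $|\cS|$ replaced by $|S||U|$ throughout. Every ingredient of those proofs depends on the state space only through finiteness and cardinality. This covers the growth bound, the Moreau-envelope drift, the greedy-bias lemmas, the rolling estimates, and the quadratic term.

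\textbf{Step 1: verify the structural hypotheses.} I will check that the hypotheses of the two main theorems hold on $\mathcal X$:
\begin{enumerate}
\item Rewards lie in $[0,1]$ by the normalisation in Definition~\ref{def:armrm}, and the holding-time second moments are bounded by Assumption~\ref{ass:rm-unichain}(3). Together these give Assumption~\ref{assm:smdp}.
\item Assumption~\ref{ass:rm-unichain}(1)--(2) give the unichain and positivity conditions, i.e. Assumptions~\ref{ass:inter-assumption} and~\ref{ass:intra-assump} with $\cS$ replaced by $\mathcal X$.
\item Theorem~\ref{thm:rm-hrm-contraction} supplies the $J$-step span contractions with factors $\gamma_{\rm inter}$ and $\gamma_{\eta,\rm intra}$, and $J\le|\mathcal X||\cO|$.
\end{enumerate}

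\textbf{Step 2: treat the lazy intra-option operator.} In the intra-option case the recursion uses $H_{\eta,\rm intra}=(1-\eta)I+\eta H_{\rm intra}$ rather than $H_{\rm intra}$, so I will rerun the intra-option argument for it.
\begin{enumerate}
\item \emph{Fixed point.} $H_{\eta,\rm intra}Q^*-Q^*=\eta \bar r^* \mathbf 1$, so $Q^*$ is still a span fixed point. The drift lemma (Lemma~\ref{le:T1_intra_new}) goes through verbatim with $\gamma_{\eta,\rm intra}$.
\item \emph{Growth bound.} The lazy rollout's rewards per step lie in $[0,1]$, and lazy steps contribute zero reward. Hence Lemma~\ref{le:growth-intra-sync} and the bound on $T_{\rm quad}$ in Proposition~\ref{prop:T-quad-intra} hold unchanged.
\item \emph{Conditional mean.} With the lazy coin, the rollout's conditional mean is $\prod_m H_{\eta,\widehat d_{k,m}}\,H_{\eta,\rm intra}Q_k$. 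Each fixed-policy lazy operator is a convex combination of the identity and $H_d$, so it remains sup-norm non-expansive.
\item \emph{Model error.} The one-step model error satisfies $\|(\widehat H_{k,\eta}-H_\eta)V\|_\infty=\eta\|(\widehat H_k-H)V\|_\infty$. The telescoping hybrid arguments of Lemmas~\ref{lem:tilde-vs-empirical-intra} and~\ref{lem:empirical-vs-population-intra} therefore give the same $\widetilde\cO(J^4/k)$ bound, up to a factor $\eta^2\le1$.
\end{enumerate}

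\textbf{Step 3: dimension bookkeeping.} Substituting $|\mathcal X|=|S||U|$ into the constants, the intra-option bound carries $C_d=|\mathcal X|+2\log(|\mathcal X||\cO|)+2$ and $\log(|\mathcal X||\cO|)$. The inter-option bound carries $C_p$ and the $|\mathcal X||\cO|c^2J^2$ term from Lemma~\ref{lem:inter-rolling}. This yields $\widetilde\cO(1/k)$ with the stated replacement, for stepsizes $\alpha_k=4/((1-\gamma)(k+h))$ and $h$ as in the theorems.

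\textbf{Main obstacle.} I expect the hardest point to be the HRM specifics rather than the estimates. First, termination is deterministic, so the unichain and positivity hypotheses must be checked on the lazy kernel $(1-\eta)I+\eta P_d$. The lazy kernel has the same closed classes as $P_d$, so unichainness transfers, and its self-loops give the monotone reachable sets used in Lemma~\ref{lem:intra-overlap}. Second, the RM-state transition is driven by labels, which is a deterministic function of $(s,a,s')$. It is therefore absorbed into the transition kernel on $\mathcal X$, and the empirical kernel $\widehat p_k$ on $\mathcal X$ still concentrates as in Lemma~\ref{lem:empirical-bellman-one-step-explicit}.
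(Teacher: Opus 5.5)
Your proposal is correct and follows the paper's route. The paper obtains the corollary by combining the cross-product contraction of Theorem~\ref{thm:rm-hrm-contraction}, which uses the lazy operator $H_{\eta,\rm intra}$ in the intra-option case, with Theorems~\ref{thm:sync-inter-ftb} and~\ref{thm: sync-intra-ftb} run on $\mathcal X\times\mathcal O$; your Step~2 usefully spells out what the paper only asserts for the lazy estimator (span fixed point with $\eta\bar r^*\mathbf 1$, unchanged growth and quadratic bounds, non-expansive conditional mean, and model error scaled by $\eta$).
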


    

\textit{Remark.} The inter-option Q iterates of Corollary~\ref{cor:sync-rm} converge to a fixed point of $\bar H_{\mathrm{inter}}$ on $\mathcal{X}\times\mathcal{O}$. If we assume non-zero probability of
option termination in $|S||U|$ steps (following Assumption 4.2 in~\cite{wan2024convergenceaveragerewardqlearningweakly}), greedy policies with respect to this fixed point are hierarchically optimal for the original \textsc{ar-MDPRM} with respect to the given option policies~\cite{JMLR:v8:ghavamzadeh07a}. Our finite-time analysis gives the first non-asymptotic guarantees of hierarchical optimality for HRM in the average-reward setting.

\paragraph{RM as sub-goal sequence tasks}: For RMs which are acyclic in $U$, i.e., if a sequence of sub-goals is encoded as a chain RM $u_0\to u_1\to
\cdots\to u_{|U|-1}\to t$, which restarts on reaching $t$, Corollary~\ref{cor:sync-rm} applies directly. Using $\mathrm{sp}(Q^*)\le J/(1-\gamma)$, the intra-option bound becomes
$\mathbb{E}[\mathrm{sp}(Q_k-Q^{*})^{2}]=
\widetilde O\bigl(J^{2}\log(|S||U||\mathcal{O}|)/((1-\gamma)^{5}k)\bigr)$, which depends logarithmically on $|U|$ explicitly, but the size of the RM also enters the bound through $J, \gamma$ of the cross-product chain. The inter-option bound has an additional term in $|S||U||\mathcal{O}|\,c^2J^2$ which makes the bound linear in $|U|$.

\newpage
\section{\MakeUppercase{Additional experimental results}}
\label{sec:extra-experiments}

\begin{figure}[h!]
    \centering

    \begin{subfigure}{0.45\textwidth}
        \centering
        \includegraphics[width=\linewidth]{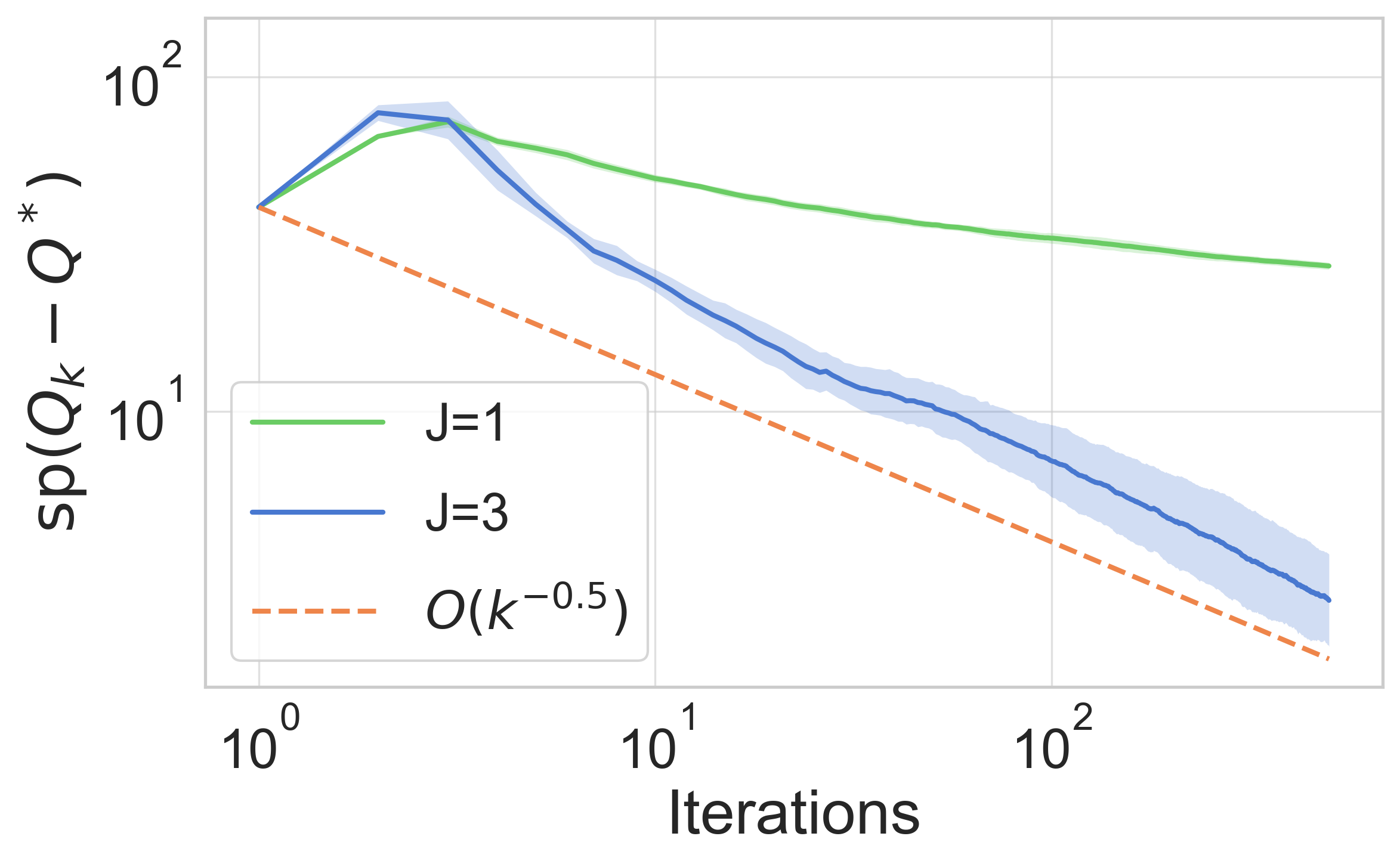}
        \caption{Taxi}
        \label{fig:sub1}
    \end{subfigure}
    \hfill
    \begin{subfigure}{0.45\textwidth}
        \centering
        \includegraphics[width=\linewidth]{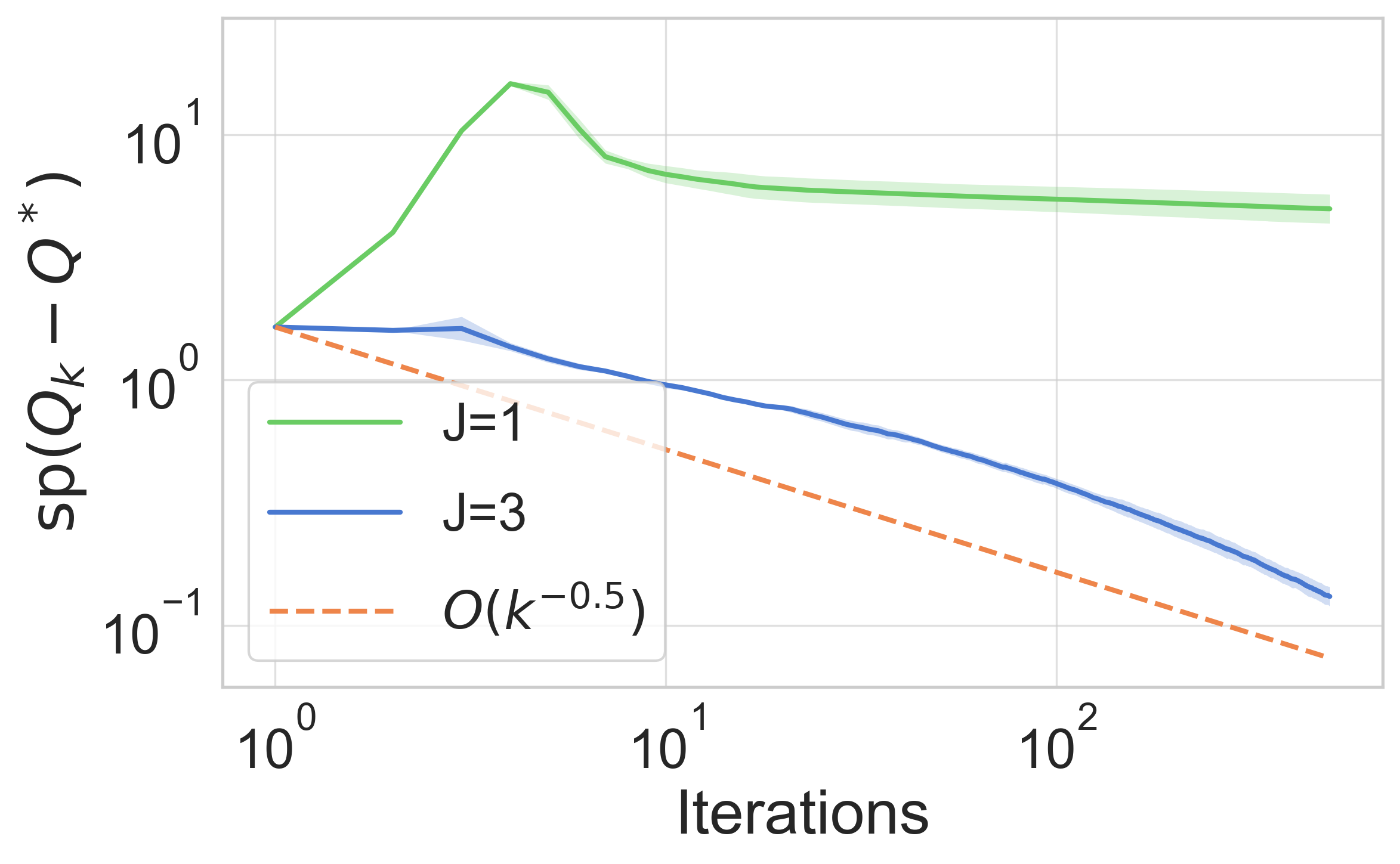}
        \caption{Office gridworld}
        \label{fig:sub2}
    \end{subfigure}

    \vspace{0.5cm}

    \begin{subfigure}{0.45\textwidth}
        \centering
        \includegraphics[width=\linewidth]{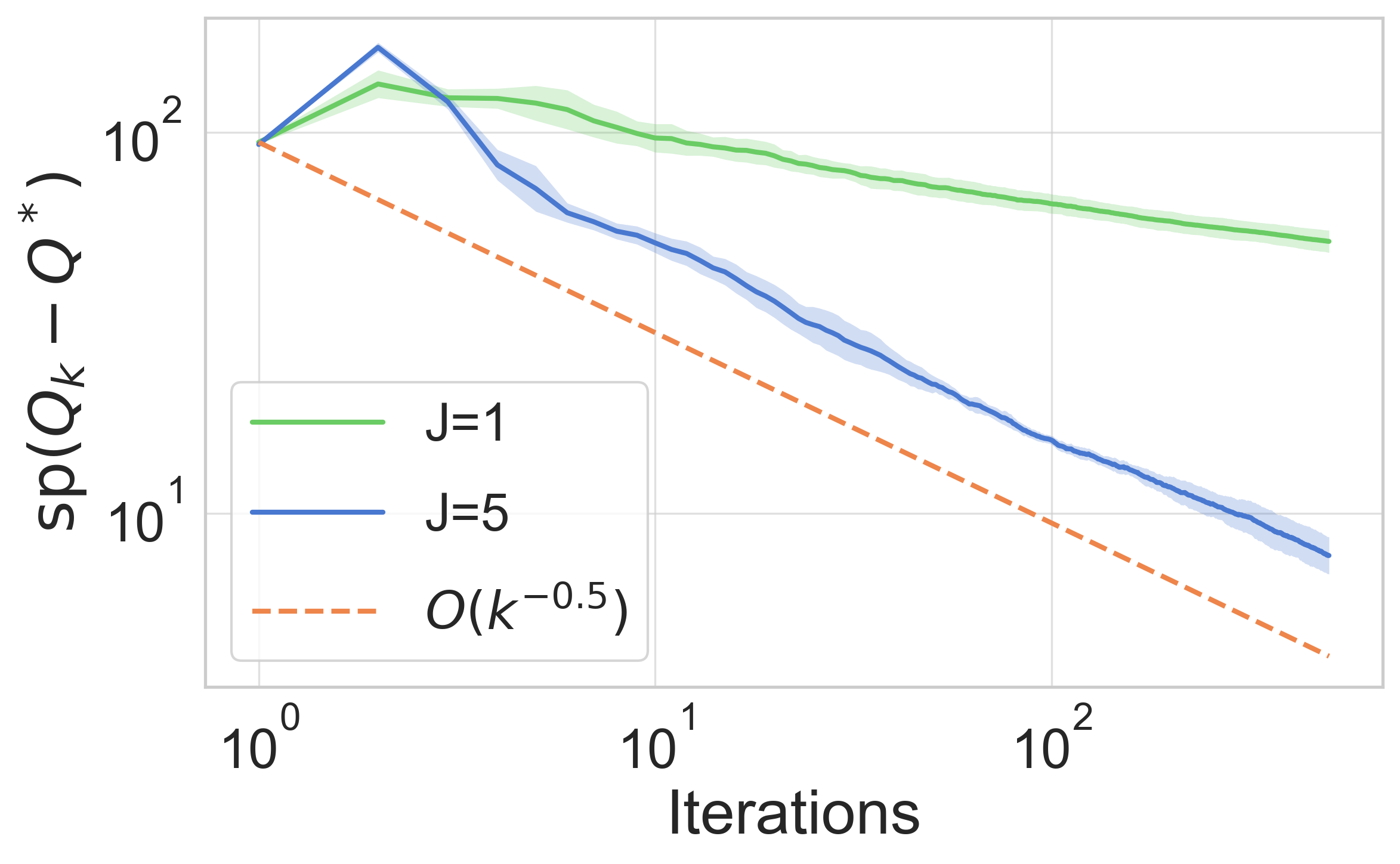}
        \caption{Taxi}
        \label{fig:sub3}
    \end{subfigure}
    \hfill
    \begin{subfigure}{0.45\textwidth}
        \centering
        \includegraphics[width=\linewidth]{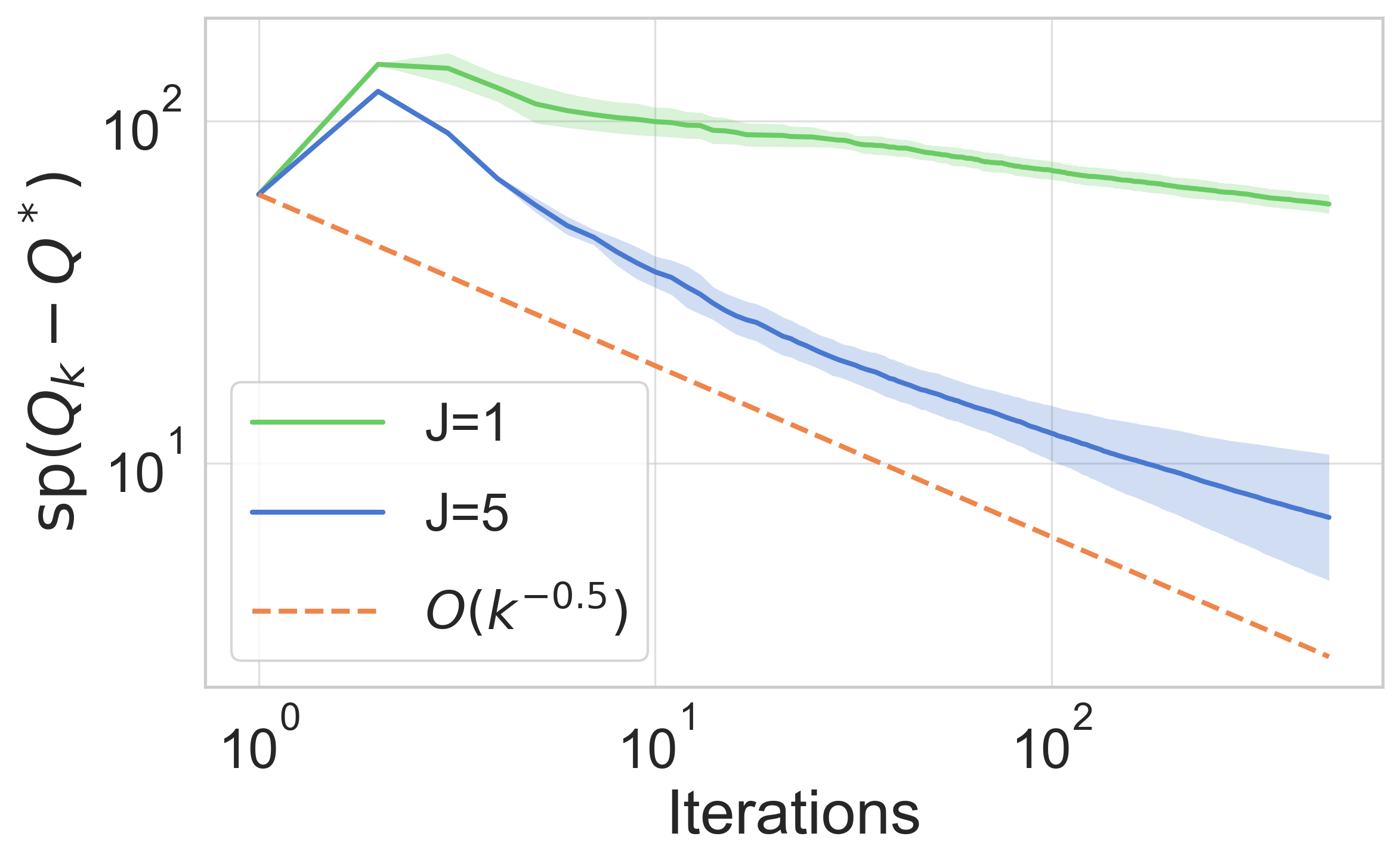}
        \caption{Office gridworld}
        \label{fig:sub4}
    \end{subfigure}

    \caption{Evolution of $\mathrm{sp}(Q_k-Q^*)$ with iterations $k$ for synchronous learning for intra-option setting (a), (b) and inter-option setting (c), (d) for \emph{Taxi} and \emph{Office gridworld} respectively.}
    \label{fig:sync_extra}
\end{figure}







In this section, we provide results for two additional benchmarks- \emph{Taxi}~\citep{taxi_diet} and \emph{Office gridworld}, and provide additional experimental details.

\paragraph{Environments: } 
\emph{Four Rooms: } The environment consists of four labeled $3\times 3$ rooms in a $2\times 2$ grid, as shown in Fig~\ref{fig:mdprm2}. The state space has $36$ cells, and the action space consists of $4$ primitive actions \emph{up, down, left, bottom} with stochastic transitions (transitions to incorrect state in perpendicular direction with probability $0.1$), and the walls act as reflectors. The environment deals with a patrolling task, requiring to visit specific locations $A \to B \to C \to D \to A$ repeatedly in that order, producing a reward of $+1$. In the labelled MDP, the current observation is not sufficient to predict the next optimal action and this has to be combined with also the current RM state. The cross product MDP on which the task is Markovian has $36 \times 4 = 144$ states.

\begin{figure}[h!]
\centering
\begin{subfigure}{.5\textwidth}
  \centering
  \includegraphics[width=.4\linewidth]{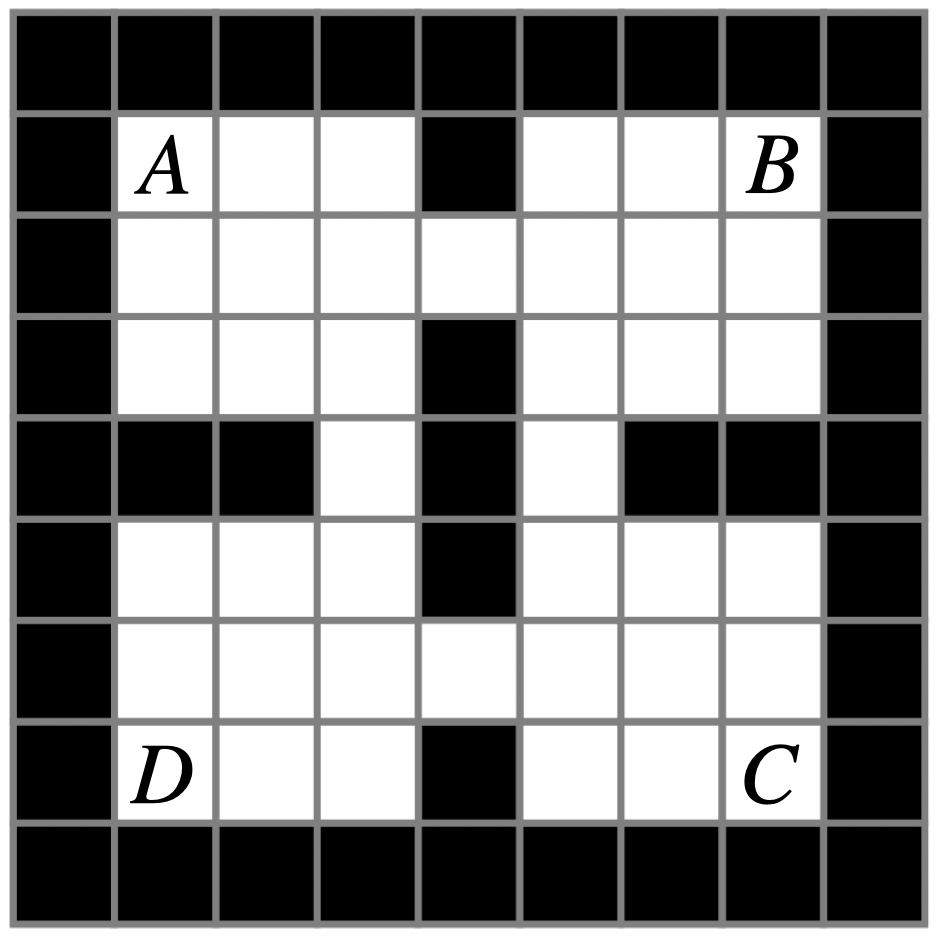}
  \caption{ Labeled-MDP}
  \label{fig:sub1}
\end{subfigure}%
\begin{subfigure}[b]{0.5\textwidth}
        \centering
        \begin{tikzpicture}[
            node distance=1.0cm,
            state/.style={circle, draw, thick, minimum size=0.6cm, font=\large},
            arr/.style={-{Stealth[length=2pt]}, thick}
        ]
            \node[state] (qA) {$q_A$};
            \node[state, right=of qA] (qB) {$q_B$};
            \node[state, below=of qB] (qC) {$q_C$};
            \node[state, below=of qA] (qD) {$q_D$};

            \draw[arr][-{Stealth[scale=1.0]}] (qA) -- node[above] {$B$} (qB);
            \draw[arr][-{Stealth[scale=1.0]}] (qB) -- node[right] {$C$} (qC);
            \draw[arr][-{Stealth[scale=1.0]}] (qC) -- node[below] {$D$} (qD);
            \draw[arr][-{Stealth[scale=1.0]}] (qD) -- node[left]  {$A$} (qA);

            \node at ($(qA)!0.5!(qD)+ (0.55,0)$) {$r=1$};
        \end{tikzpicture}
        \caption{RM}
    \end{subfigure}
\caption{\emph{Four Rooms} gridworld with $3\times 3$ rooms, four primitive actions, and a reward machine corresponding to a patrolling task.}\label{fig:mdprm2}

\end{figure}

\emph{Taxi: } We use the standard \emph{Taxi} domain~\citep{taxi_diet}, with a $5 \times 5$ grid, and $4$ possible passenger locations, with an additional fifth location as \emph{inside} the taxi, giving a $5\times 5$ (taxi location) $\times 5$ (passenger location) $\times 4$ (destination location) $= 500$ states. The action space consists of $4$ primitive actions \emph{up, down, left, right} for movement along with \emph{pickup} and \emph{drop-off}. To make the task average-reward, we remove the episode termination and reset the environment after a successful delivery.

\emph{Office gridworld: } We adapt the \emph{Office gridworld} as described in~\cite{toro_reward_machines}, with a $12 \times 9$ gridworld, with two \emph{coffee} locations, one \emph{mail} location and an office location at $o$. The agent must fetch \emph{coffee} and \emph{mail}, and arrive at the \emph{office}. The non-Markovian reward can be encoded by a $4$ state reward machine.

\paragraph{Options: } For the \emph{Four Rooms} environment, we use $4$ hallway-reaching options, one for each hallway, with BFS-greedy intra-option policy with $\epsilon=0.05$ exploration. For \emph{Taxi}, we use four \emph{go-to-location} options, one for each designated location of the passenger (R,G,Y,B), and two single-step options with primitive actions \emph{pickup} and \emph{drop-off}. For \emph{Office gridworld}, we use four options corresponding to reaching locations for the \emph{mail}, \emph{office}, and two different locations for \emph{coffee}.

\paragraph{Algorithms and stepsizes:} We show convergence results for intra and inter-option $Q$-learning, each with $J\in \{1,3 \}$, and show contraction factors for different values of $J$ in Table~\ref{tab:contraction_all}. We use decaying stepsize $\alpha_k = \alpha/(k+h)$ with $\alpha = 4/(1-\gamma)$. 
Each curve in Figures~\ref{fig:sync_extra} is averaged over $>5$ independent seeds, and the shaded band show one standard deviation across seeds. 

\textbf{Computational resources:} All experiments are tabular and were run on a single laptop CPU (Apple M2 MacBook, 8-core CPU, 16 GB unified memory, Python 3.9.6). Each algorithm stores tables of size $|\mathcal{S}||\mathcal{O}|$, total memory used is under 1GB. The entire experimental sweep, including all environments and all algorithms, completes in under three hours of wall-clock time, no GPU was used.\footnote{The code is available at \url{https://anonymous.4open.science/r/NA_convergence-3B11/}.}

\paragraph{Experimental results: }
Figures~\ref{fig:sync_extra} show the span seminorm error $\mathrm{sp}(Q_k - Q^*)$ for \emph{Taxi} and \emph{Office gridworld} domain. Similar to \emph{Four Rooms}, the estimator with higher $J$ for each of the environments converges strictly faster than the $J=1$ one-step estimator, following the $\mathcal{O}(k^{-1/2})$ reference line. 


\begin{figure}[h!]
    \centering
    \includegraphics[width=0.5\textwidth]{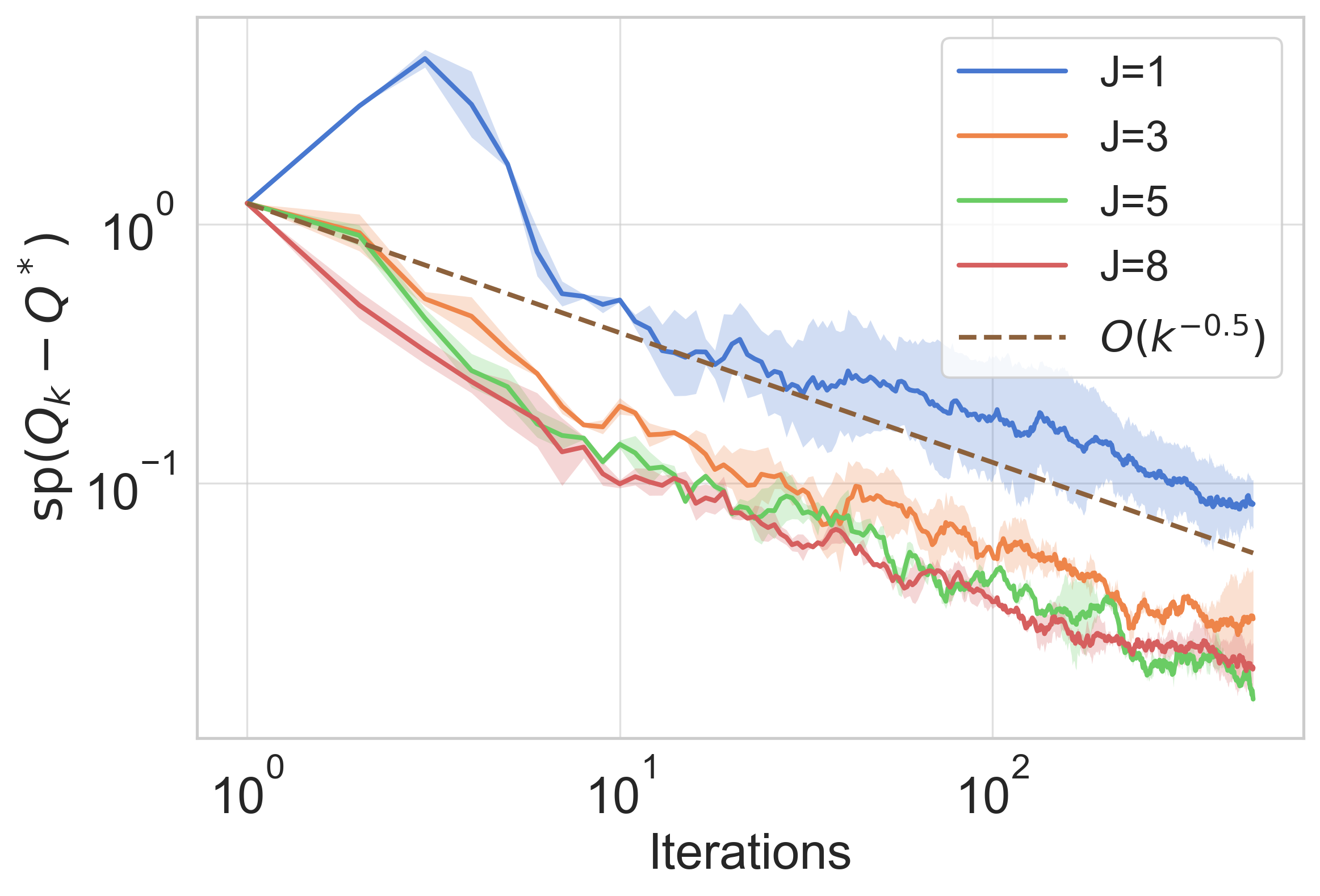} 
    \caption{Evolution of $\mathrm{sp}(Q_k - Q^*)$ over iterations $k$ for synchronous intra-option setting for \emph{Four Rooms} environment.}
    \label{fig:J-comparison}
\end{figure}

In Figure~\ref{fig:J-comparison} we plot the span seminorm error $\mathrm{sp}(Q_k - Q^*)$ over $k$ iterations, for different values of $J$, and we observe that with increasing values of $J$ the slope of the convergence curve becomes steeper with diminishing returns for higher values of $J$.


\begin{table}[h!]
\centering
\caption{Contraction factor $\gamma$ for the three environments for different values of $J$ for the synchronous setting.}
\label{tab:J_results}
\small
\setlength{\tabcolsep}{6pt}

\begin{tabular}{lcccccc}
\toprule
J &
\multicolumn{2}{c}{Four Rooms} &
\multicolumn{2}{c}{Taxi} &
\multicolumn{2}{c}{Office gridworld} \\
\cmidrule(lr){2-3}
\cmidrule(lr){4-5}
\cmidrule(lr){6-7}

 &
Intra-option & Inter-option & 
Intra-option & Inter-option &
Intra-option & Inter-option \\
\midrule

1 & $0.81 \pm 0.07$ & $0.821 \pm 0.05$ & $0.79 \pm 0.08$ & $0.83 \pm 0.05$ & $0.75 \pm 0.07$ & $0.83 \pm 0.07$ \\
2 & $0.68 \pm 0.08$ &$ 0.70 \pm 0.06$ & $0.68 \pm 0.08$ & $0.73 \pm 0.05$ & $0.63 \pm 0.06$ & $0.76 \pm 0.07$ \\
3 & $0.56 \pm 0.09$ & $0.62 \pm 0.06$ & $0.58 \pm 0.08$ & $0.66 \pm 0.05$ & $0.54 \pm 0.06$ & $0.72 \pm 0.06$ \\
4 & $0.48 \pm 0.08$ & $0.55 \pm 0.07$ & $0.48 \pm 0.09$ & $0.59 \pm 0.04$ & $0.48 \pm 0.05$ & $0.66 \pm 0.06$ \\

\bottomrule
\end{tabular}
\label{tab:contraction_all}
\end{table}

\begin{figure}[h!]
    \centering

    \begin{subfigure}[b]{0.32\linewidth}
        \centering
        \includegraphics[width=\linewidth]{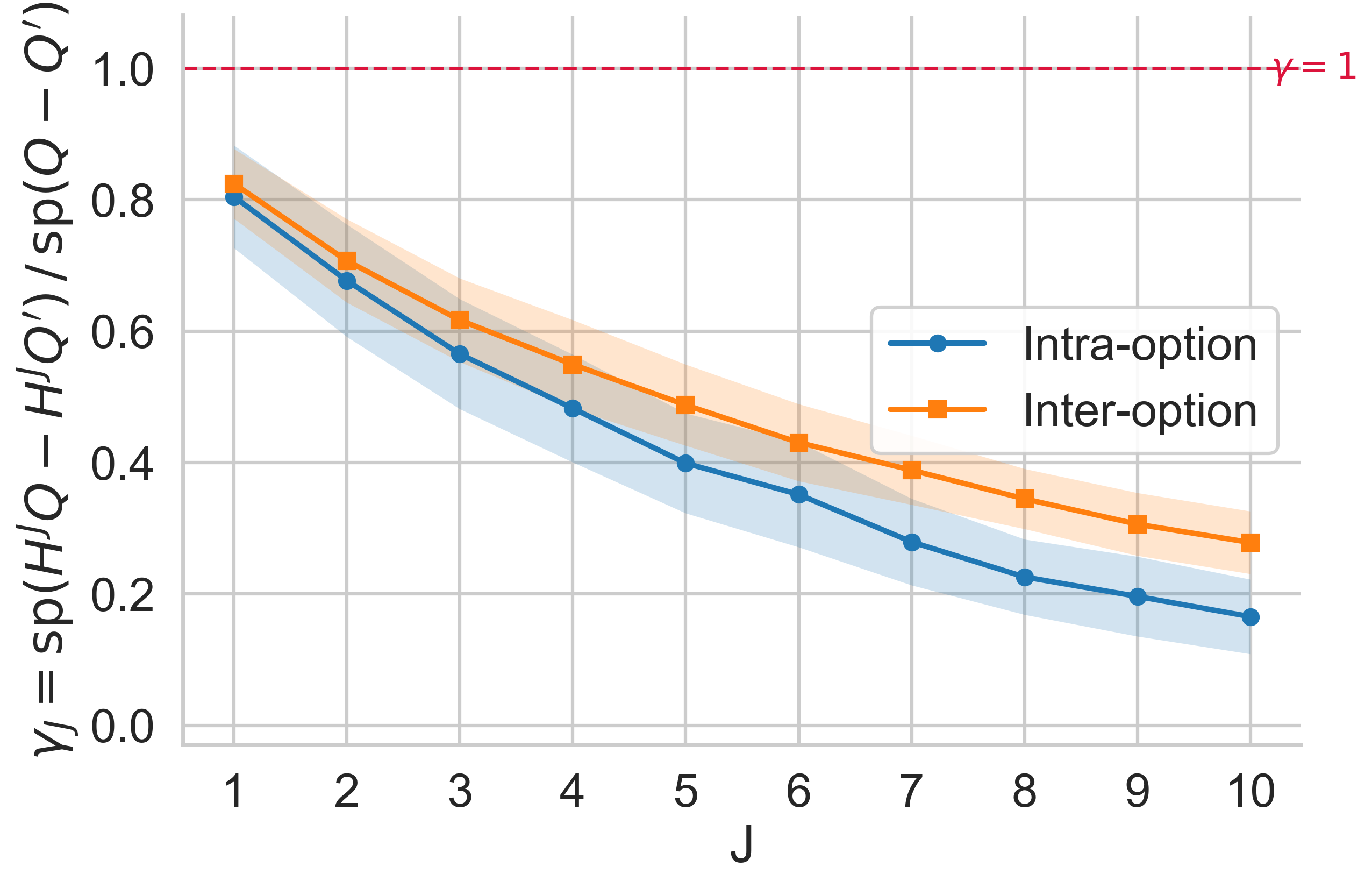}
        \caption{$\gamma$ vs $J$}
        \label{fig:gamma_vs_J}
    \end{subfigure}
    \hfill
    \begin{subfigure}[b]{0.32\linewidth}
        \centering
        \includegraphics[width=\linewidth]{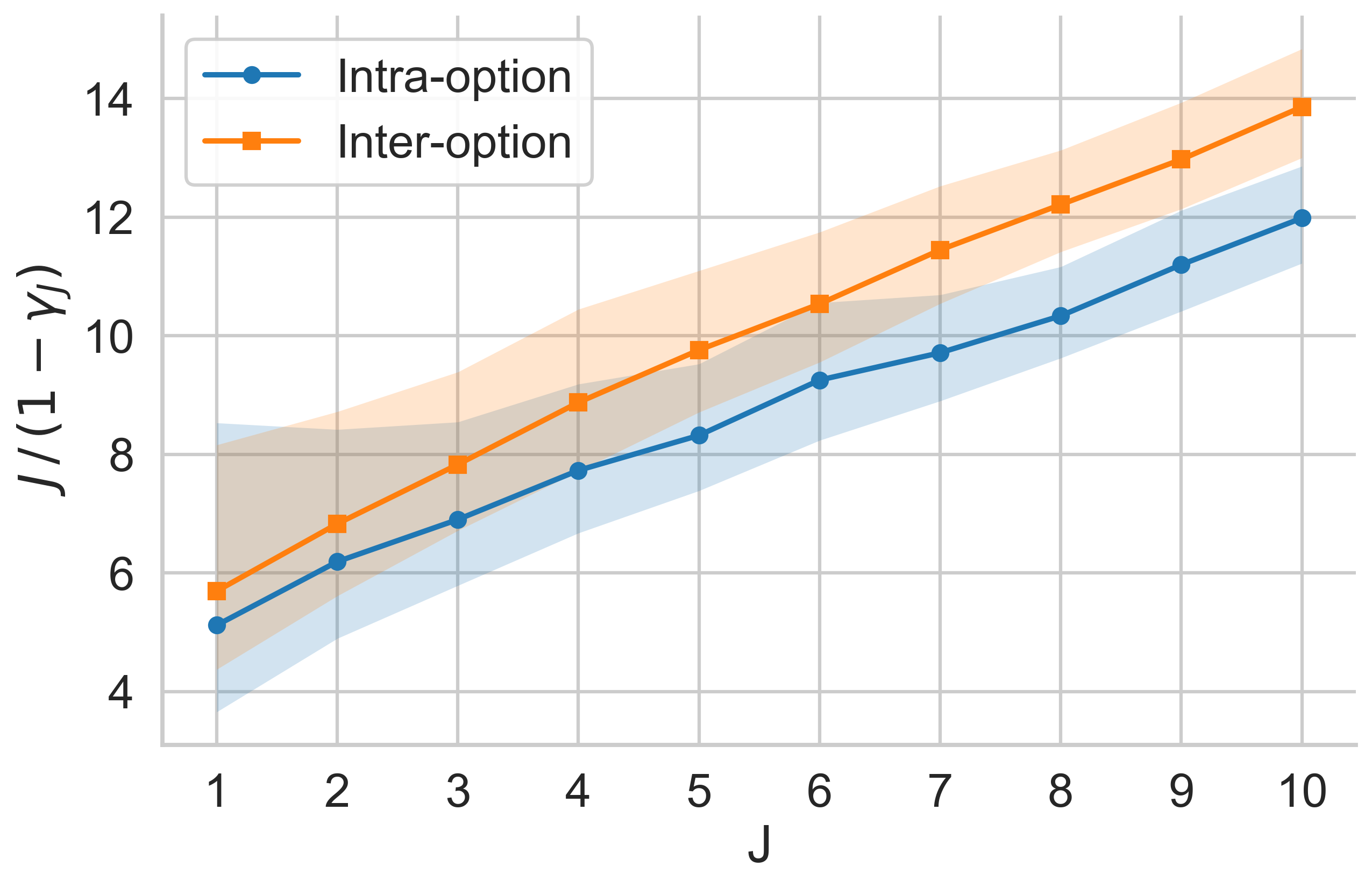}
        \caption{$\frac{J}{(1-\gamma)}$ vs $J$}
        \label{fig:J_over_1_minus_gamma}
    \end{subfigure}
    \hfill
    \begin{subfigure}[b]{0.32\linewidth}
        \centering
        \includegraphics[width=\linewidth]{images/4room/j_gamma_tradeoff/c_finite_time_bound.png}
        \caption{$\frac{(\sp(Q^*)+J)^2}{(1-\gamma)^3}$ vs $J$}
        \label{fig:c_finite_time_bound}
    \end{subfigure}
    \caption{$J$-step contraction and $J$-step selection guideline for intra-and inter-option operators for \emph{Four rooms} environment.}
    \label{fig:j_gamma_tradeoff}
\end{figure}



\subsection{A guideline for selecting $J$} 
\label{app:guideline}
The finite-time bounds of Theorems~\ref{thm:sync-inter-ftb} and~\ref{thm: sync-intra-ftb} vary with $J$ in two different ways as shown in Appendix~\ref{sec:synchronous}. Given $d \triangleq |\mathcal{S}|\,|\mathcal{O}|$ and $b_k = \mathcal{O}(J\log k)$ (Lemma~\ref{le:growth-intra-sync}), we have the intra-option bound following Theorem~\ref{thm:intra-final}:
\begin{equation}\label{eq:bound_explanation}
    \mathbb{E}\!\left[\mathrm{sp}(Q_k-Q^\star)^2\right]
= \mathcal{O}\!\Bigg(
\underbrace{\frac{(\mathrm{sp}(Q^\star)+J)^2\,\log d}{(1-\gamma)^{3}\,k}}_{\textrm{(i) rollout variance}}
\;+\;
\underbrace{\frac{C_d\,J^2\big(1+(b_k+J)^2\big)}{(1-\gamma)^{2}\,k}}_{\textrm{(ii) approximately-greedy bias}}
\Bigg),
\qquad C_d = |\mathcal{S}| + 2\log d + 2 .
\end{equation}

Following Theorem~\ref{thm:inter-final}, the inter-option bound has an additional term of order $d\,c^2 J^2\big[1+(2b_k+J)^2\big]/\big((1-\gamma)^{2}k\big)$. In Equation~\eqref{eq:bound_explanation}, term~(i) is due to the rollout noise $T_{\mathrm{quad}}$, controlled in Proposition~\ref{prop:T-quad-intra}, and term~(ii) is due to the approximately greedy perturbation $T_{\mathrm{greedy}}$ controlled in Proposition~\ref{prop:T-greedy}, which arises from choosing the intermediate options using the empirical Bellman operators $\widehat{H}_k$, and since $b_k=\mathcal{O}(J\log k)$, this term varies with $\widetilde{\mathcal{O}}(J^4)$ and also inherits $|S|$ through $C_d$. The inter-option terms are analogous and additionally carries $|\mathcal{S}|\,|\mathcal{O}|\,c^2$.

The two terms scale differently: firstly, the rollout variance term carries the factor $(1-\gamma)^{-3}$ while the estimation terms carries $(1-\gamma)^{-2}$, so for a domain with slow contraction the bound would be more sensitive to the first term. Secondly, the estimation term~(ii) is the worst-case bound on the model-free estimation error, and the factors $|\mathcal{S}|$ for intra- and $|\mathcal{S}||\mathcal{O}|$ for inter-option come from a union bound over the entire state-option space (Lemma~\ref{lem:empirical-bellman-one-step-explicit}). This particular bound is deliberately conservative, and we show so in our experiments which run the approximately greedy algorithm and show that increasing $J$ yields faster convergence, confirming that the $\widetilde{\mathcal{O}}(J^4)$ is far from tight in practice.

We track the intrinsic rollout-contraction trade-off i.e. the term~(i) of the finite-time bound (without the loose estimation term) in Equation~\eqref{eq:bound_explanation}
$\frac{\big(\mathrm{sp}(Q^\star)+J\big)^2}{\big(1-\gamma\big)^{3}},$ plotted against $J$ in Figure~\ref{fig:j_gamma_tradeoff}(c), for both intra- and inter-option operators in the \emph{Four Rooms} environment. We observe that increasing $J$ strengthens the span contraction lowering the denominator $(1-\gamma)^{-3}$, but increases the length of the rollout and the target variance $(\mathrm{sp}(Q^\star)+J)^2$. This trade-off is balanced at an internal optimal $J^*$, which is consistent with the diminishing returns observed for increasing $J$ in the convergence curves in Figure~\ref{fig:J-comparison}.


\end{document}